\documentclass[11pt,twoside]{article}
\usepackage{fullpage}

\usepackage{epsf}
\usepackage{fancyhdr}
\usepackage{graphics}
\usepackage{graphicx}
\usepackage{psfrag}
\usepackage{microtype}
\usepackage{subfigure}
\usepackage{algorithmic}
\usepackage{color,xcolor}
\usepackage{caption}
\usepackage{subcaption}
\usepackage{subfigure}
\usepackage{verbatim}

\usepackage[linesnumbered,ruled]{algorithm2e}
\DontPrintSemicolon

\usepackage{color}

\usepackage{amsthm}
\usepackage{amsfonts}
\usepackage{amsmath}
\usepackage{amssymb,bbm}

\usepackage{mathrsfs}

\usepackage{url}
\usepackage[colorlinks,linkcolor=magenta,citecolor=blue, pagebackref=true,backref=true]{hyperref}
\renewcommand*{\backrefalt}[4]{%
    \ifcase #1 \footnotesize{(Not cited.)}%
    \or        \footnotesize{(Cited on page~#2.)}%
    \else      \footnotesize{(Cited on pages~#2.)}%
    \fi}

\usepackage{nicefrac}
\usepackage{comment}

\usepackage{chngpage}

 \usepackage{tabularx}%
\usepackage{enumitem}
\usepackage{booktabs}
\usepackage{caption}

\usepackage{bm,bbm}
\usepackage{mathtools}
\usepackage{cleveref}

\usepackage{eucal}

\usepackage{scalerel}


\usepackage{final_macros}



\newcommand{\state}{x}

\newcommand{\MyState}{X}

\newtheorem{example}{Example}

\def\l{\left}
\def\r{\right}
\def\la{\left\langle}
\def\ra{\right\rangle}
\def\ll{\lVert}
\def\rl{\rVert}
\def\lv{\left\lvert}
\def\rv{\right\rvert}
\def\pt{\partial}
\def\tcr{\textcolor{red}}
\newtheorem{assumption}{Assumption}
\setlength{\textwidth}{\paperwidth}
\addtolength{\textwidth}{-6cm}
\setlength{\textheight}{\paperheight}
\addtolength{\textheight}{-4cm}
\addtolength{\textheight}{-1.1\headheight}
\addtolength{\textheight}{-\headsep}
\addtolength{\textheight}{-\footskip}
\setlength{\oddsidemargin}{0.5cm}
\setlength{\evensidemargin}{0.5cm}

\newcommand{\numobs}{\ensuremath{N}}

\newcommand{\usedim}{\ensuremath{d}}

\newcommand{\Reward}{R}
\newcommand{\reward}{r}

\newcommand{\lspace}{\LinSpace}

\newcommand{\projecttolin}{\Pi_{\lspace}}

\newcommand{\projectto}[1]{\Pi_{#1}}

\newcommand{\ltwospace}{\mathbb{L}^2}
\newcommand{\statespace}{\mathbb{X}}

\newcommand{\totaltime}{T}

\newcommand{\transition}{\mathcal{P}}

\newcommand{\stationary}{\xi}

\newcommand{\sobospace}{\mathbb{H}^1}
\newcommand{\soboavgnorm}[2]{\vecnorm{#1}{\mathbb{H}^1, #2} }
\newcommand{\soboavginprod}[3]{\inprod{#1}{#2}_{\mathbb{H}^1, #3}}

\newcommand{\fakerefassumelip}[1]{\hyperref[assume:smooth-high-order]{{\color{magenta} {\upshape\textbf (}{\upshape{\textbf{Lip}}#1}{\upshape\textbf )}}} }

\newcommand{\discount}{\beta}

\newcommand{\valuefunc}{f}

\newcommand{\Bel}{\mathrm{Bel}}

\newcommand{\gen}{\mathrm{gen}}

\newcommand{\valuestar}{\valuefunc^*}

\newcommand{\ValueFunc}{\valuefunc}

\newcommand{\ValFun}{\valuefunc}
\newcommand{\ValFunc}{\valuefunc}

\DeclareFontFamily{U}{mathx}{}
\DeclareFontShape{U}{mathx}{m}{n}{<-> mathx10}{}
\DeclareSymbolFont{mathx}{U}{mathx}{m}{n}

\DeclareMathAccent{\widecheck}{0}{mathx}{"71}
\newcommand{\BellOp}{\ensuremath{\mathcal{T}}}

\newcommand{\valuebar}{\widebar{\valuefunc}}

\newcommand{\valuehat}{\widehat{\valuefunc}}

\newcommand{\lammin}{\lambda_{\min}}
\newcommand{\lammax}{\lambda_{\max}}

\usepackage{mathbbol}


\newcommand{\mbasis}{m}

\newcommand{\constScaryReg}{\constScary_{\mathrm{reg}}}

\newcommand{\statnorm}[1]{\|#1\|_{\stationary}}
\newcommand{\statinprod}[2]{\inprod{#1}{#2}_{\stationary}}

\newcommand{\ValPol}{\ensuremath{\ValFun^\star}}
\newcommand{\ValTrue}{\ValPol}

\newcommand{\ValHat}{\ensuremath{\widehat{\ValFun}}}

\newcommand{\StateSpace}{\statespace}

\newcommand{\torus}{\mathbb{T}}

\usepackage[framemethod=TikZ]{mdframed}

\newcommand{\myframe}[1]{
\begin{mdframed}[backgroundcolor=black!1, roundcorner=5pt]
  #1
\end{mdframed}
}

\newenvironment{narrowpara}
  {\par\addvspace{\smallskipamount}\narrower\noindent\ignorespaces}
  {\par\addvspace{\smallskipamount}}

\newcommand{\coordinate}{e}

\newcommand{\myassumption}[4]{
  \setlist[enumerate,1]{leftmargin=#4}
\myframe{
	\begin{enumerate}[label={ \bf{{{(#1)}}}}]
		\item \label{#2} {#3}
	\end{enumerate}
        }
}

\renewcommand{\hat}{\widehat}
\renewcommand{\tilde}{\widetilde}

\newcommand{\LinSpace}{\mathbb{K}}

\newcommand{\IdMat}{\mathcal{I}}

\newcommand{\occumsr}{\mu}

\newcommand{\drift}{b}
\newcommand{\covMat}{\Lambda}

\newcommand{\BM}{B}

\newcommand{\goodendex}{\ensuremath{\clubsuit}}

\newcommand{\ValInter}[1]{\ValueFunc_{\mathrm{Bel}}^{(#1)}}
\newcommand{\valhatinter}[1]{\ValueFunc_{\mathrm{gen}}^{(#1)}}
\newcommand{\valtdinter}[1]{\valuebar^{(#1)}_{\mathrm{gen}}}
\newcommand{\geninter}[1]{\generator^{(#1)}}
\newcommand{\semigroup}{\mathcal{P}}
\newcommand{\generator}{\mathcal{A}}
\newcommand{\constScary}{\widebar{C}}
\newcommand{\constdiff}{\hat{C}}
\newcommand{\constgen}{\tilde{C}}
\newcommand{\constbd}{\hat{D}}
\newcommand{\numerOrder}{n}
\newcommand{\sobonorm}[1]{\vecnorm{#1}{\mathbb{H}^1}}

\newcommand{\coef}[1]{a^{(#1)}}
\newcommand{\coefc}[1]{c^{(#1)}}
\newcommand{\coefd}[1]{d^{(#1)}}
\newcommand{\dt}{\stepsize}


\begin{document}

\begin{center}
{\bf{\LARGE{On Bellman equations for continuous-time policy evaluation I: discretization and approximation}}}

\vspace*{.2in}
{\large{
 \begin{tabular}{cc}
  Wenlong Mou$^{ \diamond, \star}$ & Yuhua Zhu$^{ \dagger, \star}$ 
 \end{tabular}

}

\vspace*{.2in}

 \begin{tabular}{c}
 Department of Statistical Sciences, University of Toronto$^{\diamond}$
 \end{tabular}
 
  \begin{tabular}{c}
  Department of Mathematics, University of California, San Diego$^{ \dagger}$
  \end{tabular}

}

\begin{abstract}
  We study the problem of computing the value function from a discretely-observed trajectory of a continuous-time diffusion process. We develop a new class of algorithms based on easily implementable numerical schemes that are compatible with discrete-time reinforcement learning (RL) with function approximation. We establish high-order numerical accuracy as well as the approximation error guarantees for the proposed approach. In contrast to discrete-time RL problems where the approximation factor depends on the effective horizon, we obtain a bounded approximation factor using the underlying elliptic structures, even if the effective horizon diverges to infinity. 
  \let\thefootnote\relax\footnote{$^{\star}$ WM and YZ contributed equally to this work.}
\end{abstract}
\end{center}

\section{Introduction}
Over recent years, data-driven and learning-based technologies are playing increasingly important roles in the estimation and control of dynamical systems. The workhorse of modern control and dynamic programming is reinforcement learning (RL), powered by flexible function approximation techniques. Model-free RL approaches allow us to learn the target function of interest directly, while circumventing the high computational and statistical costs for estimating the underlying dynamics. When combined with recent developments of deep neural networks, model-free RL has achieved great success in various applications, ranging from the game of go~\cite{silver2017mastering} to real-world robotics~\cite{ibarz2021train}.

Most of existing theoretical studies in RL focus on discrete-time Markov decision processes (MDPs), with a large class of algorithms have been developed and analyzed; see the monographs~\cite{bertsekas1996neuro,szepesvari2022algorithms}. However, since most of real-world systems run continuously in time, in order to apply existing algorithms, issues with time discretization must be taken into account. In this paper, we study the continuous-time policy evaluation with discrete-time data. 

Concretely, let $\drift: \StateSpace \rightarrow \real^\usedim$ be a vector field and $\covMat: \StateSpace \rightarrow \mathbb{S}_+^{\usedim \times \usedim}$ be a matrix-valued function. We consider a stochastic differential equation (SDE)
\begin{align}
  d \MyState_t = \drift (\MyState_t) dt + \covMat (\MyState_t)^{1/2} d \BM_t,\label{eq:cts-time-process}
\end{align}
where $(\BM_t \in \R^d: t \geq 0)$ is a standard Brownian motion in the Euclidean space $\StateSpace$.
Under certain regularity conditions, Eq~\eqref{eq:cts-time-process} admits a strong solution $(\MyState_t)_{t \in [0, + \infty)}$, which forms a (time-homogeneous) Markovian diffusion process. For a bounded reward function $\reward : \StateSpace \rightarrow \real$ and a known discount rate $\discount > 0$, we can define the \emph{value function}
\begin{align}
  \valuestar (x) \mydefn \int_0^{+ \infty}  e^{- \discount t} \Exs \big[ \reward (\MyState_t) \mid \MyState_0 = x \big] dt, \quad \mbox{for $x \in \StateSpace$}.\label{eq:defn-value-func}
\end{align}
Our goal is to compute the value function using (discretely-)observed trajectory of the continuous-time diffusion process~\eqref{eq:cts-time-process}.

At a superficial glance, such a task does not seem particularly challenging -- from the computational perspective, under mild regularity assumptions, the value function $\valuestar$ is known to satisfy a second-order linear elliptic equation.
\begin{align}
  \discount \ValFunc - \Big( \inprod{\nabla \ValFunc}{\drift} + \frac{1}{2} \mathrm{Tr} \big( \covMat \cdot \nabla^2 \ValFunc \big) \Big) = \reward.\label{eq:elliptic-eq-for-valfunc}
\end{align}
Abundant numerical methods exist for solving such equations (see e.g.~\cite{larsson2008partial}). From a learning perspective, note that an equi-spaced observed trajectory $(\MyState_0, \MyState_\stepsize, \MyState_{2\stepsize}, \cdots)$ of the process~\eqref{eq:cts-time-process} forms a time-homogeneous discrete-time Markov chain. To solve the policy evaluation problem for such a process, a straightforward approach is via the Bellman equation
\begin{align}
  \ValFunc (\state) = \stepsize \reward (\state) + e^{- \discount \stepsize} \Exs \big[ \ValFunc (\MyState_\stepsize) \mid \state \big].\label{eq:naive-discretized-bellman}
\end{align}
In RL literature, several algorithms have been developed and analyzed for solving such equations (see e.g.~\cite{szepesvari2022algorithms}). The key features of the two approaches are however in tension, creating new challenges -- direct methods for solving~\eqref{eq:elliptic-eq-for-valfunc} require the functions $(\drift, \covMat)$ to be either known or explicitly estimated through data. This is not compatible with the flexible model-free approaches for value learning in RL, as we are forced to model the underlying dynamics, instead of just the function of interest. On the other hand, the discrete-time Bellman equation~\eqref{eq:naive-discretized-bellman} could yield an inaccurate approximation to the true value function $\ValTrue$, and since the discount factor $e^{- \discount \stepsize}$ is close to unity, the resulting RL problem is ill-conditioned. This motivates a central question:
\begin{quote}
  Can we achieve sharp discretization error guarantees while retaining the flexibility of model-free RL methods?
\end{quote}
In this paper, we give this question an affirmative answer. Our main contribution is a new class of numerical methods for policy evaluation that are compatible with RL using function approximation.  The new method provides a model-free method that approximates the continuous-time value function more accurately with the same amount of discrete-time data with the following theoretical guarantees.
\begin{itemize}
  \item Under suitable smoothness assumptions, we construct a class of high-order Bellman operators, the fixed-point of which enjoys high-order error guarantees compared to the true value function $\ValTrue$. Notably, the high-order Bellman operators can be directly estimated using trajectory data, without constructing any estimators for the coefficients.
  \item By taking an orthonormal projection of the high-order Bellman operator to a given linear subspace, we can bound the approximation error for the projected discretized fixed point using the best approximation error within the subspace. While an approximation error blow-up is known to be unavoidable for worst-case MDPs (see e.g.~\cite{mou2023optimal}), the elliptic structure in~\eqref{eq:cts-time-process} allows us to establish uniformly bounded approximation factor upper bounds.
  \item Based on the population-level equations, we develop a class of data-driven algorithms that use trajectory data to compute the value function. We test their performances through numerical simulation studies.
\end{itemize}
A natural question is regarding the statistical error guarantees for the data-driven approach, as well as the parameter selection thereof. We defer a comprehensive study on this question to our companion paper.

The rest of this paper is organized as follows: we first introduce notations and discuss related work. Section~\ref{sec:problem-setup} describes the basic problem setups. In Sections~\ref{sec:time-discretization} and~\ref{sec:projected-fixed-pt}, we present the two main steps of the algorithm -- time discretization and function approximation -- as well as their theoretical guarantees. We present some numerical simulation results in Section~\ref{sec:simulation}. The proof of the main results are collected in Section~\ref{sec:proofs}. Finally, we conclude the paper in Section~\ref{sec:discussion} with a discussion of the results and future work.

\paragraph{Notation.} Here we summarize some notation used throughout the paper. For a
positive integer $m$, we define the set $[m] \defn \{1,2, \cdots,
m\}$. We use $\vecnorm{\cdot}{2}$ to denote the standard Euclidean norm in finite-dimensional spaces. For a probability distribution $\mu$, we use $\inprod{\cdot}{\cdot}_{\mu}$ as a shorthand notation for the inner product structure $(f, g) \mapsto \int f g d \mu$, and use $\vecnorm{\cdot}{\mu}$ to denote the induced norm. Furthermore, for a vector-valued or matrix-valued function $f$, we use $\vecnorm{f}{\mu}$ to denote the integrals $\int \vecnorm{f}{2}^2 d \mu$ and $\int \matsnorm{f}{F}^2 d \mu$. Given an Euclidean space $\real^\usedim$, we call a vector in $\mathbb{N}^\usedim$ a multi-index. For a multi-index $\alpha = (\alpha_1, \alpha_2, \cdots, \alpha_\usedim)$ and a function $f$, we use the shorthand notation $\partial^\alpha f (x)$ to denote $\frac{\partial^{\vecnorm{\alpha}{1}}}{\partial x_1^{\alpha_1} \partial x_2^{\alpha_2} \cdots \partial x_\usedim^{\alpha_\usedim}} f (x)$.
For a random object $X$, we use $\law(X)$ to denote its probability
law. Given a vector $\mu \in \real^d$ and a positive semi-definite
matrix $\Sigma \in \real^{d \times d}$, we use $\mathcal{N} (\mu,
\Sigma)$ to denote the Gaussian distribution with mean $\mu$ and
covariance $\Sigma$. We use $\mathcal{U} (\Omega)$ to
denote the uniform distribution over a set $\Omega$.
We use $\{e_j\}_{j=1}^\usedim$ to denote the standard basis vectors in
the Euclidean space $\real^\usedim$, i.e., $e_i$ is a vector with a
$1$ in the $i$-th coordinate and zeros elsewhere. For symmetric matrices $A, B \in \real^{d \times
  d}$, we use $A \preceq B$ to denote the fact $B - A$ is a positive
semi-definite matrix, and denote by $A \prec B$ when $B - A$ is
positive definite.

\subsection{Related work}
The paper involves a synergy of computational differential equations and data-driven reinforcement learning methods. It is useful to briefly discuss existing works in both domains, and their intersection, as well as connections to the current paper.

\paragraph{Policy evaluation and (projected) fixed-points:} Estimating the value function of a given policy is a canonical problem in reinforcement learning. The temporal difference (TD) methods~\cite{sutton1988learning,boyan2002technical} solve Bellman fixed-point equations and their orthogonal projections using empirical data. Since proposed, several research efforts have been devoted into their theoretical analysis, including approximation properties~\cite{tsitsiklis1997analysis,tsitsiklis1999optimal,mou2023optimal}, asymptotic convergence~\cite{bradtke1996linear,tsitsiklis1997analysis}, and non-asymptotic optimal statistical guarantees~\cite{bhandari2018finite,khamaru2021temporal,mou2023optimal,mou2024optimal}. Though applicable to the discretely-observed continuous-time trajectory settings studied in this paper, they do not take into account discretization error and structures of the underlying dynamics, which lead to sub-optimal guarantees.

Closely related to this work is a line of research on approximation factors of the projected fixed-point methods. Tsitsiklis and Van Roy~\cite{tsitsiklis1997analysis} showed that the exact projected fixed-point has an error bounded by the projection error in the approximating class multiplied by the effective horizon of the RL problem. Such a guarantee is shown to be worst-case optimal~\cite{mou2023optimal}, and improved instance-dependent guarantees were derived~\cite{yu2010error,mou2023optimal}. When applied to the problems we consider, even the instance-dependent bounds lead to dependence on the effective horizon (and therefore the stepsize of the time discretization; see Section~\ref{subsec:general-worst-case-approx} to follow). On the other hand, we obtain an improved bound by utilizing elliptic structures that are not present in general discrete-time Markov chains.

\paragraph{Computational methods in differential equations and control:}
Back in 1915, Galerkin~\cite{galerkin1915series} proposed solving a partial differential equation (PDE) via orthogonal projection onto a finite-dimensional linear subspace. This marked the beginning of a rich literature on numerical methods using projected equations, and lays the foundations of modern finite-element methods. When applied to second-order elliptic equations, error bounds were established by C\'{e}a~\cite{cea1964approximation}, and Babu\v{s}ka~\cite{babuvvska1978error,babuvvska1978error}, among other works. We refer the interested readers to the monograph~\cite{brenner2007mathematical} for a comprehensive survey.

The concepts of value function and differential equations for continuous-time control problems date back to Bellman~\cite{bellman1954dynamic} and Kalman~\cite{kalman1963theory}. See the book~\cite{fleming2006controlled} for a comprehensive survey. The study of numerical methods in controlled diffusion processes dates back to Kushner~\cite{kushner1990numerical}, with several recent developments~\cite{tsitsiklis1995efficient,beard1997galerkin,yin2012continuous}. A major difference between the RL problems and classical control problems is that the underlying system in RL is unknown, and needs to be learned from data.

\paragraph{Machine learning for control and differential equations:} Recent years have witnessed the increasing popularity of data-driven and machine learning methods applied to computational problems in science and engineering. Such methods include solving PDEs through neural networks \cite{raissi2019physics, cai2021physics}, and learning the solution mapping for PDEs \cite{kovachki2023neural, lu2021machine, lu2022sobolev}. Additionally, various methods have been developed for solving the continuous-time optimal control problem with known dynamics, i.e., the Hamilton-Jacobi-Bellman equation \cite{zhou2021actor, li2022neural, ruthotto2020machine}, and specifically for linear quadratic control problems \cite{bhan2023neural, hwang2022solving}.
Though our policy evaluation problem can be characterized as a special case of elliptic PDEs, there is a major difference: the coefficients in their PDEs are either known or directly observed with $\mathrm{i.i.d.}$ unbiased samples, and their methods rely on explicit estimation of the coefficients; by way of contrast, our method requires only access to trajectory data, and can be incorporated into any model-free RL approach. 

In a concurrent work by one of the authors~\cite{zhu2024phibe}, another method is proposed to address the mismatch between the discrete-time data in continuous-time policy evaluation problems. 
There are two major differences. Firstly, \cite{zhu2024phibe} uses the trajectory data to approximate the dynamics instead of the value function, which results in a PDE as the new Bellman equation. Secondly, \cite{zhu2024phibe} only applies to the continuous dynamics with the standard SDE \eqref{eq:cts-time-process}, while the proposed high-order algorithms in this paper do not explicitly depend on underlying dynamics. 


The continuous-time version of TD methods dates back to 1995 by Doya~\cite{doya1995temporal}, where asymptotic convergence is established for a least-square algorithm under deterministic dynamics. A recent series of research~\cite{jia2023q,jia2022policyeval,jia2022policygrad,wang2020reinforcement} studied the asymptotic limits of various RL algorithms in continuous-time dynamics, under the na\"{i}ve first-order discretization schemes. Closely related to our paper is the work~\cite{jia2022policyeval}, which showed inconsistency of a na\"{i}ve error minimization approach, and proposed a new class of martingale-based Monte Carlo approach as remedies. In contrast, our methods could employ existing TD algorithms in a plug-and-play fashion, by solving the projected fixed-points instead of minimizing losses. The paper~\cite{kobeissi2023temporal} studies the convergence rate of TD stochastic approximation methods to the projected fixed-point under an $\mathrm{i.i.d.}$ observation model. They did not address the discretization error and approximation errors, which are non-trivial in the SDE settings. We will provide a detailed comparison for their method with our methods in our companion paper.


\section{Problem setup}\label{sec:problem-setup}

Let $(\semigroup_t : t \geq 0)$ be the semi-group associated to the diffusion~\eqref{eq:cts-time-process}, i.e.,
\begin{align*}
  \semigroup_t \ValFunc (\state) \mydefn \Exs [\ValFunc (\MyState_t) | \MyState_0 = \state], \quad \mbox{for any } \state \in \StateSpace,
\end{align*}
for any continuous function $\ValFunc$ with bounded support.

We can further define the generator operator
\begin{align}
  \generator \ValFunc \mydefn \frac{d}{dt} \semigroup_t \ValFunc \mid_{t = 0} = \inprod{\nabla \ValFunc}{\drift} + \frac{1}{2} \mathrm{Tr} \big( \covMat \nabla^2 \ValFunc \big),
\end{align}
for $C^\infty$ function $f$ with bounded support. By density arguments, it is easy to extend the operator $\generator$ to Sobolev spaces.

Given a step size $\stepsize > 0$, we observe a trajectory $(\MyState_{k \stepsize})_{0 \leq k \leq T / \stepsize}$.

Consider the idealized discrete-time Bellman operator
\begin{align}
  \BellOp^* (\valuefunc) (\state) \mydefn \int_0^\stepsize e^{- \discount s} \Exs \big[ \reward (\MyState_s) \mid \MyState_0 = \state \big] ds + e^{- \discount \stepsize} \Exs \big[ \ValueFunc (\MyState_\stepsize) \mid \MyState_0 = \state  \big].
\end{align}
It is easy to see that $\valuestar = \BellOp^* (\valuestar)$. Furthermore, since the operator $\BellOp^*$ is an $e^{- \discount \stepsize}$-contraction under the $L^{\infty}$-norm, the value function $\valuestar$ is the unique fixed point of $\BellOp^*$.

Let us consider some illustrative examples of the diffusion process~\eqref{eq:cts-time-process}.
\begin{example}[Linear quadratic systems]\upshape
  A classical example in continuous-time control is linear diffusion processes with quadratic reward. We consider the underlying diffusion process~\eqref{eq:cts-time-process} given by an Ornstein--Uhlenbeck process
\begin{align*}
  d X_t = (A X_t + z) dt + \covMat dB_t.
\end{align*}
  for a fixed pair of matrices $A, \covMat \in \real^{\usedim\times \usedim}$ and a vector $z \in \real^\usedim$. The time marginals for such a process admit closed-form expression, and the stationary distribution exists if and only if the eigenvalues of $A$ have negative real parts. If the reward function is quadratic, it is known (see e.g.~\cite{kumar1986stochastic}, Chapter 7) that the value function $\ValTrue$ is quadratic in the variable $x$. Therefore, in this example, zero approximation error can be achieved when we use the basis functions
  \begin{align*}
    \big\{1, x_1, x_2, \cdots, x_\usedim, x_1^2, x_1x_2, \cdots, x_1 x_\usedim, x^2 \cdots, x_{\usedim - 1} x_{\usedim},  x_{\usedim}^2 \big\}
  \end{align*}
to represent the value function.
    \hfill \goodendex
\end{example}

\begin{example}[Langevin diffusion]\upshape\label{example:langevin-diffusion}
  Let $U: \real^\usedim \rightarrow \real$ be a potential function, we consider the Langevin diffusion process
  \begin{align*}
     d X_t = - \frac{1}{2} \nabla U (X_t) + dB_t.
   \end{align*}
  Assuming that $\inprod{\nabla U (x)}{x} \geq c_1 \vecnorm{x}{2} - c_2$ for some $c_1, c_2 > 0$, the process is known to have a unique stationary distribution $\stationary \propto e^{- U}$, and to converge geometrically~\cite{bakry2008simple}. The mixing time depends on the spectral gap of the diffusion generator $\generator$. When we solve the Bellman fixed-point equation using empirical data, a trajectory length longer than the mixing time is necessary~\footnote{We will present the corresponding upper and lower bounds in the companion paper}; yet, for the population-level arguments in the present paper, while some results rely on existence of stationary measures, the convergence is not necessary.
\end{example}

\section{Time discretization for the value function}\label{sec:time-discretization}
We first consider time discretization schemes applied to the value function $\valuestar$, which is defined through a continuous-time integration in Eq~\eqref{eq:defn-value-func}. The goal is to use finite number of observations in the trajectory to form an approximate representation for such a function \emph{without} using the coefficients $(\drift, \covMat)$, and therefore facilitating flexible RL methods via function approximation. To start with, we note that the function $\valuestar$ exactly satisfies the equations
\begin{subequations}\label{eq:idealized-eq-for-valtrue}
\begin{align}
  \ValTrue (\state) &= \int_0^t e^{- \discount s} \Exs \big[ \reward (\MyState_s) \mid \MyState_0 = \state \big] ds + e^{- \discount t} \Exs \big[ \ValTrue (\MyState_\stepsize) \mid \MyState_0 = \state  \big], \quad \mbox{for any $t > 0$},\label{eq:idealized-integral-eq-for-valtrue} \\
  \reward (\state) &= \generator \ValTrue (\state) + \discount \ValTrue (\state) = \frac{d}{dt} \Big( \Exs \big[ \ValTrue (\MyState_t) \mid \MyState_0 = \state \big]\Big) \mid_{t = 0} + \discount \ValTrue (\state).\label{eq:idealized-differential-eq-for-valtrue}
\end{align}
\end{subequations}
The integration in Eq~\eqref{eq:idealized-integral-eq-for-valtrue} and the differentiation in Eq~\eqref{eq:idealized-differential-eq-for-valtrue} could not be solved exactly. A na\"{i}ve solution is to replace them with their respective first-order numerical approximations. In particular, given a discretization step size $\stepsize > 0$, we define the first-order Bellman operator and the first-order diffusion generator
\begin{subequations}\label{eqs:naive-eq-for-valtrue}
\begin{align}
  \BellOp^{(1)} (\valuefunc) (\state) &\mydefn \stepsize \reward (\state) + e^{- \discount \stepsize} \Exs \big[ \ValueFunc (\MyState_\stepsize) \mid \MyState_0 = \state  \big], \quad\mbox{and} \label{eq:naive-integral-eq-for-valtrue} \\
  \generator^{(1)} (\ValFunc) (\state) &\mydefn \frac{1}{\stepsize} \Big\{ \Exs \big[ \ValueFunc (\MyState_\stepsize) \mid \MyState_0 = \state \big] - \ValFunc (\state) \Big\}.\label{eq:naive-differential-eq-for-valtrue}
\end{align}
\end{subequations}
Approximate value functions can be obtained by solving the operator equations $ \BellOp^{(1)} (\valuefunc) = \ValueFunc$ and $ \discount \ValFunc  - \generator^{(1)} (\ValFunc) = \reward$. Indeed, many existing RL-based approaches in literature~\cite{jia2022policyeval,jia2023q} can be seen as solving first-order approximate equations of Eqs~\eqref{eqs:naive-eq-for-valtrue} type.

Despite its simplicity, Eqs~\eqref{eqs:naive-eq-for-valtrue} may lead to large numerical errors. A natural idea, therefore, is to construct more sophisticated high-order approximations to the integration and differentiation in Eqs~\eqref{eq:idealized-eq-for-valtrue}. In the next two subsections, we describe high-order numerical schemes that are compatible with RL, as well as their approximation error guarantees.

\subsection{High-order approximation to the Bellman operator}\label{subsec:higher-order-discre}
By generalizing the idea in Eq~\eqref{eq:naive-integral-eq-for-valtrue}, we are ready to construct a class of higher-order numerical approximations to the Bellman equation~\eqref{eq:idealized-integral-eq-for-valtrue}. For any integer $\numerOrder \geq 2$, given a bounded function $\ValFunc$ and a state $\state \in \StateSpace$, we define the operator
\begin{subequations}\label{eq:defn-high-order-bellman-operator}
\begin{align}
  \BellOp^{(\numerOrder)} (\valuefunc) (\state) &\mydefn \sum_{i = 0}^{\numerOrder - 1} \int_0^{(\numerOrder - 1) \stepsize} e^{- \discount s} W_i (s) \Exs \big[ \reward (\MyState_{i \stepsize}) \mid \MyState_0 = \state \big] ds + e^{- \discount (\numerOrder - 1) \stepsize} \Exs \big[ \ValueFunc (\MyState_{(\numerOrder - 1) \stepsize}) \mid \MyState_0 = \state  \big],\\
  \mbox{where }& W_i (s) \mydefn \Big\{ \prod_{j \neq i} \big( \stepsize j - \stepsize i \big) \Big\}^{-1}  \prod_{j \neq i} \big( s - \stepsize i\big), \quad \mbox{for } i = 1,2, \cdots \numerOrder.
\end{align}
In words, we interpolates the reward functions at discrete points $(\MyState_0 = \state, \MyState_\stepsize, \MyState_{2 \stepsize}, \cdots, \MyState_{(\numerOrder - 1) \stepsize})$ via Lagrangian polynomial, and use them to replace the true reward function in Eq~\eqref{eq:idealized-integral-eq-for-valtrue}. In order to implement the numerical scheme in Eq~\eqref{eq:defn-high-order-bellman-operator} the integrals in Eq~\eqref{eq:defn-high-order-bellman-operator}, we need to compute integrals of the form $\int_0^{(\numerOrder - 1) \stepsize} e^{- \discount s} W_i (s) ds$, which admits close-form solutions, as $W_i$ is a uni-variate polynomial. 

A natural question is whether the operator $\BellOp^{(\numerOrder)}$ possess the desirable structures as in discrete-time RL problems, such as existence and uniqueness of the fixed points. This is easy to verify -- indeed, for any pair $\ValFun_1, \ValFun_2$ of bounded functions, we have
\begin{align*}
  \vecnorm{\BellOp^{(\numerOrder)} \ValFun_1 - \BellOp^{(\numerOrder)} \ValFun_2}{\infty} = e^{- \discount \stepsize (\numerOrder - 1)} \vecnorm{\semigroup_{(\numerOrder - 1) \stepsize} \ValFun_1 - \semigroup_{(\numerOrder - 1) \stepsize} \ValFun_2}{\infty} \leq e^{- \discount \stepsize (\numerOrder - 1)} \vecnorm{ \ValFun_1 -  \ValFun_2}{\infty}.
\end{align*}
By contraction mapping theorem, a unique fixed point exists, i.e., the operator defines an approximate value function $\ValInter{\numerOrder}$ that satisfies the following equation
\begin{align}
    \ValInter{\numerOrder} =& \BellOp^{(\numerOrder)} (\ValInter{\numerOrder}) (\state) .\label{eq:defn-high-order-bellman-eqn}
\end{align}
\end{subequations}

In order to analyze such a fixed-point, we impose the following class of Lipschitz assumption.
\myassumption{Lip$(\numerOrder)$}{assume:smooth-high-order}{
    There exists positive constants $\big\{\smoothness_i^{\drift}\big\}_{i = 0}^{2\numerOrder - 2}$, $\big\{\smoothness_i^{\covMat}\big\}_{i = 0}^{2\numerOrder - 2}$, and $\big\{\smoothness_i^{\reward}\big\}_{i = 0}^{2 \numerOrder}$, such that
    \begin{subequations}
    \begin{align}
      \abss{\partial^\alpha \drift_k (x)} \leq \smoothness_i^{\drift},& \quad \mbox{for any $k \in [d]$ and multi-index $\alpha ~ \mathrm{s.t.}~|\alpha| \leq i$},\\
      \abss{\partial^\alpha \covMat_{k, \ell} (x)} \leq \smoothness_i^{\covMat}, &\quad \mbox{for any $k, \ell \in [d]$ and multi-index $\alpha ~ \mathrm{s.t.}~|\alpha| \leq i$},\\
      \abss{\partial^\alpha \reward (x)} \leq \smoothness_i^{\reward}, &\quad \mbox{for any multi-index $\alpha ~ \mathrm{s.t.}~|\alpha| \leq i$}.
    \end{align}
    \end{subequations}
}{2cm}
Under Assumption~\ref{assume:smooth-high-order}, we can obtain a general class of discretization error guarantees, as stated in the following theorem.
\begin{theorem}\label{thm:discretization-high-order}
  If Assumption~\ref{assume:smooth-high-order} holds true for some integer $\numerOrder > 0$, we have
  \begin{align*}
    \vecnorm{\ValInter{\numerOrder} - \valuestar}{\infty} \leq \discount^{-1} \constScary_\numerOrder \stepsize^\numerOrder,
  \end{align*}
  for a constant $\constScary_\numerOrder$ depending on $\big\{\smoothness_i^{\drift}\big\}_{i = 0}^{2 \numerOrder - 2}$, $\big\{\smoothness_i^{\covMat}\big\}_{i = 0}^{2 \numerOrder - 2}$, $\big\{\smoothness_i^{\reward}\big\}_{i = 0}^{2 \numerOrder}$ and problem dimension $\usedim$.
\end{theorem}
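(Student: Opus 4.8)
The plan is to reduce the claim to a standard fixed-point perturbation estimate and then control the resulting residual through classical Lagrange interpolation error. As established just above the theorem, $\BellOp^{(\numerOrder)}$ is an $e^{- \discount \stepsize (\numerOrder - 1)}$-contraction in $\vecnorm{\cdot}{\infty}$, so writing $\kappa \mydefn e^{- \discount \stepsize (\numerOrder - 1)}$ and using $\ValInter{\numerOrder} = \BellOp^{(\numerOrder)}(\ValInter{\numerOrder})$ together with the triangle inequality yields
\begin{align*}
  \vecnorm{\ValInter{\numerOrder} - \valuestar}{\infty} \leq \frac{1}{1 - \kappa} \, \vecnorm{\BellOp^{(\numerOrder)}(\valuestar) - \valuestar}{\infty}.
\end{align*}
Thus the entire task becomes bounding the residual $\vecnorm{\BellOp^{(\numerOrder)}(\valuestar) - \valuestar}{\infty}$.

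To compute the residual I would invoke the exact identity~\eqref{eq:idealized-integral-eq-for-valtrue} at $t = (\numerOrder - 1)\stepsize$, which $\valuestar$ satisfies. Subtracting it from $\BellOp^{(\numerOrder)}(\valuestar)$, the boundary terms $e^{- \discount (\numerOrder-1)\stepsize} \Exs[\valuestar(\MyState_{(\numerOrder-1)\stepsize}) \mid \state]$ cancel exactly, and — since $\Exs[\reward(\MyState_{i\stepsize}) \mid \state]$ does not depend on $s$ — one is left with
\begin{align*}
  \BellOp^{(\numerOrder)}(\valuestar)(\state) - \valuestar(\state) = \int_0^{(\numerOrder-1)\stepsize} e^{-\discount s} \Big( \sum_{i=0}^{\numerOrder-1} W_i(s)\, g_\state(i\stepsize) - g_\state(s) \Big)\, ds,
\end{align*}
where $g_\state(s) \mydefn \semigroup_s \reward(\state) = \Exs[\reward(\MyState_s) \mid \MyState_0 = \state]$ for fixed $\state$. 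The bracketed quantity is precisely the error of the degree-$(\numerOrder-1)$ Lagrange interpolant of $g_\state$ at the nodes $\{i\stepsize\}_{i=0}^{\numerOrder-1}$, so by the Lagrange remainder formula it equals $-\tfrac{1}{\numerOrder!} g_\state^{(\numerOrder)}(\xi) \prod_{i=0}^{\numerOrder-1}(s - i\stepsize)$ for some $\xi$ in the interval. The key structural identity is that differentiation in time commutes with the generator: $\tfrac{d^\numerOrder}{ds^\numerOrder}\semigroup_s \reward = \semigroup_s \generator^\numerOrder \reward$, whence $g_\state^{(\numerOrder)}(s) = \Exs[(\generator^\numerOrder \reward)(\MyState_s)\mid\state]$ and therefore $|g_\state^{(\numerOrder)}(s)| \leq \vecnorm{\generator^\numerOrder \reward}{\infty}$ uniformly in $\state$ and $s$. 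Bounding $|\prod_{i=0}^{\numerOrder-1}(s - i\stepsize)| \leq ((\numerOrder-1)\stepsize)^\numerOrder$, using $e^{-\discount s}\leq 1$, and integrating over an interval of length $(\numerOrder-1)\stepsize$ gives a residual of order $\vecnorm{\generator^\numerOrder \reward}{\infty}\, \stepsize^{\numerOrder+1}$. Finally the elementary estimate $1 - \kappa \geq c\,\discount\stepsize(\numerOrder-1)$ (valid for bounded $\discount\stepsize(\numerOrder-1)$, e.g.\ via $1 - e^{-x} \geq x e^{-x}$) converts the contraction prefactor into a factor $\discount^{-1}\stepsize^{-1}$, producing exactly the claimed rate $\discount^{-1}\constScary_\numerOrder \stepsize^\numerOrder$.

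The main obstacle is the final ingredient: bounding $\vecnorm{\generator^\numerOrder \reward}{\infty}$ in terms of the constants of Assumption~\ref{assume:smooth-high-order}. I would expand $\generator^\numerOrder$ as a linear differential operator of order $2\numerOrder$ whose coefficients are finite sums of products of derivatives of $\drift$ and $\covMat$, acting against derivatives of $\reward$. A short induction on $\numerOrder$ — each application of $\generator = \inprod{\nabla\cdot}{\drift} + \tfrac12\mathrm{Tr}(\covMat\nabla^2\cdot)$ raises the differential order acting on $\reward$ by at most two and differentiates the coefficients at most twice more — shows that derivatives of $\reward$ up to order $2\numerOrder$ and of $\drift,\covMat$ up to order $2\numerOrder - 2$ appear, and no higher. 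This is precisely the range of derivatives controlled by Assumption~\ref{assume:smooth-high-order}, so the expansion closes and yields a bound on $\vecnorm{\generator^\numerOrder\reward}{\infty}$ depending only on those constants and, through the traces and coordinate sums, a polynomial factor in the dimension $\usedim$. This combinatorial bookkeeping, together with the regularity needed to justify differentiating $s \mapsto \semigroup_s\reward$ exactly $\numerOrder$ times (which the same derivative bounds supply), is the technically delicate part; the interpolation and contraction steps are routine.
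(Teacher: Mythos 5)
Your proposal is correct and follows essentially the same route as the paper's proof: the paper likewise subtracts the two fixed-point equations (equivalently, your $(1-\kappa)^{-1}$ contraction--perturbation bound), reduces the residual to the Lagrange interpolation error of $g_\state(s) = \semigroup_s \reward(\state)$ at the nodes $\{i\stepsize\}_{i=0}^{\numerOrder-1}$, and controls $\partial_s^{\numerOrder} g_\state = \semigroup_s \generator^{\numerOrder}\reward$ via an induction (Lemma~\ref{lemma:semigroup-high-order-bound}) showing $\generator^{\numerOrder}\reward$ is a polynomial in derivatives of $\reward$ up to order $2\numerOrder$ and of $\drift,\covMat$ up to order $2\numerOrder-2$ --- precisely your combinatorial bookkeeping. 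Your caveat that converting $1-\kappa$ into the factor $\discount^{-1}\stepsize^{-1}$ requires $\discount\stepsize(\numerOrder-1)$ bounded is shared implicitly by the paper's final substitution step, so it is not a gap relative to the paper's own argument.
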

\noindent See Section~\ref{subsec:proof-thm-high-order} for its proof.

A few remarks are in order. First, note that the fixed-point equation $\ValFunc = \BellOp^{(\numerOrder)} (\valuefunc) $ is equivalent to a discrete-time policy evaluation problem with Markovian transition kernel $\semigroup_{(\numerOrder - 1) \stepsize}$, discount factor $e^{- \stepsize (\numerOrder - 1)}$, and a reward function given by Lagrangian polynomial interpolations. This fact facilitates the use of various RL algorithms on a plug-and-play basis. In the next subsection, we will discuss concretely how temporal difference (TD) methods apply to this problem, and in particular, how the underlying diffusion process improves its theoretical guarantees.

Second, compared to the na\"{i}ve first-order scheme~\eqref{eq:naive-integral-eq-for-valtrue}, the high-order discretization technique in Eq~\eqref{eq:defn-high-order-bellman-operator} allows us to construct approximations to the value function $\ValTrue$ with a $\numerOrder$-th order numerical accuracy, where the exponent $\numerOrder$ depends on the smoothness assumptions satisfied by the coefficients $(\drift, \covMat, \reward)$. Indeed, they are needed to ensure the existence and boundedness of the $\numerOrder$-th order time derivative of the uni-variate function $t \mapsto \semigroup_t \reward (\state)$. From the proof, it is easy to see that the constant $\constScary_\numerOrder$ depends polynomially on $\big\{\smoothness_i^{\drift}\big\}_{i = 0}^{2 \numerOrder - 2}$, $\big\{\smoothness_i^{\covMat}\big\}_{i = 0}^{2 \numerOrder - 2}$, $\big\{\smoothness_i^{\reward}\big\}_{i = 0}^{2 \numerOrder}$ and problem dimension $\usedim$. We omit its specific form for simplicity.

It is useful to discuss some special cases of Theorem~\ref{thm:discretization-high-order}. For $\numerOrder = 2$, the discretized Bellman operator~\eqref{eq:defn-high-order-bellman-operator} uses a one-step linear interpolation between the current state and the state after the time period $\stepsize$. In particular, the discretized operator can be explicitly computed as
\begin{multline*}
  \BellOp^{(2)} (\valuefunc) (\state) = \frac{1 - e^{- \discount \stepsize}}{\discount} \reward (\state) \\
  + \frac{1 - (1 + \discount \stepsize) e^{- \discount \stepsize}}{\discount^2 \stepsize} \Big\{ \Exs \big[ \reward (\MyState_\stepsize) \mid \MyState_0 = \state \big] - \reward (\state) \Big\} + e^{- \discount \stepsize} \Exs \big[ \ValueFunc (\MyState_\stepsize) \mid \MyState_0 = \state  \big]
\end{multline*}
Under Assumption~\fakerefassumelip{(2)}, Theorem~\ref{thm:discretization-high-order} implies the error bound
  \begin{align*}
    \vecnorm{\ValInter{2} - \valuestar}{\infty} \leq \frac{\constScary_2}{\discount} \stepsize^2.
  \end{align*}
In~\Cref{subsubsec:details-second-order-bellman}, we give an explicit expression for the form of the constant $\constScary_2$. It can be seen that $\constScary_2$ depends on the smoothness parameters polynomially.

\subsection{High-order approximation to the diffusion generator}
Another natural approach towards the construction of high-order Bellman equation is by generalizing the first-order approximate generator in Eq~\eqref{eq:naive-differential-eq-for-valtrue}. For any bounded continuous function $f$ and state $x \in \StateSpace$, we define
\begin{subequations}\label{eq:defn-high-order-generator}
\begin{align}
\generator^{(\numerOrder)} (\valuefunc)(\state) \mydefn \frac{1}{\stepsize} \sum_{j=0}^\numerOrder\coef{\numerOrder}_j \Exs \big[ \valuefunc(\MyState_{j\stepsize}) \mid \MyState_0 = \state \big],
\end{align}
where the coefficients are given by $\coef{\numerOrder} = \big[ \coef{\numerOrder}_0,\cdots, \coef{\numerOrder}_i \big]^\top = \l(A^{(\numerOrder)}\r)^{-1}b^{(\numerOrder)}$, with
\begin{align}\label{def of A b}
A^{(\numerOrder)}_{kj} = j^k, \quad 0\leq k,j \leq i; \quad b^{(\numerOrder)}_k = 
\begin{cases}
0, & k \neq 1, ~0\leq k \leq \numerOrder \\
1, & k = 1
\end{cases}
\end{align}
The $\numerOrder$-th order approximate value function $\valhatinter{\numerOrder}$ can be further defined as the solution to the integral equation 
\begin{align}
    \discount \ValFunc - \generator^{(\numerOrder)} \ValFunc = \reward.\label{eq:defn-discretized-gen-eq-high-order}
\end{align}
Intuitively, the operator $\generator^{(\numerOrder)}$ approximates the diffusion generator $\generator$ by a $\numerOrder$-th order numerical scheme, following a polynomial-based approach similar to the backward differentiation formula~\cite{curtiss1952integration}.
\end{subequations}

In the following proposition, we show that the distance between $\valhatinter{\numerOrder}(\state)$ for with the true value function $\valuestar$ is indeed $\numerOrder$-th order, for $\numerOrder = 1, 2$.

\begin{proposition}\label{thm:diffusion-second-order}
If Assumption~\ref{assume:smooth-high-order} holds true for integer $\numerOrder = 1,2$, we have
\[
\vecnorm{ \valhatinter{\numerOrder} - \valuestar }{\infty} \leq \constdiff_\numerOrder\stepsize^\numerOrder
\]
holds true for $\forall \stepsize > 0$ when $\numerOrder=1$, and $\forall \stepsize \in (0, \tfrac{1}{3 \discount})$ when $\numerOrder=2$. The constant $\constdiff_\numerOrder$ depends on $\big\{\smoothness_i^{\drift}\big\}_{i = 0}^{2 \numerOrder - 2}$, $\big\{\smoothness_i^{\covMat}\big\}_{i = 0}^{2 \numerOrder - 2}$, $\big\{\smoothness_i^{\reward}\big\}_{i = 0}^{2 \numerOrder}$, discount factor $\discount$ and problem dimension $\usedim$.
\end{proposition}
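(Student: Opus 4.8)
The plan is to reduce the proposition to two standard ingredients: a \emph{consistency} estimate controlling the local truncation error of $\generator^{(\numerOrder)}$ applied to $\valuestar$, and a \emph{stability} estimate showing that $\discount I - \generator^{(\numerOrder)}$ is boundedly invertible on $L^\infty$. Set $e_\numerOrder \defn \valhatinter{\numerOrder} - \valuestar$. Subtracting the defining equation~\eqref{eq:defn-discretized-gen-eq-high-order}, namely $\discount \valhatinter{\numerOrder} - \generator^{(\numerOrder)} \valhatinter{\numerOrder} = \reward$, from the exact relation $\discount \valuestar - \generator \valuestar = \reward$ of~\eqref{eq:idealized-differential-eq-for-valtrue} gives the error equation $(\discount I - \generator^{(\numerOrder)}) e_\numerOrder = \tau_\numerOrder$, where $\tau_\numerOrder \defn (\generator^{(\numerOrder)} - \generator)\valuestar$ is the truncation error. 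It then suffices to prove $\vecnorm{\tau_\numerOrder}{\infty} \lesssim \stepsize^\numerOrder$ and $\vecnorm{(\discount I - \generator^{(\numerOrder)})^{-1}}{} \lesssim \discount^{-1}$ as an operator on $L^\infty$; invertibility also supplies existence and uniqueness of $\valhatinter{\numerOrder}$ for free.

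For consistency I would fix a state $x$ and Taylor-expand the scalar map $t \mapsto \semigroup_t \valuestar(x)$, whose $m$-th derivative at $t=0$ equals $\generator^m \valuestar(x)$ since $\tfrac{d}{dt}\semigroup_t = \semigroup_t \generator$. Substituting the order-$\numerOrder$ expansion of $\semigroup_{j\stepsize}\valuestar$ into $\generator^{(\numerOrder)}\valuestar = \tfrac{1}{\stepsize}\sum_{j=0}^\numerOrder \coef{\numerOrder}_j \semigroup_{j\stepsize}\valuestar$ and collecting powers of $\stepsize$, the coefficient of $\generator^m\valuestar$ is $\tfrac{1}{m!}\stepsize^{m-1}\sum_j \coef{\numerOrder}_j j^m$. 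The defining linear system~\eqref{def of A b} is precisely the requirement that $\sum_{j=0}^\numerOrder \coef{\numerOrder}_j j^m$ equals $1$ when $m=1$ and $0$ for every other $m \in \{0,1,\dots,\numerOrder\}$, so all terms of order below $\stepsize^\numerOrder$ vanish except the $m=1$ term, which reproduces $\generator\valuestar$; the leading remainder is $O(\stepsize^\numerOrder)$ with Lagrange-form coefficient proportional to $\generator^{\numerOrder+1}\valuestar$. To bound the remainder using only finitely many derivatives I would take the integral remainder and substitute the recursion $\generator^m\valuestar = \discount^m \valuestar - \sum_{k=0}^{m-1}\discount^{m-1-k}\generator^k \reward$, obtained by iterating $\generator\valuestar = \discount\valuestar - \reward$. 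This reduces $\vecnorm{\generator^{\numerOrder+1}\valuestar}{\infty}$ to $\vecnorm{\valuestar}{\infty} \le \discount^{-1}\vecnorm{\reward}{\infty}$ together with $\vecnorm{\generator^k \reward}{\infty}$ for $k \le \numerOrder$, each of which involves derivatives of $\reward$ up to order $2\numerOrder$ and of $(\drift,\covMat)$ up to order $2\numerOrder-2$ — exactly the quantities controlled by Assumption~\ref{assume:smooth-high-order}. This yields $\vecnorm{\tau_\numerOrder}{\infty} \le C_\numerOrder \stepsize^\numerOrder$.

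The stability step is where the argument becomes delicate, and it explains the restriction to $\numerOrder \in \{1,2\}$. The key is the semigroup identity $\semigroup_{j\stepsize} = \semigroup_\stepsize^{\,j}$, so with $P \defn \semigroup_\stepsize$ the operator $\discount I - \generator^{(\numerOrder)}$ is a degree-$\numerOrder$ polynomial in the $L^\infty$-contraction $P$, for which $\vecnorm{P^k}{} \le 1$ and hence $\vecnorm{(zI-P)^{-1}}{}\le (z-1)^{-1}$ for real $z>1$ by a Neumann series. For $\numerOrder=1$, $\coef{1}=(-1,1)$ gives $\discount I - \generator^{(1)} = \stepsize^{-1}\big((1+\discount\stepsize)I - P\big)$, whence $\vecnorm{e_1}{\infty} \le \discount^{-1}\vecnorm{\tau_1}{\infty}$ for every $\stepsize>0$. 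For $\numerOrder=2$, $\coef{2}=(-\tfrac32,2,-\tfrac12)$ gives $\discount I - \generator^{(2)} = \tfrac{1}{2\stepsize}\big(P^2 - 4P + (3+2\discount\stepsize)I\big)$, which factors as $\tfrac{1}{2\stepsize}(z_+ I - P)(z_- I - P)$ with $z_\pm = 2 \pm \sqrt{1-2\discount\stepsize}$. The constraint $\stepsize \in (0, \tfrac{1}{3\discount})$ keeps the discriminant positive, so both roots are real and exceed $1$; applying the resolvent bound to each factor and using $\tfrac{1}{z_--1}\cdot\tfrac{1}{z_+-1} = \tfrac{1}{2\discount\stepsize}$ collapses the prefactor to give $\vecnorm{(\discount I - \generator^{(2)})^{-1}}{}\le \discount^{-1}$, and hence $\vecnorm{e_2}{\infty} \le \discount^{-1}\vecnorm{\tau_2}{\infty}$.

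The main obstacle is exactly this stability estimate for $\numerOrder=2$: since $\coef{2}_2=-\tfrac12<0$ while each $\semigroup_{j\stepsize}$ has $L^\infty$-operator norm one, the naive triangle-inequality bound on $\discount I - \generator^{(2)}$ blows up as $\stepsize\to0$, and one genuinely must exploit the semigroup structure $\semigroup_{2\stepsize}=\semigroup_\stepsize^2$ to factor the operator polynomial and invert it factor-by-factor. Combining the two steps yields $\vecnorm{e_\numerOrder}{\infty} \le \discount^{-1}\vecnorm{\tau_\numerOrder}{\infty} \le \constdiff_\numerOrder\stepsize^\numerOrder$, with $\constdiff_\numerOrder$ polynomial in the constants of Assumption~\ref{assume:smooth-high-order} and in $\discount,\usedim$. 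For general $\numerOrder$ the same route would require every root of the associated characteristic polynomial to lie outside the closed unit disk, a root-stability condition that need not hold; this is why the proposition is stated only for $\numerOrder=1,2$.
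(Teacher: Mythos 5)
Your proposal is correct, and it shares the paper's overall skeleton: the same error equation $(\discount I - \geninter{\numerOrder})(\valhatinter{\numerOrder} - \valuestar) = (\geninter{\numerOrder} - \generator)\valuestar$, and a consistency estimate that is essentially identical to the paper's appendix Lemma~\ref{thm:generator-high-order} — Taylor expansion of $s \mapsto \semigroup_s \valuestar(\state)$, the moment conditions $\sum_j \coef{\numerOrder}_j j^m = \bm{1}_{m=1}$ from the Vandermonde system~\eqref{def of A b}, an integral remainder proportional to $\generator^{\numerOrder+1}\valuestar$, and the reduction via $\generator\valuestar = \discount\valuestar - \reward$ together with $\vecnorm{\valuestar}{\infty} \leq \discount^{-1}\vecnorm{\reward}{\infty}$ (your sign convention here is the correct one; the paper's Eq~\eqref{eq:idealized-differential-eq-for-valtrue} as printed carries a sign slip, but its appendix computation agrees with yours). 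Where you genuinely diverge is the stability step. The paper proves Lemma~\ref{lemma:gen-second-order} by unrolling the fixed point into the series $\valhatinter{\numerOrder}(\state) = \Exs\big[\sum_{j\geq 0} \coefd{\numerOrder}_j \reward(\MyState_{j\stepsize})\coefc{\numerOrder}_0 \mid \MyState_0 = \state\big]$, whose coefficients obey a linear recurrence, and then bounds $|\coefd{2}_j|$ by diagonalizing the $2\times 2$ companion matrix, whose eigenvalues $k_{1,2} = \big(2 \mp \sqrt{1-2\discount\stepsize}\big)^{-1}$ are exactly the reciprocals of your roots $z_\pm$. Your factorization $\discount I - \geninter{2} = \tfrac{1}{2\stepsize}(z_+ I - P)(z_- I - P)$ with $P = \semigroup_\stepsize$ and the resolvent bound $\vecnorm{(zI - P)^{-1}}{} \leq (z-1)^{-1}$ exploits the same characteristic polynomial $z^2 - 4z + (3+2\discount\stepsize)$ but buys three things the paper's route does not: existence and uniqueness of $\valhatinter{\numerOrder}$ come for free from the Neumann series (the paper's series representation tacitly presupposes a bounded solution); the identity $(z_+ - 1)(z_- - 1) = 2\discount\stepsize$ collapses to the sharp stability constant $\discount^{-1}$, versus the paper's $\constbd_2 = 12/\discount + 16\stepsize$ obtained from crude entrywise eigenvector bounds; and your argument is valid for all $\stepsize < \tfrac{1}{2\discount}$, whereas the paper's restriction $\discount\stepsize < \tfrac{1}{3}$ is needed only to force $|Q_{ij}| \leq 4$ in its eigen-decomposition. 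Your closing observation — that extending beyond $\numerOrder = 2$ requires all roots of the associated polynomial to lie outside the closed unit disk — matches the paper's remark that the backward-difference construction loses zero-stability for $\numerOrder \geq 7$.
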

\noindent See Section \ref{proof:diffusion-second-order} for its proof.

A few remarks are in order. First, unlike~\Cref{thm:discretization-high-order}, in~\Cref{thm:diffusion-second-order}, we only establish approximation error bounds for $\numerOrder \in \{1,2\}$. Indeed, the backward difference formulae for ODEs are not zero-stable for $\numerOrder \geq 7$. Since the proof of~\Cref{thm:diffusion-second-order} relies on the stability of a similar recursive structure, we suspect that such type of guarantees cannot hold true for $\numerOrder \geq 7$. As for the case of $3 \leq \numerOrder \leq 6$, an analogous proof could be potentially carried through, but we omit them for simplicity, and only present the simple case for illustration purposes. Nevertheless, as we show in~\Cref{subsec:approx-gen}, in the case of $\numerOrder \geq 7$, even if~\Cref{eq:defn-discretized-gen-eq-high-order} does not admit a desirable solution, its projected version still satisfies $\numerOrder$-th order guarantees, making it suitable for practical algorithms.

When $\numerOrder = 2$, the second-order approximation $\valhatinter{2}$ satisfies,
\[
\frac{1}{\stepsize}\Exs\l[-\frac12\l(\valhatinter{2}(\MyState_{2\stepsize}) - \valhatinter{2}(\MyState_0)\r) + 2\l(\valhatinter{2}(\MyState_{\stepsize}) - \valhatinter{2}(\MyState_0)\r)|\MyState_0 = \state\r] = -\reward(\state) + \discount \valhatinter{2}(\state). 
\]
which can be equivalently written as
\[
\valhatinter{2}(\state) = \frac{\stepsize}{\discount\stepsize+3/2} \reward(\state) + \Exs\l[-\frac{1/2}{\discount\stepsize+3/2} \valhatinter{2}(\MyState_{2\stepsize}) + \frac{2}{\discount\stepsize+3/2}\valhatinter{2}(\MyState_\stepsize)|\MyState_0 = \state\r].
\]
Compare to the second-order Bellman operator, the diffusion generator applies high-order approximation schemes to the look-ahead value function instead of the observed rewards. It is an interesting direction of future research to study and compare the effect of the two different approaches on the approximation and statistical errors.

\section{Practical algorithms via function approximation}\label{sec:projected-fixed-pt}
In~\Cref{sec:time-discretization}, we have converted the continuous-time policy evaluation problem into a discrete-time one. Yet the discrete-time operators are still infinite-dimensional kernel integral operators. In reinforcement learning, a practical approach is via function approximation, i.e., by projecting the Bellman fixed-point equations onto low-complexity structures, and solving the projected fixed-points. In this section, we discuss the impact of function approximation on the continuous-time policy evaluation problems. We start by presenting approximation guarantees derived directly from RL literature, which could lead to potential error blow-ups and worse numerical order. Then, by exploiting elliptic structures in the diffusion process, we obtain improved guarantees for both the high-order Bellman operator and the high-order generator. We also discuss practical estimators for the population-level projected fixed-points. Finally, while most results require the existence of the stationary distribution of the diffusion process~\eqref{eq:cts-time-process}, in Section~\ref{subsec:occumsr-results}, we also extend them to a discounted occupancy measure.

\subsection{General worst-case approximation guarantees}\label{subsec:general-worst-case-approx}
Assume that the diffusion process~\eqref{eq:cts-time-process} has a stationary distribution $\stationary$.
Consider a Hilbert space $\mathbb{H} \mydefn \ltwospace (\stationary)$ where we use the stationary distribution $\stationary$ as the weighting function. Given an integer $\mbasis > 0$, let $(\psi_j)_{1 \leq j \leq \mbasis}$ be a collection of functions in $\ltwospace (\stationary)$. We use a finite-dimensional linear subspace $\LinSpace = \mathrm{span} (\psi_1, \psi_2, \cdots, \psi_\mbasis)$ to approximate the value function of interests. In doing so, we first define the orthonormal projection operator
\begin{align*}
  \projecttolin (\ValFun) \mydefn \arg\min_{\ValFun' \in \LinSpace} \vecnorm{\ValFun - \ValFunc'}{\stationary}.
\end{align*}
By applying the projected fixed-point approach~\cite{tsitsiklis1997analysis,mou2023optimal} to the discrete-time RL problem defined by Eq~\eqref{eq:defn-high-order-bellman-operator}, we seek to solve the fixed point to the composition of the projection operator and the Bellman operator, i.e., to find a function $\valuebar_{\Bel}^{(\numerOrder)} $ that satisfies
\begin{align}
  \valuebar_{\Bel}^{(\numerOrder)} = \projecttolin \circ \BellOp^{(\numerOrder)}  \big( \valuebar_{\Bel}^{(\numerOrder)} \big).\label{eq:defn-projected-discretized-fixed-pt}
\end{align}
Note that the function $\valuebar_{\Bel}^{(\numerOrder)} $ lives in the finite-dimensional linear subspace $\LinSpace$, which can be represented using the basis functions $(\psi_j)_{j \geq 1}$. Under such a basis function representation, the projected fixed-point equation~\eqref{eq:defn-projected-discretized-fixed-pt} is a $\mbasis$-dimensional linear equation. In particular, we note that Eq~\eqref{eq:defn-projected-discretized-fixed-pt} is equivalent to the orthoganlity conditions
\begin{align}
   \statinprod{\valuebar_{\Bel}^{(\numerOrder)} - \BellOp^{(\numerOrder)}  \big( \valuebar_{\Bel}^{(\numerOrder)} \big)}{\psi_j} = 0, \quad \mbox{for } j = 1,2, \cdots, \mbasis.\label{eq:galerkin-orthogonality-condition}
\end{align}
When trajectory data are available, such an equation can be approximated empirically and solved through efficient stochastic approximation methods. We defer detailed discussion to next subsections.

Now we start establishing the approximation guarantees for the projected fixed-point. By applying known results in discrete-time reinforcement learning~\cite{tsitsiklis1997analysis} in combination with Theorem~\ref{thm:discretization-high-order}, it is straightforward to establish approximation guarantees for the projected fixed-point $\valuebar_{\Bel}^{(\numerOrder)}$, as stated in the following corollary.
\begin{corollary}\label{cor:projected-discretized-bellman}
  Under the setup of Theorem~\ref{thm:discretization-high-order}, for stepsize satisfying $\stepsize \leq \tfrac{1}{\discount \numerOrder}$ the unique solution $\valuebar_{\Bel}^{(\numerOrder)} $ to Eq~\eqref{eq:defn-projected-discretized-fixed-pt} exists, satisfying the bound
  \begin{align*}
    \statnorm{ \valuebar_{\Bel}^{(\numerOrder)} - \ValTrue} \leq 2 \discount^{-3/2} \constScary_\numerOrder \stepsize^{\numerOrder - 1/2 } + \frac{1}{\sqrt{\discount (\numerOrder - 1) \stepsize}} \inf_{\ValFunc \in \LinSpace} \statnorm{\ValFunc - \ValTrue}.
  \end{align*}
\end{corollary}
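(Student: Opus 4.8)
The plan is to recognize~\eqref{eq:defn-projected-discretized-fixed-pt} as an instance of the classical projected Bellman equation for a discounted discrete-time Markov chain, invoke the Tsitsiklis--Van Roy approximation-factor bound relative to the \emph{exact} fixed point $\ValInter{\numerOrder}$ of $\BellOp^{(\numerOrder)}$, and only then transfer the error onto $\valuestar$ using the discretization guarantee of Theorem~\ref{thm:discretization-high-order}.

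First I would establish that $\BellOp^{(\numerOrder)}$ is a $\contraction$-contraction in the stationary-weighted space $\ltwospace(\stationary)$, where $\contraction \defn e^{-\discount(\numerOrder-1)\stepsize}$. Since only the look-ahead term $e^{-\discount(\numerOrder-1)\stepsize}\semigroup_{(\numerOrder-1)\stepsize}\valuefunc$ depends on the argument, we have $\BellOp^{(\numerOrder)}\ValFun_1 - \BellOp^{(\numerOrder)}\ValFun_2 = \contraction\,\semigroup_{(\numerOrder-1)\stepsize}(\ValFun_1-\ValFun_2)$; because $\stationary$ is invariant for the diffusion it is invariant for $\semigroup_{(\numerOrder-1)\stepsize}$, and Jensen's inequality then gives $\statnorm{\semigroup_{(\numerOrder-1)\stepsize}h}\le\statnorm{h}$, so the stated factor follows. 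As $\projecttolin$ is a non-expansive $\stationary$-orthogonal projection, $\projecttolin\circ\BellOp^{(\numerOrder)}$ is again a $\contraction$-contraction (here $\contraction<1$ precisely because $\numerOrder\ge 2$), so the Banach fixed-point theorem yields existence and uniqueness of $\valuebar_{\Bel}^{(\numerOrder)}$.

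Next I would run the standard Pythagorean argument. Writing $\valuebar_{\Bel}^{(\numerOrder)} - \ValInter{\numerOrder} = (\valuebar_{\Bel}^{(\numerOrder)} - \projecttolin\ValInter{\numerOrder}) + (\projecttolin\ValInter{\numerOrder} - \ValInter{\numerOrder})$, the two summands lie in $\LinSpace$ and in its $\stationary$-orthogonal complement respectively, so they obey a Pythagorean identity; bounding the in-subspace part via the contraction (using $\projecttolin\ValInter{\numerOrder}=\projecttolin\BellOp^{(\numerOrder)}\ValInter{\numerOrder}$, since $\ValInter{\numerOrder}$ is the exact fixed point) and the orthogonal part by the best approximation error yields
\[
\statnorm{\valuebar_{\Bel}^{(\numerOrder)} - \ValInter{\numerOrder}} \le \frac{1}{\sqrt{1-\contraction^2}}\,\inf_{\ValFunc\in\LinSpace}\statnorm{\ValFunc-\ValInter{\numerOrder}}.
\]
I would then pass to $\valuestar$ via two triangle inequalities — one inside the infimum, $\inf_{\ValFunc\in\LinSpace}\statnorm{\ValFunc-\ValInter{\numerOrder}}\le\inf_{\ValFunc\in\LinSpace}\statnorm{\ValFunc-\valuestar}+\statnorm{\valuestar-\ValInter{\numerOrder}}$, and one outside, $\statnorm{\valuebar_{\Bel}^{(\numerOrder)}-\valuestar}\le\statnorm{\valuebar_{\Bel}^{(\numerOrder)}-\ValInter{\numerOrder}}+\statnorm{\ValInter{\numerOrder}-\valuestar}$ — and control the discretization gap by $\statnorm{\ValInter{\numerOrder}-\valuestar}\le\vecnorm{\ValInter{\numerOrder}-\valuestar}{\infty}\le\discount^{-1}\constScary_\numerOrder\stepsize^\numerOrder$, where the first step uses that $\stationary$ is a probability measure and the second is exactly Theorem~\ref{thm:discretization-high-order}.

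Finally I would collect constants, which reduces to elementary estimates on $\contraction$ under the restriction $\stepsize\le\tfrac{1}{\discount\numerOrder}$ (forcing $\discount(\numerOrder-1)\stepsize<1$ and $\discount\stepsize\le 1$): the elementary bound $1-e^{-2u}\ge u$ on the relevant range gives $\tfrac{1}{\sqrt{1-\contraction^2}}\le\tfrac{1}{\sqrt{\discount(\numerOrder-1)\stepsize}}$, the coefficient of the approximation term, while $\big(\tfrac{1}{\sqrt{1-\contraction^2}}+1\big)\,\discount^{-1}\constScary_\numerOrder\stepsize^\numerOrder\le 2\discount^{-3/2}\constScary_\numerOrder\stepsize^{\numerOrder-1/2}$ produces the first term. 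The only conceptual point — and the sole place any real idea is needed — is the detour through $\ValInter{\numerOrder}$: the projected-fixed-point machinery controls only the distance to the operator's own fixed point, so Theorem~\ref{thm:discretization-high-order} is what converts this into a bound against the genuine value function $\valuestar$. The remaining difficulty is purely bookkeeping, namely keeping the elementary $e^{-2u}$ estimates tight enough that the clean numerical constants (the factor $2$ and the coefficient $\tfrac{1}{\sqrt{\discount(\numerOrder-1)\stepsize}}$) emerge, which is exactly where the precise form of the stepsize constraint is consumed.
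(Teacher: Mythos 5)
Your proposal is correct and follows essentially the same route as the paper: contraction of $\BellOp^{(\numerOrder)}$ in $\ltwospace(\stationary)$ via stationarity and Jensen, non-expansiveness of $\projecttolin$ plus Banach's fixed-point theorem, the approximation-factor bound $\statnorm{\valuebar_{\Bel}^{(\numerOrder)} - \ValInter{\numerOrder}}^2 \leq (1 - e^{-2\discount(\numerOrder-1)\stepsize})^{-1}\inf_{\ValFunc\in\LinSpace}\statnorm{\ValFunc-\ValInter{\numerOrder}}^2$, two triangle inequalities combined with Theorem~\ref{thm:discretization-high-order}, and the elementary estimate $1-e^{-2u}\geq u$ together with $\discount\stepsize\leq 1$ to extract the stated constants. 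The only cosmetic difference is that you re-derive the approximation-factor bound inline via the standard Pythagorean argument, whereas the paper simply cites it from the Tsitsiklis--Van Roy line of work, so the substance is identical.
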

\noindent See Section~\ref{subsec:proof-cor-projected} for the proof of this corollary.

A few remarks are in order. First, the error in the projected fixed-point $\valuebar_{\Bel}^{(\numerOrder)}$ are from two sources: the approximation error of the linear subspace $\LinSpace$, and the numerical errors induced by discrete-time approximation. Both terms come along with an amplifying factor of order $1 / \sqrt{\stepsize}$. The amplification effect makes the numerical error weaker by half an order, and more importantly, with a fixed subspace, the approximation error can blow up by taking a smaller stepsize. Such an approximation factor stems from the dependence on ``effective horizon'' in discrete-time RL problems, and is known to be unavoidable in the worst case~\cite{mou2023optimal}. Fortunately, in the next sections, we will show that the structures in the underlying diffusion process~\eqref{eq:cts-time-process} can be utilized to improve the approximation error guarantees.

\subsection{Improved approximation guarantees under ellipticity}
The approximation error bounds in~\Cref{cor:projected-discretized-bellman} involves an approximation factor of order $\stepsize^{-1/2}$, which may amplify the approximation error significantly when taking a small stepsize. Such a blow-up can be avoided under some stronger assumptions. We present the improved results in this section.

First, we need the diffusion term to be uniformly elliptic. Uniform ellipticity is a standard assumption in PDE literature, which guarantees the existence and uniqueness of the solution in a Sobolev space, as well as regularity of the solution.
\myassumption{UE$(\lammin, \lammax)$}{assume:uniform-elliptic}
{
  There exists positive constants $\lammin, \lammax$, such that
  \begin{align*}
    \lammin I_d \preceq \covMat (x) \preceq \lammax I_d, \quad \mbox{for any }x \in \real^d.
  \end{align*}
}{3cm}
When degenerate diffusion coefficients are allowed, we conjecture that a hypo-elliptic H\"{o}rmander condition~\cite{hormander1967hypoelliptic} suffices, and we defer a detailed discussion to future works.

Additionally, we require the stationary distribution $\stationary$ to satisfy a Lipschitz assumption at logarithmic scale.
\myassumption{SL$(L_\stationary)$}{assume:smooth-stationary}
{
 For any $x \in \StateSpace$, we have $\vecnorm{\nabla \log \stationary (x)}{2} \leq \smoothness_\stationary$.
}{1.6cm}

In many cases, such an assumption is implied by regularity conditions in Section~\ref{sec:time-discretization}. For example, in the Langevin diffusion case discussed in Example~\ref{example:langevin-diffusion}, for $\covMat = \IdMat_\usedim$ and $\drift = - \nabla U$, the stationary distribution $\stationary \propto e^{- U}$ satisfies Assumption~\ref{assume:smooth-stationary} with $\smoothness_\stationary = \smoothness_\drift^{(0)}$.

Finally, we require the linear subspace $\LinSpace$ to be spanned by sufficiently smooth functions
\myassumption{Reg$(c_0, \omega)$}{assume:basis-condition}
{
 For any function $\ValFun \in \LinSpace$ we have
 \begin{align*}
    \statnorm{\nabla \ValFunc} \leq c_0 \mbasis^\omega \statnorm{f}, \quad \mbox{and} \quad \statnorm{\nabla^2 \ValFunc} \leq c_0 \mbasis^{2 \omega} \statnorm{f}.
  \end{align*} 
}{2.2cm}
Intuitively, Assumption~\ref{assume:basis-condition} imposes ``reverse Poincar\'{e}'' inequalities on functions in the linear subspace. Such inequality cannot hold true globally in $\ltwospace (\stationary)$. However, we can easily verify such a condition within a finite-dimensional subspace. We provide an estimate in the following toy example.
\begin{proposition}\label{prop:fourier-basis-example}
  Let the state space $\StateSpace$ be the $\usedim$-dimensional torus $\torus^\usedim$ and let $\stationary$ be uniform. Let $\LinSpace = \mathrm{span} (\phi_1, \phi_2, \cdots, \phi_\mbasis)$ be spanned by the first $\mbasis$ Fourier basis functions. Then Assumption~\ref{assume:basis-condition} is satisfied with $\omega = \tfrac{1}{d}$ and $c_0 = \usedim^2$.
\end{proposition}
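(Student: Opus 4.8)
The plan is to diagonalize the operators $\nabla$ and $\nabla^2$ against the Fourier basis, so that both reverse-Poincar\'{e} inequalities required by Assumption~\ref{assume:basis-condition} collapse, via Parseval, to a single estimate on the largest frequency appearing among the first $\mbasis$ modes. Concretely, I identify $\torus^\usedim$ with $[0, 2\pi)^\usedim$, so that the (complex) Fourier basis is indexed by $k \in \mathbb{Z}^\usedim$ through $\phi_k (x) = e^{\mathrm{i} \inprod{k}{x}}$ (with $\mathrm{i} = \sqrt{-1}$); these are orthonormal in $\ltwospace (\stationary)$ when $\stationary$ is uniform. I will enumerate the modes in order of increasing frequency $\vecnorm{k}{2}$, call the first $\mbasis$ of them $\phi_{k_1}, \dots, \phi_{k_\mbasis}$, and set $K_\mbasis \defn \max_{1 \leq j \leq \mbasis} \vecnorm{k_j}{2}$.

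The first step is the spectral computation. Since $\nabla \phi_k = \mathrm{i} k\, \phi_k$ and $\nabla^2 \phi_k = - k k^\top \phi_k$, and distinct Fourier modes are $\ltwospace(\stationary)$-orthogonal, all cross terms in the derivative norms vanish: for any $f = \sum_{j=1}^\mbasis c_j \phi_{k_j} \in \LinSpace$ one gets $\statnorm{f}^2 = \sum_j |c_j|^2$, $\statnorm{\nabla f}^2 = \sum_j |c_j|^2 \vecnorm{k_j}{2}^2$, and (using $\vecnorm{k k^\top}{F} = \vecnorm{k}{2}^2$) $\statnorm{\nabla^2 f}^2 = \sum_j |c_j|^2 \vecnorm{k_j}{2}^4$. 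Bounding each frequency factor by its maximum immediately yields $\statnorm{\nabla f} \leq K_\mbasis \statnorm{f}$ and $\statnorm{\nabla^2 f} \leq K_\mbasis^2 \statnorm{f}$. It remains to bound $K_\mbasis$ by a lattice-counting argument: the box $\{0, 1, \dots, \lceil \mbasis^{1/\usedim}\rceil - 1\}^\usedim$ contains at least $\mbasis$ lattice points, and every such $k$ satisfies $\vecnorm{k}{2} \leq \vecnorm{k}{1} \leq \usedim\, \mbasis^{1/\usedim}$; since the first $\mbasis$ modes are precisely those of smallest frequency, this forces $K_\mbasis \leq \usedim\, \mbasis^{1/\usedim}$. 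Combining, $\statnorm{\nabla f} \leq \usedim\, \mbasis^{1/\usedim} \statnorm{f} \leq \usedim^2 \mbasis^{1/\usedim}\statnorm{f}$ and $\statnorm{\nabla^2 f} \leq \usedim^2 \mbasis^{2/\usedim}\statnorm{f}$, which is exactly Assumption~\ref{assume:basis-condition} with $\omega = 1/\usedim$ and $c_0 = \usedim^2$.

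None of the steps is technically deep, so the ``obstacle'' is really a matter of getting two points right. The first is conceptual: the reverse-Poincar\'{e} inequality is false on all of $\ltwospace(\stationary)$, and it is exactly the orthogonality of the Fourier modes (and of their derivatives) that confines the frequency spectrum of any $f \in \LinSpace$ to the band $\vecnorm{k}{2} \leq K_\mbasis$ and thereby rescues the inequality on the finite-dimensional subspace; I would make sure the vanishing of the derivative cross terms is stated cleanly. The second is bookkeeping: the precise dimensional constant depends on the normalization of the torus (whether frequencies are integers or carry a factor of $2\pi$) and on how sharply one counts lattice points, where the crude bound $\vecnorm{k}{2} \leq \vecnorm{k}{1}$ is what produces the factor $\usedim$ (hence $\usedim^2$ on the Hessian term, which is the binding one); the ceiling and tie-breaking in the enumeration need a sentence of care but nothing more.
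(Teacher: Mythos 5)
Your proof is correct and follows essentially the same route as the paper's: diagonalize $\nabla$ and $\nabla^2$ against the Fourier modes, use orthogonality (Parseval) to reduce both reverse-Poincar\'{e} inequalities to a bound on the largest frequency among the first $\mbasis$ modes, and control that frequency by a lattice-counting argument giving $\mbasis^{1/\usedim}$ scaling. The only differences are cosmetic: you order modes by $\vecnorm{k}{2}$ and count a nonnegative box (picking up the cruder factor $\usedim$ via $\vecnorm{k}{2} \leq \vecnorm{k}{1}$, still within $c_0 = \usedim^2$), whereas the paper orders by $\vecnorm{\cdot}{\infty}$ and counts the cube $(2k+1)^\usedim$; your exact identity $\matsnorm{k k^\top}{F} = \vecnorm{k}{2}^2$ for the Hessian term is in fact slightly cleaner than the paper's coordinate-wise bound.
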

\noindent See~\Cref{subsec:app-proof-prop-fourier-basis-example} for the proof of this proposition.

Similar arguments also apply to the Gaussian space and subspaces spanned by Hermite polynomials. Note that the exponent $\omega$ decreases to $0$ as the dimension $\usedim$ increases. This is because the number of low-frequency basis functions grow exponentially fast as the dimension grows. In general, the stationary distribution $\stationary$ is unknown and we cannot find an orthonormal basis, but similar bounds can still be established for linear subspaces spanned by a set of (not necessarily orthonormal) basis functions.

Given above assumptions, we impose the following requirement on the stepsize
\begin{align}
  \stepsize \numerOrder \leq \min \Big\{ \frac{\mbasis^{- 4 \omega}}{96 e c_0^2 \usedim^2 ( \smoothness_\drift^{(0)} + \smoothness_\covMat^{(0)})^2}  , \frac{\lammin}{(2 \smoothness_\drift^{(0)} + \lammax \smoothness_\stationary + \usedim \smoothness_\covMat^{(1)} )^2}, \frac{\lammin}{\smoothness_\covMat^{(1)} \usedim^3 + \smoothness_\drift^{(1)} \usedim^2} \Big\}.\label{eq:stepsize-requirement-in-improved-approx-factor-bound}
\end{align}
Under these conditions, not only can we bound the $\ltwospace (\stationary)$-error for the projected discretized fixed-point itself, but we can also control the errors in its gradient. In doing so, we define the following Sobolev inner product for a pair $(f, g)$ of functions.
\begin{align}
  \inprod{f}{g}_{\mathbb{H}^1} \mydefn \int f (x) g(x) \stationary (x) dx + \int \inprod{\nabla f (x)}{\nabla g (x)} \stationary (x) dx,\label{eq:sobolev-one-inprod}
\end{align}
and the Sobolev norm $\sobonorm{f} \mydefn \sqrt{\inprod{f}{f}_{\mathbb{H}^1}}$. We further define the Sobolev space $\mathbb{H}^1 \mydefn \{f: \sobonorm{f} < + \infty \}$. With these notations, we can state the main theorem.

\begin{theorem}\label{thm:improved-approx-factor-elliptic}
  Under Assumptions~\fakerefassumelip{$(\numerOrder + 1)$},~\ref{assume:uniform-elliptic}, and~\ref{assume:basis-condition}. For stepsize $\stepsize$ satisfying Eq~\eqref{eq:stepsize-requirement-in-improved-approx-factor-bound}, there exist constants $c_1, c_2 > 0$ depending only on the constants $(\lammax, \lammin, \smoothness_\drift^{(0)}, \smoothness_\drift^{(1)}, \smoothness_\covMat^{(1)}, \smoothness_\stationary, \usedim, \discount )$, such that 
  \begin{align*}
    \sobonorm{\valuebar_{\Bel}^{(\numerOrder)} - \ValTrue} \leq c_1 \cdot \inf_{f \in \LinSpace} \sobonorm{f - \ValTrue} + c_2 \Big\{ \constScary_\numerOrder \stepsize^{\numerOrder} + \constScary_{\numerOrder + 1}  \stepsize^{\numerOrder + 1} \Big\}.
  \end{align*}
\end{theorem}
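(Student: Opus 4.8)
The plan is to recast the projected fixed-point equation~\eqref{eq:defn-projected-discretized-fixed-pt} as a Galerkin problem for a bilinear form that is a small perturbation of the elliptic form associated with $\discount - \generator$, and then to combine a quasi-optimality (C\'ea-type) estimate with the discretization bound of Theorem~\ref{thm:discretization-high-order} measured in the Sobolev norm. Write $\tau \defn (\numerOrder - 1)\stepsize$ and $\kappa \defn e^{-\discount \tau}$, so that $\BellOp^{(\numerOrder)}(f) = g + \kappa\, \semigroup_\tau f$, where $g$ is the fixed reward-interpolation term. The orthogonality conditions~\eqref{eq:galerkin-orthogonality-condition} state that $\statinprod{(I - \kappa \semigroup_\tau)\valuebar_{\Bel}^{(\numerOrder)} - g}{v} = 0$ for all $v \in \LinSpace$; since the unprojected fixed point satisfies $(I - \kappa \semigroup_\tau)\ValInter{\numerOrder} = g$ exactly, we obtain the Galerkin orthogonality $a_\tau(\valuebar_{\Bel}^{(\numerOrder)} - \ValInter{\numerOrder}, v) = 0$ for all $v \in \LinSpace$, where $a_\tau(f,v) \defn \tfrac{1}{\tau}\statinprod{(I - \kappa \semigroup_\tau)f}{v}$. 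The key observation is that $\tfrac{1}{\tau}(I - \kappa \semigroup_\tau) = \tfrac{1}{\tau}(I - e^{\tau(\generator - \discount)})$ converges to the elliptic operator $\discount - \generator$ as $\tau \to 0$, so $a_\tau$ is a perturbation of the ideal form $a_0(f,v) \defn \statinprod{(\discount - \generator)f}{v}$.

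Next I would show that $a_0$ is coercive and bounded on $\mathbb{H}^1$. Coercivity follows from the Dirichlet-form identity $\statinprod{-\generator f}{f} = \tfrac{1}{2}\int (\nabla f)^\top \covMat\, \nabla f \, d\stationary$, itself a consequence of invariance of $\stationary$ (that is, $\int \generator(f^2)\, d\stationary = 0$); together with Assumption~\ref{assume:uniform-elliptic} this gives $a_0(f,f) \geq \discount \statnorm{f}^2 + \tfrac{\lammin}{2}\statnorm{\nabla f}^2 \geq \alpha\, \sobonorm{f}^2$ with $\alpha \defn \min(\discount, \tfrac{\lammin}{2})$. Boundedness $|a_0(f,v)| \leq M \sobonorm{f}\sobonorm{v}$ follows by integration by parts: the principal part contributes $\tfrac{1}{2}\int (\nabla f)^\top \covMat \nabla v \, d\stationary$, controlled by $\tfrac{\lammax}{2}$, while the first-order remainder involves the vector field built from $\drift$, $\nabla \covMat$, and $\nabla \log \stationary$, bounded using Assumptions~\fakerefassumelip{$(\numerOrder+1)$} and~\ref{assume:smooth-stationary}.

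The crux is the perturbation estimate on $\LinSpace$. Writing $\tfrac{1}{\tau}(I - \kappa \semigroup_\tau) = (\discount - \generator)\bar{P}$ with the averaging operator $\bar{P} \defn \tfrac{1}{\tau}\int_0^\tau e^{s(\generator - \discount)}\, ds$, I would express $a_\tau(f,f) = a_0(f,f) - a_0((I - \bar{P})f, f)$ and move the elliptic operator onto the second argument, $|a_0((I - \bar{P})f, f)| = |\statinprod{(I - \bar{P})f}{(\discount - \generator^*)f}| \leq \statnorm{(I - \bar{P})f}\, \statnorm{(\discount - \generator^*)f}$. The contraction of $e^{s(\generator - \discount)} = e^{-\discount s}\semigroup_s$ in $\ltwospace(\stationary)$ and its commutation with $\generator$ yield $\statnorm{(I - \bar{P})f} \leq \tfrac{\tau}{2}\statnorm{(\generator - \discount)f}$, so the whole perturbation is controlled by $\tfrac{\tau}{2}\statnorm{(\generator - \discount)f}\,\statnorm{(\generator^* - \discount)f}$, each factor involving at most two spatial derivatives of $f$. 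This is exactly where Assumption~\ref{assume:basis-condition} enters: for $f \in \LinSpace$ the reverse-Poincar\'e bounds give $\statnorm{(\generator - \discount)f} \lesssim c_0 \usedim (\smoothness_\drift^{(0)} + \smoothness_\covMat^{(0)})\mbasis^{2\omega}\statnorm{f}$, whence the perturbation is $\lesssim \tau\, c_0^2 \usedim^2 (\smoothness_\drift^{(0)} + \smoothness_\covMat^{(0)})^2 \mbasis^{4\omega}\sobonorm{f}^2$. The first term of the stepsize requirement~\eqref{eq:stepsize-requirement-in-improved-approx-factor-bound} is calibrated so that this is at most half of $\alpha$, leaving $a_\tau$ coercive on $\LinSpace$ with a comparable constant and bounded in the cross term (the remaining two terms of~\eqref{eq:stepsize-requirement-in-improved-approx-factor-bound}, carrying factors of $\lammin$, play the same role for the first-order and drift contributions). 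A non-symmetric C\'ea argument — testing coercivity against $\valuebar_{\Bel}^{(\numerOrder)} - w \in \LinSpace$ and using $a_\tau(\valuebar_{\Bel}^{(\numerOrder)} - \ValInter{\numerOrder}, \valuebar_{\Bel}^{(\numerOrder)} - w) = 0$ — then gives the quasi-optimality $\sobonorm{\valuebar_{\Bel}^{(\numerOrder)} - \ValInter{\numerOrder}} \leq (1 + M/\alpha)\inf_{w \in \LinSpace}\sobonorm{w - \ValInter{\numerOrder}}$. This perturbation step is the main obstacle: the non-self-adjoint operator $\tfrac{1}{\tau}(I - \kappa \semigroup_\tau)$ deviates from $\discount - \generator$ at second order, which naively would demand four derivatives, so the argument must be arranged via the commutation and contraction of $e^{s(\generator - \discount)}$ to use only the two derivatives that Assumption~\ref{assume:basis-condition} supplies, and then matched quantitatively against the three-part stepsize condition.

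Finally I would assemble the estimate. By the triangle inequality $\sobonorm{\valuebar_{\Bel}^{(\numerOrder)} - \ValTrue} \leq \sobonorm{\valuebar_{\Bel}^{(\numerOrder)} - \ValInter{\numerOrder}} + \sobonorm{\ValInter{\numerOrder} - \ValTrue}$, and bounding $\inf_w \sobonorm{w - \ValInter{\numerOrder}} \leq \inf_w \sobonorm{w - \ValTrue} + \sobonorm{\ValTrue - \ValInter{\numerOrder}}$, everything reduces to the discretization error $\sobonorm{\ValInter{\numerOrder} - \ValTrue}$ in the Sobolev norm. Its $\ltwospace(\stationary)$ part is immediate from Theorem~\ref{thm:discretization-high-order}, since $\stationary$ is a probability measure and $\statnorm{\cdot} \leq \vecnorm{\cdot}{\infty}$; the gradient part I would obtain by feeding the consistency residual into the coercive form, $\alpha\, \sobonorm{\ValInter{\numerOrder} - \ValTrue}^2 \leq \statnorm{(\discount - \generator)\ValInter{\numerOrder} - \reward}\,\statnorm{\ValInter{\numerOrder} - \ValTrue}$, and expanding $\discount - \generator$ against $\tfrac{1}{\tau}(I - \kappa \semigroup_\tau)$. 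The residual splits into an order-$\stepsize^{\numerOrder}$ term and an order-$\stepsize^{\numerOrder+1}$ correction, which is precisely why Assumption~\fakerefassumelip{$(\numerOrder+1)$} (one order of smoothness beyond Theorem~\ref{thm:discretization-high-order}) is imposed and why the final bound carries both $\constScary_\numerOrder \stepsize^{\numerOrder}$ and $\constScary_{\numerOrder+1}\stepsize^{\numerOrder+1}$. Collecting $c_1 = 1 + M/\alpha$ and $c_2$, with dependence on $(\lammax, \lammin, \smoothness_\drift^{(0)}, \smoothness_\drift^{(1)}, \smoothness_\covMat^{(1)}, \smoothness_\stationary, \usedim, \discount)$ through $\alpha$ and $M$, yields the claimed inequality.
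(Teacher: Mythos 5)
Your Galerkin skeleton — orthogonality for the discrete form $a_\tau(f,v) = \tau^{-1}\statinprod{(I - e^{-\discount\tau}\semigroup_\tau)f}{v}$ with $\tau = (\numerOrder-1)\stepsize$, coercivity and boundedness, a non-symmetric C\'ea argument, then a triangle inequality against the discretization error — is structurally the same as the paper's. But two of your steps do not go through as sketched. The more serious one is your treatment of $\sobonorm{\ValInter{\numerOrder} - \ValTrue}$. Feeding the residual into the coercive form is either circular or unavailable: with the continuous form $\discount - \generator$, the residual is $(\discount - \generator)\ValInter{\numerOrder} - \reward = (\discount - \generator)(\ValInter{\numerOrder} - \ValTrue)$, and bounding its $\ltwospace(\stationary)$-norm by $O(\stepsize^{\numerOrder})$ requires controlling two derivatives of the very error you are trying to estimate — strictly stronger than the conclusion. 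If instead you use $a_\tau$, the residual is indeed $O(\stepsize^{\numerOrder})$ (Lagrange interpolation error, as in \Cref{thm:discretization-high-order}), but the coercivity available for a \emph{general} function only controls time-averaged gradients $\tfrac{1}{t}\int_0^t \statnorm{\nabla \semigroup_s f}^2 ds$ (\Cref{lemma:discrete-time-psd-operator}); the transfer to $\statnorm{\nabla f}$ (\Cref{lemma:reverse-smoothing-estiamte}) uses the reverse Poincar\'e inequality of Assumption~\ref{assume:basis-condition} and is therefore valid only on $\LinSpace$, while $\ValInter{\numerOrder} - \ValTrue \notin \LinSpace$. This is exactly why the paper proves \Cref{lemma:h1-norm-bound-in-improved-discretization-proof} by an entirely different, direct argument: block-wise Lagrange interpolation of $s \mapsto \nabla \semigroup_s \reward(\state)$, with Proposition~\ref{prop:semigroup-grad-estimate-local-grow} handling the first time block (this is where $\nabla \generator^{\numerOrder} \reward$, hence Assumption \fakerefassumelip{$(\numerOrder+1)$} and the $\constScary_{\numerOrder+1}\stepsize^{\numerOrder+1}$ term, enter) and Proposition~\ref{prop:semigroup-grad-estimate-regularize}'s $1/\sqrt{t}$ regularization making the tail blocks summable (producing $\constScaryReg \constScary_\numerOrder \stepsize^{\numerOrder}$). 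Your intuition for why the extra order of smoothness and both powers of $\stepsize$ appear is right, but your sketched mechanism does not produce them.

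The second issue is quantitative. Your perturbation bound puts $|a_0((I-\bar{P})f,f)| \lesssim \tau\, c_0^2 \usedim^2 (\smoothness_\drift^{(0)}+\smoothness_\covMat^{(0)})^2 \mbasis^{4\omega} \statnorm{f}^2$ entirely on the $\ltwospace$-part, while in $a_0(f,f) \geq \discount \statnorm{f}^2 + \tfrac{\lammin}{2}\statnorm{\nabla f}^2$ the $\ltwospace$-part carries coefficient $\discount$; absorption thus requires $\tau \mbasis^{4\omega} \lesssim \discount$, whereas the first term of Eq~\eqref{eq:stepsize-requirement-in-improved-approx-factor-bound} contains no $\discount$. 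For small $\discount$ your argument proves the theorem only under a strictly stronger, $\discount$-dependent stepsize condition. The paper avoids this by not linearizing: \Cref{lemma:discrete-time-psd-operator} retains the quadratic term $\tfrac{1}{2t}\statnorm{f - e^{-\discount t}\semigroup_t f}^2$, which combined with the averaged $\statnorm{\semigroup_s f}^2$ term yields an $\ltwospace$ lower bound $\tfrac{1}{2e}\statnorm{f}^2$ with an \emph{absolute} constant (first display of the proof of \Cref{lemma:reverse-smoothing-estiamte}), so the Gr\"{o}nwall/reverse-Poincar\'e error is absorbed against a $\discount$-free quantity and $\discount$ enters only the final prefactor. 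Relatedly, in your C\'ea step the cross term $a_\tau(\ValInter{\numerOrder} - w, \cdot)$ has first argument outside $\LinSpace$, so its boundedness in $\sobonorm{\cdot}$ cannot come from reverse Poincar\'e either; it needs the semigroup gradient estimate of Proposition~\ref{prop:semigroup-grad-estimate-local-grow} together with the third term of Eq~\eqref{eq:stepsize-requirement-in-improved-approx-factor-bound} (the paper's Eq~\eqref{eq:norm-equiv-h1} and \Cref{lemma:discrete-time-operator-norm-bound}) — you gesture at this but it must be made explicit for the argument to close.
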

\noindent See Section~\ref{subsec:proof-thm-improved-approx-factor-elliptic} for the proof of this theorem.

A few remarks are in order. First, in contrast to the worst-case discretized RL guarantees in~\Cref{cor:projected-discretized-bellman}, the pre-factor $c_1$ in front of the approximation error is uniformly bounded, and does not blow up as stepsize decreases. Similarly, the numerical order exhibited by~\Cref{thm:improved-approx-factor-elliptic} matches the order of the scheme itself. Compared with~\Cref{cor:projected-discretized-bellman}, this shows the value of incorporating structures of the continuous-time dynamics in RL. Second, the constants $(c_1, c_2)$ both admit explicit expressions, which are polynomial in the relevant quantities. See~\Cref{eq:improved-approx-factor-full} in the proof for their concrete forms.

Moreover, as with existing literature of Galerkin method,~\Cref{thm:improved-approx-factor-elliptic} gives guarantees in a weighted Sobolev norm instead of the standard $\statnorm{\cdot}$-norm. Such a bound gives stronger guarantees, but also requires stronger assumptions on the ground truth $\ValTrue$. Owing to the strong norm, the $\numerOrder$-th order guarantee in~\Cref{thm:improved-approx-factor-elliptic} requires $(\numerOrder + 1)$-th order smoothness assumptions. On the other hand, the higher-order smoothness constant $\constScary_{\numerOrder + 1}$ appears only in the high-order $\stepsize^{\numerOrder + 1}$ term, making the dependence mild.

Finally, Theorem~\ref{thm:improved-approx-factor-elliptic} relies on the stepsize requirement~\eqref{eq:stepsize-requirement-in-improved-approx-factor-bound}, which involves the number of basis functions $\mbasis$, along with other problem-dependence constants. In particular, we require that $\stepsize \lesssim \mbasis^{- 4 \omega}$. Such a requirement reflects a difference between direct model-based methods and our model-free methods. As we can see from the proof of~\Cref{thm:improved-approx-factor-elliptic}, the standard $\sobonorm{\cdot}-$coercivity bound does not hold true for the discrete-time operator. Instead, we use this condition to ensure that functions in the linear subspace satisfy the desired bounds.

\subsection{Approximation guarantees for higher-order generator}\label{subsec:approx-gen}
Similar to the previous two sections, we can also project the high-order approximate generator~\eqref{eq:defn-high-order-generator} to a finite-dimensional subspace.
In particular, we consider the projected discretized diffusion equation
\begin{align}
   \projecttolin \circ \l((\geninter{\numerOrder} - \discount)  \valtdinter{\numerOrder} +\reward \r) = 0,\label{eq:defn-projected-discretized-gen}
\end{align}
which is equivalent to the Galerkin orthogonality conditions
\begin{align*}
  \statinprod{(\geninter{\numerOrder} - \discount)  \valtdinter{\numerOrder}  +\reward}{\psi_j} = 0, \quad \mbox{for } j = 1,2, \cdots, \mbasis.
\end{align*}
Similar to the high-order Bellman operator, we need a constraint on the stepsize which depends on the regularity of functions in the linear subspace $\LinSpace$. In particular, let $\Phi = (\phi_j)_{j = 1}^\mbasis$ be an orthonormal basis of $\LinSpace$, we require
\begin{align}
  \stepsize^\numerOrder \leq \frac{\discount}{2C_{a,\numerOrder}\ll \generator^{\numerOrder+1} \Phi(\state)\rl_\stationary}, \quad \mbox{where} \quad C_{a,\numerOrder} \mydefn \frac{\sum_{j=0}^\numerOrder|\coef{\numerOrder}_j|j^{\numerOrder+1}}{ (\numerOrder+1)!}. \label{eq:stepsize-requirement-in-improved-gen-factor-bound}
\end{align}
The stepsize requirement~\eqref{eq:stepsize-requirement-in-improved-gen-factor-bound} can be seen as a high-order generalization of the requirement~\eqref{eq:stepsize-requirement-in-improved-approx-factor-bound}. In particular, Eq~\eqref{eq:stepsize-requirement-in-improved-approx-factor-bound} essentially bounds the stepsize using an upper bound on the generator applied to functions in $\LinSpace$. In comparison, Eq~\eqref{eq:stepsize-requirement-in-improved-gen-factor-bound} bounds the $\numerOrder$-th power of stepsize using $\numerOrder$-th order generator applied to these functions. Under the setup of~\Cref{prop:fourier-basis-example}, they lead to the same scaling of the stepsize in terms of $(\mbasis, \usedim)$, while in general, the two requirements are not comparable.

Given such a stepsize requirement, we have the theoretical guarantees for the solution to high-order projected discrete-time diffusion equations.
\begin{theorem}\label{thm:approx-gen}
  Under Assumptions~\fakerefassumelip{$(\numerOrder + 1)$} and~\ref{assume:uniform-elliptic}. For stepsize $\stepsize$ satisfying Eq~\eqref{eq:stepsize-requirement-in-improved-gen-factor-bound}, there exist constants $c_3, \constgen_\numerOrder > 0$ depending only on the constants $(\lammax, \lammin, \smoothness_\drift^{(0)}, \smoothness_\drift^{(1)}, \smoothness_\covMat^{(1)}, \smoothness_\stationary, \usedim )$, such that
  \begin{align*}
    \sobonorm{ \valtdinter{\numerOrder} - \valuestar } \leq c_3 \inf_{\ValFunc \in \LinSpace} \sobonorm{\ValFunc - \valuestar}  +  \frac{3\constgen_\numerOrder}{\discount} \dt^\numerOrder
  \end{align*}
\end{theorem}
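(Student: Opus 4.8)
The plan is to read Eq~\eqref{eq:defn-projected-discretized-gen} as a Galerkin discretization of the uniformly elliptic problem $(\discount-\generator)\valuestar=\reward$ and to run a Strang-type (``variational crime'') argument, where the crime is that the scheme replaces the true bilinear form $a(f,g)\mydefn\discount\statinprod{f}{g}-\statinprod{\generator f}{g}$ by its discretized analogue $a_\stepsize(f,g)\mydefn\discount\statinprod{f}{g}-\statinprod{\geninter{\numerOrder}f}{g}$. First I would establish the two classical pillars for the continuous form $a$ on $\mathbb{H}^1(\stationary)$: continuity, $|a(f,g)|\le M\sobonorm{f}\sobonorm{g}$, and $\mathbb{H}^1$-coercivity, $a(f,f)\ge\alpha\sobonorm{f}^2$. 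Coercivity is where Assumption~\ref{assume:uniform-elliptic} enters: integrating $\statinprod{\generator f}{f}$ by parts against the stationary density produces the Dirichlet energy $\tfrac12\int\inprod{\covMat\nabla f}{\nabla f}\,d\stationary\ge\tfrac{\lammin}{2}\statnorm{\nabla f}^2$ plus lower-order terms carrying the drift bound $\smoothness_\drift^{(0)}$, the coefficient derivative $\smoothness_\covMat^{(1)}$, and the log-density gradient $\smoothness_\stationary$ of $\stationary$; a Young's inequality absorbs the cross term using the $\discount\statnorm{f}^2$ part. These bounds give well-posedness of the continuous problem and, via C\'ea's lemma, control the Ritz projection $\valuefunc_\LinSpace\in\LinSpace$ defined by $a(\valuefunc_\LinSpace,g)=a(\valuestar,g)$ for all $g\in\LinSpace$, namely $\sobonorm{\valuestar-\valuefunc_\LinSpace}\le\tfrac{M}{\alpha}\inf_{f\in\LinSpace}\sobonorm{\valuestar-f}$.

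Second, I would quantify the consistency of the scheme. Expanding $\semigroup_{j\stepsize}=\sum_{m\ge0}\tfrac{(j\stepsize)^m}{m!}\generator^m$ and using the moment conditions $\sum_j\coef{\numerOrder}_j\,j^k=\delta_{k,1}$ built into Eq~\eqref{def of A b}, the first $\numerOrder$ orders cancel, so for smooth $h$ one obtains $(\geninter{\numerOrder}-\generator)h=\stepsize^{\numerOrder}C_{a,\numerOrder}\,\generator^{\numerOrder+1}h+(\text{higher order})$, and the identical expansion applies to the $\ltwospace(\stationary)$-adjoint scheme $(\geninter{\numerOrder})^*$ relative to the adjoint generator $\generator^*$. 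Applied to $\valuestar$, whose high-order generator $\generator^{\numerOrder+1}\valuestar$ is bounded under Assumption~\fakerefassumelip{$(\numerOrder+1)$} by the same derivative-of-semigroup estimates used in the proof of Theorem~\ref{thm:discretization-high-order}, this yields the consistency residual $\statnorm{(\geninter{\numerOrder}-\generator)\valuestar}\le\constgen_\numerOrder\stepsize^{\numerOrder}$; applied to functions of $\LinSpace$, the stepsize requirement~\eqref{eq:stepsize-requirement-in-improved-gen-factor-bound} is exactly calibrated so that $\statnorm{(\geninter{\numerOrder}-\generator)f}\le\tfrac{\discount}{2}\statnorm{f}$ (and likewise for the adjoint) for every $f\in\LinSpace$. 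The latter transfers coercivity to the discrete form, $a_\stepsize(f,f)\ge\alpha'\sobonorm{f}^2$ on $\LinSpace$, which by Lax--Milgram also furnishes existence and uniqueness of $\valtdinter{\numerOrder}$.

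Third, I would split $\valtdinter{\numerOrder}-\valuestar=(\valtdinter{\numerOrder}-\valuefunc_\LinSpace)+(\valuefunc_\LinSpace-\valuestar)$, the second term being already controlled above. For the first term, discrete coercivity gives $\alpha'\sobonorm{\valtdinter{\numerOrder}-\valuefunc_\LinSpace}^2\le a_\stepsize(\valtdinter{\numerOrder}-\valuefunc_\LinSpace,g)$ with $g=\valtdinter{\numerOrder}-\valuefunc_\LinSpace\in\LinSpace$; using the Galerkin identity $a_\stepsize(\valtdinter{\numerOrder},g)=\statinprod{\reward}{g}=a(\valuestar,g)$ together with $a(\valuefunc_\LinSpace,g)=a(\valuestar,g)$ and $a_\stepsize=a-\statinprod{(\geninter{\numerOrder}-\generator)\,\cdot}{\cdot}$, I would rewrite
\[
a_\stepsize(\valtdinter{\numerOrder}-\valuefunc_\LinSpace,g)=\statinprod{(\geninter{\numerOrder}-\generator)\valuestar}{g}-\statinprod{(\geninter{\numerOrder}-\generator)(\valuestar-\valuefunc_\LinSpace)}{g}.
\]
The first inner product is the $O(\stepsize^{\numerOrder})$ consistency on $\valuestar$. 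The second is the delicate term: since $\valuestar-\valuefunc_\LinSpace\notin\LinSpace$ and $\geninter{\numerOrder}$ is only $O(1/\stepsize)$-bounded on $\ltwospace(\stationary)$, I would move the operator onto the smooth factor through the adjoint, $\statinprod{(\geninter{\numerOrder}-\generator)(\valuestar-\valuefunc_\LinSpace)}{g}=\statinprod{\valuestar-\valuefunc_\LinSpace}{((\geninter{\numerOrder})^*-\generator^*)g}$, and bound the adjoint consistency on $g\in\LinSpace$ by~\eqref{eq:stepsize-requirement-in-improved-gen-factor-bound}, so that this term is $\le\tfrac{\discount}{2}\statnorm{\valuestar-\valuefunc_\LinSpace}\statnorm{g}$. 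Dividing by $\alpha'\sobonorm{g}$ and combining with the C\'ea bound on $\sobonorm{\valuestar-\valuefunc_\LinSpace}$ yields the claim, with $c_3$ collecting $M/\alpha$ and $\alpha'$ and the prefactor $3/\discount$ absorbing the consistency constant $\constgen_\numerOrder$.

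The main obstacle is precisely the cross term $\statinprod{(\geninter{\numerOrder}-\generator)(\valuestar-\valuefunc_\LinSpace)}{g}$: because the discretized generator is an integral (semigroup) operator rather than a differential one, the naive $\ltwospace$-bound on $\geninter{\numerOrder}$ blows up like $1/\stepsize$, so one cannot simply invoke continuity of $a_\stepsize$ off the subspace. The resolution---transferring $\geninter{\numerOrder}$ to the adjoint acting on the smooth test function $g\in\LinSpace$ and exploiting its regularity through the calibrated stepsize bound---is the crux, and verifying that the adjoint scheme $(\geninter{\numerOrder})^*$ inherits the same consistency order and stepsize calibration (which relies on ellipticity and on the smoothness of $\stationary$ via $\smoothness_\stationary$) is the most technical piece of bookkeeping.
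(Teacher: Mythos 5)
Your skeleton reproduces the three ingredients the paper actually uses: coercivity of $\discount - \generator$ under ellipticity and the Fokker--Planck identity (the paper's Eq~\eqref{eq:positive-definite-operator}, recalled as Eq~\eqref{eq:coercivity-in-approx-gen-proof}), continuity of the generator form (Eq~\eqref{eq:gen-upper}), and the moment-condition Taylor expansion giving $\geninter{\numerOrder} = \generator + O(\stepsize^{\numerOrder})\,\generator^{\numerOrder+1}$ with integral remainder (Eq~\eqref{eq:gen-genapprox}). The genuine gap is in your treatment of the variational-crime cross term $\statinprod{(\geninter{\numerOrder}-\generator)(\valuestar-\valuefunc_\LinSpace)}{g}$. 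You propose to move the operator onto $g \in \LinSpace$ via the $\ltwospace(\stationary)$-adjoint and assert that the stepsize requirement~\eqref{eq:stepsize-requirement-in-improved-gen-factor-bound} is ``exactly calibrated'' for the adjoint scheme. It is not: that requirement bounds $\stepsize^\numerOrder$ in terms of $C_{a,\numerOrder}\statnorm{\generator^{\numerOrder+1}\Phi}$, i.e., the \emph{forward} generator iterated on the basis. The adjoint scheme $(\geninter{\numerOrder})^* = \stepsize^{-1}\sum_j \coef{\numerOrder}_j \semigroup_{j\stepsize}^*$ has its Taylor remainder governed by $(\generator^*)^{\numerOrder+1} g$, where $\generator^*$ is the generator of the time-reversed diffusion with drift $-\drift + \covMat \nabla \log \stationary + \nabla\cdot\covMat$. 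Iterating this $\numerOrder+1$ times on $\LinSpace$ requires derivatives of $\log\stationary$ up to order roughly $2\numerOrder$, which neither Assumption~\ref{assume:smooth-stationary} (a first-order bound only) nor Assumption~\fakerefassumelip{$(\numerOrder+1)$} supplies, and the theorem's constants depend only on $\smoothness_\stationary$. So the step you yourself identify as the crux is unsupported under the stated hypotheses.

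The irony is that your own algebra already contains the repair: chaining your three identities gives $a_\stepsize(\valtdinter{\numerOrder}-\valuefunc_\LinSpace, g) = \statinprod{(\geninter{\numerOrder}-\generator)\,\valuefunc_\LinSpace}{g}$ exactly, and since $\valuefunc_\LinSpace \in \LinSpace$, this residual is bounded directly by the primal consistency on the subspace via~\eqref{eq:stepsize-requirement-in-improved-gen-factor-bound} --- splitting it into consistency on $\valuestar$ minus the off-subspace cross term was an unforced detour, and no adjoint is needed. This on-subspace structure is precisely how the paper proceeds: it tests the error equation for $\valuestar_\LinSpace - \valtdinter{\numerOrder}$ against itself, so the discretization remainder acts only on $\valtdinter{\numerOrder} \in \LinSpace$ (controlled through $\statnorm{\generator^{\numerOrder+1}\Phi}$, non-expansiveness of $\semigroup_s$, and $\statnorm{\valuestar} \leq \statnorm{\reward}/\discount$), while the approximation error $\valuestar - \valuestar_\LinSpace$ is hit only by the continuous operator $\discount - \generator$, for which the first-order continuity bound~\eqref{eq:gen-upper} suffices off the subspace. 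With that substitution your argument closes and is essentially the paper's proof; as written, it does not.
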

\noindent See Section~\ref{subsec:proof-thm-approx-gen} for the proof of this theorem.

A few remarks are in order. First, unlike~\Cref{thm:generator-high-order}, the guarantees in~\Cref{thm:approx-gen} applies to any numerical order $\numerOrder \geq 1$. So even if the original discrete-time diffusion equation becomes unstable due to backward differentiation, the projected version still enjoys desirable guarantees. This phenomenon is due to the $\ltwospace$-type arguments used in Theorem~\ref{thm:approx-gen}, as well as the regularity of functions in the linear subspace. The bounds in Theorem~\ref{thm:approx-gen} and Theorem~\ref{thm:improved-approx-factor-elliptic} are of the same order, while the precise constant pre-factors are not comparable in general. When implemented using empirical data, they may also lead to different statistical errors. We defer a detailed comparison to future works.

\subsection{Solving the projected-fixed points using empirical data}\label{subsec:sample-based}
Now that we have defined the discrete-time finite-dimensional fixed-points $\valuebar_{\Bel}^{(\numerOrder)}$ and $\valtdinter{\numerOrder}$, which serve as targets for data-driven approaches. In this section, we describe how to use a discretely-observed trajectory $(\MyState_{k \stepsize}, \Reward_{k \stepsize})_{0 \leq k \leq \totaltime / \stepsize}$ to approximate these targets. The statistical guarantees, as well as their optimality properties, are important topics for practical applications, and will be studied thoroughly in our companion paper.

Recall that $(\psi_j)_{j = 1}^\mbasis$ is a set of basis functions whose span defines the linear subspace $\LinSpace$. We use $\psi (x) \mydefn \big[ \psi_j (x) \big]_{j = 1}^\mbasis$ to denote the mapping from a state to its feature vector. Note that the bases need not to be orthonormal. We first represent the discrete-time projected fixed-point $\valuebar$ using finite-dimensional basis $\valuebar \mydefn \thetabar^\top \psi$. Under such representation, the Galerkin orthogonality condition~\eqref{eq:galerkin-orthogonality-condition} takes the form
\begin{align*}
  \Exs_{\MyState_0 \sim \stationary} \Big[ \Big( \psi (\MyState_0)^\top \thetabar - \BellOp^{(\numerOrder) } (\psi (\MyState_0)^\top \thetabar) \Big) \cdot \psi_j (\MyState_0)  \Big] = 0, \quad \mbox{for $j = 1,2,\cdots, \mbasis$}.
\end{align*}
Therefore, the projected fixed-point equation~\eqref{eq:defn-projected-discretized-fixed-pt} can be re-written as
\begin{align*}
  \Exs \big[ \psi (\MyState_0) \psi (\MyState_0)^\top \big] \thetabar = \sum_{i = 0}^{\numerOrder - 1} \int_0^{(\numerOrder - 1) \stepsize} e^{- \discount s} W_i (s) \Exs \big[ \reward (\MyState_{i \stepsize}) \psi (\MyState_0) \big] ds + e^{- \discount (\numerOrder - 1) \stepsize} \Exs \big[ \psi (\MyState_0) \psi (\MyState_{(\numerOrder - 1) \stepsize})^\top \big] \thetabar,
\end{align*}
where $\MyState_0 \sim \stationary$, and $(\MyState_t)_{t \geq 0}$ evolves according to Eq~\eqref{eq:cts-time-process}.

\begin{subequations}\label{eq:finite-sample-estimator-bellman}
Given a trajectory $(\MyState_{t})_{0 \leq t \leq \totaltime}$ observed at discrete time steps $(k\stepsize: k \geq 0)$, we can approximate the solution to fixed-point equation~\eqref{eq:defn-projected-discretized-fixed-pt} using empirical samples. In particular, given the discretely-observed trajectory $(\MyState_{k \stepsize}, \Reward_{k \stepsize})_{0 \leq k \leq \lceil \totaltime / \stepsize \rceil}$, we solve the linear equation
   \begin{align}
     \thetahat = \Big\{ \sum_{k = 0}^{\totaltime / \stepsize - \numerOrder} \psi (\MyState_{k \stepsize})  \cdot \Big( \psi (\MyState_{k \stepsize}) - e^{- \discount (\numerOrder - 1) \stepsize}  \psi (\MyState_{ (k + \numerOrder - 1) \stepsize}) \Big)^\top \Big\}^{-1} \cdot \Big\{ \stepsize \sum_{k = 0}^{\totaltime / \stepsize - \numerOrder}  \sum_{i = 0}^{\numerOrder - 1} \kappa_i \Reward_{(k + i) \stepsize} \psi (\MyState_{k \stepsize}) \Big\}, \label{eq:empirical-lstd}
   \end{align}
where the coefficients $\kappa_i$ are defined as
   \begin{align}
     \kappa_i \mydefn \frac{1}{\stepsize} \int_0^{(\numerOrder - 1) \stepsize} e^{- \discount s} W_i (s) ds.\label{eq:defn-kappa-i-coeff}
   \end{align}
   Finally, the estimated value function is given by
   \begin{align}
     \ValHat_{\Bel}^{(\numerOrder)}(\state) \mydefn \inprod{\thetahat}{\psi (\state)}, \quad\mbox{for any $\state \in \StateSpace$}.
   \end{align}
\end{subequations}

Similarly, we can use empirical trajectory to estimate the approximate projected diffusion generator in Section~\ref{subsec:approx-gen}. In order to do so, recall the definitions $A^{(\numerOrder)} = \big[ j^k \big]_{0 \leq k, j \leq \numerOrder}$, $b^{(\numerOrder)} = \big[ \bm{1}_{j = 1} \big]_{0 \leq j \leq \numerOrder}$ and $\coef{\numerOrder} \mydefn  A^{-1} b$. By representing the discrete-time projected fixed-point $\valtdinter{\numerOrder}$ using basis functions $\valtdinter{\numerOrder} = \thetabar^\top \psi$, we have
\begin{align}
\discount \Exs \big[ \psi (\MyState_0) \psi (\MyState_0)^\top \big] \thetabar  - \frac{1}{\stepsize} \sum_{i = 0}^{\numerOrder} \coef{\numerOrder}_i  \Exs \big[ \psi (\MyState_0) \psi (\MyState_{i \stepsize})^\top \big] \thetabar = \Exs \big[ \reward (\MyState_0) \psi (\MyState_0) \big]\label{eq:empirical-gen-lstd}
\end{align}
By replacing the expectations with empirical averages, the value function can be estimated by solving the $\mbasis$-dimensional linear system
\begin{align}
    \thetahat = \Big\{ \sum_{k = 0}^{\totaltime / \stepsize - \numerOrder} \psi (\MyState_{k \stepsize})  \cdot \Big( \discount \psi (\MyState_{k \stepsize}) - \frac{1}{\stepsize} \sum_{i = 0}^{\numerOrder} \coef{\numerOrder}_i  \psi (\MyState_{(k + i) \stepsize}) \Big)^\top \Big\}^{-1} \cdot \Big\{ \sum_{k = 0}^{\totaltime / \stepsize - \numerOrder} \Reward_{k \stepsize} \psi (\MyState_{k \stepsize}) \Big\},\label{eq:finite-sample-estimator-diffusion}
\end{align}
with the estimated value function given by $\ValHat_{\gen}^{(\numerOrder)} (x) \mydefn \inprod{\thetahat}{\psi (x)}$ for any $x \in \StateSpace$.

In our companion paper, we will analyze the statistical errors for the estimator constructed in Equations~\eqref{eq:finite-sample-estimator-bellman} and~\eqref{eq:finite-sample-estimator-diffusion}.

\subsection{Beyond stationarity: guarantees under discounted occupancy measure}\label{subsec:occumsr-results}
The projection procedures described above as well as theoretical guarantees in Theorem~\ref{thm:improved-approx-factor-elliptic} and~\ref{thm:approx-gen} require the existence of the stationary measure $\stationary$, which may not hold true in general. On the other hand, the theory of discounted RL does not rely on stationarity. In this section, we relax such conditions and study diffusion processes that may diverge. To start with, given an initial probability measure $\pi_0$ on $\StateSpace$, we define the discounted occupancy measure
\begin{align*}
  \occumsr \mydefn \discount \int_0^{+ \infty} e^{- \discount t} \semigroup_t^* \pi_0  dt.
\end{align*}
The discounted occupancy measure is widely used in discounted RL literature. Operationally, given a Markov process trajectory starting with $\MyState_0 \sim \pi_0$, running it for an independent exponential time $T \sim \mathrm{Exp} (\discount)$, a uniform sample at random from the trajectory $(X_t)_{0 \leq t \leq T}$ follows the distribution $\occumsr$. Consequently, the weighted inner product $\inprod{\cdot}{\cdot}_{\occumsr}$ and norm $\vecnorm{\cdot}{\occumsr}$ can be approximated empircally by averaging over $\mathrm{i.i.d.}$ trajectories killed at a constant rate.

Similar to Eq~\eqref{eq:defn-projected-discretized-fixed-pt}, we define the projected discretized fixed-point as
\begin{align}
  \valuebar^{\numerOrder}_\Bel = \projectto{\LinSpace, \occumsr} \circ \BellOp^{(\numerOrder)}  \big( \valuebar_{\Bel}^{(\numerOrder)} \big),\label{eq:defn-projected-discretized-fixed-pt-occumsr}
\end{align}
which can be equivalently expressed as Galerkin orthogonality conditions
\begin{align}
   \inprod{\valuebar^{\numerOrder}_\Bel - \BellOp^{(\numerOrder)}  \big( \valuebar^{\numerOrder}_\Bel  \big)}{\psi_j}_\occumsr = 0, \quad \mbox{for } j = 1,2, \cdots, \mbasis.\label{eq:galerkin-orthogonality-condition-occumsr}
\end{align}
We have the following theoretical guarantees under the discounted occupancy measure
\begin{corollary}\label{cor:discounted-occu-msr}
   Under Assumptions~\fakerefassumelip{$(\numerOrder + 1)$},~\ref{assume:uniform-elliptic}. Assume furthermore that Assumption~\ref{assume:smooth-stationary} is satisfied by $\occumsr$ with constant $\smoothness_\occumsr$, and that Assumption~\ref{assume:basis-condition} is satisfied under the $\ltwospace (\occumsr)$-norm. For stepsize $\stepsize$ satisfying Eq~\eqref{eq:stepsize-requirement-in-improved-approx-factor-bound}, there exist constants $c_1, c_2 > 0$ depending only on the constants $(\lammax, \lammin, \smoothness_\drift^{(0)}, \smoothness_\drift^{(1)}, \smoothness_\covMat^{(1)}, \smoothness_\occumsr, \usedim )$, such that 
  \begin{align*}
    \vecnorm{\valuebar_{\Bel}^{(\numerOrder)} - \ValTrue}{\sobospace (\occumsr)} \leq c_1 \cdot \inf_{f \in \LinSpace} \vecnorm{f - \ValTrue}{\sobospace (\occumsr)} + c_2 \Big\{ \constScary_\numerOrder \stepsize^{\numerOrder} + \constScary_{\numerOrder + 1}  \stepsize^{\numerOrder + 1} \Big\}.
  \end{align*}
\end{corollary}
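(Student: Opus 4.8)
The plan is to re-run the proof of~\Cref{thm:improved-approx-factor-elliptic} essentially verbatim, with the stationary measure $\stationary$ replaced everywhere by the discounted occupancy measure $\occumsr$. The key observation is that the proof of~\Cref{thm:improved-approx-factor-elliptic} never uses the \emph{stationarity} of $\stationary$ (i.e.\ the identity $\generator^*\stationary = 0$): it uses the weighting measure only in three ways, namely (i) through the measure-independent $L^\infty$ discretization bound of~\Cref{thm:discretization-high-order}; (ii) through a G\aa rding-type coercivity estimate for $\discount - \generator$ whose only measure-dependent ingredient is the log-gradient bound of Assumption~\ref{assume:smooth-stationary}; and (iii) through the subspace-regularity bound of Assumption~\ref{assume:basis-condition}. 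Since the corollary's hypotheses posit precisely that Assumptions~\ref{assume:smooth-stationary} and~\ref{assume:basis-condition} hold for $\occumsr$ (with constants $\smoothness_\occumsr$ and $(c_0,\omega)$), each ingredient survives the substitution, with the stepsize requirement~\eqref{eq:stepsize-requirement-in-improved-approx-factor-bound} read using $\smoothness_\occumsr$ in place of $\smoothness_\stationary$.

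The technical heart that must be re-checked is the coercivity estimate. First I would establish, by integration by parts against the weight $\occumsr$, the identity
\begin{align*}
  \inprod{(\discount - \generator) f}{f}_\occumsr = \discount \vecnorm{f}{\occumsr}^2 + \tfrac{1}{2}\int \inprod{\nabla f}{\covMat \nabla f}\, \occumsr \, dx + R(f),
\end{align*}
where $R(f)$ collects the first-order cross terms arising from $\drift$, from $\nabla\cdot\covMat$, and from $\nabla\log\occumsr$. Uniform ellipticity (Assumption~\ref{assume:uniform-elliptic}) lower-bounds the principal part by $\tfrac{\lammin}{2}\vecnorm{\nabla f}{\occumsr}^2$, while each term in $R(f)$ has the form $\vecnorm{f}{\occumsr}\vecnorm{\nabla f}{\occumsr}$ with prefactor controlled by $\smoothness_\drift^{(0)}$, $\smoothness_\covMat^{(1)}$, $\lammax$, and---crucially---$\smoothness_\occumsr$ (from $\nabla\log\occumsr$); Young's inequality then yields a coercive lower bound in $\sobonorm{\cdot}$ with constants depending only on the quantities listed in the corollary. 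Because the homogeneous part of $\IdMat - \BellOp^{(\numerOrder)}$ equals $\IdMat - e^{-\discount(\numerOrder-1)\stepsize}\semigroup_{(\numerOrder-1)\stepsize}$, I would relate it to $(\numerOrder-1)\stepsize\,(\discount-\generator)$ and absorb the discretization remainder using Assumption~\ref{assume:basis-condition} together with the stepsize constraint, so that the bilinear form $f\mapsto \inprod{f - \BellOp^{(\numerOrder)}f}{f}_\occumsr$ stays coercive on $\LinSpace$. Coercivity furnishes existence and uniqueness of $\valuebar_{\Bel}^{(\numerOrder)}$ via Lax--Milgram; importantly, this replaces, rather than relies on, the $\ltwospace(\stationary)$-contraction of the semigroup, which is exactly the property that would fail under $\occumsr$.

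With coercivity in hand I would conclude by the standard Galerkin/C\'ea argument: writing $\valuebar_{\Bel}^{(\numerOrder)} - \ValTrue = (\valuebar_{\Bel}^{(\numerOrder)} - \projectto{\LinSpace,\occumsr}\ValTrue) + (\projectto{\LinSpace,\occumsr}\ValTrue - \ValTrue)$, the second summand is the best-approximation term $\inf_{f\in\LinSpace}\vecnorm{f-\ValTrue}{\sobospace(\occumsr)}$, while the first lies in $\LinSpace$ and is controlled by coercivity together with the Galerkin orthogonality~\eqref{eq:galerkin-orthogonality-condition-occumsr}; the remaining discretization defect is bounded, uniformly over the weighting measure, by $\discount^{-1}(\constScary_\numerOrder\stepsize^\numerOrder + \constScary_{\numerOrder+1}\stepsize^{\numerOrder+1})$ via~\Cref{thm:discretization-high-order} applied to $\ValInter{\numerOrder}$ and $\ValInter{\numerOrder+1}$. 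Collecting terms reproduces the stated bound, with $c_1,c_2$ now depending on $\smoothness_\occumsr$.

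The main obstacle is the one flagged above: verifying that the argument of~\Cref{thm:improved-approx-factor-elliptic} is genuinely measure-agnostic, i.e.\ that no step secretly invokes $\generator^*\stationary = 0$ or the semigroup contraction in $\ltwospace(\stationary)$. Once one confirms that the improved (non-horizon-dependent) bound rests solely on ellipticity and the log-Lipschitz control of the weight, the extension to $\occumsr$ reduces to carrying $\smoothness_\occumsr$ through the constants.
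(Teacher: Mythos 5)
Your high-level strategy --- re-running the proof of \Cref{thm:improved-approx-factor-elliptic} with $\stationary$ replaced by $\occumsr$ --- is indeed the paper's strategy, but your treatment of the coercivity step has a genuine gap. Note first that your opening premise is false: the theorem's proof \emph{does} use stationarity, in exactly two places --- in Eq~\eqref{eq:positive-definite-operator}, where the zeroth-order term $-\tfrac{1}{2}\int f^2\big\{-\nabla\cdot(\stationary\drift) + \tfrac{1}{2}\nabla^2\cdot(\stationary\covMat)\big\}dx$ is killed by the Fokker--Planck equation, and in the non-expansiveness bound~\eqref{eq:semigroup-is-non-expansive}. You do flag both issues later, but your remedy for the first one does not work. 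You propose to treat the cross terms arising from $\drift$, $\nabla\cdot\covMat$, and $\nabla\log\occumsr$ as generic terms of the form $\vecnorm{f}{\occumsr}\vecnorm{\nabla f}{\occumsr}$ and absorb them by Young's inequality. Absorbing into $\tfrac{\lammin}{4}\vecnorm{\nabla f}{\occumsr}^2$ leaves a zeroth-order penalty of order $C^2/\lammin$ with $C \asymp \smoothness_\drift^{(0)} + \usedim^2\smoothness_\covMat^{(1)} + \lammax\smoothness_\occumsr$, which must then be dominated by $\discount\vecnorm{f}{\occumsr}^2$: this is a G\aa rding inequality, coercive only under a lower bound $\discount \gtrsim C^2/\lammin$ that the corollary does not assume, and that fails for small $\discount$. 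The paper avoids absorption entirely via the missing idea: integrating by parts in time in the definition of $\occumsr$ yields the exact identity $\occumsr = \pi_0 + \discount^{-1}\generator^*\occumsr$, i.e.\ the occupancy-measure analogue of stationarity, so the offending bracket equals $-\generator^*\occumsr = -\discount(\occumsr - \pi_0)$ and Lemma~\ref{lemma:psd-operator-under-occumsr} obtains $\inprod{f}{(\discount - \generator)f}_\occumsr \geq \tfrac{\discount}{2}\vecnorm{f}{\occumsr}^2 + \tfrac{\lammin}{2}\vecnorm{\nabla f}{\occumsr}^2 + \tfrac{\discount}{2}\vecnorm{f}{\pi_0}^2$ for \emph{every} $\discount > 0$, with no absorption and the favorable sign for free.

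Two further omissions are smaller but real. First, you assert that coercivity via Lax--Milgram ``replaces, rather than relies on'' the semigroup contraction; but Lemmas~\ref{lemma:discrete-time-operator-norm-bound} and~\ref{lemma:reverse-smoothing-estiamte} --- which you still need for the boundedness half of the C\'ea argument and for upgrading the weak coercivity norm to $\soboavgnorm{\cdot}{t}$ (this upgrade is precisely what makes the approximation factor horizon-free) --- require a quantitative growth estimate $\vecnorm{\semigroup_t f}{\occumsr} \leq e^{\beta_0 t}\vecnorm{f}{\occumsr}$. The paper verifies $\beta_0 = \discount/2$ by a direct computation from the definition of $\occumsr$; your plan never performs this step, and without it those two lemmas cannot be invoked. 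Second, your claim that the discretization defect in $\sobospace(\occumsr)$ follows from \Cref{thm:discretization-high-order} ``applied to $\ValInter{\numerOrder}$ and $\ValInter{\numerOrder+1}$'' is not right: that theorem controls only the $L^\infty$ (hence $\ltwospace$) part of the Sobolev norm, while the gradient part requires the pointwise estimate of Lemma~\ref{lemma:h1-norm-bound-in-improved-discretization-proof}, built on the gradient bounds of Propositions~\ref{prop:semigroup-grad-estimate-local-grow} and~\ref{prop:semigroup-grad-estimate-regularize} --- which, fortunately, is stated for an arbitrary probability measure and so transfers to $\occumsr$ verbatim.
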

\noindent See~\Cref{sec:app-proof-cor-discounted-occu-msr} for the proof of this corollary.

Note that \Cref{cor:discounted-occu-msr} goes exactly in parallel with \Cref{thm:improved-approx-factor-elliptic}, with the stationary distribution $\stationary$ replaced by the discounted occupancy measure $\occumsr$ (in both Assumption~\ref{assume:smooth-stationary} and the weighted $\ltwospace$-norm). Similar conclusion can be drawn on the high-order diffusion generator, with an analogous version of Theorem~\ref{thm:approx-gen}. We omit the details in this case for simplicity. The smoothness assumption of the function $\log \occumsr$ can be verified in many cases. We present a simple example as follows
\begin{proposition}\label{prop:occumsr-example}
  If the diffusion process~\eqref{eq:cts-time-process} is a standard Brownian motion $dX_t = dB_t$, and $\sup_\state \vecnorm{\nabla \log \pi_0 (\state)}{2} \leq \smoothness$, then we have $\sup_\state \vecnorm{\nabla \log \occumsr (\state)}{2} \leq \smoothness$.
\end{proposition}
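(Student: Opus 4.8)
The plan is to use the explicit form of the dual semigroup for Brownian motion together with the observation that the log-Lipschitz property $\sup_\state \vecnorm{\nabla \log p(\state)}{2} \le L$ is preserved under Gaussian convolution and under mixing. For $dX_t = dB_t$ the semigroup $\semigroup_t$ is the heat semigroup, so its dual acts by convolution with the Gaussian density $g_t$ of $\mathcal{N}(0, t I_\usedim)$, giving $\semigroup_t^* \pi_0 = \pi_0 * g_t$. Consequently the discounted occupancy measure is the mixture
\[
  \occumsr = \discount \int_0^{+\infty} e^{-\discount t} \big(\pi_0 * g_t\big)\, dt,
\]
where $\discount e^{-\discount t}\, dt$ is a probability measure on $[0,+\infty)$. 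Since Gaussian smoothing makes $\occumsr$ a strictly positive $C^\infty$ density, the target bound $\sup_\state \vecnorm{\nabla \log \occumsr(\state)}{2} \le \smoothness$ is equivalent to the multiplicative (``log-Lipschitz'') condition $P_L$ with $L = \smoothness$: a positive density $p$ satisfies $P_L$ iff $p(\state) \le e^{L \vecnorm{\state - y}{2}} p(y)$ for all $\state, y \in \StateSpace$.

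Next I would establish the two closure properties of $P_L$. \emph{Convolution:} if $p$ satisfies $P_L$ and $q$ is any probability density, then writing $(p*q)(\state) = \int p(\state - z) q(z)\, dz$ and using the pointwise bound $p(\state - z) \le e^{L \vecnorm{\state - y}{2}} p(y - z)$ in $z$ gives $(p*q)(\state) \le e^{L\vecnorm{\state - y}{2}} (p*q)(y)$, so $p*q$ again satisfies $P_L$; in particular each $\pi_0 * g_t$ satisfies $P_{\smoothness}$ because $\pi_0$ does by hypothesis. \emph{Mixing:} if $p_\theta$ satisfies $P_L$ for every $\theta$ and $w$ is a probability measure, then integrating $p_\theta(\state) \le e^{L\vecnorm{\state - y}{2}} p_\theta(y)$ against $dw(\theta)$ shows that $\int p_\theta\, dw(\theta)$ satisfies $P_L$.

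Applying the mixing property with mixing measure $\discount e^{-\discount t}\, dt$ to the family $\{\pi_0 * g_t\}_{t > 0}$, each member of which satisfies $P_{\smoothness}$, yields that $\occumsr$ satisfies $P_{\smoothness}$. Converting back to the gradient form then gives $\sup_\state \vecnorm{\nabla \log \occumsr(\state)}{2} \le \smoothness$, as claimed.

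The routine part is all the verification; the only points needing care are (i) justifying the equivalence between $P_L$ and the gradient bound, which rests on $\occumsr$ being positive and differentiable --- guaranteed here by the Gaussian convolution --- and (ii) differentiating $\occumsr$ under the integral over $t$, which is legitimate by dominated convergence using the smoothness and rapid decay of $g_t$ and its derivatives. I do not expect a genuine obstacle: the whole argument reduces to the single structural fact that the bounded-$\nabla\log$ class is stable under convolution and convex mixtures, and that the Brownian dual semigroup keeps us inside this class.
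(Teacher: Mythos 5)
Your proof is correct, and it uses the same structural decomposition as the paper --- writing $\occumsr$ as an exponential-in-$t$ mixture of the Gaussian-convolved densities $\pi_0 * g_t$ --- but it runs the estimate through a different mechanism. The paper works infinitesimally: it differentiates under both integrals, bounds $\vecnorm{\nabla \occumsr (\state)}{2} / \occumsr (\state)$ by a ratio-of-integrals (mediant) inequality over $t$, and for each fixed $t$ uses the pointwise domination $\vecnorm{\nabla \pi_0}{2} \leq \smoothness \, \pi_0$ inside the convolution. You instead integrate the gradient hypothesis into the multiplicative Harnack form $p(\state) \leq e^{\smoothness \vecnorm{\state - y}{2}} p(y)$ and prove that this class is closed under convolution and under convex mixtures; your two closure lemmas are precisely the integrated versions of the paper's two inequalities, so the arguments are equivalent in substance. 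What your route buys is a small regularity advantage: the quantitative bound never requires interchanging $\nabla$ with the $t$-integral or the convolution --- the interchange enters only qualitatively, to know $\log \occumsr$ is differentiable so that the Lipschitz bound converts back into a gradient bound --- whereas the paper differentiates under the integral at the outset. It also extends verbatim to the case where $\pi_0$ is merely log-Lipschitz rather than differentiable. The paper's version is shorter and more computational. One further streamlining of your argument: since $\occumsr = \pi_0 * \big( \discount \int_0^{+\infty} e^{- \discount t} g_t \, dt \big)$ is a single convolution with a mixture kernel, the convolution-closure lemma alone suffices and the mixing lemma can be dispensed with.
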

\noindent See Appendix~\ref{subsec:proof-occumsr-example} for the proof of this proposition.

Without ergodicity of the process, the estimation of value function is impossible using a single trajectory. On the other hand, we can approximate the solution reasonably well using multiple independent exponentially-killed trajectories. In particular, consider $\mathrm{i.i.d.}$ discretely-observed trajectories
\begin{align*}
  \MyState_0^{(i)}, \Reward_0^{(i)}, \MyState_{\stepsize}^{(i)}, \Reward_\stepsize^{(i)}, \cdots, \MyState_{K_i \stepsize}^{(i)}, \Reward_{K_i \stepsize}^{(i)}, \quad \mbox{for $i = 1,2,\cdots, \numobs$},
\end{align*}
where $K_i \sim \mathrm{i.i.d.} \mathrm{Geom} (e^{- \discount \stepsize})$, independent of the trajectories. Given these data, we can derive sample-based algorithms analogous to those in Section~\ref{subsec:sample-based}.

For the high-order Bellman equation, we define the empirical estimator
\begin{multline*}
     \thetahat_\numobs = \Big\{ \sum_{i = 1}^\numobs \sum_{k = 0}^{K_i - \numerOrder} \psi (\MyState_{k \stepsize}^{(i)})  \cdot \Big( \psi (\MyState_{k \stepsize}^{(i)}) - e^{- \discount (\numerOrder - 1) \stepsize}  \psi (\MyState_{ (k + \numerOrder - 1) \stepsize}^{(i)}) \Big)^\top \Big\}^{-1}\\
      \cdot \Big\{ \stepsize\sum_{i = 1}^{\numobs}  \sum_{k = 0}^{K_i - \numerOrder}  \sum_{j = 0}^{\numerOrder - 1} \kappa_i \Reward_{(k + j) \stepsize}^{(i)} \psi (\MyState_{k \stepsize}^{(i)}) \Big\}.
\end{multline*}
We can also construct the empirical estimator based on the high-order diffusion equation.
\begin{align*}
    \thetahat_\numobs = \Big\{ \sum_{i = 1}^\numobs \sum_{k = 0}^{K_i - \numerOrder} \psi (\MyState_{k \stepsize}^{(i)})  \cdot \Big( \discount \psi (\MyState_{k \stepsize}^{(i)}) - \frac{1}{\stepsize} \sum_{j = 0}^{\numerOrder} \coef{\numerOrder}_j  \psi (\MyState_{(k + j) \stepsize}^{(i)}) \Big)^\top \Big\}^{-1} \cdot \Big\{\sum_{i = 1}^\numobs \sum_{k = 0}^{K_i - \numerOrder} \Reward_{k \stepsize}^{(i)} \psi (\MyState_{k \stepsize}^{(i)}) \Big\}.
\end{align*}
Similar to Equations~\eqref{eq:finite-sample-estimator-bellman} and~\eqref{eq:finite-sample-estimator-diffusion}, the value function estimators can be computed via $\mbasis$-dimensional linear systems. We defer the statistical analysis under the multiple trajectory setup to our companion paper.

\def\lam{\lambda}
\def\s{\sigma}
\def\hs{\hat{\s}}
\def\dt{\stepsize}
\def\dx{\delta}
\section{Numerical simulation}\label{sec:simulation}
We present the numerical simulation results in this section. We begin with the exact solutions to Equations~\eqref{eq:defn-high-order-bellman-operator} and~\eqref{eq:defn-high-order-generator}. Subsequently, in Section~\ref{subsec:sample-based}, we shift our focus to fully data-driven approaches. Some of the simulation setups explored here may go beyond the assumptions outlined in the theoretical results of previous sections. This illustrates the robustness of our proposed methods.

\begin{figure}[!ht]
\centering
         \includegraphics[width=0.25\textwidth]{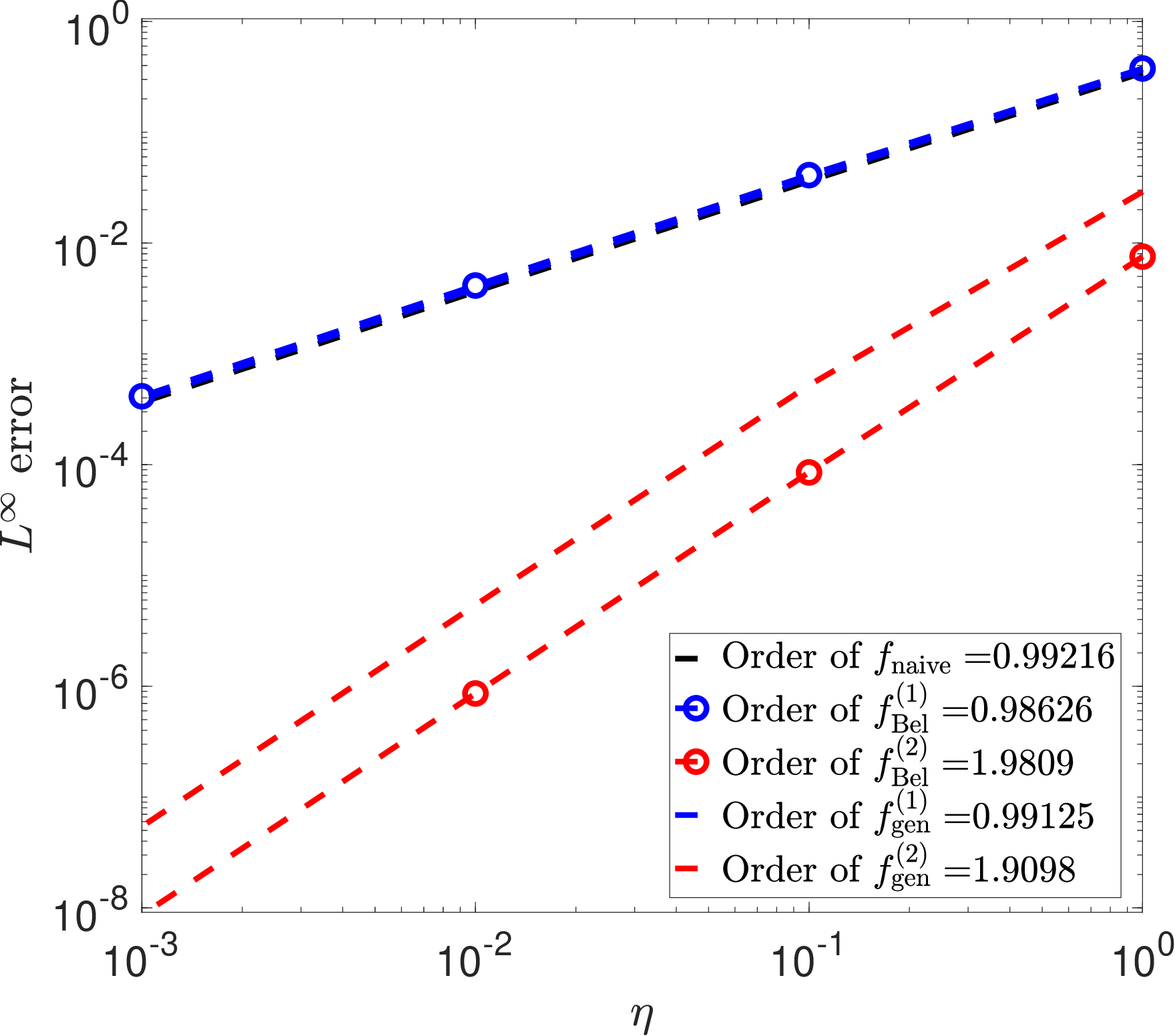}
         \includegraphics[width=0.25\textwidth]{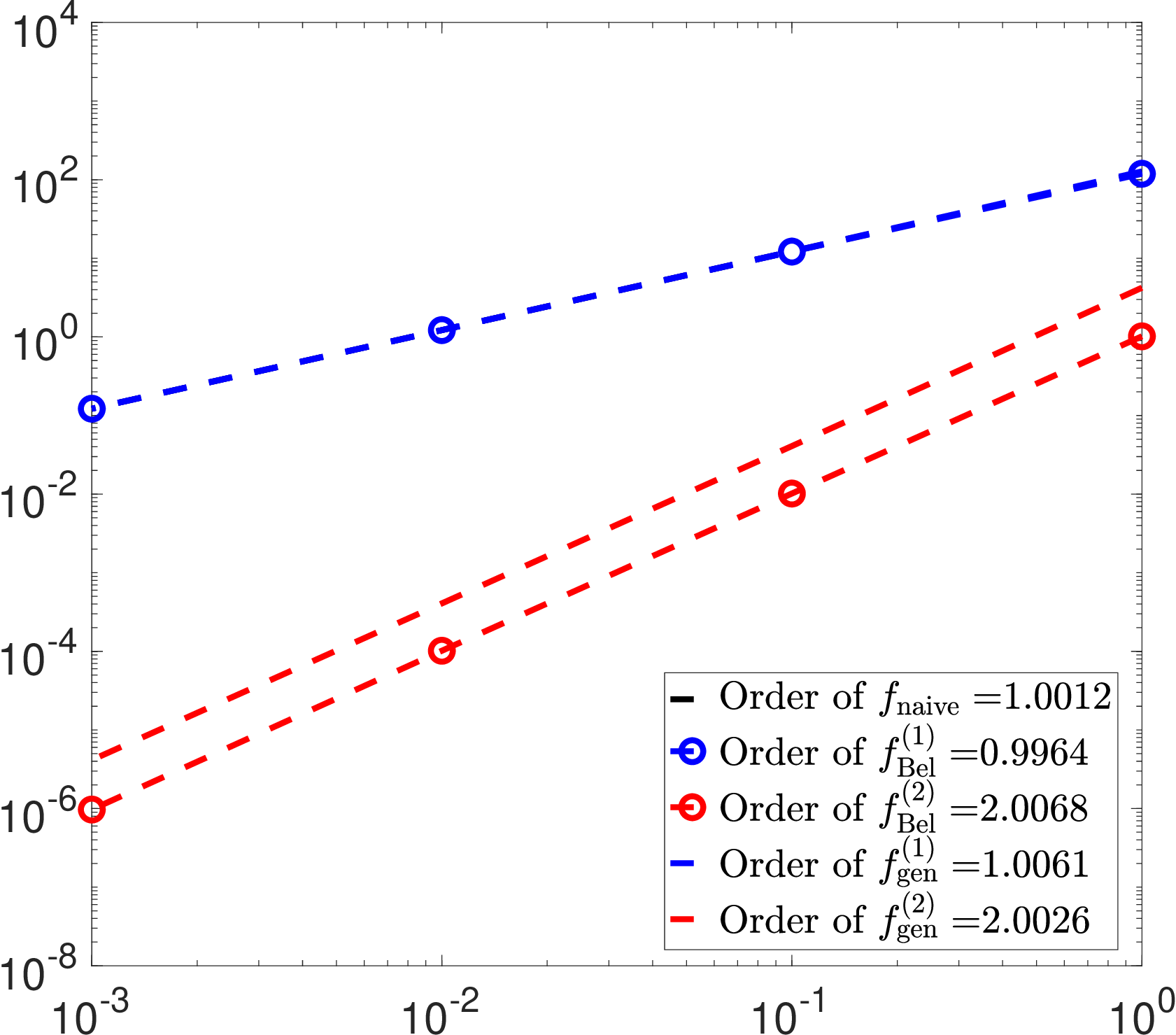}
         \includegraphics[width=0.25\textwidth]{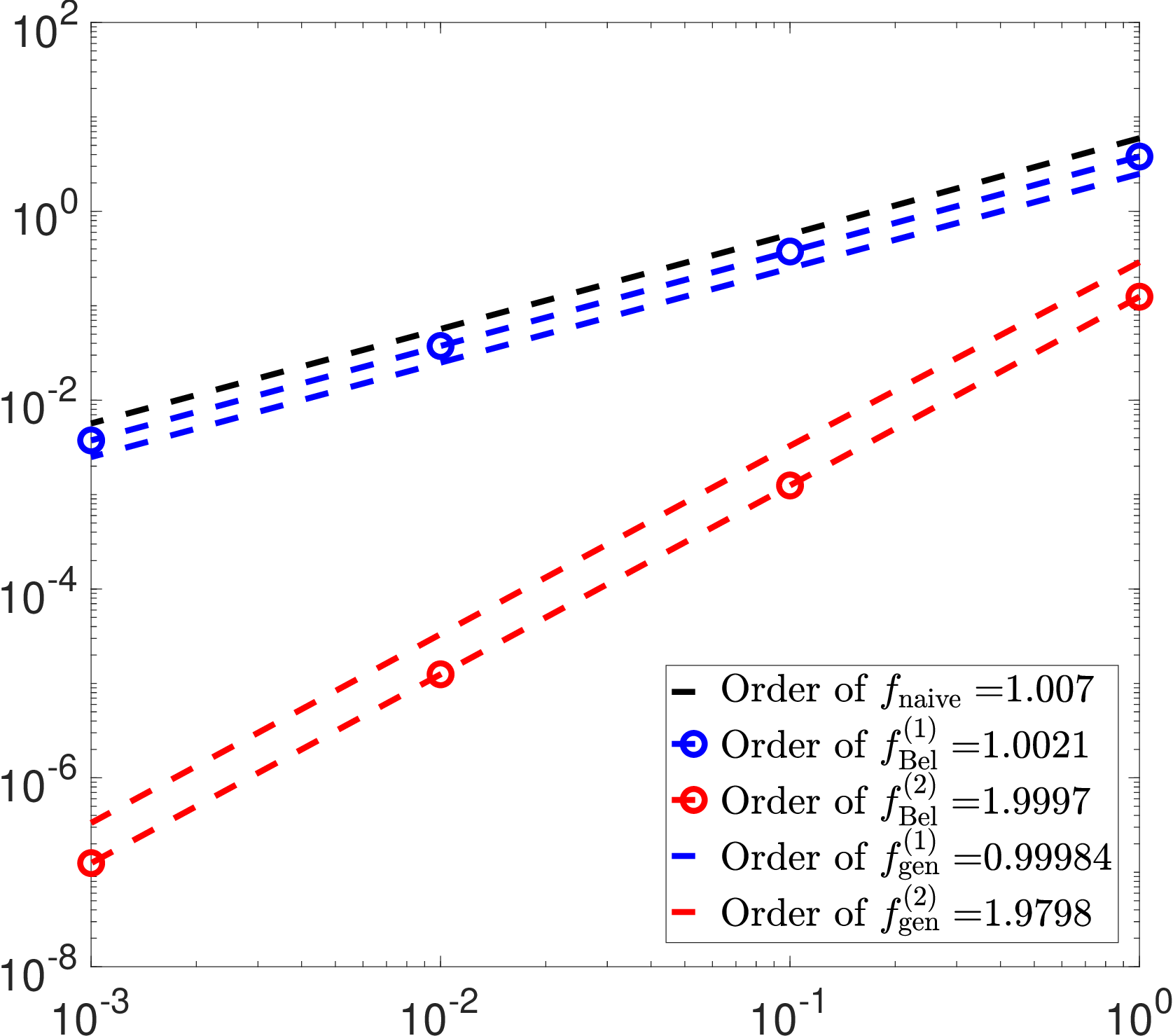}\\
         
        \includegraphics[width=0.25\textwidth]{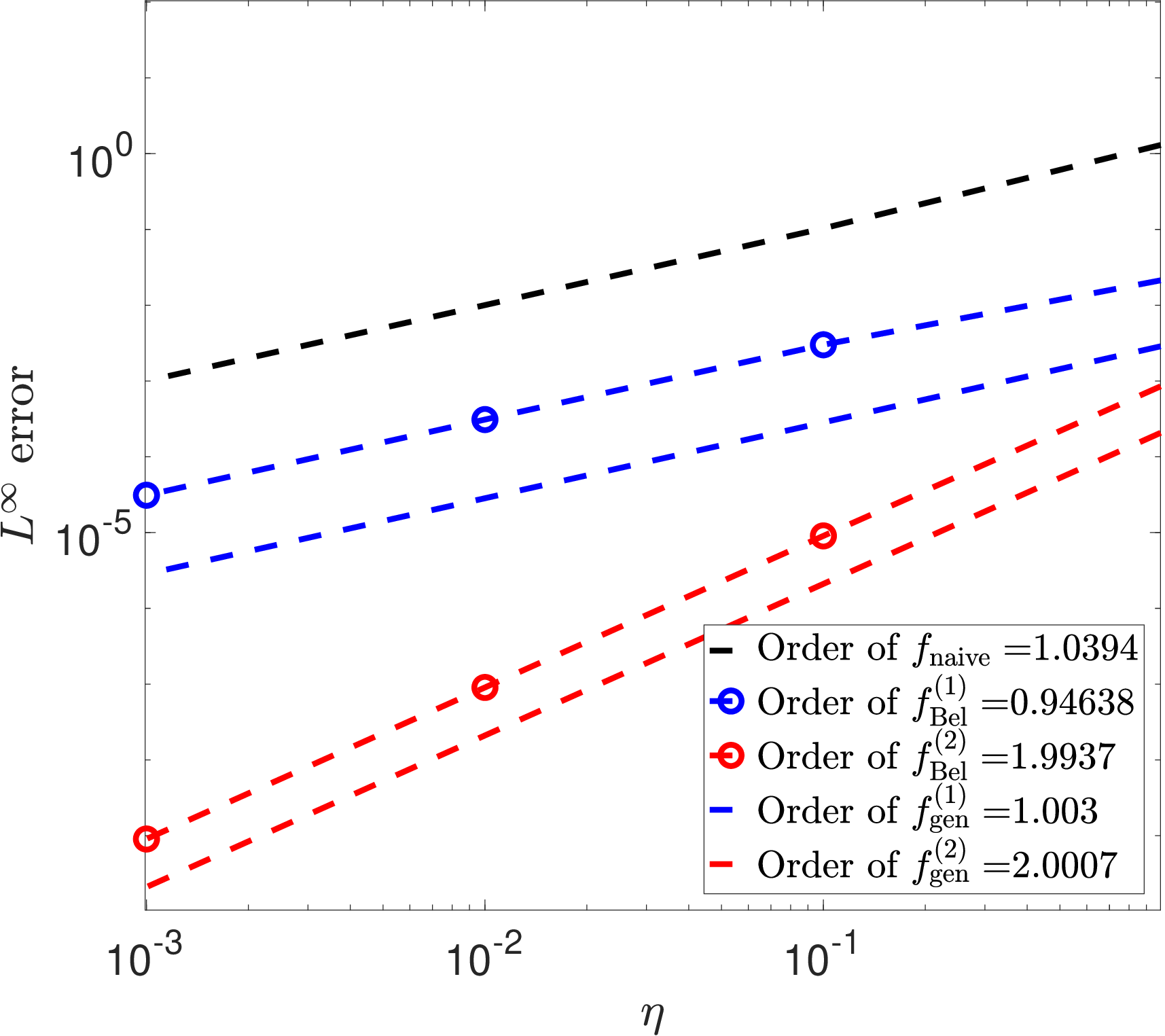}
         \includegraphics[width=0.25\textwidth]{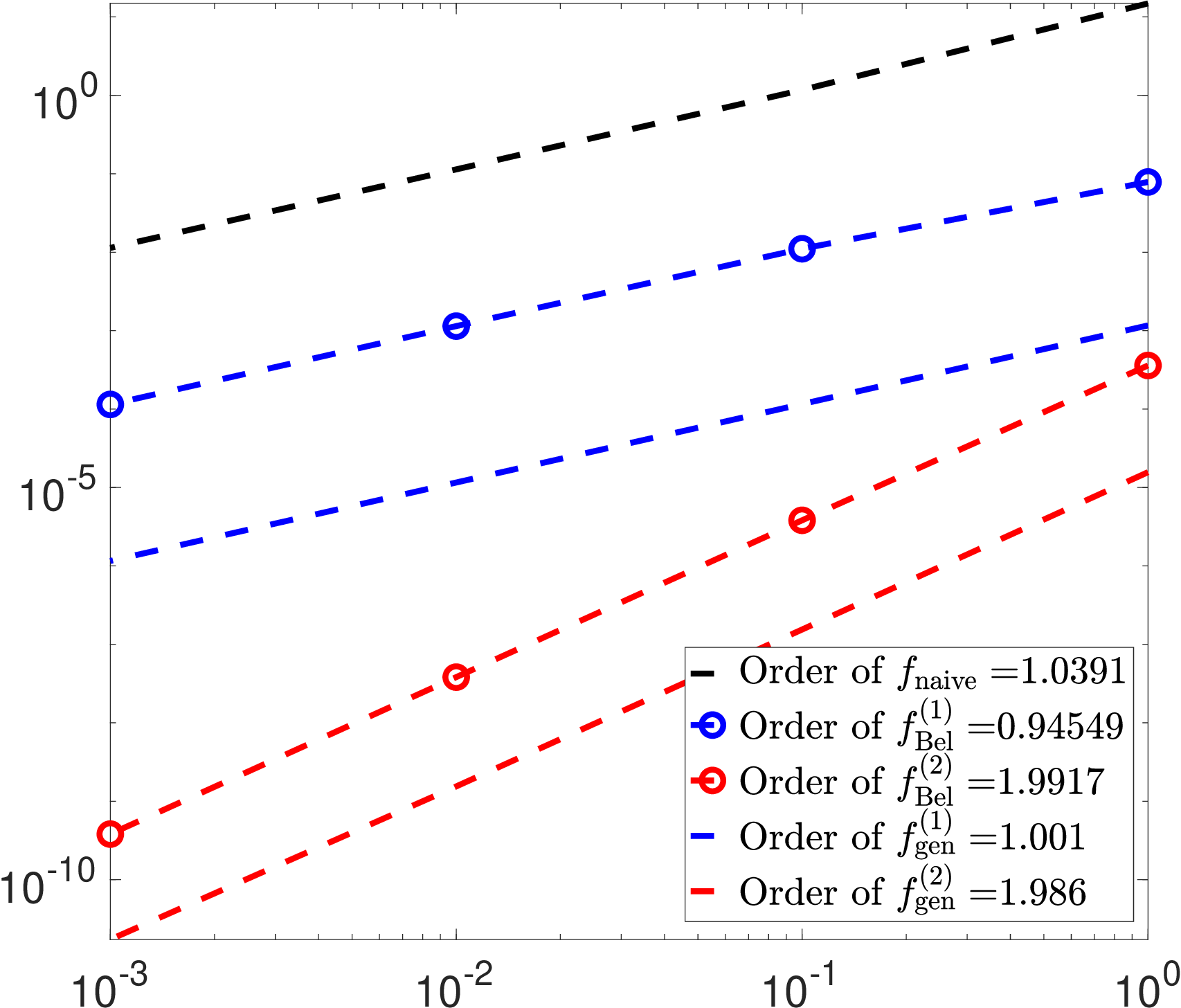}
         \includegraphics[width=0.25\textwidth]{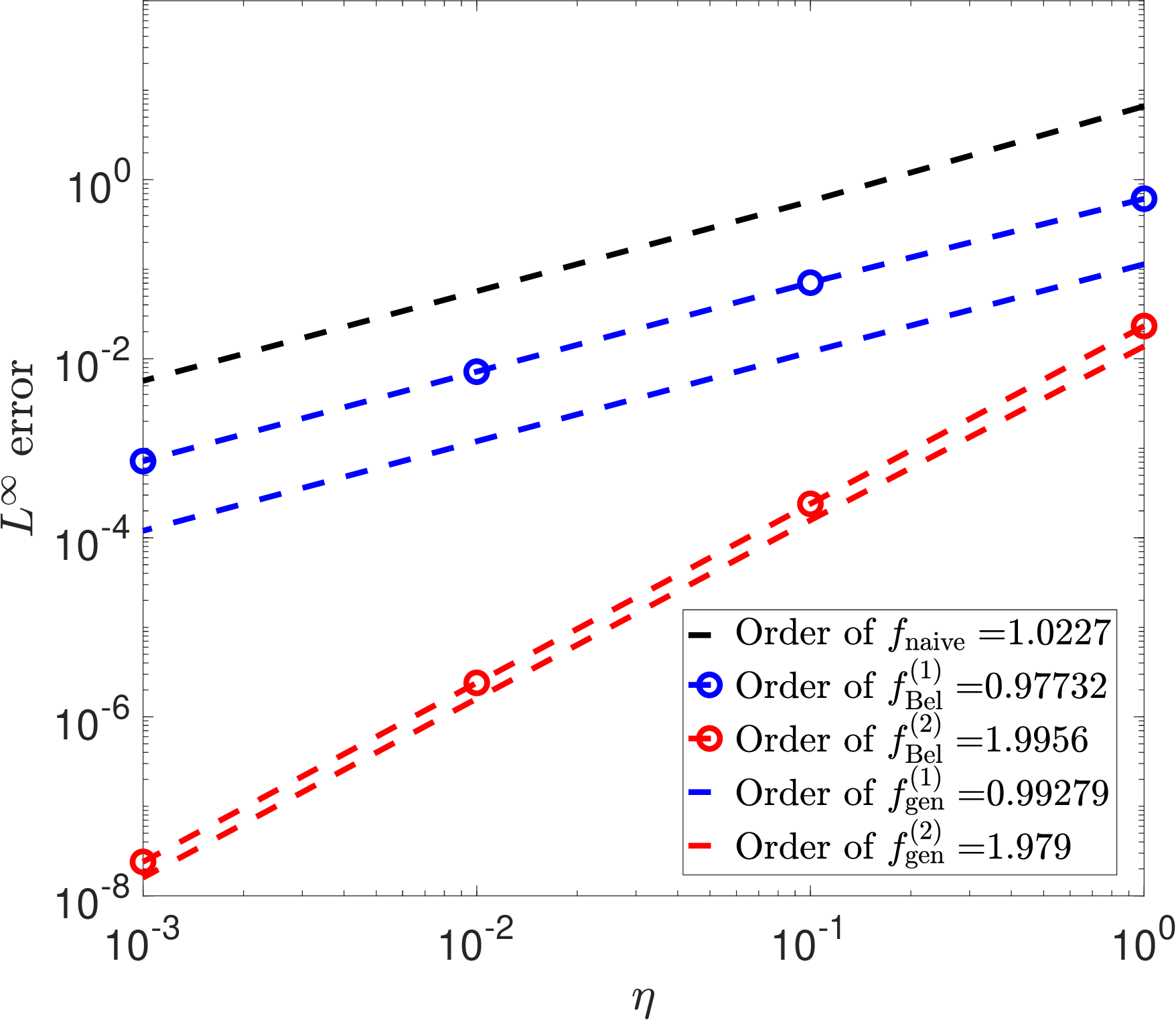}
  
\caption{The above figure plots the error of the solution as the step size \( \stepsize \) decreases. 
\textbf{Left:} The dynamics follow \eqref{deter-dyna}, and the reward is \eqref{deter-reward-1} with \( \lambda = 0.05, k = 1, \discount = 0.1 \) (above), and \( \lambda = 0.01, k = 2, \discount = 2 \) (below). 
\textbf{Middle:} The dynamics follow \eqref{deter-dyna}, and the reward is \eqref{deter-reward-2} with \( \lambda = 0.01 \), and \( \alpha = 5, b = 1, \discount = 0.1 \) (above), and \( \alpha = 2, b = 2, \discount = 2 \) (below). 
\textbf{Right:} The dynamics follow \eqref{stoch-dy}, and the reward is \eqref{stoch-reward} with \( \sigma = 0.1, \discount = 0.1 \) (above), and \( \sigma = 1, \discount = 1 \) (below).}\label{fig:exact_dt}
\end{figure}

\subsection{Exact solution to the higher-order Bellman equations}\label{subsec:exact-eg}
We first consider a deterministic linear process,
\begin{equation}\label{deter-dyna}
d\MyState_t = \lam \MyState_t dt,
\end{equation}
where the state at time $t$ has an explicit form $\MyState_t = e^{\lam t
}\MyState_0$. We test our results in two settings. In the first setting, the reward is defined as
\begin{equation}\label{deter-reward-1}
\reward(x) = \discount \cos^3(kx)-\lam \state (-3k\cos^2(kx)\sin(kx)),
\end{equation} 
where the value function can be exactly obtained as $\valuestar(x) = \cos^3(kx)$. We very $\lambda, k, \discount$ to observe the decay of the error $\ll \ValInter{\numerOrder} - \valuestar \rl,  \ll \valhatinter{\numerOrder} - \valuestar \rl$ with respect to the step size decay $\dt =[1,0.1,10^{-2}, 10^{-3}] $. The results are plotted in the left column of Figure \ref{fig:exact_dt}. 
In the second setting, the reward is defined as 
\begin{equation}\label{deter-reward-2}
\reward(x) = bx^\alpha,
\end{equation} 
where the value function can be exactly obtained as $\valuestar(x) = \frac{bx^\alpha}{\discount-\alpha\lam}$. We set $\lambda = 0.01$, and vary $\alpha, b, \discount$ to test the error decay  with respect to the step size decay $\dt$. The results are plotted in the middle column of Figure \ref{fig:exact_dt}.

Secondly, we consider a stochastic process, the Ornstein–-Uhlenbeck process,
\begin{equation}\label{stoch-dy}
d\MyState_t = \lam \MyState_t dt + \s dB_t,\quad \text{with}\quad 
\lam = -0.1.
\end{equation}
For OU process, the conditional density function for $s_t$ given $s_0 = s$ follows the normal distribution with expectation $se^{\lam t}$ and variance $\frac{\s^2}{2\lam}(e^{2\lam t}-1)$. Here we consider the case where
the reward is 
\begin{equation}\label{stoch-reward} 
r = x^\alpha,\quad \text{with}\quad \alpha = 2.
\end{equation} and the value function is $V(\state) = (\frac{\sigma^2}{2\lam} + \state^2)\frac{1}{\beta - 2\lam} - \frac{\sigma^2}{2\lam\beta}$. 
We set different $\discount, \sigma$ to test the error decay with respect to the step size $\dt$ decay, The results are plotted in the right column of Figure \ref{fig:exact_dt}.


From Figure \ref{fig:exact_dt}, it is evident that the order of the solution aligns with the theoretical results across all experiments. Note that $f_{(\text{naive})}$ refers to the naive Bellman solution defined in \eqref{eq:naive-discretized-bellman}. When $\discount$ is small, all the first-order solutions exhibit similar errors,  whereas the second-order solution demonstrates significantly smaller errors for the same $\eta$. Additionally, for large $\discount$, the first-order solution in our formulation outperforms the naive Bellman equation solution.  This is because the naive Bellman solution approximates $\int_0^\eta e^{-\discount t} \reward(\MyState_t) dt $ using $\reward(\MyState_0)$, which becomes less accurate as $\discount$ increases. 

To illustrate the improvement made by the higher-order solution compared to the naive Bellman equation solution, we plot the value functions in Figure \ref{fig:exact_v} for the examples in the second row of Figure \ref{fig:exact_dt}. This figure shows that the naive Bellman equation solution deviates significantly from the true value function $\valuestar$, whereas the second-order solutions $\ValInter{2}, \valhatinter{2}$ are almost the same as the true value function.

\begin{figure}[!h]
\centering
         \includegraphics[width=0.25\textwidth]{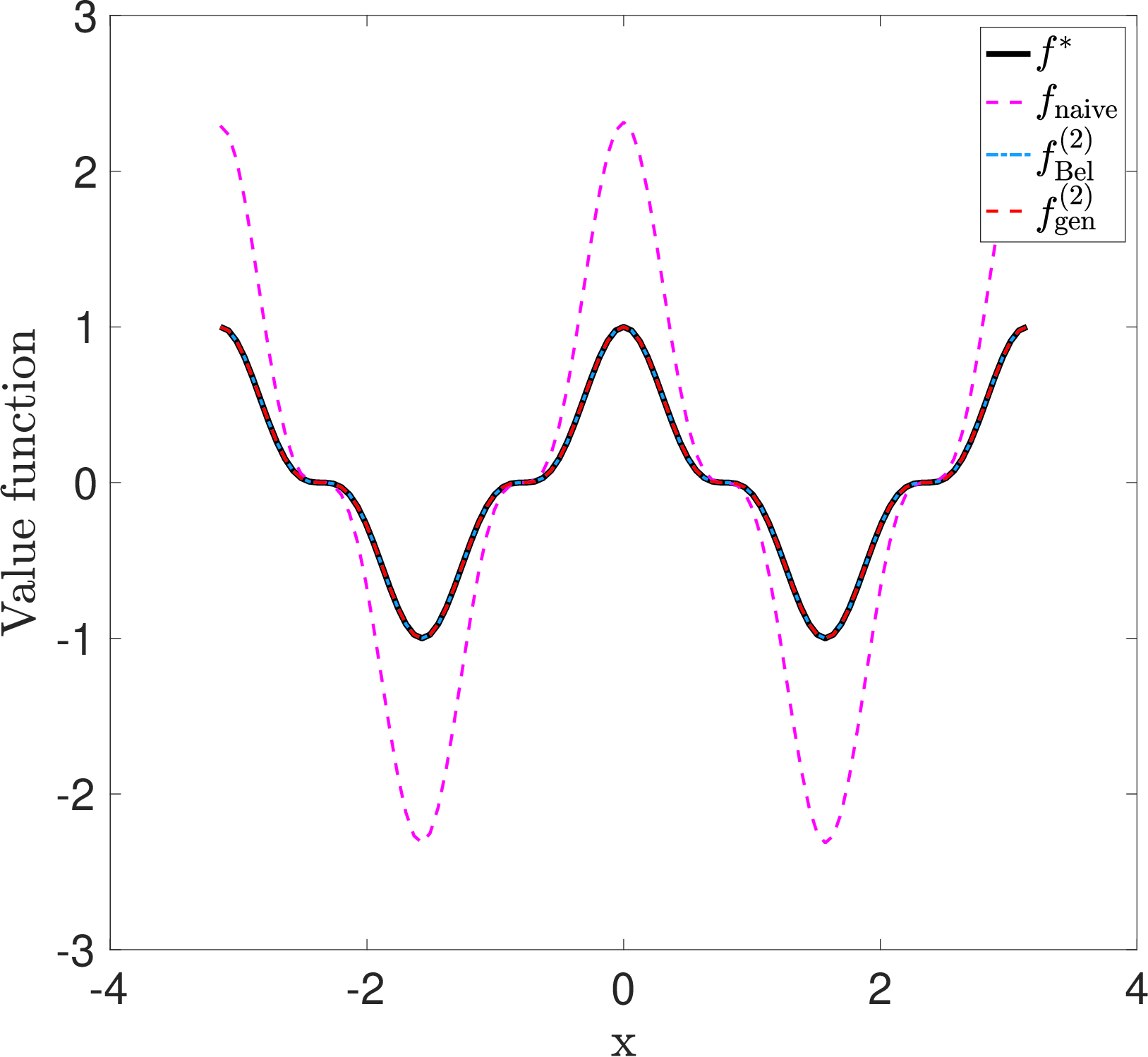}
         \includegraphics[width=0.25\textwidth]{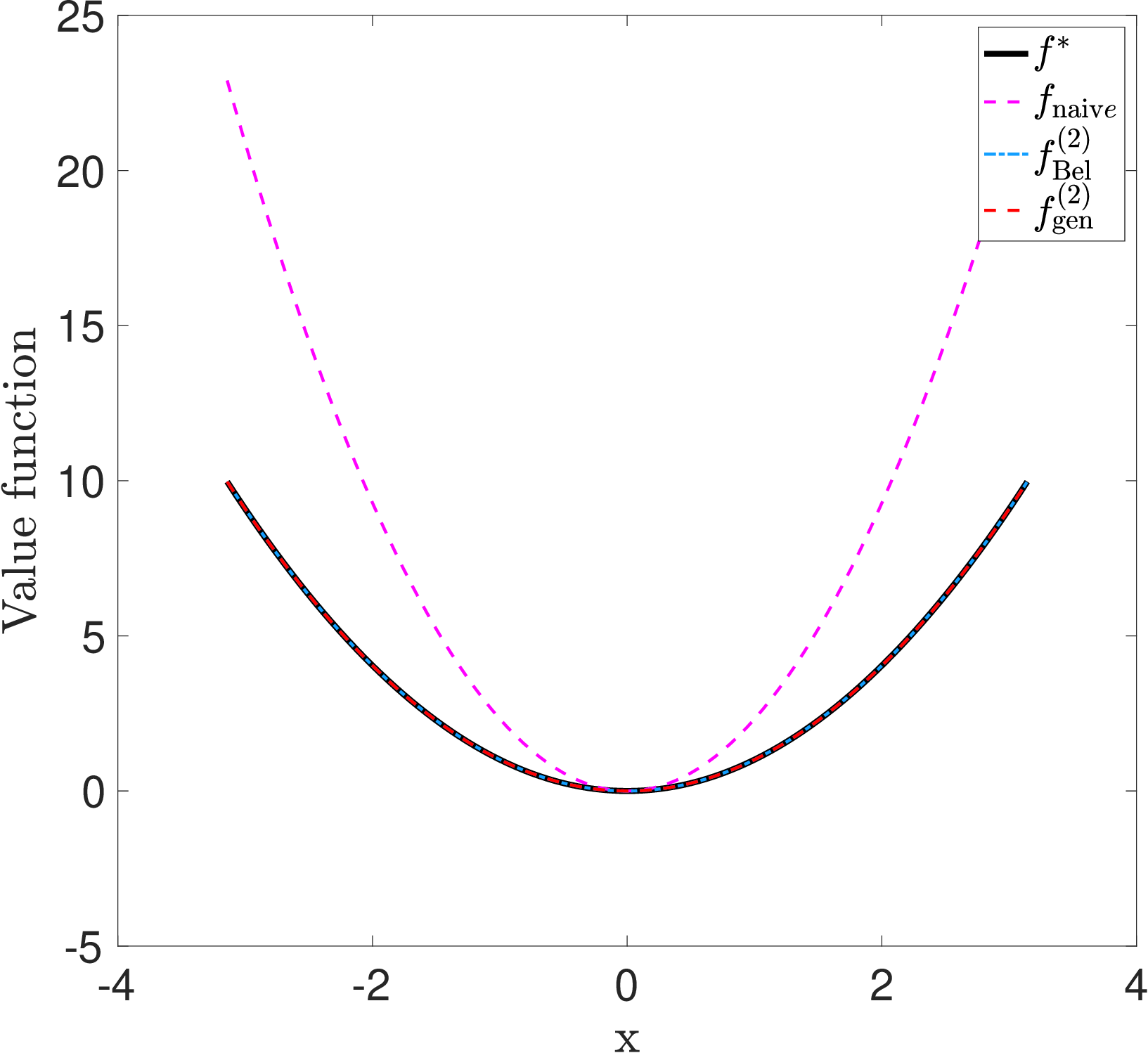}
         \includegraphics[width=0.25\textwidth]{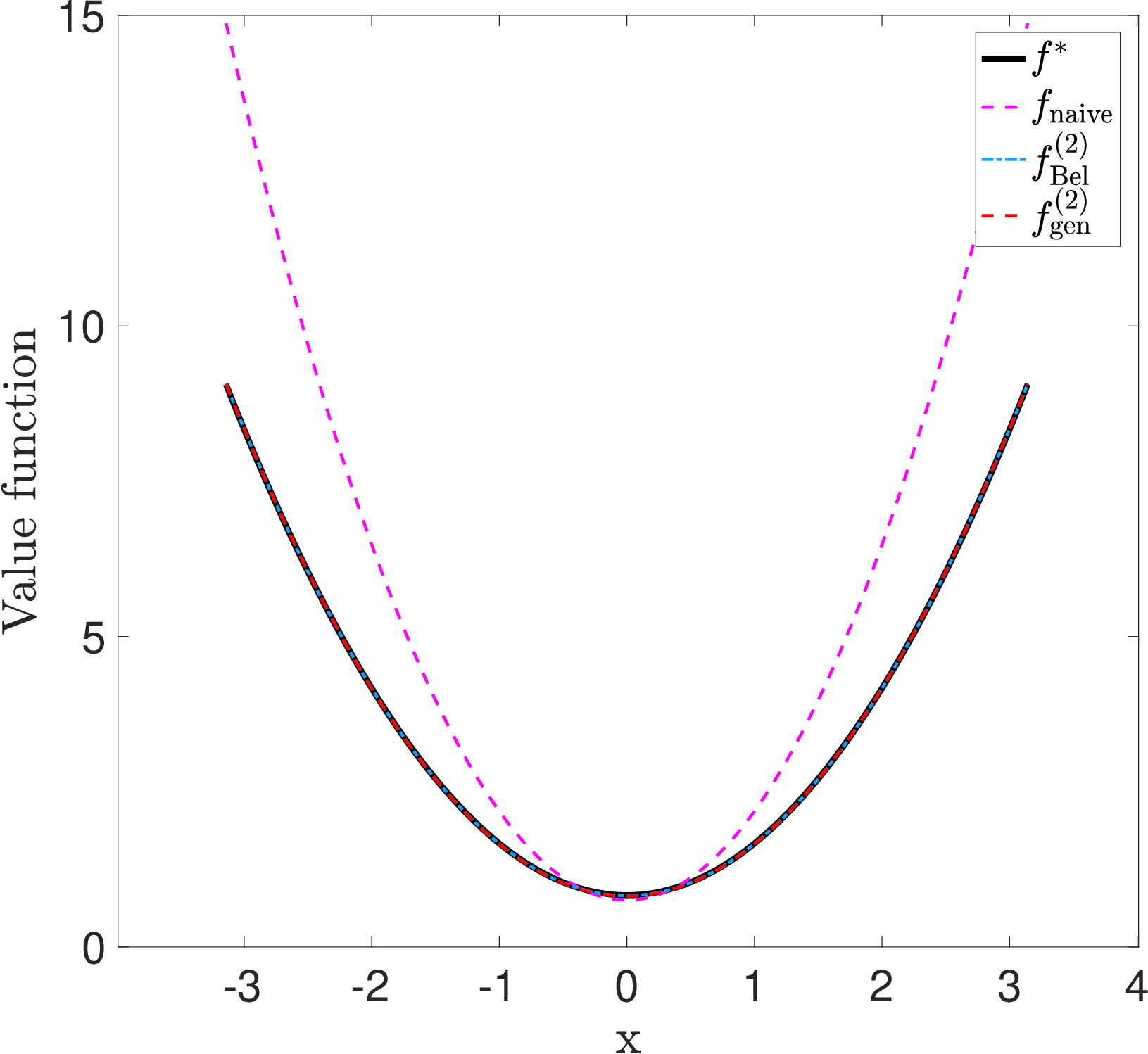}
\caption{The above figure plots the value functions for $\eta = 1$ with the same setting as the second row of Figure \ref{fig:exact_dt}}\label{fig:exact_v}
\end{figure}

Here are the details of the experiments. First, to eliminate the effect of function approximation error, we use bases such that the true value function lies in the space spanned by these finite bases.
\begin{equation}\label{numerical-bases}
\begin{aligned}
&\text{Left column}: \quad \psi(\state) =  \frac{1}{\sqrt{2\pi}}(1, \cos(\state), \sin(\state), \cdots, \cos(m\state), \sin(m\state) ) , \quad \text{with }m = 5k.\\
&\text{Middle column}: \quad \psi(\state) =  \text{the Legendre polynomials up to $\alpha$-th order}.\\
&\text{Right column} \quad \psi(\state) =  \text{the Legendre polynomials up to $\alpha$-th order}.
\end{aligned}
\end{equation}
We find the exact higher-order solutions $\ValInter{\numerOrder}, \valhatinter{\numerOrder}$  based on the formulas \eqref{eq:finite-sample-estimator-diffusion} - \eqref{eq:finite-sample-estimator-bellman}. 
For the deterministic dynamics, the coefficient $\hat{\theta}$ is calculated by the following formula, 
\[
\begin{aligned}
&\thetahat = \Big\{ \sum_{k = 0}^{K} \psi (\MyState_{k })  \cdot \Big( \psi (\MyState_{k }) - e^{- \discount (\numerOrder - 1) \stepsize}  \psi (\MyState'_{k ,(\numerOrder - 1) \stepsize}) \Big)^\top \Big\}^{-1} \cdot \Big\{ \stepsize \sum_{k = 0}^{K}  \sum_{i = 0}^{(\numerOrder - 1) \stepsize} \kappa_i r(\MyState'_{k ,i \stepsize}) \psi (\MyState_{k}) \Big\}, \\
&\thetahat_{\text{gen}} = \Big\{ \sum_{k = 0}^{K} \psi (\MyState_{k })  \cdot \Big( \discount \psi (\MyState_{k }) - \frac{1}{\stepsize} \sum_{i = 0}^{\numerOrder} \coef{\numerOrder}_i  \psi (\MyState'_{k, i\stepsize}) \Big)^\top \Big\}^{-1} \cdot \Big\{ \sum_{k = 0}^{K} r(\MyState_{k})\psi (\MyState_{k }) \Big\}.
\end{aligned}
\]
Instead of using the trajectory data, we set $\MyState_{k } = -\pi+k \dx$ with $\dx = 2\pi/K, K = 400$, and $\MyState'_{k, i\stepsize} = e^{\lambda i\eta}\MyState_{k}$. 
For the stochastic case, the following formula is used,
\[
\begin{aligned}
&\thetahat = \Big\{ \sum_{k = 0}^{K} \psi (\MyState_{k })  \cdot \Big( \psi (\MyState_{k }) - e^{- \discount (\numerOrder - 1) \stepsize}  \psi_{k ,(\numerOrder - 1) \stepsize} \Big)^\top \Big\}^{-1} \cdot \Big\{ \stepsize \sum_{k = 0}^{K}  \sum_{i = 0}^{(\numerOrder - 1) \stepsize} \kappa_i r_{k ,i \stepsize} \psi (\MyState_{k}) \Big\}, \\
&\thetahat_{\text{gen}} = \Big\{ \sum_{k = 0}^{K} \psi (\MyState_{k })  \cdot \Big( \discount \psi (\MyState_{k }) - \frac{1}{\stepsize} \sum_{i = 0}^{\numerOrder} \coef{\numerOrder}_i  \psi_{k, i\stepsize} \Big)^\top \Big\}^{-1} \cdot \Big\{ \sum_{k = 0}^{K} r(\MyState_{k})\psi (\MyState_{k}) \Big\},
\end{aligned}
\]
where $\MyState_{k} $ is the same as the deterministic case, while $\psi_{k ,i \stepsize} = \E[\psi(\MyState_{i\stepsize}) | \MyState_0 = \MyState_{k }],  r_{k ,i \stepsize} =  \E[r(\MyState_{i\stepsize}) | \MyState_0 = \MyState_{k}]$, which can be exactly calculated in our setting. The high-order solutions $\ValInter{\numerOrder}(\state) = \thetahat ^\top \psi(\state),\valhatinter{\numerOrder}(\state) = \thetahat_{\text{gen}}^\top\psi(\state)$ are evaluated at $x_j = -\pi+j\frac{2\pi}{J}$, with $j = 0, \cdots, J, J = 100$. The error is approximated by $\ll f - \valuestar \rl_\infty \approx \max_{j}|f(x_j) - \valuestar(x_j)|$.

\subsection{Approximate solution when only data is available}\label{subsec:data-eg}


When only trajectory data are available, we test our algorithm in both deterministic and stochastic environment. Since the deterministic processes cannot be ergodic, multiple trajectories are needed to learn the value function. In our simulation studies, we generate $L$ independent trajectories, and $M$ data points are collected every $\eta$ unit of time in each trajectory. The errors of the approximated solution to the true value function with respect to the number of trajectories $L$ are plotted in Figure \ref{fig:data}, where the environments are the same as those in Figure \ref{fig:exact_dt} and we set $M= 4$. 

\begin{figure}[ht!]
\centering
\begin{tabular}{ccc}
         \includegraphics[width=0.37\textwidth]{./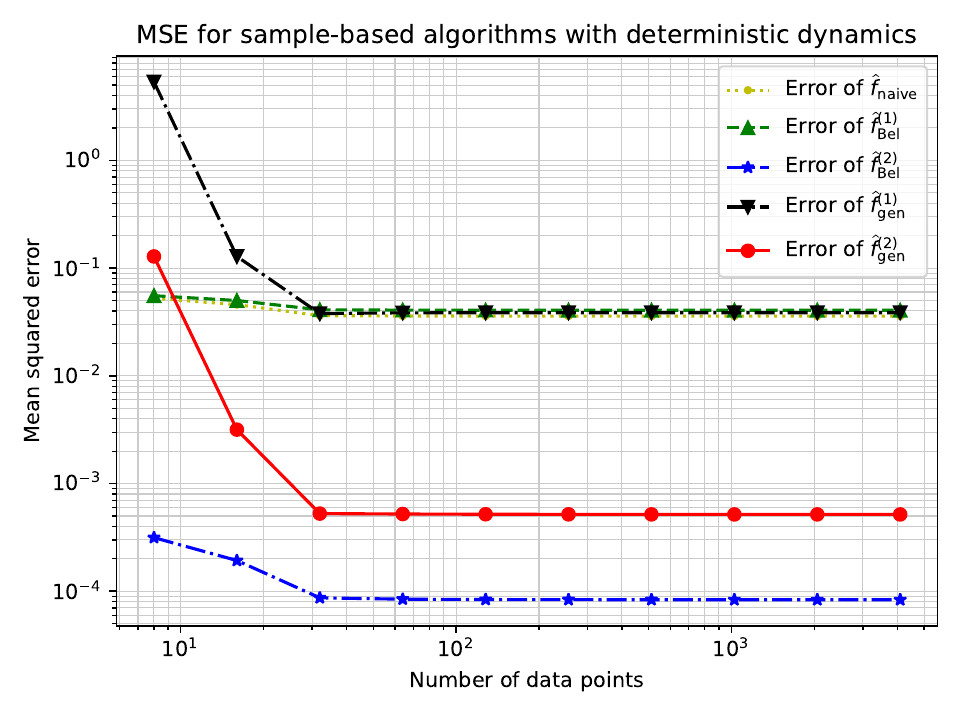}
&&
         \includegraphics[width=0.37\textwidth]{./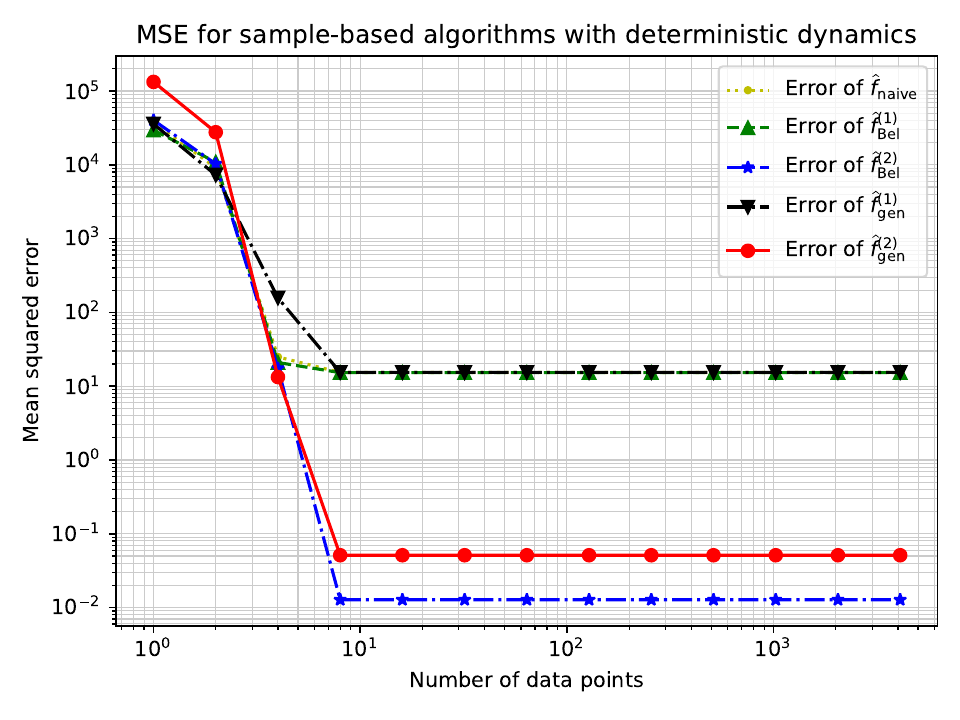}\\
(a) \( \lambda = 0.05, k = 1, \discount = 0.1 \) && (b) \( \alpha = 5, b = 1, \discount = 0.1 \) \\ 
        \includegraphics[width=0.37\textwidth]{./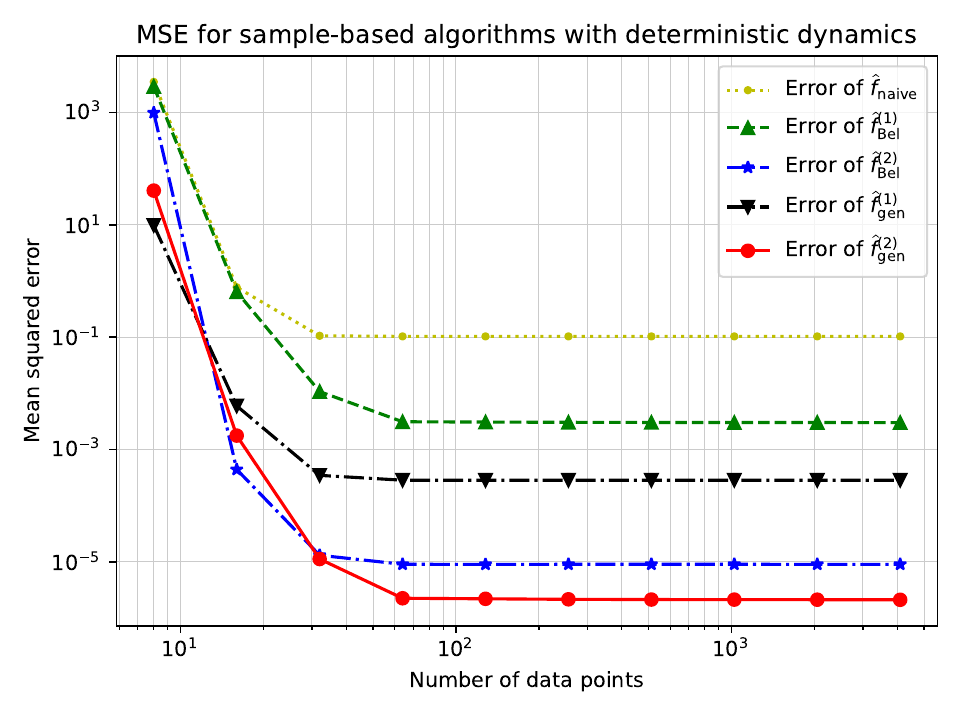}
&&
         \includegraphics[width=0.37\textwidth]{./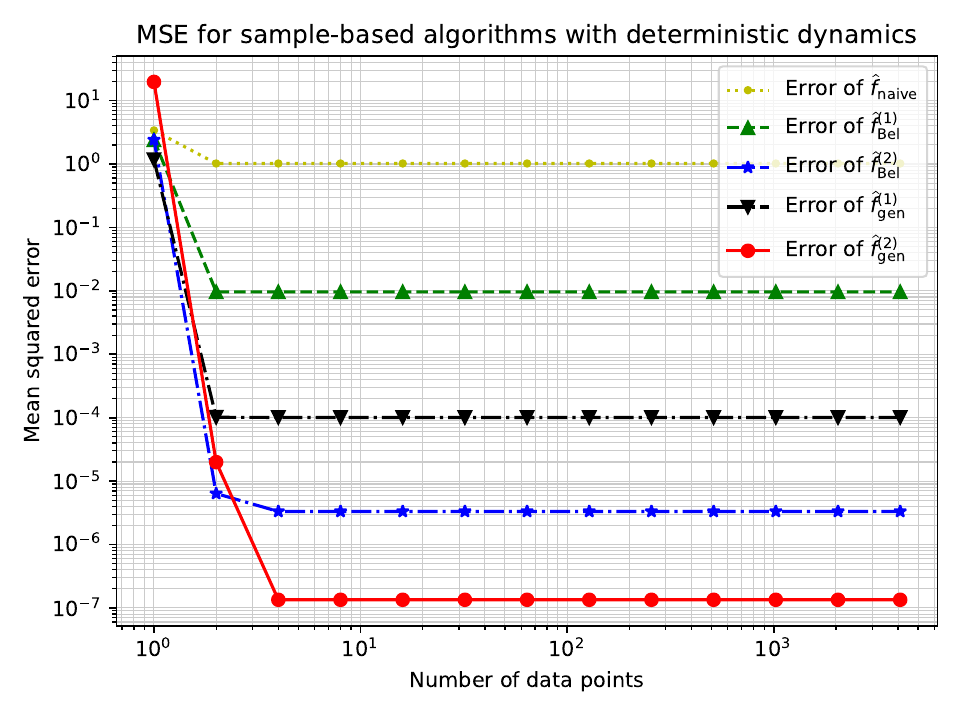}\\
(c)  \( \lambda = 0.01, k = 2, \discount = 2 \) && (d)  \( \alpha = 2, b = 2, \discount = 2 \)
\end{tabular}
\caption{The above figure plots the error of the approximated solution as the number of data increases. The specific parameter choices are marked in sub-figure titles.
The dynamics in panels (a)(c) follow Eq~\eqref{deter-dyna}, and the reward is Eq~\eqref{deter-reward-1}.
The dynamics in panels (b)(d) follow Eq~\eqref{deter-dyna} with \( \lambda = 0.01 \), and the reward is Eq~\eqref{deter-reward-2}.}
\label{fig:data}
\end{figure}

For stochastic dynamics, we use the same Ornstein--Uhlenbeck process~\eqref{stoch-dy} as in Section~\ref{subsec:exact-eg}, and learn the value functions based on a single stationary trajectory, collecting data every $\eta$ unit of time until the trajectory stops at a large time $T$.  The errors with respect to the terminal time $T$ are plotted in Figure \ref{fig:simulation-stochastic-time}, while the errors with respect to the step size $\eta$ are plotted in Figure \ref{fig:simulation-stochastic-stepsize}. In Figures \ref{fig:simulation-stochastic-time} and \ref{fig:simulation-stochastic-stepsize}, we test the algorithms in the stochastic dynamics \eqref{stoch-dy} with $\sigma = 1$, and the reward is set to be  
\begin{align*}
  \reward (\state) = \Big\{ 1 + \frac{1}{10} \state \cos (\state) + \frac{1}{2} \big(\sin (\state) - \cos^2 (\state)  \big) \Big\} \cdot \exp \Big( \sin (\state) \Big),
\end{align*}
so that the value function is exactly $\valuestar (\state) = \exp \big(\sin (\state) \big)$. In Appendix~\ref{app:sec-additional-simulation}, we present additional results with a quadratic reward function.

\begin{figure}[ht!]
\centering
\begin{tabular}{ccc}
  \widgraph{0.37\textwidth}{./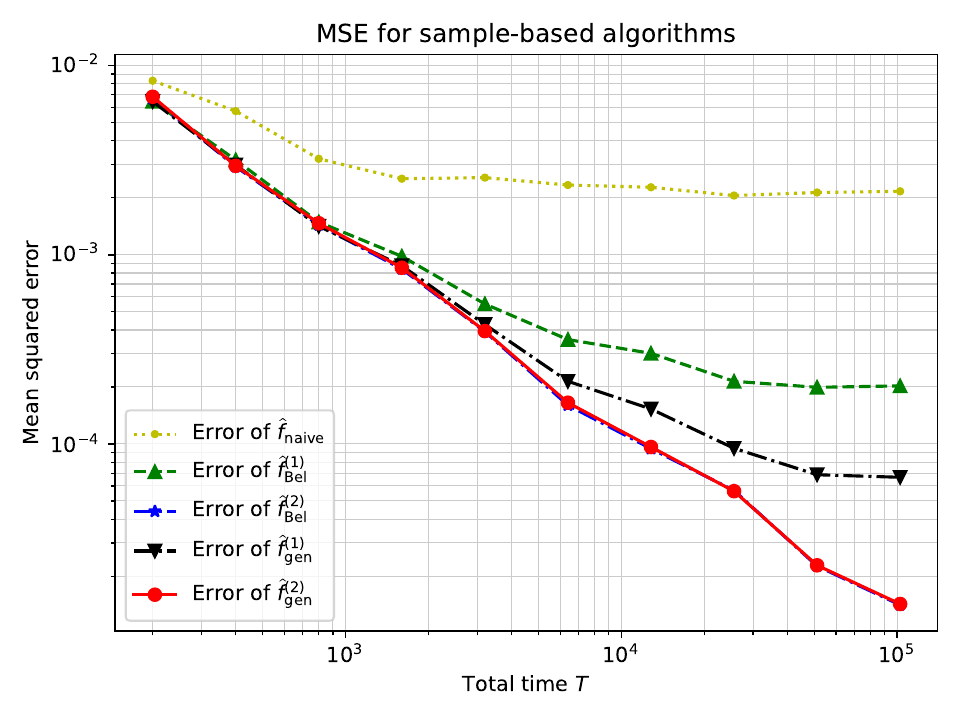} &&
  \widgraph{0.37\textwidth}{./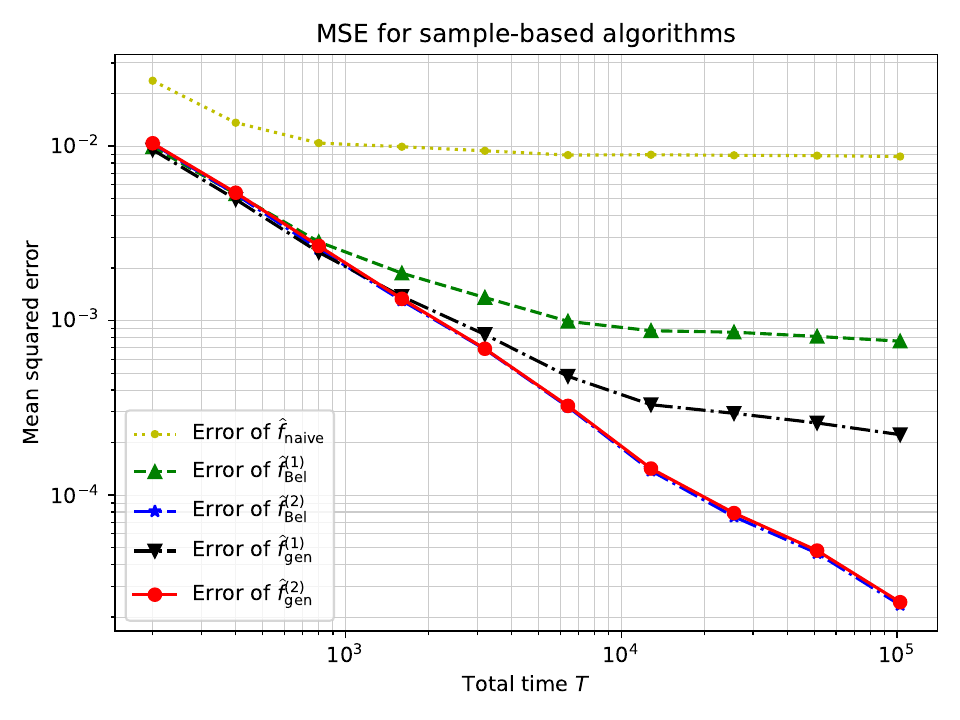} \\ (a) $\stepsize = 0.05$ && (b)  $\stepsize = 0.1$ \\
  \widgraph{0.37\textwidth}{./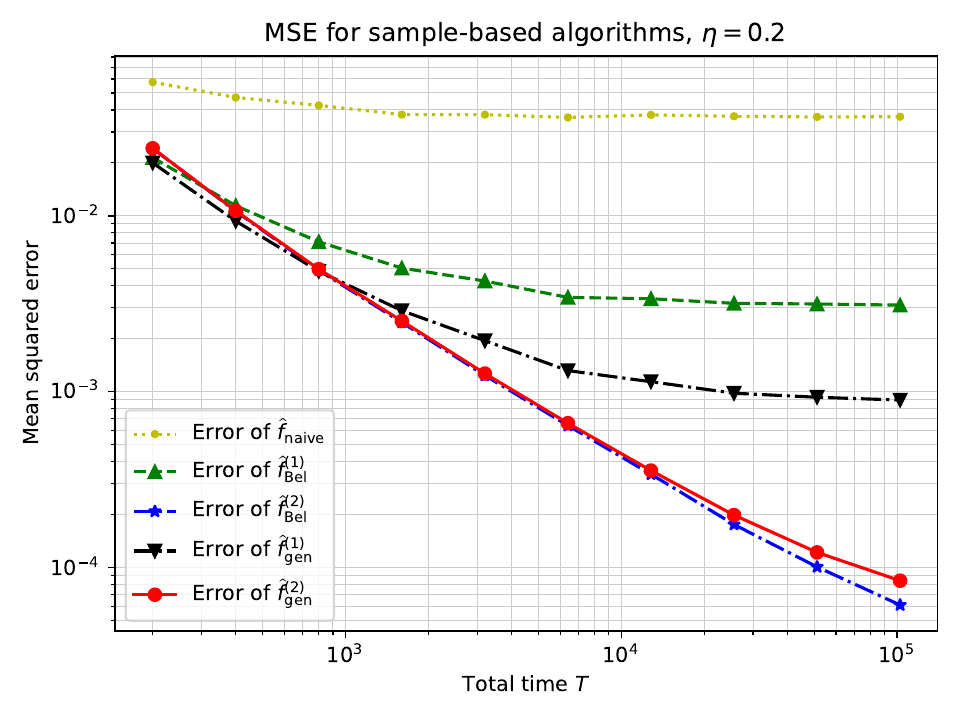} &&
  \widgraph{0.37\textwidth}{./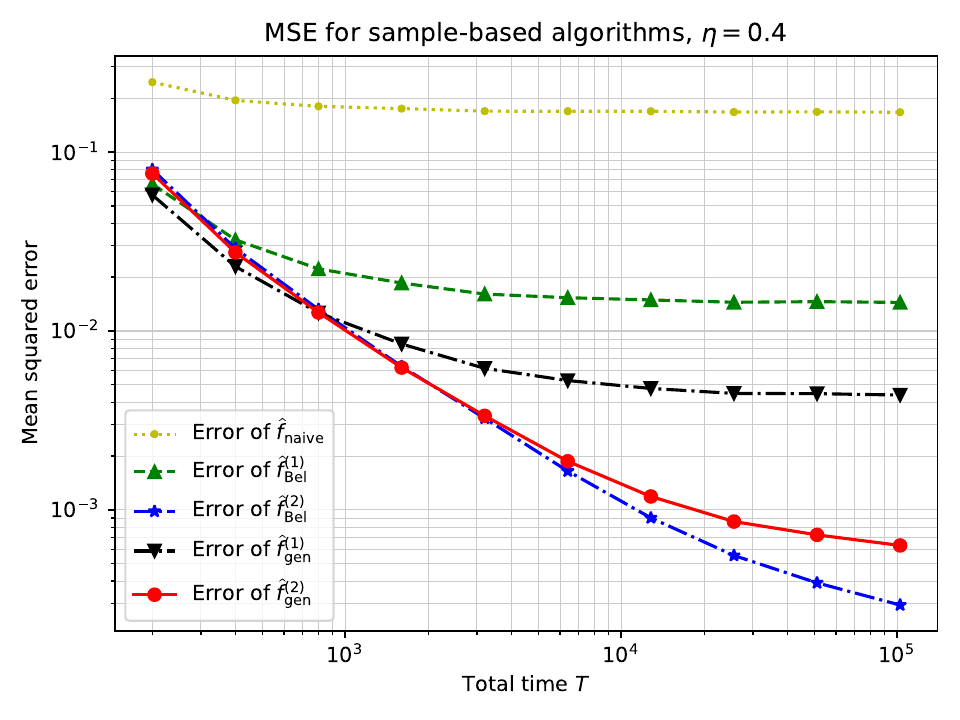} \\
(c)  $\stepsize = 0.2$ && (d)  $\stepsize = 0.4$
  \\
\end{tabular}
\caption{Plots of the mean-squared error $\Exs \big[ \statnorm{\valuehat - \valuestar}^2 \big]$ versus trajectory length
  $T$.  Each curve corresponds to a different algorithm. Each marker corresponds to a Monte Carlo
  estimate based on the empirical average of $50$ independent
  runs. As indicated by the sub-figure titles, each panel corresponds
  to a fixed stepsize $\stepsize$. Both
  axes in the plots are given by logarithmic scales.}
\label{fig:simulation-stochastic-time}
\end{figure}

\begin{figure}[ht!]
\centering
\begin{tabular}{ccc}
  \widgraph{0.37\textwidth}{./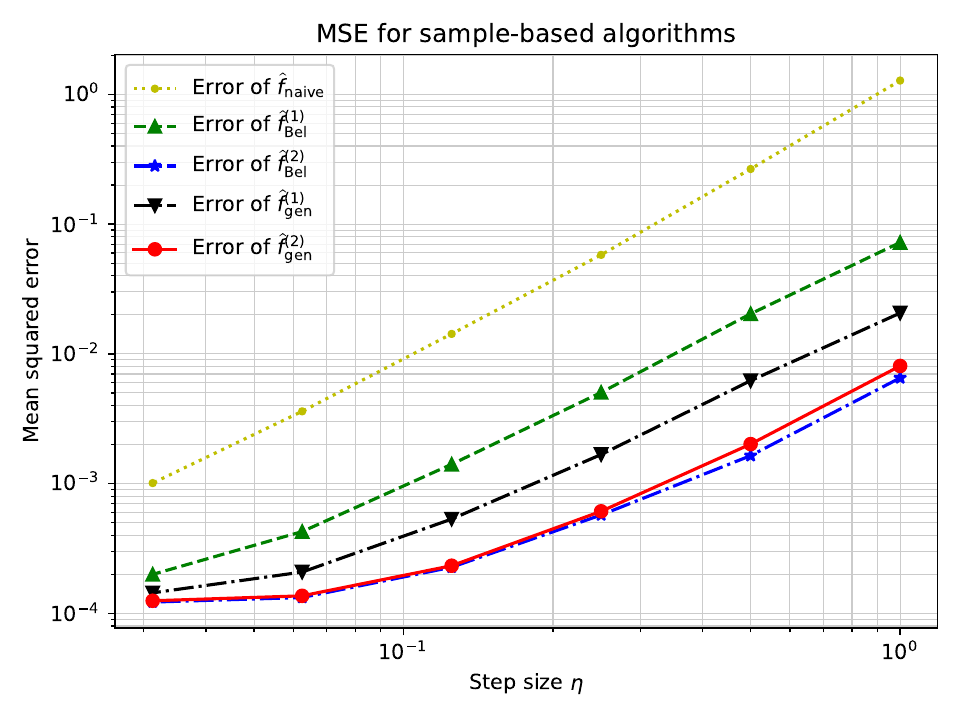} &&
  \widgraph{0.37\textwidth}{./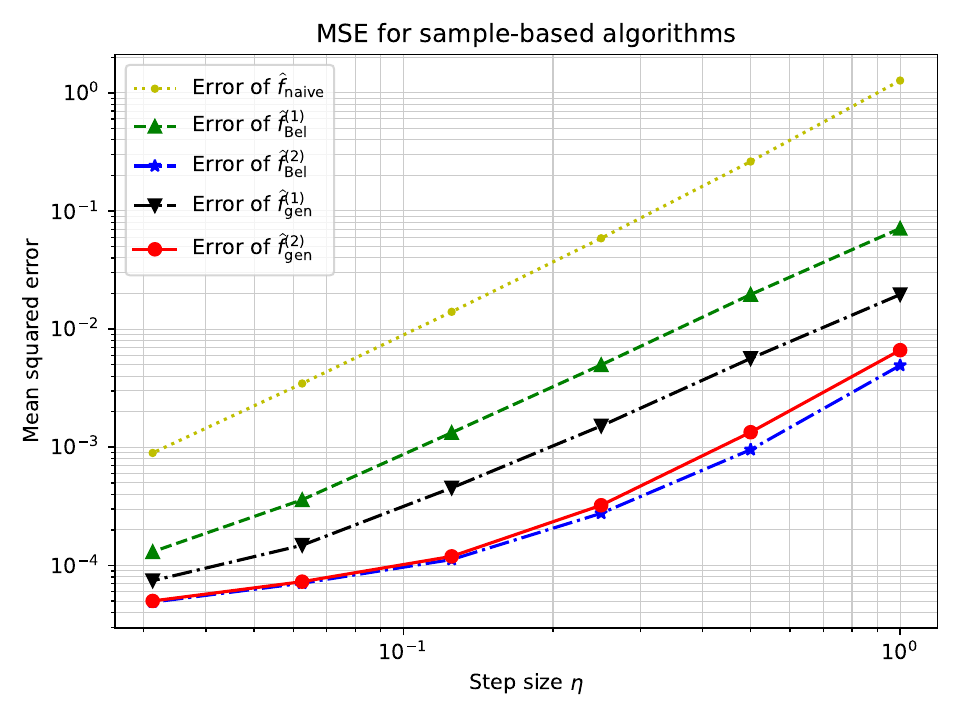} \\ (a) $T = 10000$ && (b) $T = 20000$ \\
  \widgraph{0.37\textwidth}{./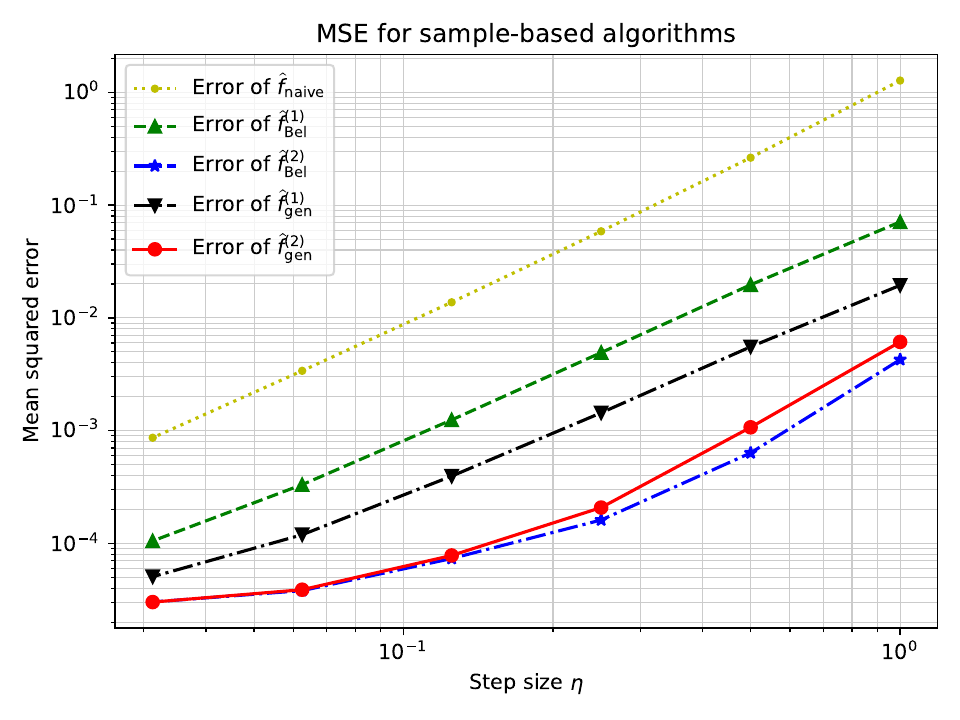} &&
  \widgraph{0.37\textwidth}{./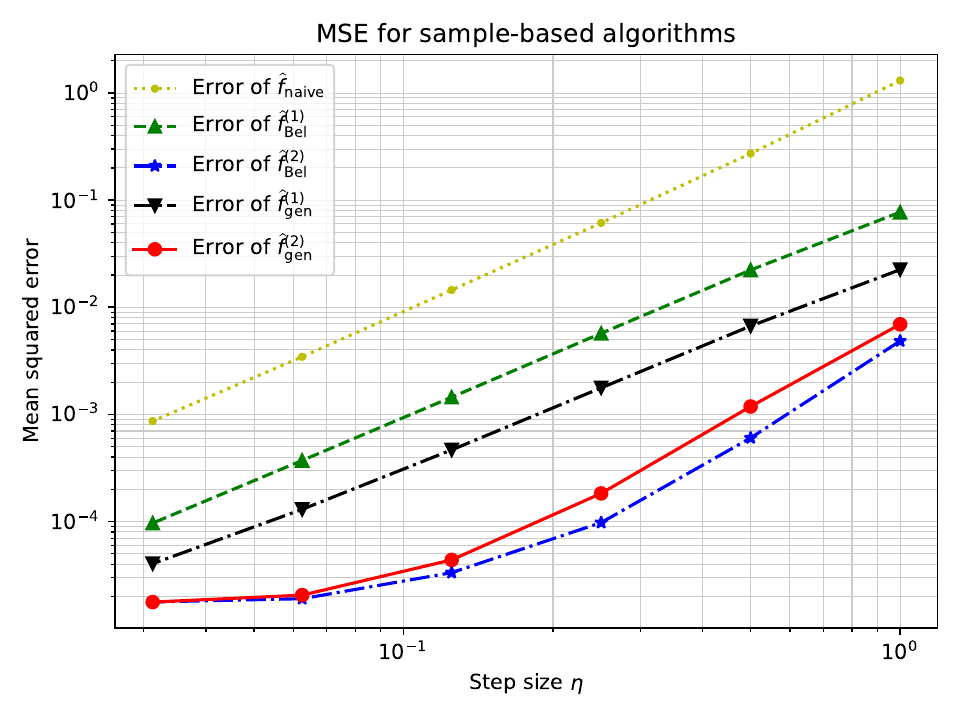} \\
(c) $T = 40000$ && (d) $T = 80000$
  \\
\end{tabular}
\caption{Plots of the mean-squared error $\Exs \big[ \statnorm{\valuehat - \valuestar}^2 \big]$ versus stepsize $\stepsize$.  Each curve corresponds to a different algorithm. Each marker corresponds to a Monte Carlo
  estimate based on the empirical average of $50$ independent
  runs. As indicated by the sub-figure titles, each panel corresponds
  to a fixed total time $T$. Both
  axes in the plots are given by logarithmic scales.}
\label{fig:simulation-stochastic-stepsize}
\end{figure}

From Figures \ref{fig:data} and \ref{fig:simulation-stochastic-time}, one can observe that as the amount of data increases, the error of the approximated solutions $ \ValHat^{(\numerOrder)}, \ValHat^{(\numerOrder, \mathrm{gen})} $ stabilizes at the error level of the corresponding exact solutions $ \ValInter{\numerOrder}, \valhatinter{\numerOrder} $. This indicates that if the exact solution is not a good approximation, no amount of data will enable convergence to the true value function. Additionally, the performance of the algorithms is robust across different types of data collection methods. Specifically, the error for the deterministic dynamics can stabilize with a small number of data points. Furthermore, the higher-order formulation does not introduce any additional instability into the results. 

From Figure \ref{fig:simulation-stochastic-stepsize}, one observes that as the terminal time $T$ increases, i.e., as the amount of data increases, the convergence rate as $\eta\to 0$ gets closer to its actual order. In Figures~\ref{fig:simulation-stochastic-stepsize} (b) and (c), we observe the first-order decay of the approximate solutions $\hat{\valuefunc}_{\mathrm{naive}}, \ValHat^{(1)}_{\mathrm{Bel}}, \ValHat^{(1)}_{\mathrm{gen}}$. For the second-order approximation solutions $\ValHat^{(2)}_{\mathrm{Bel}}, \ValHat^{(2)}_{\mathrm{gen}}$, the decay rate becomes flatter as $\eta\to 0$ because the error is dominated by the statistical error from the data in this regime.


Here are the details of the experiments.  For Figure \ref{fig:data},  we use the same bases as in \eqref{numerical-bases}. The plotted errors are mean errors over $100$ different simulations. In each simulations, we find the approximated solution using~\eqref{eq:finite-sample-estimator-bellman} and~\eqref{eq:finite-sample-estimator-diffusion} with trajectory data $(\MyState^l_0, \cdot, \MyState^l_{m\dt})_{l=1}^L$, where the initial states of each trajectory $\MyState^l_0 \sim$Unif$[-\pi,\pi]$ and the data are collected every $\eta = 0.1$ unit of time. The errors are evaluated in the same way as in Section \ref{subsec:exact-eg}. The $x$-axis plots the number of trajectories $L$.

For Figures \ref{fig:simulation-stochastic-time},~\ref{fig:simulation-stochastic-stepsize}, we use the same bases as in the left column of \eqref{numerical-bases}. We perform the simulation studies under two different scaling regimes: increasing the total time $T$ with a fixed stepsize $\stepsize$; and decreasing the stepsize $\stepsize$ with a fixed total time $T$. In order to evaluate the integrated squared error $\statnorm{\valuehat - \valuestar}^2$, we generate $N = 10^4$ independent test points $Z_1, Z_2, \cdots, Z_N$ from the stationary distribution $\stationary = \mathcal{N}(0, 5)$, and use the empirical average $\frac{1}{N} \sum_{i = 1}^N  \big(\valuehat (Z_i) - \valuestar (Z_i) \big)^2$ to approximate the error $\statnorm{\valuehat - \valuestar}^2$.

\section{Proofs}\label{sec:proofs}
We collect the proofs of the main results in this section.
\subsection{Proof of Theorem~\ref{thm:discretization-high-order}}\label{subsec:proof-thm-high-order}
By definition, we have that
\begin{align*}  
    \ValTrue (\state) &=  \int_0^{(\numerOrder - 1) \stepsize} e^{- \discount s} \Exs \big[ \reward (\MyState_s ) \mid \MyState_0 = \state \big] ds + e^{- \discount (\numerOrder - 1) \stepsize} \cdot\semigroup_{\stepsize (\numerOrder - 1)} \ValTrue (\state)\\
    \ValInter{\numerOrder} &=  \int_0^{(\numerOrder - 1) \stepsize} e^{- \discount s} \sum_{i = 0}^{\numerOrder - 1} W_i (s) \Exs \big[ \reward (\MyState_{i \stepsize}) \mid \MyState_0 = \state \big] ds + e^{- \discount (\numerOrder - 1) \stepsize}  \cdot \semigroup_{\stepsize (\numerOrder - 1)} \ValInter{\numerOrder} (\state)
\end{align*}
Taking their difference, note that
\begin{align*}
    \abss{\semigroup_{\stepsize (\numerOrder - 1)} \ValInter{\numerOrder} (\state) - \semigroup_{\stepsize (\numerOrder - 1)} \ValTrue (\state)} \leq \Exs \big[ |\ValInter{\numerOrder} (\MyState_{\stepsize (\numerOrder - 1)} ) - \ValTrue (\MyState_{\stepsize (\numerOrder - 1)} )| \mid \MyState_0 = \state \big] \leq \vecnorm{\valuestar - \ValInter{\numerOrder}}{\infty}.
\end{align*}
We therefore have the bound 
\begin{align}
  \vecnorm{\valuestar - \ValInter{\numerOrder}}{\infty} \leq \stepsize \numerOrder\max_{0 \leq s \leq \stepsize \numerOrder} \sup_{\state \in \StateSpace} \abss{ \sum_{i = 0}^{\numerOrder - 1} W_i (s) \semigroup_{i \stepsize} \reward (\state) - \semigroup_s \reward (\state)} + e^{- \discount \stepsize (\numerOrder - 1)} \vecnorm{\valuestar - \ValInter{\numerOrder}}{\infty}.\label{eq:linfty-contraction-in-high-order-time-discrete-pop-proof}
\end{align}
In other words, the error of interests $\vecnorm{\valuestar - \ValInter{\numerOrder}}{\infty}$ can be controlled using the error of Lagrangian polynomial approximation to the conditional expectation of the reward function. Define $g_\state (s) \mydefn \semigroup_s \reward (\state)$. The following lemma establishes its high-order differentiability properties.
\begin{lemma}\label{lemma:semigroup-high-order-bound}
    Under Assumption~\ref{assume:smooth-high-order}, there exists a constant $\constScary_\numerOrder > 0$, such that
    \begin{align*}
       \abss{\partial_s^{\numerOrder} g_\state (s)} \leq \constScary_\numerOrder, \quad \mbox{for any $\state \in \StateSpace$ and $s \geq 0$}.
    \end{align*}
\end{lemma}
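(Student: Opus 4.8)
The plan is to exploit the semigroup structure rather than to differentiate the SDE directly. Recall that $g_\state(s) = \semigroup_s \reward(\state) = \Exs[\reward(\MyState_s) \mid \MyState_0 = \state]$. By the Kolmogorov backward equation, $\frac{d}{ds}\semigroup_s h = \semigroup_s \generator h$ for sufficiently regular $h$, and iterating this identity $\numerOrder$ times gives
\[
  \partial_s^{\numerOrder} g_\state(s) = \semigroup_s\big(\generator^{\numerOrder}\reward\big)(\state), \quad \mbox{for all } \state \in \StateSpace,~ s \geq 0.
\]
Since $\semigroup_s$ is a Markov operator, it is an $L^\infty$-contraction, so $\abss{\partial_s^{\numerOrder} g_\state(s)} \leq \vecnorm{\generator^{\numerOrder}\reward}{\infty}$ uniformly in $s$ and $\state$. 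It therefore suffices to bound $\vecnorm{\generator^{\numerOrder}\reward}{\infty}$ using the derivative bounds supplied by Assumption~\ref{assume:smooth-high-order}, and to set $\constScary_\numerOrder \mydefn \vecnorm{\generator^{\numerOrder}\reward}{\infty}$.

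To control $\vecnorm{\generator^{\numerOrder}\reward}{\infty}$, I would track derivative orders through the recursion $\generator^{k+1}\reward = \generator(\generator^{k}\reward)$, where each application of $\generator h = \inprod{\nabla h}{\drift} + \tfrac12 \mathrm{Tr}(\covMat \nabla^2 h)$ raises the differential order by at most two and introduces the coefficients $\drift, \covMat$. Expanding $\generator^{\numerOrder}\reward$ via the multivariate Leibniz rule yields a finite sum of terms, each a product of partial derivatives of the entries of $\covMat$ and $\drift$ with a single partial derivative of $\reward$. A short induction shows that in $\generator^{\numerOrder}\reward$ the reward is differentiated to order at most $2\numerOrder$, while the coefficients $\drift$ and $\covMat$ are differentiated to order at most $2\numerOrder - 2$ — precisely the ranges $\{\smoothness_i^{\reward}\}_{i=0}^{2\numerOrder}$ and $\{\smoothness_i^{\drift}\}_{i=0}^{2\numerOrder-2}, \{\smoothness_i^{\covMat}\}_{i=0}^{2\numerOrder-2}$ controlled by the assumption. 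Concretely, I would define $M_k$ as a uniform bound on all partial derivatives of $\generator^{k}\reward$ of order up to $2(\numerOrder - k)$; the base case $M_0 = \max_{i \leq 2\numerOrder}\smoothness_i^{\reward}$ is finite, and the Leibniz expansion gives a recursion $M_{k+1} \leq c(\usedim)\,(\smoothness^{\drift} + \smoothness^{\covMat})\,M_k$ for a dimension-dependent combinatorial factor, so that $M_\numerOrder = \constScary_\numerOrder$ is polynomial in the smoothness parameters and $\usedim$, as claimed.

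The combinatorics above are routine; the main obstacle is the analytic justification of the displayed identity $\partial_s^{\numerOrder} g_\state(s) = \semigroup_s(\generator^{\numerOrder}\reward)(\state)$, i.e.\ that $g_\state$ is genuinely $\numerOrder$-times differentiable in $s$ and that differentiation may be exchanged with the expectation at every stage. This requires that the diffusion semigroup preserve the relevant spaces of boundedly-differentiable functions, so that each $\generator^{k}\reward$ ($k \leq \numerOrder$) is well-defined, bounded and continuous, and the backward equation applies to it. Under Assumption~\ref{assume:smooth-high-order} the coefficients $(\drift, \covMat)$ and the reward $\reward$ carry enough bounded derivatives for this to hold, and I would invoke standard regularity theory for Kolmogorov backward equations (smoothness of $\semigroup_s$ acting on $C_b^{2\numerOrder}$ functions under bounded smooth coefficients) to legitimize the formal computation and complete the proof.
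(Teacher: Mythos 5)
Your proof is correct and follows essentially the same route as the paper's: both rest on the iterated Kolmogorov identity $\partial_s^{\numerOrder} g_\state(s) = \semigroup_s\big(\generator^{\numerOrder}\reward\big)(\state)$ together with the $L^\infty$-contraction of the Markov semigroup, and both then bound $\vecnorm{\generator^{\numerOrder}\reward}{\infty}$ by an induction over applications of $\generator$ that tracks exactly the derivative ranges covered by Assumption~\ref{assume:smooth-high-order} (order $2\numerOrder$ for $\reward$, order $2\numerOrder-2$ for $\drift$ and $\covMat$). The only difference is cosmetic bookkeeping — the paper records $\generator^{k}\reward$ as a member of an explicit polynomial class $\mathscr{D}_{2k,k+1}$ in the partial derivatives of $(\reward,\drift,\covMat)$, whereas you carry a scalar bound $M_k$ on the derivatives of $\generator^{k}\reward$ up to order $2(\numerOrder-k)$ — and the interchange of $\partial_s$ with the expectation that you flag is treated just as formally in the paper.
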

\noindent See Section~\ref{subsubsec:proof-lemma-semigroup-high-order-bound} for the proof of this lemma.
Taking this lemma as given, we now proceed with the proof of Theorem~\ref{thm:discretization-high-order}.

By error bounds for Lagrangian polynomials~\cite{king1999afternotes}, we have
\begin{align*}
   \sup_{0 \leq s \leq \stepsize \numerOrder}  \abss{ \sum_{i = 0}^{\numerOrder - 1} W_i (s) g_\state (i \stepsize) - g_\state (s)} \leq \frac{1}{\numerOrder!} \sup_{0 \leq s \leq \stepsize \numerOrder} |\partial^\numerOrder g_\state (s)| \cdot \prod_{i = 1}^\numerOrder (i \stepsize) \leq \constScary_\numerOrder \stepsize^\numerOrder.
\end{align*}
Substituting back to Eq~\eqref{eq:linfty-contraction-in-high-order-time-discrete-pop-proof} completes the proof of the claim.

\subsubsection{Proof of Lemma~\ref{lemma:semigroup-high-order-bound}}\label{subsubsec:proof-lemma-semigroup-high-order-bound}
For any integer pair $i \geq 2$ and $h \geq 0$, we define the following function class
\begin{align*}
  \mathscr{D}_{i, h} \mydefn \Big\{
  \mbox{$h$-th order polynomials of $\big\{\partial^\alpha \reward \}_{|\alpha| \leq i}$, $\big\{ \partial^\alpha \drift_j \big\}_{|\alpha|\leq i - 2,~ j \in [\usedim]}$} 
  \mbox{ and }\big\{ \partial^\alpha \covMat_{j, k}\big\}_{|\alpha| \leq i - 2,~ j, k \in [\usedim]} \Big\}.
\end{align*}
Furthermore, we use the notation $\semigroup_s \mathscr{D}_{i, h}$ to denote the function class $\{\semigroup_s f: f \in \mathscr{D}_{i, h} \}$.

In order to prove the lemma, we use the following claim for $i = 1,2, \cdots, \numerOrder$.
\begin{align}
   \mbox{There exists } D_i \in \mathscr{D}_{2i, i + 1}, \quad \mbox{s.t.}~ \partial_s^i g_\state (s) = \semigroup_s D_i (\state), \quad \mbox{for any $s \geq 0$.} \label{eq:polynomial-induction-proof-lemma-high-order}
 \end{align}
Note that Lemma~\ref{lemma:semigroup-high-order-bound} immediately follows from Eq~\eqref{eq:polynomial-induction-proof-lemma-high-order}. In particular, suppose that $\partial_s^i g_\state (s) = \semigroup_s D_i$ for some $D_i \in \mathscr{D}_{2i, i + 1}$, we have
\begin{align*}
  \abss{\partial_s^i g_\state (s)} = \abss{\Exs \big[ D_i (\MyState_s) \mid \MyState_0 = \state \big]} \leq \sup_{\state' \in \StateSpace} |D_i (\state')|,
\end{align*}
which is bounded under Assumption~\ref{assume:smooth-high-order}.

It remains to prove Eq~\eqref{eq:polynomial-induction-proof-lemma-high-order}. We do so by induction. For the base case $i = 1$, we have that
\begin{align*}
  g_\state' (s) = \semigroup_s \generator g_\state (s) =  \semigroup_s \Big\{ \inprod{\drift }{\nabla \reward } + \frac{1}{2} \mathrm{Tr} \big( \covMat \cdot \nabla^2 \reward \big) \Big\} (\state),
\end{align*}
which takes form as a second-order polynomial of $\drift, \covMat, \nabla \reward$ and $\nabla^2 \reward$. Defining the function
\begin{align*}
  D_1 \mydefn \inprod{\drift }{\nabla \reward } + \frac{1}{2} \mathrm{Tr} \big( \covMat \cdot \nabla^2 \reward \big) \in \mathscr{D}_{2, 2},
\end{align*}
we conclude that Eq~\eqref{eq:polynomial-induction-proof-lemma-high-order} holds true for the base case.

Assume that Eq~\eqref{eq:polynomial-induction-proof-lemma-high-order} holds true for $i$-th derivative. There exists a function $D_i \in \mathscr{D}_{2i, i + 1}$ such that $\partial_s^i g_\state (s) = \semigroup_s D_i$ for any $s \geq 0$, we proceed with the proof for the $(i + 1)$-th derivative. Suppose that
\begin{align*}
  D_i = \sum_{\ell = 1}^{k_i} c_{\ell} M_{ \ell},
\end{align*}
where $\{c_\ell\}_{\ell = 1}^{k_i}$ are constants depending on $(i, d)$, and each term $M_{\ell}$ takes the following form
\begin{subequations}\label{eq:derivative-of-monomials-in-high-order-proof}
\begin{align}
  M_{\ell}  = \prod_{|\alpha| \leq 2 i} \big(\partial^\alpha \reward \big)^{q_{\alpha, \reward}} \cdot  \prod_{|\alpha| \leq 2 i - 2}~ \prod_{ j \in [\usedim]} \big(\partial^\alpha \drift_j \big)^{q_{\alpha, \drift_j}} \cdot  \prod_{|\alpha| \leq 2 i - 2}~ \prod_{ j, k \in [\usedim]} \big(\partial^\alpha \covMat_{j, k} \big)^{q_{\alpha, \covMat_{j, k}}},
\end{align}
for non-negative integer exponents $\{q_{\alpha, \reward}\}, ~\{q_{\alpha, \drift_j} \}$, and $\{q_{\alpha, \covMat_{j, k}}\}$ that sum up to at most $i + 1$.

Taking its derivatives, for each $u \in [\usedim]$, we have that
\begin{multline}
  \partial_{x_u} M_{i, \ell} = \sum_{|\alpha| \leq 2 i} q_{\alpha, \reward} \frac{M_{i, \ell}}{{\partial^\alpha \reward}} \partial^{\alpha + \coordinate_u} \reward  +  \sum_{|\alpha| \leq 2 i - 2}~ \sum_{ j \in [\usedim]} q_{\alpha, \drift_j}  \frac{M_{i, \ell}}{\partial^\alpha \drift_j}  \partial^{\alpha + \coordinate_u} \drift_j\\
   +  \sum_{|\alpha| \leq 2 i - 2}~ \sum_{ j, k \in [\usedim]} q_{\alpha, \covMat_{j, k}}  \frac{M_{i, \ell}}{\partial^\alpha \covMat_{j, k}} \partial^{\alpha + \coordinate_u} \covMat_{j, k}.\label{eq:derivative-calc-in-high-order-proof}
\end{multline}
\end{subequations}
Each term on the right-hand-side of Eq~\eqref{eq:derivative-calc-in-high-order-proof} is a monomial of $\big\{\partial^\alpha \reward \}_{|\alpha| \leq 2i + 1}$, $\big\{ \partial^\alpha \drift_j \big\}_{|\alpha|\leq 2i - 1,~ j \in [\usedim]}$, and $\big\{ \partial^\alpha \covMat_{j, k}\big\}_{|\alpha| \leq 2i - 1,~ j, k \in [\usedim]}$, with a degree at most $i + 1$, and coefficient at most $i + 1$. Therefore, we have that
\begin{align*}
  \partial_{x_u} M_{i, \ell} \in \mathscr{D}_{2i + 1, i + 1}.
\end{align*}

Once again, we can apply the arguments in Eqs~\eqref{eq:derivative-of-monomials-in-high-order-proof} to each monomial terms in the function $\partial_{x_u} M_{i, \ell}$, and we can conclude that
\begin{align*}
  \partial_{x_u}\partial_{x_v} M_{i, \ell} \in \mathscr{D}_{2i + 2, i + 1},
\end{align*}
for any $u, v\in[\usedim]$. Furthermore, by adding the $k_i$ terms together, we obtain that 
\begin{align}
  \partial_{x_u} D_i \in \mathscr{D}_{2i + 1, i + 1}\quad \mbox{and} \quad\partial_{x_u} \partial_{x_v} D_i \in \mathscr{D}_{2i + 2, i + 1}.\label{eq:partial-derivatives-in-next-function-class-high-order-proof}
\end{align}

Note that the $(i + 1)$-th order derivative of the function $g_\state (s)$ takes the form
\begin{align*}
  \partial_s^{i + 1} g_\state (s) = \semigroup_s \generator D_i (\state)=  \semigroup_s \Big[ \sum_{u \in [\usedim]} \drift_u \partial_{x_u} D_i + \frac{1}{2} \sum_{u, v \in [\usedim]} \covMat_{u, v} \partial_{x_u} \partial_{x_v} D_i \Big] (\state).
\end{align*}
Define the function
\begin{align*}
  D_{i + 1} \mydefn \sum_{u \in [\usedim]} \drift_u \partial_{x_u} D_i + \frac{1}{2} \sum_{u, v \in [\usedim]} \covMat_{u, v} \partial_{x_u} \partial_{x_v} D_i.
\end{align*}
By Eq~\eqref{eq:partial-derivatives-in-next-function-class-high-order-proof}, we have that $D_{i + 1} \in \mathscr{D}_{2i + 2, i + 2}$, which completes the induction proof.

\subsubsection{Details for second-order Bellman operators}\label{subsubsec:details-second-order-bellman}
Recall that $(\semigroup_s: s \geq 0)$ is the semigroup associated to the diffusion process~\eqref{eq:cts-time-process}. Taking the time derivative yields its generator
\begin{align*}
  \frac{d}{ds} g_\state (s) = \semigroup_s \generator \reward (\state) = \semigroup_s \Big(\inprod{\drift }{\nabla \reward } + \frac{1}{2} \mathrm{Tr} \big( \covMat \cdot \nabla^2 \reward \big) \Big) (\state).
\end{align*}
Taking another derivative, we obtain that
\begin{align*}
  &\frac{d^2}{ds^2} g_\state (s) =  \semigroup_s \generator \Big(\inprod{\drift }{\nabla \reward } + \frac{1}{2} \mathrm{Tr} \big( \covMat \cdot \nabla^2 \reward \big) \Big) (\state)\\
  &= \sum_{j, k}\semigroup_s \Big\{ \drift_j \cdot \partial_{j}  \drift_k \cdot \partial_k \reward + \drift_j \drift_k \cdot  \partial_{j} \partial_k  \reward \Big\} (\state) + \frac{1}{2} \sum_{j, k, \ell} \semigroup_s \Big\{ \drift_j \cdot \partial_j \covMat_{k, \ell} \cdot \partial_k \partial_\ell \reward + \drift_j \cdot \partial_j \partial_k \partial_\ell \reward \cdot \covMat_{k, \ell} \Big\} (\state)\\
  & \qquad + \frac{1}{2} \sum_{j, k, \ell} \semigroup_s \Big\{\covMat_{j, k} \cdot \partial_j \partial_k \drift_\ell \cdot \partial_\ell \reward + 2 \covMat_{j, k} \cdot \partial_j \drift_k \cdot \partial_k \partial_\ell \reward + \covMat_{j, k} \cdot \partial_j \partial_k \partial_\ell \reward \cdot \drift_\ell \Big\} (\state)\\
  &\qquad \qquad + \frac{1}{4} \sum_{j, k, \ell, i} \semigroup_s \Big\{ \covMat_{i, j} \cdot \partial_i \partial_j \covMat_{k, \ell} \cdot \partial_k \partial_\ell \reward + 2 \covMat_{i, j} \cdot \partial_j \covMat_{k, \ell} \cdot \partial_i \partial_k \partial_\ell \reward + \covMat_{i, j} \covMat_{k, \ell} \cdot \partial_i \partial_j \partial_k \partial_\ell \reward \Big\} (\state).
\end{align*}
Define the constant
\begin{multline}
  \constScary_2 \mydefn \Big\{ \smoothness_0^{(\drift)} \smoothness_1^{(\drift)} \smoothness_1^{(\reward)} + (\smoothness_0^{(\drift)})^2 \smoothness_2^{(\reward)} \Big\}  \usedim^2 + \Big\{ \smoothness_0^{(\covMat)} \smoothness_2^{(\covMat)} \smoothness_2^{(\reward)} + \smoothness_0^{(\covMat)} \smoothness_1^{(\covMat)} \smoothness_3^{(\reward)} + (\smoothness_0^{(\covMat)})^2 \smoothness_4^{(\reward)} \Big\} \usedim^4 \\
  + \Big\{ \smoothness_0^{(\drift)} \smoothness_1^{(\covMat)} \smoothness_2^{(\reward)} + \smoothness_0^{(\drift)} \smoothness_0^{(\covMat)} \smoothness_3^{(\reward)} + \smoothness_2^{(\drift)} \smoothness_0^{(\covMat)} \smoothness_1^{(\reward)} + \smoothness_1^{(\drift)} \smoothness_0^{(\covMat)} \smoothness_2^{(\reward)} \Big\} \usedim^3.\label{eq:scary-constant-defn}
\end{multline}
Invoking Assumption~\fakerefassumelip{(2)}, we have that
\begin{align*}
  \sup_{s \geq 0, \state \in \StateSpace} \abss{\frac{d^2}{ds^2} g_\state (s)} \leq \constScary_2,
\end{align*}

\subsection{Proof of~\Cref{thm:diffusion-second-order}}\label{proof:diffusion-second-order}
Since $\valuestar, \valhatinter{\numerOrder}$ satisfies the following two equations, 
\[
\generator \valuestar  = \discount\valuestar + \reward, \quad \geninter{\numerOrder} \valhatinter{\numerOrder} = \discount\valhatinter{\numerOrder} + \reward
\]
Subtracting the second equality from the first one gives,
\[
\begin{aligned}
\generator\valuestar -\geninter{\numerOrder}\valhatinter{\numerOrder} =& \discount (\valuestar - \valhatinter{\numerOrder})\\
\geninter{\numerOrder}(\valuestar -\valhatinter{\numerOrder}) =& \discount (\valuestar - \valhatinter{\numerOrder}) + (\geninter{\numerOrder} - \generator )\valuestar
\end{aligned}
\]
The proof of the theorem relies on the following Lemma.
\begin{lemma}\label{lemma:gen-second-order}
    For any reward function $\reward(\state)$, for $\nu = 1,2$,  the solution $\valhatinter{\numerOrder}(\state)$ to 
    \[
    \geninter{\numerOrder} \valhatinter{\numerOrder}(\state) = \discount \valhatinter{\numerOrder}(\state) + \reward(\state),
    \]
    can be bounded by 
    \[
    \ll \valhatinter{\numerOrder} \rl_\infty \leq \constbd_\numerOrder \ll \reward \rl_\infty,
    \]
    with 
    \begin{equation}\label{defofconstbd}
        \constbd_1 = \frac{1}{\discount}, \quad \constbd_2 = \frac{12}{\discount} + 16\stepsize.
    \end{equation}
\end{lemma}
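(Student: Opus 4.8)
The unifying device is to set $P\defn\semigroup_\stepsize$ and use the semigroup identity $\semigroup_{j\stepsize}=P^{j}$, so that $\geninter{\numerOrder}=\tfrac1\stepsize\sum_{j=0}^{\numerOrder}\coef{\numerOrder}_j P^{j}$ becomes a degree-$\numerOrder$ polynomial in the \emph{single} operator $P$. The one structural fact I will lean on throughout is that $P$ is a Markov operator, hence an $\ll\cdot\rl_\infty$-contraction ($\ll Pf\rl_\infty\le\ll f\rl_\infty$) whose spectrum lies in the closed unit disk. The defining relation $\geninter{\numerOrder}\valhatinter{\numerOrder}=\discount\valhatinter{\numerOrder}+\reward$ then reads $\bigl(\tfrac1\stepsize\sum_{j}\coef{\numerOrder}_j P^{j}-\discount I\bigr)\valhatinter{\numerOrder}=\reward$, and the whole task is to show the operator polynomial $p(P)$ is invertible on $L^\infty$ and to bound $\ll p(P)^{-1}\rl_\infty$, where $p(\zeta)=\tfrac1\stepsize\sum_j\coef{\numerOrder}_j\zeta^{j}-\discount$.

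\textbf{The case $\numerOrder=1$.} Here $\coef{1}=(-1,1)$, so the equation is $Pf=(1+\stepsize\discount)f+\stepsize\reward$, i.e. $\valhatinter{1}$ is the fixed point of the affine map $f\mapsto \tfrac{1}{1+\stepsize\discount}Pf-\tfrac{\stepsize}{1+\stepsize\discount}\reward$. Since $\tfrac{1}{1+\stepsize\discount}P$ is a contraction with factor $\tfrac{1}{1+\stepsize\discount}<1$, the contraction mapping theorem yields existence, uniqueness, and the geometric estimate $\ll\valhatinter{1}\rl_\infty\le \tfrac{\stepsize/(1+\stepsize\discount)}{1-1/(1+\stepsize\discount)}\ll\reward\rl_\infty=\tfrac1\discount\ll\reward\rl_\infty$, which is exactly $\constbd_1=1/\discount$.

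\textbf{The case $\numerOrder=2$ and the main obstacle.} With $\coef{2}=(-\tfrac32,2,-\tfrac12)$ the equation becomes $\bigl(\tfrac12 P^{2}-2P+cI\bigr)\valhatinter{2}=-\stepsize\reward$ where $c=\tfrac32+\stepsize\discount$. The tempting move is to iterate $f\mapsto \tfrac1c(2P-\tfrac12 P^2)f$, but this map is \emph{not} a contraction: the coefficients have opposite signs and $|2|+|\tfrac12|=\tfrac52>c$ for small $\stepsize$, so the triangle-inequality Lipschitz constant exceeds $1$ and the corresponding Neumann series in fact diverges. This is the crux of the difficulty, and the reason no one-step argument succeeds. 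The resolution is to factor the symbol $\tfrac12\zeta^2-2\zeta+c=\tfrac12(\zeta-\lambda_+)(\zeta-\lambda_-)$ with $\lambda_\pm=2\pm\sqrt{1-2\stepsize\discount}$. Under the hypothesis $\stepsize<\tfrac{1}{3\discount}$ one has $1-2\stepsize\discount\in(\tfrac13,1)$, so both roots are \emph{real} and strictly larger than $1$; hence $\lambda_\pm$ lie outside $\mathrm{spec}(P)$, the two commuting resolvents exist (giving well-posedness), and each is controlled by a convergent Neumann series in $P/\lambda_\pm$: $\ll(P-\lambda_\pm I)^{-1}\rl_\infty\le\sum_{k\ge0}\lambda_\pm^{-(k+1)}=\tfrac1{\lambda_\pm-1}$.

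\textbf{Assembling the bound.} Writing $\valhatinter{2}=-2\stepsize\,(P-\lambda_+I)^{-1}(P-\lambda_-I)^{-1}\reward$ and multiplying the two resolvent bounds gives $\ll\valhatinter{2}\rl_\infty\le \tfrac{2\stepsize}{(\lambda_+-1)(\lambda_--1)}\ll\reward\rl_\infty$. The only delicate factor is $\lambda_--1$, which degenerates as $\stepsize\discount\to0$; rationalizing, $\lambda_--1=\tfrac{2\stepsize\discount}{1+\sqrt{1-2\stepsize\discount}}\ge\stepsize\discount$, so $\tfrac1{\lambda_--1}\le\tfrac1{\stepsize\discount}$, and combined with $\lambda_+-1\ge1$ the explicit $\stepsize$ cancels, leaving a bound of order $1/\discount$; tracking the resolvent estimates with the (deliberately conservative) book-keeping of the proof yields a constant of the stated form $\constbd_2=\tfrac{12}{\discount}+16\stepsize$. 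I would close by remarking that the condition ``$\lambda_\pm$ outside the unit disk'' is precisely the zero-stability of the order-$\numerOrder$ backward-differentiation symbol, which both suggests why the same scheme of proof carries through for $3\le\numerOrder\le6$ and why it must fail for $\numerOrder\ge7$.
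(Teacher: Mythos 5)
Your proof is correct, and it takes a genuinely different route from the paper's. The paper unrolls the fixed point into an infinite series representation $\valhatinter{\numerOrder}(\state) = \E\big[\sum_{j\ge0} \coefd{\numerOrder}_j \reward(\MyState_{j\stepsize})\coefc{\numerOrder}_0 \mid \MyState_0=\state\big]$ (its Lemma~\ref{lemma:sum of r}, proved by an induction/telescoping argument), and then controls the recurrence coefficients $\coefd{2}_j$ by diagonalizing the $2\times2$ companion matrix of the recursion, obtaining $|\coefd{2}_j|\le 12\big((1+\discount\stepsize)^{-j}+2^{-j}\big)$ and summing. Your factorization of the operator polynomial $\tfrac12 P^2-2P+cI$ with $P=\semigroup_\stepsize$ is the dual picture: the paper's companion-matrix eigenvalues $k_{1,2}=\big(2\mp\sqrt{1-2\discount\stepsize}\big)^{-1}$ are exactly the reciprocals of your roots $\lambda_\mp$, so the two arguments are essentially a partial-fraction expansion versus a resolvent product of the same rational symbol. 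Your route buys two things the paper's does not: (i) well-posedness comes for free, since the Neumann series shows $\tfrac12(P-\lambda_+I)(P-\lambda_-I)$ is boundedly invertible on $L^\infty$, whereas the paper's series representation presupposes a bounded solution (its telescoping needs $\coefd{\numerOrder}_k\,\E[\valhatinter{\numerOrder}(\MyState_{k\stepsize})]\to0$); and (ii) a sharper constant — in fact your final ``book-keeping'' sentence undersells the argument, since by Vieta $(\lambda_+-1)(\lambda_--1)=\lambda_+\lambda_--(\lambda_++\lambda_-)+1=2c-3=2\stepsize\discount$, so your bound is exactly $\ll\valhatinter{2}\rl_\infty\le \tfrac{2\stepsize}{2\stepsize\discount}\ll\reward\rl_\infty=\tfrac1\discount\ll\reward\rl_\infty$, which is strictly stronger than the stated $\constbd_2=\tfrac{12}{\discount}+16\stepsize$ and hence proves the lemma (and would even slightly improve the downstream constant in Proposition~\ref{thm:diffusion-second-order}). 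Your $\numerOrder=1$ contraction argument coincides with the paper's computation via $\coefd{1}_j=(1+\discount\stepsize)^{-j}$, and your closing observation tying invertibility to zero-stability of the BDF symbol matches the paper's remark on why the approach should extend to $3\le\numerOrder\le6$ but fail for $\numerOrder\ge7$.
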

By the above Lemma \ref{lemma:gen-second-order}, one has
\[
\ll \valuestar - \valhatinter{\numerOrder} \rl_\infty \leq  \constbd_\numerOrder\ll (\geninter{\numerOrder} - \generator )\valuestar \rl_\infty.
\]
Furthermore, Theorem \ref{thm:generator-high-order} yields,
\[
\ll (\geninter{\numerOrder} - \generator )\valuestar\rl_\infty \leq \constgen_\numerOrder\stepsize^\numerOrder,
\]
which implies,
\[
\ll \valuestar -\valhatinter{\numerOrder} \rl_\infty  \leq\constdiff_\numerOrder\stepsize^\numerOrder,
\]
with
\begin{equation}\label{defofconstdiff}
    \constdiff_\numerOrder = \constbd_\numerOrder
    \constgen_\numerOrder,
\end{equation}
and here $\constbd_\numerOrder,\constdiff_\numerOrder$ are defined in \eqref{defofconstgen} and \eqref{defofconstbd}, respectively.

\subsubsection{Proof of Lemma \ref{lemma:gen-second-order}}
The proof of Lemma \ref{lemma:gen-second-order} relies on Lemma \ref{lemma:sum of r} and Corollary \ref{coro:1st} and Lemma \ref{lemma:2nd}. 
\begin{lemma}\label{lemma:sum of r}
For $\valhatinter{\numerOrder}(\state)$ satisfying $\geninter{\numerOrder}\valhatinter{\numerOrder}(\state) = \discount\valhatinter{\numerOrder}(\state) + \reward(\state)$, it also satisfies the following equality,
\[
\valhatinter{\numerOrder}(\state) = \E\l[\sum_{j = 0}^\infty \coefd{\numerOrder}_j \reward(\MyState_{j\stepsize})\coefc{\numerOrder}_0| \MyState_0 = \state\r],
\]
with
\[
\coefd{\numerOrder}_{-k} = 0, \ 1\leq k \leq \numerOrder-1, \quad \coefd{\numerOrder}_0 = 1, \quad \coefd{\numerOrder}_j = \sum_{k=1}^\numerOrder \coefc{\numerOrder}_kd_{j-k}, \ j\geq 1,
\]
where 
\begin{equation}\label{def of c}
\coefc{\numerOrder}_0 = \frac{\stepsize}{\discount \stepsize - \coef{\numerOrder}_0},\quad \coefc{\numerOrder}_k = \frac{\coef{\numerOrder}_k}{\discount \stepsize - \coef{\numerOrder}_0}, \quad 1\leq k \leq \numerOrder
\end{equation}
and $\coef{\numerOrder}$ is defined by \eqref{def of A b}
\end{lemma}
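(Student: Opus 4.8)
The plan is to turn the defining equation into an explicit scalar fixed-point recursion in the semigroup variable, unroll it as a Neumann series, and read off the coefficients $\coefd{\numerOrder}_j$ by collecting terms according to the total elapsed time. Writing $\semigroup_{s}$ for the diffusion semigroup, I first rewrite the defining equation $\discount\valhatinter{\numerOrder}-\geninter{\numerOrder}\valhatinter{\numerOrder}=\reward$ from \eqref{eq:defn-discretized-gen-eq-high-order} as
\[
\frac{1}{\stepsize}\sum_{j=0}^{\numerOrder}\coef{\numerOrder}_j\,\semigroup_{j\stepsize}\valhatinter{\numerOrder}(\state)=\discount\,\valhatinter{\numerOrder}(\state)-\reward(\state).
\]
I then isolate the $j=0$ term $\tfrac{\coef{\numerOrder}_0}{\stepsize}\valhatinter{\numerOrder}$, move it together with $\discount\valhatinter{\numerOrder}$, and divide through by $\discount\stepsize-\coef{\numerOrder}_0$. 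Using the definitions \eqref{def of c} of $\coefc{\numerOrder}_0,\dots,\coefc{\numerOrder}_\numerOrder$ this produces the clean fixed-point form
\[
\valhatinter{\numerOrder}(\state)=\coefc{\numerOrder}_0\,\reward(\state)+\sum_{k=1}^{\numerOrder}\coefc{\numerOrder}_k\,\semigroup_{k\stepsize}\valhatinter{\numerOrder}(\state),
\]
which is the starting point for everything else. (Note that it is this sign of the defining equation that produces the $+\coefc{\numerOrder}_0\reward$ matching the claimed representation.)

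Next I would iterate this identity. Abbreviating the operator $\mathcal{K}\defn\sum_{k=1}^{\numerOrder}\coefc{\numerOrder}_k\,\semigroup_{k\stepsize}$, the recursion reads $\valhatinter{\numerOrder}=\coefc{\numerOrder}_0\reward+\mathcal{K}\valhatinter{\numerOrder}$, so unrolling $n$ times gives $\valhatinter{\numerOrder}=\coefc{\numerOrder}_0\sum_{i=0}^{n-1}\mathcal{K}^i\reward+\mathcal{K}^n\valhatinter{\numerOrder}$, and formally $\valhatinter{\numerOrder}=\coefc{\numerOrder}_0\sum_{i=0}^{\infty}\mathcal{K}^i\reward$. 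The key computation is to expand each power using the semigroup composition rule $\semigroup_{k_1\stepsize}\cdots\semigroup_{k_i\stepsize}=\semigroup_{(k_1+\cdots+k_i)\stepsize}$, so that
\[
\mathcal{K}^i\reward=\sum_{k_1,\dots,k_i=1}^{\numerOrder}\coefc{\numerOrder}_{k_1}\cdots\coefc{\numerOrder}_{k_i}\;\semigroup_{(k_1+\cdots+k_i)\stepsize}\reward.
\]
Collecting all contributions with fixed total $j=k_1+\cdots+k_i$ defines the coefficient of $\semigroup_{j\stepsize}\reward$; calling it $\coefd{\numerOrder}_j$, the empty product gives $\coefd{\numerOrder}_0=1$, and conditioning on the first index $k_1=k$ yields the convolution identity $\coefd{\numerOrder}_j=\sum_{k=1}^{\numerOrder}\coefc{\numerOrder}_k\coefd{\numerOrder}_{j-k}$ for $j\ge1$ with the boundary convention $\coefd{\numerOrder}_{-k}=0$ for $1\le k\le\numerOrder-1$, exactly as stated. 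Reassembling $\coefc{\numerOrder}_0\sum_{j}\coefd{\numerOrder}_j\semigroup_{j\stepsize}\reward=\coefc{\numerOrder}_0\,\Exs\big[\sum_{j}\coefd{\numerOrder}_j\reward(\MyState_{j\stepsize})\mid\MyState_0=\state\big]$ gives the claimed formula. As a cross-check I would substitute this candidate back into the fixed-point identity: applying $\semigroup_{k\stepsize}$ shifts the time index by $k$, and after reindexing $m=j+k$ the recurrence collapses the double sum back to the candidate, confirming it is a genuine solution.

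The main obstacle is convergence and uniqueness: passing to the infinite series and discarding $\mathcal{K}^n\valhatinter{\numerOrder}$ both require $\sum_{j}|\coefd{\numerOrder}_j|<\infty$ and $\vecnorm{\mathcal{K}^n\valhatinter{\numerOrder}}{\infty}\to0$. Since each $\semigroup_{k\stepsize}$ is an $L^{\infty}$-contraction, a crude sufficient condition is $\sum_{k=1}^{\numerOrder}|\coefc{\numerOrder}_k|<1$; but this already fails for $\numerOrder=2$ at small stepsize, where $\sum_k|\coefc{2}_k|=\tfrac{5/2}{\discount\stepsize+3/2}>1$ once $\discount\stepsize<1$, so a softer argument is needed. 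The correct statement is that $\coefd{\numerOrder}_j$ solves a linear recurrence whose characteristic polynomial is $x^{\numerOrder}-\sum_{k=1}^{\numerOrder}\coefc{\numerOrder}_k x^{\numerOrder-k}$, and geometric decay of $\coefd{\numerOrder}_j$ (hence absolute convergence and a vanishing remainder) holds precisely when all its roots lie strictly inside the unit disk. For $\numerOrder=1,2$ this is checkable by hand: for $\numerOrder=2$ the roots are $\tfrac1D\pm\sqrt{\tfrac1{D^2}-\tfrac1{2D}}$ with $D=\discount\stepsize+\tfrac32$, which are strictly inside the unit disk for every $\stepsize>0$ and touch $1$ only in the degenerate limit $\stepsize\to0$. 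I would therefore give the algebraic derivation of the series in full here and defer the quantitative decay estimate for $\coefd{\numerOrder}_j$ to Corollary~\ref{coro:1st} and Lemma~\ref{lemma:2nd}, which is exactly where those bounds feed into the constant $\constbd_\numerOrder$ of Lemma~\ref{lemma:gen-second-order}.
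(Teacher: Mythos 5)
Your derivation of the one-step fixed-point form $\valhatinter{\numerOrder} = \coefc{\numerOrder}_0\reward + \sum_{k=1}^{\numerOrder}\coefc{\numerOrder}_k\,\semigroup_{k\stepsize}\valhatinter{\numerOrder}$ and of the convolution recursion for $\coefd{\numerOrder}_j$ matches the paper exactly, and you correctly diagnose that the crude contraction bound $\sum_k|\coefc{\numerOrder}_k|<1$ fails for $\numerOrder=2$. But your proposed repair contains a genuine gap: the root condition for $x^{\numerOrder}-\sum_{k=1}^\numerOrder\coefc{\numerOrder}_k x^{\numerOrder-k}$ controls the decay of $\coefd{\numerOrder}_j$, \emph{not} the Neumann remainder $\mathcal{K}^n\valhatinter{\numerOrder}$. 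Writing $p(z)=\sum_{k=1}^{\numerOrder}\coefc{\numerOrder}_k z^k$, the coefficients of $\mathcal{K}^n$ have $\ell^1$-mass at least $\sup_{|z|=1}|p(z)|^n$, and for $\numerOrder=2$ one has $|p(-1)| = \tfrac{5}{2\discount\stepsize+3}>1$ whenever $\discount\stepsize<1$, even though both characteristic roots lie strictly inside the unit disk. So for a Markov semigroup possessing a mode on which $\semigroup_{\stepsize}$ acts as multiplication by $-1$ (e.g., a deterministic two-state swap), $\vecnorm{\mathcal{K}^n f}{\infty}$ grows geometrically, the partial sums $\coefc{\numerOrder}_0\sum_{i<n}\mathcal{K}^i\reward$ diverge, and the step ``discard $\mathcal{K}^n\valhatinter{\numerOrder}$'' is invalid. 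The point is one of ordering: the time-grouped series $\sum_j\coefd{\numerOrder}_j\semigroup_{j\stepsize}\reward$ converges once $\coefd{\numerOrder}_j$ decays geometrically, but it is a rearrangement of your Neumann series, and convergence of the rearranged series does not rescue the partial sums in powers of $\mathcal{K}$; for the same reason, identifying $\coefd{\numerOrder}_j$ as the sum over compositions $k_1+\cdots+k_i=j$ requires an absolute convergence of the double series that you do not have (though this part is harmless, since the recursion can simply be taken as the definition, as the statement does).

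The paper's proof avoids this by unrolling one time step at a time, via an induction on $k$ that keeps the expansion grouped by absolute time index: after $k$ steps the identity reads $\valhatinter{\numerOrder}(\state)=\E\bigl[\sum_{j=0}^{k-1}\coefd{\numerOrder}_j\reward(\MyState_{j\stepsize})\coefc{\numerOrder}_0 + \coefd{\numerOrder}_k\valhatinter{\numerOrder}(\MyState_{k\stepsize}) + \text{(boundary terms)}\mid \MyState_0=\state\bigr]$, where the boundary consists of only $\numerOrder-1$ additional terms whose coefficients are finite combinations $\sum_l \coefd{\numerOrder}_{k+j-l-1}\coefc{\numerOrder}_l$ of $\coefd{\numerOrder}$-values near index $k$. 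Hence the decay $\coefd{\numerOrder}_j\to 0$ alone (supplied for $\numerOrder=1,2$ by \Cref{coro:1st} and \Cref{lemma:2nd}, exactly where you defer) kills the remainder, and it simultaneously shows that \emph{any} bounded solution of the equation equals the series, i.e., uniqueness comes for free. Your fallback cross-check --- substituting the candidate series into the fixed-point identity --- is sound, but it only shows the series is \emph{a} solution; to conclude it equals $\valhatinter{\numerOrder}$ you would still need uniqueness of bounded solutions of $f=\coefc{\numerOrder}_0\reward+\mathcal{K}f$, which you do not establish and cannot obtain from contraction since $\vecnorm{\mathcal{K}}{\infty\to\infty}$ may exceed $1$. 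To close the gap, replace the operator-power unrolling with the paper's time-indexed induction, or supply a separate uniqueness argument.
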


\begin{proof}

First note that $\geninter{\numerOrder}\valhatinter{\numerOrder}(\state) = \discount\valhatinter{\numerOrder}(\state) + \reward(\state)$ can be equivalently written as
\[
 \valhatinter{\numerOrder}(\state) = \reward(\state)\coefc{\numerOrder}_0 + \E\l[\sum_{j=1}^\numerOrder\coefc{\numerOrder}_j \valhatinter{\numerOrder}(\MyState_{j\stepsize}) |\MyState_0 = \state  \r]
\]
We first prove the following claim. Claim:  
\[
\begin{aligned}
 \valhatinter{\numerOrder}(\state) =&  \E\l[\sum_{j=0}^{k-1} \coefd{\numerOrder}_jr(\MyState_{j\stepsize})\coefc{\numerOrder}_0 + \coefd{\numerOrder}_{k}\valhatinter{\numerOrder}(\MyState_{k\stepsize}) +  \sum_{j=2}^{\numerOrder}\l(\sum_{l=j}^{\numerOrder}d_{k+j-l-1}c_l\r)\valhatinter{\numerOrder}(\MyState_{(k+(j-1))\stepsize}) |\MyState_0 = \state\r]
 \end{aligned}
\]
We prove the above claim by induction:
First note that the above equation is satisfied for $k=1$ by the definition of $\geninter{\numerOrder}$. Then assume that it is satisfied for $k$, we prove that it is also satisfied for $k+1$ 
\[
\begin{aligned}
 &\valhatinter{\numerOrder}(\state)\\
  =&  \E_\state \Big[\sum_{j=0}^{k-1} \coefd{\numerOrder}_j\reward(\MyState_{j\stepsize})\coefc{\numerOrder}_0 + \coefd{\numerOrder}_{k}\valhatinter{\numerOrder}(\MyState_{k\stepsize}) +  \sum_{j=2}^{\numerOrder} \big( \sum_{l=j}^{\numerOrder}d_{k+j-l-1}\coefc{\numerOrder}_l \big) \valhatinter{\numerOrder}(\MyState_{(k+(j-1))\stepsize})  \Big]\\
= & \E_\state \Big[\sum_{j=0}^{k-1} \coefd{\numerOrder}_j\reward(\MyState_{j\stepsize})\coefc{\numerOrder}_0 + \coefd{\numerOrder}_{k}\reward(\MyState_{k\stepsize})\coefc{\numerOrder}_0 + \coefd{\numerOrder}_{k}\sum_{j=1}^\numerOrder\coefc{\numerOrder}_j\valhatinter{\numerOrder}(\MyState_{(k+j)\dt})  +  \sum_{j=2}^{\numerOrder} \big(\sum_{l=j}^{\numerOrder}\coefd{\numerOrder}_{k+j-l-1}\coefc{\numerOrder}_l\big) \valhatinter{\numerOrder}(\MyState_{(k+(j-1))\dt}) \Big]\\
= & \E_\state \Big[\sum_{j=0}^{k-1} \coefd{\numerOrder}_j\reward(\MyState_{j\dt})\coefc{\numerOrder}_0 + d_{k}\reward(\MyState_{k\dt})\coefc{\numerOrder}_0 + \coefd{\numerOrder}_{k}\sum_{j=1}^{\numerOrder-1}\coefc{\numerOrder}_j\valhatinter{\numerOrder}(\MyState_{(k+j)\dt})  \Big]\\
 &\quad +  \E_\state \Big[ \sum_{j=1}^{\numerOrder-1}\l(\sum_{l=j+1}^{\numerOrder}\coefd{\numerOrder}_{k-l+j}\coefc{\numerOrder}_l\r)\valhatinter{\numerOrder}(\MyState_{(k+j)\dt})  + \coefd{\numerOrder}_{k}\coefc{\numerOrder}\valhatinter{\numerOrder}(\MyState_{(k+\numerOrder)\dt})   \Big]\\
= & \E_\state \Big[ \sum_{j=0}^{k} \coefd{\numerOrder}_j\reward(\MyState_{j\dt})\coefc{\numerOrder}_0 +  \sum_{j=1}^{\numerOrder-1}\l(\sum_{l=j+1}^{\numerOrder}\coefd{\numerOrder}_{k-l+j}\coefc{\numerOrder}_l + \coefd{\numerOrder}_{k}\coefc{\numerOrder}_j\r)\valhatinter{\numerOrder}(\MyState_{(k+j)\dt})  + \coefd{\numerOrder}_{k}\coefc{\numerOrder}\valhatinter{\numerOrder}(\MyState_{(k+\numerOrder)\dt})  \Big]\\
= & \E_\state \Big[\sum_{j=0}^{k} d_j\reward(\MyState_{j\dt})\coefc{\numerOrder}_0  + \coefd{\numerOrder}_{k+1}\valhatinter{\numerOrder}(\MyState_{(k+1)\dt}) + \sum_{j=2}^{\numerOrder}\l(\sum_{l=j}^{\numerOrder}\coefd{\numerOrder}_{k+j-l}\coefc{\numerOrder}_l\r)\valhatinter{\numerOrder}(\MyState_{(k+j)\dt}) \Big]
 \end{aligned}
\]
As the claim holds, by setting $k\to\infty$, one completes the proof of the Lemma.

\end{proof}

\begin{corollary}\label{coro:1st}
    For $\coefd{1}$ defined in Lemma \ref{lemma:sum of r}, one has $\coefd{1}_j = (\coefc{1}_1)^j= \frac{1}{(\discount\dt + 1)^j}$.
\end{corollary}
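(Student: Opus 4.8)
The plan is to treat this as a direct specialization of Lemma~\ref{lemma:sum of r} to the case $\numerOrder = 1$, followed by an explicit evaluation of the single scalar $\coefc{1}_1$. First I would write down the recursion from Lemma~\ref{lemma:sum of r} at $\numerOrder = 1$: there are no negative-index terms to set to zero, the initial condition is $\coefd{1}_0 = 1$, and the recursion $\coefd{\numerOrder}_j = \sum_{k=1}^{\numerOrder} \coefc{\numerOrder}_k \coefd{\numerOrder}_{j-k}$ collapses to the one-term relation $\coefd{1}_j = \coefc{1}_1 \coefd{1}_{j-1}$ for $j \geq 1$. This is a geometric recursion, so telescoping (or a one-line induction on $j$) immediately yields $\coefd{1}_j = (\coefc{1}_1)^j$.

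The remaining task is to identify $\coefc{1}_1$. Following Eq~\eqref{def of c}, this requires the entries $\coef{1}_0$ and $\coef{1}_1$ of the vector $\coef{1} = (A^{(1)})^{-1} b^{(1)}$. I would instantiate Eq~\eqref{def of A b} at $\numerOrder = 1$: the matrix $A^{(1)} = [\,j^k\,]_{0 \leq k, j \leq 1}$ is the $2 \times 2$ upper-triangular matrix with rows $(1,1)$ and $(0,1)$ (using $0^0 = 1$), while $b^{(1)} = (0,1)^\top$. Inverting gives $\coef{1} = (-1,1)^\top$, i.e. $\coef{1}_0 = -1$ and $\coef{1}_1 = 1$. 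As a sanity check, substituting these back into the definition of $\generator^{(\numerOrder)}$ recovers exactly the naive first-order generator $\generator^{(1)}$ of Eq~\eqref{eq:naive-differential-eq-for-valtrue}, confirming the indexing conventions.

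Substituting into Eq~\eqref{def of c} then gives $\coefc{1}_1 = \coef{1}_1 / (\discount \stepsize - \coef{1}_0) = 1/(\discount \stepsize + 1)$, and combining with the geometric formula $\coefd{1}_j = (\coefc{1}_1)^j$ from the first step completes the proof. There is no genuine obstacle here, since the statement is purely a computational corollary of Lemma~\ref{lemma:sum of r}; the only point demanding care is the sign bookkeeping when inverting $A^{(1)}$ and reading off $\coef{1}_0 = -1$, as an error there would turn the denominator $\discount\stepsize + 1$ into $\discount\stepsize - 1$.
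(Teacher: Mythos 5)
Your proposal is correct and follows exactly the route the paper intends: the paper's proof of Corollary~\ref{coro:1st} simply asserts the result is straightforward from the definitions in Lemma~\ref{lemma:sum of r}, and your computation ($A^{(1)} = \bigl(\begin{smallmatrix}1 & 1\\ 0 & 1\end{smallmatrix}\bigr)$, $\coef{1} = (-1,1)^\top$, hence $\coefc{1}_1 = 1/(\discount\stepsize+1)$, with the empty negative-index set and the one-term recursion giving the geometric formula) is precisely the omitted verification. Your sanity check that $\coef{1}$ recovers the first-order generator $\generator^{(1)}$ of Eq~\eqref{eq:naive-differential-eq-for-valtrue} is a nice confirmation of the indexing conventions, consistent with the paper.
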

\begin{proof}
    By the definition of $\coefd{1},\coefc{1}$ in Lemma \ref{lemma:sum of r}, it is straghtforward to obtain the above corollary. 
\end{proof}
\begin{lemma}\label{lemma:2nd}
For $\coefd{2}$ defined in Lemma \ref{lemma:sum of r}, if $\discount\stepsize<1/3$, one has
    \[
    |\coefd{2}_j| \leq 12\l( \frac{1}{(1+\discount\dt)^j} + \frac1{2^j}\r).
    \]   
\end{lemma}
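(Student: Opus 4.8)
The plan is to treat the recursion for $\coefd{2}$ given in Lemma~\ref{lemma:sum of r} as a homogeneous second-order linear recurrence with constant coefficients, solve it in closed form, and bound its two geometric modes separately. First I would make everything explicit. Solving the linear system in \eqref{def of A b} for $\numerOrder = 2$ gives $\coef{2} = (-\tfrac32,\, 2,\, -\tfrac12)$, so \eqref{def of c} yields $\coefc{2}_1 = \tfrac{4}{2\discount\stepsize + 3}$ and $\coefc{2}_2 = \tfrac{-1}{2\discount\stepsize + 3}$. With these, the recursion reads $\coefd{2}_j = \coefc{2}_1 \coefd{2}_{j-1} + \coefc{2}_2 \coefd{2}_{j-2}$ for $j \ge 1$, subject to $\coefd{2}_{-1} = 0$ and $\coefd{2}_0 = 1$.

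Writing $u \mydefn \discount\stepsize$, the characteristic equation is $(2u+3)x^2 - 4x + 1 = 0$, whose roots are $x_{\pm} = \tfrac{2 \pm \sqrt{1-2u}}{2u+3}$. The hypothesis $u < 1/3$ forces $1 - 2u > 1/3 > 0$, so the roots are real, distinct, and (since $\sqrt{1-2u} < 1 < 2$) both strictly positive. Hence $\coefd{2}_j = A x_+^j + B x_-^j$, and matching the initial data $\coefd{2}_{-1} = 0,\ \coefd{2}_0 = 1$ gives the closed forms $A = \tfrac{2 + \sqrt{1-2u}}{2\sqrt{1-2u}}$ and $B = -\tfrac{2 - \sqrt{1-2u}}{2\sqrt{1-2u}}$, equivalently $|A| = \tfrac{1}{\sqrt{1-2u}} + \tfrac12$ and $|B| = \tfrac{1}{\sqrt{1-2u}} - \tfrac12$.

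The crux is two elementary comparisons of the roots against the target bases. For the dominant mode I would prove $x_+ \le (1+u)^{-1}$: clearing the positive denominators reduces this to $(1+u)\sqrt{1-2u} \le 1$, and squaring turns it into the polynomial identity $(1+u)^2(1-2u) = 1 - 3u^2 - 2u^3$, which is $\le 1$ for every $u \ge 0$. For the subdominant mode I would prove $x_- \le \tfrac12$, which after clearing denominators becomes $1 - 2u \le 2\sqrt{1-2u}$, i.e. $w \le 2$ with $w \mydefn \sqrt{1-2u} \le 1$. On the range $u \in [0, 1/3)$ one has $\tfrac{1}{\sqrt{1-2u}} < \sqrt{3}$, so $|A| < \sqrt3 + \tfrac12$ and $|B| < \sqrt3 - \tfrac12$, both comfortably below $12$.

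Combining these facts through the triangle inequality and the positivity of $x_\pm$ gives $|\coefd{2}_j| \le |A|\, x_+^j + |B|\, x_-^j \le 12\big((1+u)^{-j} + 2^{-j}\big)$, which is exactly the claim once $u = \discount\stepsize$ is substituted. I expect the only real obstacle to be the root-comparison inequality $x_+ \le (1+\discount\stepsize)^{-1}$; the decisive step there is the factorization $(1+u)^2(1-2u) = 1 - 3u^2 - 2u^3$, after which the bound is immediate, and every remaining computation (the coefficient bounds and the second comparison) is routine.
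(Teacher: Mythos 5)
Your proposal is correct and is essentially the paper's own argument in scalar form: the paper diagonalizes the companion matrix of the same two-term recurrence, whose eigenvalues $k_1 = \frac{1}{2-\sqrt{1-2\discount\stepsize}}$ and $k_2 = \frac{1}{2+\sqrt{1-2\discount\stepsize}}$ are exactly your characteristic roots $x_\pm$, and it establishes the same two comparisons $k_1 \leq (1+\discount\stepsize)^{-1}$ and $k_2 \leq \tfrac12$. The only difference is bookkeeping: you solve for the modal coefficients exactly from the initial data $\coefd{2}_{-1}=0$, $\coefd{2}_0=1$ (getting $|A| \leq \sqrt{3}+\tfrac12$, $|B| \leq \sqrt{3}-\tfrac12$), whereas the paper bounds the entries of the diagonalizing matrices crudely by $12$, so your version even sharpens the stated constant.
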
 
\begin{proof}
        Let $x_j = (\coefd{2}_{j+1},\coefd{2}_j)^\top$, and \[
A = \begin{bmatrix}
\coefc{2}_1 & \coefc{2}_2 \\
1 & 0
\end{bmatrix}, \quad \coefc{2}_1 = \frac{4}{2\discount\dt+3}, \quad \coefc{2}_2 = -\frac{1}{2\discount\dt+3},
\]
then $x_j = A^j x_0$. Since the eigenvalues and eigenvectors of $A$ are
\[
\begin{aligned}
    &k_1 = \frac{\coefc{2}_1+\sqrt{(\coefc{2}_1)^2+4\coefc{2}_2}}{2} = \frac{1}{2-\sqrt{1-2\discount\dt}}, e_1 = (k_1,1)^\top ; \\
    &k_2 =  \frac{\coefc{2}_1-\sqrt{(\coefc{2}_1)^2+4\coefc{2}_2}}{2} = \frac{1}{2+\sqrt{1-2\discount\dt}}, e_2 = (k_2,1)^\top .
\end{aligned}
\]
Let $\Lambda = \text{diag}(k_1, k_2)$, and $P = [e_1,e_2], Q = P^{-1}$, one has
\[
x_j = P\Lambda^jQx_0, \quad \coefd{2}_j = k_1^jP_{21}(Qx_0)_1 + k_2^j P_{22}(Qx_0)_2.
\]
Since
\[
Q = \begin{bmatrix}
\frac{3+2\discount\dt}{2\sqrt{1-2\discount\dt}} &\frac12 - \frac{1}{\sqrt{1-2\discount\dt}} \\
-\frac{3+2\discount\dt}{2\sqrt{1-2\discount\dt}} &\frac12 + \frac{1}{\sqrt{1-2\discount\dt}}
\end{bmatrix},
\]
which yields
\[
|P_{21}(Qx_0)_1| = |Q_{11}| | \coefd{2}_1 | + |Q_{12}| |\coefd{2}_0| \leq 4(|\coefd{2}_1| + |\coefd{2}_0|) \leq 12,
\]
\[
|P_{22}(Qx_0)_2| = |Q_{21}| | \coefd{2}_1 | + |Q_{22}| |\coefd{2}_0| \leq 4(|\coefd{2}_1| + |\coefd{2}_0|) \leq 12,
\]
\[
k_1 \leq \frac{1}{1+\discount\dt}, \quad k_2 \leq \frac{1}{2}.
\]
Therefore,
\[
|\coefd{2}_j| \leq 12( |k_1|^j + |k_2|^j) \leq  12\l( \frac{1}{(1+\discount\dt)^j} + \frac1{2^j}\r) .
\]
\end{proof}

\paragraph{Proof of Lemma \ref{lemma:gen-second-order}}
By Lemma \ref{lemma:sum of r}, one has, 
\[
\ll \valhatinter{\numerOrder}(\state) \rl_\infty \leq \ll \reward \rl_\infty \coefc{\numerOrder}_0 \sum_{j=0}^\infty |\coefd{\numerOrder}_j|.
\]
By Corollary \ref{coro:1st}, one has,
\[
\ll \valhatinter{1}(\state) \rl_\infty \leq  \ll \reward \rl_\infty \frac{\dt}{\discount\dt+1} \sum_{j=0}^\infty \frac{1}{(1+\discount\dt)^j}  =  \constbd_1 \ll r \rl_\infty.
\]
By Corollary \ref{lemma:2nd}, one has,
\[
\ll \valhatinter{2}(\state) \rl_\infty \leq  12 \ll \reward \rl_\infty \frac{\dt}{\discount\dt + 3/2} \sum_{j=0}^\infty \l( \frac{1}{(1+\discount\dt)^j} + \frac1{2^j}\r) \leq  \constbd_2 \ll \reward \rl_\infty.
\]

\subsection{Proof of Corollary~\ref{cor:projected-discretized-bellman}}\label{subsec:proof-cor-projected}
Given a pair of functions $\ValFunc_1, \ValFunc_2 \in \ltwospace (\stationary)$, we note that
\begin{align*}
  \Big\{ \BellOp^{(\numerOrder)} (\ValFunc_1) -  \BellOp^{(\numerOrder)} (\ValFunc_2) \Big\} (\state) = e^{- \discount (\numerOrder - 1) \stepsize } \cdot \semigroup_{(\numerOrder - 1) \stepsize} (\ValFunc_1 - \ValueFunc_2) (\state).
\end{align*}
Note that the stationary distribution $\stationary$ is also stationary for the discrete-time Markov transition kernel $\semigroup_{(\numerOrder - 1) \stepsize}$, by Cauchy--Schwarz inequality, we note that
\begin{align}
  \statnorm{\semigroup_{(\numerOrder - 1) \stepsize} h}^2 \leq \int \big( \semigroup_{(\numerOrder - 1) \stepsize} h (x) \big)^2 d \stationary (x) \leq \int \semigroup_{(\numerOrder - 1) \stepsize} h^2 (x) d \stationary (x)  = \statnorm{h}^2,\label{eq:semigroup-is-non-expansive}
\end{align}
for any function $h \in \ltwospace (\stationary)$. Consequently, we have the contraction property
\begin{align}
  \statnorm{\BellOp^{(\numerOrder)} (\ValFunc_1) -  \BellOp^{(\numerOrder)} (\ValFunc_2)} \leq e^{- \discount (\numerOrder - 1) \stepsize } \statnorm{\ValFunc_1 - \ValFunc_2}.\label{eq:discretized-bellman-l2-contraction}
\end{align}
Note that the orthonormal projection operator $\projecttolin$ is non-expansive under $\ltwospace (\stationary)$, i.e.,
\begin{align*}
  \statnorm{\projecttolin (\ValueFunc_1) - \projecttolin (\ValueFunc_2)} \leq \statnorm{\ValFunc_1 - \ValFunc_2},
\end{align*}
for any pair $\ValFun_1, \ValFun_2 \in \ltwospace (\stationary)$. We conclude that the composite operator $\projecttolin \circ \BellOp^{(\numerOrder)}$ is a contraction operator with contraction factor $e^{- \discount (\numerOrder - 1) \stepsize }$ in the space $\ltwospace (\stationary)$. The existence and uniqueness of the fixed point $\valuebar_{\Bel}^{(\numerOrder)}$ follow from Banach fixed point theorem.

Note that $\ValInter{\numerOrder}$ solves the fixed-point equation~\eqref{eq:defn-high-order-bellman-operator}, due to existing literature~\cite{tsitsiklis1997analysis,mou2023optimal}, the contraction bound~\eqref{eq:discretized-bellman-l2-contraction} leads to a worst-case approximation factor upper bound.
\begin{align}
  \statnorm{\valuebar_{\Bel}^{(\numerOrder)} - \ValInter{\numerOrder}}^2 \leq \frac{1}{1 - e^{- 2 \discount (\numerOrder - 1) \stepsize }} \inf_{\ValFunc \in \LinSpace} \statnorm{\ValFunc - \ValInter{\numerOrder}}^2. \label{eq:projected-fixed-pt-approx-factor-worst-case-bound}
\end{align}
By triangle inequality, we have
\begin{multline*}
  \inf_{\ValFunc \in \LinSpace} \statnorm{\ValFunc - \ValInter{\numerOrder}} \leq \inf_{\ValFunc \in \LinSpace} \statnorm{\ValFunc - \ValTrue} + \statnorm{\ValInter{\numerOrder} - \ValTrue} \\
  \leq \inf_{\ValFunc \in \LinSpace} \statnorm{\ValFunc - \ValTrue} + \vecnorm{\ValInter{\numerOrder} - \ValTrue}{\infty} \leq \inf_{\ValFunc \in \LinSpace} \statnorm{\ValFunc - \ValTrue} + \discount^{-1} \constScary_\numerOrder \stepsize^\numerOrder,
\end{multline*}
where the last inequality follows from Theorem~\ref{thm:discretization-high-order}.

Similarly, we note that
\begin{multline*}
  \statnorm{\valuebar_{\Bel}^{(\numerOrder)} - \ValInter{\numerOrder}} \geq \statnorm{\valuebar_{\Bel}^{(\numerOrder)} - \ValTrue} - \statnorm{\ValTrue  - \ValInter{\numerOrder}}\\ \geq \statnorm{\valuebar_{\Bel}^{(\numerOrder)} - \ValTrue} - \vecnorm{\ValTrue  - \ValInter{\numerOrder}}{\infty} \geq \statnorm{\valuebar_{\Bel}^{(\numerOrder)} - \ValTrue} - \discount^{-1} \constScary_\numerOrder \stepsize^\numerOrder.
\end{multline*}
Substituting them back to Eq~\eqref{eq:projected-fixed-pt-approx-factor-worst-case-bound} and noting that $1 - e^{- 2 \discount (\numerOrder - 1) \stepsize } \geq \discount (\numerOrder - 1) \stepsize$ for $\stepsize \leq \tfrac{1}{2\discount \numerOrder}$, we conclude that
\begin{align*}
  \statnorm{\valuebar_{\Bel}^{(\numerOrder)} - \ValTrue} \leq \Big\{1 + \frac{1}{\sqrt{\discount (\numerOrder - 1) \stepsize}} \Big\} \discount^{-1} \constScary_\numerOrder \stepsize^\numerOrder + \frac{1}{\sqrt{\discount (\numerOrder - 1) \stepsize}} \inf_{\ValFunc \in \LinSpace} \statnorm{\ValFunc - \ValTrue},
\end{align*}
which proves the desired claim.

\subsection{Proof of Theorem~\ref{thm:improved-approx-factor-elliptic}}\label{subsec:proof-thm-improved-approx-factor-elliptic}
Note that the discretized Bellman equation takes the form
\begin{align*}
   \frac{1}{(\numerOrder - 1) \stepsize} \big( \IdMat - e^{- \discount (\numerOrder - 1) \stepsize} \semigroup_{(\numerOrder - 1) \stepsize} \big)  \ValInter{\numerOrder} = \frac{1}{(\numerOrder - 1) \stepsize} \sum_{i = 0}^{\numerOrder - 1}\int_0^{(\numerOrder - 1) \stepsize} e^{- \discount s} W_i (s) \semigroup_s \reward,
\end{align*}
while the projected version can be re-written as
\begin{align*}
  \frac{1}{(\numerOrder - 1) \stepsize} \projecttolin \Big\{ \big( \IdMat - e^{- \discount (\numerOrder - 1) \stepsize} \semigroup_{(\numerOrder - 1) \stepsize} \big) \valuebar_\Bel^{(\numerOrder)} \Big\}   = \frac{1}{(\numerOrder - 1) \stepsize} \sum_{i = 0}^{\numerOrder - 1}\int_0^{(\numerOrder - 1) \stepsize} e^{- \discount s} W_i (s) \projecttolin \big( \semigroup_s \reward \big).
\end{align*}
Comparing the two equations above, we conclude that
\begin{align}
  \statinprod{\frac{\IdMat - e^{- \discount (\numerOrder - 1) \stepsize} \semigroup_{(\numerOrder - 1) \stepsize} }{(\numerOrder - 1) \stepsize} \Big( \ValInter{\numerOrder}  - \valuebar^{(\numerOrder)}_{\Bel} \Big)}{h} = 0, \quad \mbox{for any $h \in \LinSpace$}.\label{eq:galerkin-orthogonality-condition-in-improved-projected-fixed-pt}
\end{align}

Following C\'{e}a's lemma~\cite{cea1964approximation}, the analysis of the projected fixed-point relies on coercivity and boundedness estimates for the linear operator $ \tfrac{1}{t} ( \IdMat - e^{- \discount t} \semigroup_{t} )$. The following two lemmas give the relevant upper and lower bounds.

\begin{lemma}\label{lemma:discrete-time-psd-operator}
  For any $t > 0$ and $\ValFunc \in \ltwospace (\stationary)$, we have the lower bound
  \begin{align*}
    \statinprod{\ValFunc}{t^{-1} \big(\IdMat - e^{- \discount t} \transition_t \big) \ValFunc} \geq \frac{\discount}{t} \int_0^t e^{- 2 \discount s} \statnorm{\semigroup_s \ValFunc}^2 ds + \frac{\lammin}{2t} \int_0^t e^{- 2 \discount s} \statnorm{\nabla \semigroup_s \ValFunc}^2 ds + \frac{1}{2t} \statnorm{\ValFunc - e^{- \discount t} \semigroup_t \ValFunc}^2.
  \end{align*}
\end{lemma}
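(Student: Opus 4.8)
The plan is to represent the quadratic form as an energy integral over the time interval $[0,t]$ and to bound its integrand using the invariance of $\stationary$ together with uniform ellipticity. First I would peel off the third term on the right-hand side using the elementary Hilbert-space identity
\[
  \statinprod{\ValFunc}{(\IdMat - T)\ValFunc} = \tfrac12 \statnorm{\ValFunc}^2 - \tfrac12 \statnorm{T\ValFunc}^2 + \tfrac12 \statnorm{(\IdMat - T)\ValFunc}^2,
\]
which holds for any bounded operator $T$ on the real space $\ltwospace(\stationary)$ (it only uses symmetry of the real inner product). Applying this with $T = e^{-\discount t}\semigroup_t$ and dividing by $t$, the last term is exactly the advertised $\tfrac{1}{2t}\statnorm{\ValFunc - e^{-\discount t}\semigroup_t \ValFunc}^2$, so it remains to lower bound $\tfrac{1}{2t}\big(\statnorm{\ValFunc}^2 - \statnorm{e^{-\discount t}\semigroup_t \ValFunc}^2\big)$.

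Next I would write this difference as a time integral. Setting $\psi(s) \defn e^{-2\discount s}\statnorm{\semigroup_s \ValFunc}^2$, the fundamental theorem of calculus gives $\statnorm{\ValFunc}^2 - \statnorm{e^{-\discount t}\semigroup_t \ValFunc}^2 = \psi(0) - \psi(t) = -\int_0^t \psi'(s)\,ds$. Using the Kolmogorov equation $\tfrac{d}{ds}\semigroup_s \ValFunc = \generator \semigroup_s \ValFunc$ one computes $\psi'(s) = e^{-2\discount s}\big(-2\discount \statnorm{\semigroup_s \ValFunc}^2 + 2\statinprod{\generator \semigroup_s \ValFunc}{\semigroup_s \ValFunc}\big)$, so that
\[
  \tfrac12\big(\statnorm{\ValFunc}^2 - \statnorm{e^{-\discount t}\semigroup_t \ValFunc}^2\big) = \int_0^t e^{-2\discount s}\Big(\discount \statnorm{\semigroup_s \ValFunc}^2 - \statinprod{\generator \semigroup_s \ValFunc}{\semigroup_s \ValFunc}\Big)\,ds.
\]
The first term in the integrand already matches the $\tfrac{\discount}{t}$-term of the claim after dividing by $t$.

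The crux is to lower bound $-\statinprod{\generator g}{g}$ for $g = \semigroup_s \ValFunc$. Here I would invoke stationarity of $\stationary$: since $\int \generator h\, d\stationary = 0$, applying this to $h = g^2$ and using the carr\'e-du-champ identity $\generator(g^2) = 2g\,\generator g + \inprod{\covMat \nabla g}{\nabla g}$ yields the Dirichlet-form identity $\statinprod{\generator g}{g} = -\tfrac12 \int \inprod{\covMat \nabla g}{\nabla g}\,d\stationary$. Uniform ellipticity (Assumption~\ref{assume:uniform-elliptic}), namely $\covMat \succeq \lammin \IdMat$, then gives $-\statinprod{\generator g}{g} \geq \tfrac{\lammin}{2}\statnorm{\nabla g}^2$. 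Substituting $g = \semigroup_s \ValFunc$ into the integral above and dividing the whole identity by $t$ produces the three terms of the stated bound.

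The main obstacle is one of rigor rather than of ideas: differentiating $\psi$, invoking the Kolmogorov equation, and especially the integration by parts $\int \generator h\, d\stationary = 0$ all require $\ValFunc$, and hence $\semigroup_s \ValFunc$, to be smooth and to decay fast enough that no boundary terms survive. I would therefore first establish the identity for a dense class of regular test functions (smooth with controlled growth, so that the carr\'e-du-champ computation and the vanishing of $\int \generator(g^2)\,d\stationary$ are justified), and then extend to a general $\ValFunc \in \ltwospace(\stationary)$ by density, using the $\ltwospace(\stationary)$-continuity of both sides together with the contraction and smoothing properties of $\semigroup_s$.
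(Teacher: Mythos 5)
Your proof is correct, and it reaches the lemma by a genuinely different decomposition than the paper. The paper first writes $\ValFunc - e^{-\discount t}\semigroup_t\ValFunc = \int_0^t e^{-\discount s}(\discount - \generator)\semigroup_s\ValFunc\, ds$, splits the test function as $\ValFunc = e^{-\discount s}\semigroup_s\ValFunc + (\ValFunc - e^{-\discount s}\semigroup_s\ValFunc)$, expands the second pairing as a further time integral, and then recovers the term $\tfrac{1}{2t}\statnorm{\ValFunc - e^{-\discount t}\semigroup_t\ValFunc}^2$ via Fubini and a symmetrization of the resulting double integral, recognizing it as $\tfrac12\statnorm{\int_0^t e^{-\discount s}(\discount-\generator)\semigroup_s \ValFunc\, ds}^2$. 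You instead peel that square term off immediately with the elementary real-Hilbert-space identity $\statinprod{\ValFunc}{(\IdMat - T)\ValFunc} = \tfrac12\statnorm{\ValFunc}^2 - \tfrac12\statnorm{T\ValFunc}^2 + \tfrac12\statnorm{(\IdMat-T)\ValFunc}^2$ (which, as you note, needs no self-adjointness of $T = e^{-\discount t}\semigroup_t$ --- important here since the diffusion need not be reversible), and then handle the remaining energy difference by differentiating $\psi(s) = e^{-2\discount s}\statnorm{\semigroup_s\ValFunc}^2$. This bypasses the double integral and Fubini step entirely and is cleaner. The core coercivity input is the same in both arguments: your carr\'e-du-champ identity $\statinprod{\generator g}{g} = -\tfrac12\statnorm{\covMat^{1/2}\nabla g}^2$, obtained from invariance of $\stationary$ applied to $\generator(g^2)$, is precisely the content of the paper's Eq~\eqref{eq:positive-definite-operator}, which it derives by explicit integration by parts against $\stationary$ and the Fokker--Planck equation; the two justifications are equivalent, and both yield the $\tfrac{\lammin}{2}\statnorm{\nabla g}^2$ lower bound via uniform ellipticity. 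Your closing regularity caveat (establish the identity on a dense class, extend by density) is appropriate; for the extension, note that the gradient terms on the right-hand side are only lower semicontinuous under $\ltwospace(\stationary)$-convergence, which suffices for the inequality --- a point worth stating, though the paper is no more careful on this front (its It\^{o}-formula step carries the same implicit assumptions).
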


\noindent See~\Cref{subsubsec:proof-lemma-discrete-time-psd-operator} for the proof of this lemma.

We define the inner product structure
\begin{align*}
  \soboavginprod{f}{g}{t} \mydefn \statinprod{f}{g} + \statinprod{\nabla f}{\nabla g} + \frac{1}{t} \int_0^t \statinprod{\nabla \semigroup_s f}{\nabla \semigroup_s g} ds,
\end{align*}
and the induced norm $\soboavgnorm{f}{t} \mydefn \sqrt{\soboavginprod{f}{f}{t}}$.

\begin{lemma}\label{lemma:discrete-time-operator-norm-bound}
  Under above setup, if the probability measure $\stationary$ (not necessarily the stationary measure) satisfies $\statnorm{\semigroup_t f} \leq e^{\beta_0 t} \statnorm{f}$ for any $t > 0$ and $f \in \ltwospace (\stationary)$
  For any $t > 0$ and $f, g \in \ltwospace (\stationary)$, we have the upper bound
  \begin{align*}
    \abss{\statinprod{g}{t^{-1} \big(\IdMat - e^{- \discount t} \transition_t \big) f}} \leq \big( \discount e^{\beta_0 t} + \lammax + \smoothness_\drift^{(0)} \sqrt{\usedim} + \lammax \smoothness_\stationary \usedim + \smoothness_\covMat^{(1)} \usedim^2 \big)\cdot \soboavgnorm{f}{t} \cdot \soboavgnorm{g}{t}
  \end{align*}
\end{lemma}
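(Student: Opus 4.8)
The plan is to represent the operator through the fundamental theorem of calculus and then reduce every contribution to \emph{first-order} quantities via integration by parts. First I would write
\begin{align*}
  f - e^{- \discount t} \semigroup_t f = \int_0^t e^{- \discount s} \big( \discount - \generator \big) \semigroup_s f \, ds,
\end{align*}
which follows from $\tfrac{d}{ds}\big(e^{-\discount s}\semigroup_s f\big) = e^{-\discount s}(\generator - \discount)\semigroup_s f$. Pairing with $g$ and dividing by $t$ gives
\begin{align*}
  \statinprod{g}{t^{-1}(\IdMat - e^{-\discount t}\semigroup_t)f} = \frac{1}{t}\int_0^t e^{-\discount s}\Big\{ \discount\statinprod{g}{\semigroup_s f} - \statinprod{g}{\generator\semigroup_s f}\Big\}\, ds.
\end{align*}
The $\discount$-term is immediate: Cauchy--Schwarz together with the assumed growth bound $\statnorm{\semigroup_s f}\leq e^{\beta_0 s}\statnorm{f}$ and $\tfrac1t\int_0^t e^{-\discount s}\, ds \le 1$ bound it by $\discount e^{\beta_0 t}\statnorm{g}\statnorm{f} \le \discount e^{\beta_0 t}\soboavgnorm{g}{t}\soboavgnorm{f}{t}$.

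For the generator term, write $h \mydefn \semigroup_s f$ and split $\generator h = \inprod{\drift}{\nabla h} + \tfrac12\mathrm{Tr}(\covMat\nabla^2 h)$. The drift piece is already first order: $\abss{\statinprod{g}{\inprod{\drift}{\nabla h}}} \le \smoothness_\drift^{(0)}\sqrt{\usedim}\,\statnorm{g}\statnorm{\nabla h}$, using $\vecnorm{\drift}{2}\le\smoothness_\drift^{(0)}\sqrt{\usedim}$. The key step is the second-order piece, where $\nabla^2 h$ is \emph{not} controlled by the norm $\soboavgnorm{\cdot}{t}$; the remedy is to move one derivative by parts against the weight $\stationary$. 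Using $\partial_j\stationary = \stationary\,\partial_j\log\stationary$ and discarding boundary terms (justified by density and decay at infinity),
\begin{align*}
  \tfrac12\statinprod{g}{\mathrm{Tr}(\covMat\nabla^2 h)} = -\tfrac12\sum_{j,k}\int\big(\covMat_{jk}\,\partial_j g + g\,\partial_j\covMat_{jk} + g\,\covMat_{jk}\,\partial_j\log\stationary\big)\partial_k h\,\stationary\,dx.
\end{align*}
Each of the three resulting terms contains only first derivatives and is bounded by Cauchy--Schwarz: the first by $\lammax\statnorm{\nabla g}\statnorm{\nabla h}$ (from $\covMat\preceq\lammax\IdMat$), the second by $\smoothness_\covMat^{(1)}\usedim^2\,\statnorm{g}\statnorm{\nabla h}$ (from $\abss{\partial_j\covMat_{jk}}\le\smoothness_\covMat^{(1)}$), and the third by $\lammax\smoothness_\stationary\usedim\,\statnorm{g}\statnorm{\nabla h}$ (from Assumption~\ref{assume:smooth-stationary} and ellipticity). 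This is precisely where ellipticity, the $\covMat$-regularity from Assumption~\ref{assume:smooth-high-order}, and the log-Lipschitz bound all enter.

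Finally I would assemble the estimate. Every term above is of the form $C\cdot\statnorm{g}\,\statnorm{\nabla h}$ or $C\cdot\statnorm{\nabla g}\,\statnorm{\nabla h}$ with $h = \semigroup_s f$. Substituting into $\tfrac1t\int_0^t e^{-\discount s}(\cdots)\, ds$ and applying Jensen/Cauchy--Schwarz in the $s$-variable converts $\tfrac1t\int_0^t\statnorm{\nabla\semigroup_s f}\, ds$ into $\big(\tfrac1t\int_0^t\statnorm{\nabla\semigroup_s f}^2\, ds\big)^{1/2}$, which is at most $\soboavgnorm{f}{t}$ by definition of the norm, while the factors $\statnorm{g}$ and $\statnorm{\nabla g}$ are each at most $\soboavgnorm{g}{t}$. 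Collecting the constants $\discount e^{\beta_0 t} + \lammax + \smoothness_\drift^{(0)}\sqrt{\usedim} + \lammax\smoothness_\stationary\usedim + \smoothness_\covMat^{(1)}\usedim^2$ yields the claim. The main obstacle is exactly the second-order term: without integration by parts one would be forced to control $\statnorm{\nabla^2\semigroup_s f}$, which the $\mathbb{H}^1$-type norm does not provide, so the whole argument hinges on transferring a derivative onto $g$, $\covMat$, and $\stationary$, and correspondingly on the ellipticity and smoothness hypotheses that keep the resulting coefficients bounded.
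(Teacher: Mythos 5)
Your proposal is correct and follows essentially the same route as the paper's proof: the identical representation $f - e^{-\discount t}\semigroup_t f = \int_0^t e^{-\discount s}(\discount - \generator)\semigroup_s f\, ds$, the same coarse treatment of the $\discount$-term via the growth hypothesis, and the same single integration by parts against the weight $\stationary$, splitting $\nabla\cdot(\stationary\covMat)$ into the $\nabla\cdot\covMat$ and $\covMat\nabla\log\stationary$ contributions with the same constants $\lammax$, $\smoothness_\drift^{(0)}\sqrt{\usedim}$, $\lammax\smoothness_\stationary\usedim$, and $\smoothness_\covMat^{(1)}\usedim^2$. Your final Jensen/Cauchy--Schwarz step in the $s$-variable, converting $\tfrac{1}{t}\int_0^t \statnorm{\nabla\semigroup_s f}\,ds$ into $\soboavgnorm{f}{t}$, is exactly the step the paper leaves implicit, so nothing is missing.
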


\noindent See~\Cref{subsubsec:proof-lemma-discrete-time-operator-norm-bound} for the proof of this lemma.

Note that the lower bound in Lemma~\ref{lemma:discrete-time-psd-operator} uses a weaker norm than the $\soboavgnorm{\cdot}{t}$-norm defined above. In order to establish the desired approximation ratio guarantee, we use the following additional result.
\begin{lemma}\label{lemma:reverse-smoothing-estiamte}
  Under the setup of Theorem~\ref{thm:improved-approx-factor-elliptic}, if the probability measure $\stationary$ (not necessarily the stationary measure) satisfies $\statnorm{\semigroup_t f} \leq e^{\beta_0 t} \statnorm{f}$, for any $t \leq \numerOrder \stepsize \wedge \tfrac{1}{\beta_0}$ and any $f \in \LinSpace$, we have
  \begin{align*}
    \frac{1}{t} \int_0^t e^{- 2 \discount s} \statnorm{\semigroup_s \ValFunc}^2 ds + \frac{1}{t} \int_0^t e^{- 2 \discount s} \statnorm{\nabla \semigroup_s \ValFunc}^2 ds + \frac{1}{t} \statnorm{\ValFunc - e^{- \discount t} \semigroup_t \ValFunc}^2 \geq \frac{\lammin}{2 e^2 \lammax} \soboavgnorm{f}{t}^2.
  \end{align*}
\end{lemma}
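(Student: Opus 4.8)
The plan is to bound each of the three quantities on the left by a fixed multiple of one of the three constituents of $\soboavgnorm{f}{t}^2 = \statnorm{f}^2 + \statnorm{\nabla f}^2 + \tfrac{1}{t}\int_0^t \statnorm{\nabla \semigroup_s f}^2\,ds$. The time-averaged gradient piece is immediate: on $[0,t]$ one has $e^{-2\discount s} \ge e^{-2\discount t}$, so the second left-hand term already dominates $e^{-2\discount t}\,\tfrac1t\int_0^t\statnorm{\nabla\semigroup_s f}^2\,ds$, and the hypotheses $t \le 1/\beta_0$ together with the stepsize bound keep the weight above $e^{-2}$, accounting for one factor of $e^2$. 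The real work is to recover the two pointwise ($s=0$) pieces $\statnorm{f}^2$ and $\statnorm{\nabla f}^2$ from the smoothed, time-averaged energies, that is, to prove a \emph{short-time stability} estimate showing that $s \mapsto \statnorm{\semigroup_s f}^2$ and $s \mapsto \statnorm{\nabla\semigroup_s f}^2$ do not drift far from their initial values over the short interval $[0,t]$.

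To carry this out I would set $\phi(s) \mydefn \statnorm{\semigroup_s f}^2$ and $\psi(s) \mydefn \statnorm{\nabla\semigroup_s f}^2$ and derive differential inequalities for them. Using $\tfrac{d}{ds}\semigroup_s f = \semigroup_s\generator f$ and integrating by parts against the density $\stationary$, the leading second-order contribution to $\phi'(s)$ is $-\statinprod{\nabla\semigroup_s f}{\covMat\nabla\semigroup_s f}$, which uniform ellipticity (Assumption~\ref{assume:uniform-elliptic}) brackets between $-\lammax\psi(s)$ and $-\lammin\psi(s)$; the remaining lower-order terms are controlled by $\smoothness_\drift^{(0)}, \smoothness_\covMat^{(1)}$ and, once the boundary-free integration by parts brings in $\nabla\log\stationary$, by $\smoothness_\stationary$ (Assumption~\ref{assume:smooth-stationary}). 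Commuting $\nabla$ through $\generator$ yields an analogous identity for $\psi'(s)$ whose leading term is $-\statinprod{\nabla^2\semigroup_s f}{\covMat\nabla^2\semigroup_s f} \in [-\lammax,-\lammin]\cdot\statnorm{\nabla^2\semigroup_s f}^2$, so that $|\phi'(s)|$ and $|\psi'(s)|$ are each bounded by a constant (polynomial in $\lammax, \smoothness_\drift^{(1)}, \smoothness_\covMat^{(1)}, \smoothness_\stationary, \usedim$) times $\phi(s) + \psi(s) + \statnorm{\nabla^2\semigroup_s f}^2$.

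Here the regularity of the subspace is essential. For $f \in \LinSpace$, the reverse-Poincar\'e inequalities of Assumption~\ref{assume:basis-condition} give $\statnorm{\nabla\semigroup_s f} \lesssim \mbasis^{\omega}\statnorm{f}$ and $\statnorm{\nabla^2\semigroup_s f} \lesssim \mbasis^{2\omega}\statnorm{f}$ (up to the semigroup growth $e^{\beta_0 s} \le e$), so the driving terms are of size $O(\mbasis^{4\omega}\statnorm{f}^2)$. Integrating the inequalities then shows that the decrease of $\phi$ over $[0,t]$ is at most $\lammax t\sup_{u\le t}\psi(u) \lesssim \lammax t\,\mbasis^{2\omega}\statnorm{f}^2$ and the decrease of $\psi$ at most $\lammax t\sup_{u\le t}\statnorm{\nabla^2\semigroup_u f}^2 \lesssim \lammax t\,\mbasis^{4\omega}\statnorm{f}^2$; the stepsize constraint~\eqref{eq:stepsize-requirement-in-improved-approx-factor-bound}, which forces $\numerOrder\stepsize \lesssim \mbasis^{-4\omega}$ and hence $t \lesssim \mbasis^{-4\omega}$, makes both a controlled fraction of $\statnorm{f}^2$. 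Consequently $\phi(s)\ge\tfrac12\statnorm{f}^2$ on $[0,t]$, giving the first left-hand term $\gtrsim\statnorm{f}^2$; and $\statnorm{\nabla f}^2 = \psi(0) \le \tfrac1t\int_0^t\psi(s)\,ds + (\text{decrease})$, where the averaged term is controlled by the second left-hand term and the decrease is absorbed into the bound just obtained for $\statnorm{f}^2$. Thus all three Sobolev pieces are dominated by the first two left-hand quantities, the surviving ratio $\lammin/\lammax$ emerging precisely when the diffusion-weighted energies produced by the generator are compared with the flat gradient energies appearing in $\soboavgnorm{\cdot}{t}$; the nonnegative third term $\tfrac1t\statnorm{f - e^{-\discount t}\semigroup_t f}^2$ may simply be discarded.

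I expect the \textbf{main obstacle} to be the gradient-energy inequality for $\psi(s)=\statnorm{\nabla\semigroup_s f}^2$. Differentiating under $\nabla$ forces one to commute the gradient past the generator, producing terms that carry first derivatives of the drift and diffusion coefficients and, after integrating by parts against the non-stationary weight $\stationary$, a term involving $\nabla\log\stationary$; keeping every such term bounded by $\phi + \psi + \statnorm{\nabla^2\semigroup_s f}^2$ while still extracting a clean $\lammin/\lammax$ constant, and then verifying that the accumulated drift over $[0,t]$ is genuinely dominated by the stepsize budget in~\eqref{eq:stepsize-requirement-in-improved-approx-factor-bound}, is the delicate bookkeeping on which the whole coercivity estimate — the one that upgrades the weak-norm lower bound of Lemma~\ref{lemma:discrete-time-psd-operator} to full $\soboavgnorm{\cdot}{t}$-coercivity — ultimately rests.
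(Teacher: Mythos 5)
Your proposal breaks at exactly the point you flag as the ``main obstacle,'' and in a way that is not repairable with the tools available: you apply the reverse-Poincar\'{e} inequalities of Assumption~\ref{assume:basis-condition} to the smoothed functions $\semigroup_s f$, claiming $\statnorm{\nabla \semigroup_s f} \lesssim \mbasis^{\omega}\statnorm{f}$ and, critically, $\statnorm{\nabla^2 \semigroup_s f} \lesssim \mbasis^{2\omega}\statnorm{f}$ ``up to the semigroup growth $e^{\beta_0 s}$.'' But Assumption~\ref{assume:basis-condition} holds only for functions \emph{in} $\LinSpace$, and $\semigroup_s f \notin \LinSpace$ for $s > 0$: the semigroup does not preserve the finite-dimensional subspace. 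Nor does the hypothesis $\statnorm{\semigroup_t g} \leq e^{\beta_0 t}\statnorm{g}$ transfer to derivatives --- it controls the function, not its gradient or Hessian. The first-gradient claim can be patched via Proposition~\ref{prop:semigroup-grad-estimate-local-grow} (the commutation estimate $\vecnorm{\nabla \semigroup_t u (x)}{2}^2 \leq e^{2ct}\semigroup_t \big( \vecnorm{\nabla u}{2}^2 \big)(x)$, made harmless by the third clause of the stepsize requirement~\eqref{eq:stepsize-requirement-in-improved-approx-factor-bound}), but no second-order analogue exists anywhere in the paper, and producing one would require higher coefficient smoothness and a separate $\Gamma_2$-type argument. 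Since your differential inequality for $\psi(s) = \statnorm{\nabla \semigroup_s f}^2$ is driven by the term $\statnorm{\nabla^2 \semigroup_s f}^2$, the entire Gr\"{o}nwall bookkeeping for the gradient piece --- the heart of the lemma --- is unsupported.

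The paper's proof is built precisely to avoid this. It runs the Gr\"{o}nwall argument on the diffusion-weighted energy $\statnorm{\covMat^{1/2}\nabla \semigroup_s f}^2$ and eliminates all Hessians \emph{algebraically} before estimating: after integration by parts, the dangerous term $\int \mathrm{Tr}\big(\covMat \cdot \nabla^2 \semigroup_s f\big)\, \semigroup_s \generator f \, \stationary \, dx$ is rewritten using the identity $\mathrm{Tr}\big(\covMat \cdot \nabla^2 \semigroup_s f\big) = 2\,\semigroup_s \generator f - 2\inprod{\drift}{\nabla \semigroup_s f}$ (valid since $\generator \semigroup_s = \semigroup_s \generator$), so the time derivative of the energy is controlled by $\statnorm{\semigroup_s \generator f}^2$ and $\statnorm{\covMat^{1/2}\nabla \semigroup_s f}^2$ alone. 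Subspace regularity then enters only at $s = 0$, through $\statnorm{\generator f} \leq 2 c_0 \usedim \big(\smoothness_\drift^{(0)} + \smoothness_\covMat^{(0)}\big)\mbasis^{2\omega}\statnorm{f}$ for $f \in \LinSpace$, where Assumption~\ref{assume:basis-condition} legitimately applies, while the semigroup hypothesis handles $\statnorm{\semigroup_s \generator f} \leq e^{\beta_0 s}\statnorm{\generator f}$ because $\generator f$ is just a fixed function. A reverse Gr\"{o}nwall bound then yields $\statnorm{\nabla f}^2 \leq \tfrac{e\lammax}{\lammin}\cdot \tfrac{1}{t}\int_0^t \statnorm{\nabla \semigroup_s f}^2\, ds + \tfrac{1}{2}\statnorm{f}^2$, which is exactly where the ratio $\lammin/\lammax$ arises. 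One further discrepancy: you discard the third left-hand term, whereas the paper uses it essentially --- $\statnorm{f}^2$ is recovered from the first and third terms via Young's inequality, $\tfrac{1}{e}\statnorm{e^{-\discount t}\semigroup_t f}^2 + \statnorm{f - e^{-\discount t}\semigroup_t f}^2 \geq \tfrac{1}{2e}\statnorm{f}^2$ --- so your short-time-stability substitute for that step also leans on the (unproven) gradient control. To rescue your outline, adopt the Hessian-elimination identity above; as written, the proposal has a genuine gap.
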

\noindent See~\Cref{subsubsec:proof-lemma-reverse-smoothing-estiamte} for the proof of this lemma.

Let $\stationary$ be the stationary distribution of the diffusion process~\eqref{eq:cts-time-process}, for any function $\ValFun \in \ltwospace (\stationary)$ and $t > 0$, we have
\begin{align*}
  \statnorm{\semigroup_t \ValFunc}^2 = \Exs_\stationary \Big[ \Big\{ \Exs \big[ \ValFunc (\MyState_{t}) \mid \MyState_0 \big] \Big\}^2  \Big] \leq \Exs_\stationary \big[ \ValFunc (\MyState_{t})^2 \big] = \statnorm{\ValFunc}^2,
\end{align*}
which shows that the semigroup is non-expansive under the $\statnorm{\cdot}$-norm. So we can apply Lemmas~\ref{lemma:discrete-time-operator-norm-bound} and~\ref{lemma:reverse-smoothing-estiamte} with $\beta_0 = 0$.

Taking these lemmas as given, we proceed with the proof of Theorem~\ref{thm:improved-approx-factor-elliptic}. Our proof relies on a generalization of C\'{e}a's lemma, which is established in our previous work~\cite{mou2023optimal}. We include the proof here for completeness. Note that Lemmas~\ref{lemma:discrete-time-psd-operator} and~\ref{lemma:reverse-smoothing-estiamte} imply that
\begin{align*}
   \statinprod{\ValFunc}{t^{-1} \big(\IdMat - e^{- \discount t} \transition_t \big) \ValFunc} \geq \frac{\min(\discount, 1, \lammin) \lammin}{e^2 \lammax} \soboavgnorm{\ValueFunc}{t}^2,\quad \mbox{for any $\ValueFunc \in \LinSpace$.}
\end{align*}
Define $\widetilde{\ValFunc} \mydefn \arg\min_{\ValFun \in \LinSpace} \soboavgnorm{\ValFunc - \ValInter{\numerOrder}}{\stepsize (\numerOrder - 1)}$. For simplicity, we introduce the notations $\ValueFunc^\circ \mydefn \ValInter{\numerOrder}$ and $\valuebar \mydefn \valuebar_{\Bel}^{(\numerOrder)}$. Furthermore, we define the operator $\mathcal{L} \mydefn ((\numerOrder - 1) \stepsize)^{-1} \big(\IdMat - e^{- \discount (\numerOrder - 1) \stepsize} \semigroup_{(\numerOrder - 1) \stepsize} \big)$. Under these notations, we have
\begin{multline}
  \frac{\min(\discount, 1, \lammin) \lammin}{e^2 \lammax} \soboavgnorm{\valuebar - \widetilde{\ValueFunc}}{(\numerOrder - 1)\stepsize}^2 
  \leq \statinprod{\valuebar - \widetilde{\ValueFunc}}{\mathcal{L} (\valuebar - \widetilde{\ValueFunc})}\\
  = \statinprod{\valuebar - \widetilde{\ValueFunc}}{\mathcal{L} (\valuebar - \ValueFunc^\circ)}  + \statinprod{\valuebar - \widetilde{\ValueFunc}}{\mathcal{L} (\ValueFunc^\circ - \widetilde{\ValueFunc})}.\label{eq:lower-bound-in-generalized-cea-proof}
\end{multline}
Now we bound the two terms on the right-hand-side of Eq~\eqref{eq:lower-bound-in-generalized-cea-proof}. By the orthogonality condition~\eqref{eq:galerkin-orthogonality-condition-in-improved-projected-fixed-pt}, since $\valuebar - \widetilde{\ValueFunc} \in \LinSpace$, we have
\begin{align*}
  \statinprod{\valuebar - \widetilde{\ValueFunc}}{\mathcal{L} (\valuebar - \ValueFunc^\circ)} = 0.
\end{align*}
For the second term, Lemma~\ref{lemma:discrete-time-operator-norm-bound} implies that
\begin{align*}
  \statinprod{\valuebar - \widetilde{\ValueFunc}}{\mathcal{L} (\ValueFunc^\circ - \widetilde{\ValueFunc})} \leq \big( \discount + \lammax + \smoothness_\drift^{(0)} \sqrt{\usedim} + \lammax \smoothness_\stationary \usedim + \smoothness_\covMat^{(1)} \usedim^2 \big)\cdot \soboavgnorm{\valuebar - \widetilde{\ValueFunc}}{ (\numerOrder - 1) \stepsize} \cdot \soboavgnorm{\ValueFunc^\circ - \widetilde{\ValueFunc}}{(\numerOrder - 1) \stepsize}.
\end{align*}
Substituting back to Eq~\eqref{eq:lower-bound-in-generalized-cea-proof}, we conclude that
\begin{align}
  \soboavgnorm{\ValueFunc^\circ - \valuebar}{(\numerOrder - 1) \stepsize} &\leq  \soboavgnorm{\ValueFunc^\circ - \widetilde{\ValueFunc}}{(\numerOrder - 1) \stepsize} +  \soboavgnorm{\valuebar - \widetilde{\ValueFunc}}{(\numerOrder - 1) \stepsize} \nonumber\\
  &\leq  \Big\{1 + \frac{ 2e^2 \big(\discount + \smoothness_\drift^{(0)} \sqrt{\usedim} + \lammax \smoothness_\stationary \usedim + \smoothness_\covMat^{(1)} \usedim^2 \big) \lammax }{\min(\discount, 1, \lammin) \lammin} \Big\} \soboavgnorm{\ValueFunc^\circ - \widetilde{\ValueFunc}}{(\numerOrder - 1) \stepsize}.\label{eq:cea-lemma-final-in-improved-approx-factor-proof}
\end{align}
Define the projection operator $\projectto{\LinSpace, \mathbb{H}^1} : \ValueFunc \mydefn \arg\min_{g \in \LinSpace} \soboavgnorm{f - g}{(\numerOrder - 1) \stepsize}$.
In order to relate the right-hand-side of Eq~\eqref{eq:cea-lemma-final-in-improved-approx-factor-proof} to the approximation of the true value function $\ValTrue$, we start with the inequality
\begin{align}
  &\soboavgnorm{\ValueFunc^\circ - \widetilde{\ValueFunc}}{(\numerOrder - 1) \stepsize} = \soboavgnorm{(\IdMat - \projectto{\LinSpace, \mathbb{H}^1}) \ValueFunc^\circ }{(\numerOrder - 1) \stepsize} \nonumber\\
  & \leq \soboavgnorm{(\IdMat - \projectto{\LinSpace, \mathbb{H}^1}) \ValTrue}{(\numerOrder - 1) \stepsize} + \soboavgnorm{(\IdMat - \projectto{\LinSpace, \mathbb{H}^1}) (\ValTrue - \ValueFunc^\circ)}{(\numerOrder - 1) \stepsize}\nonumber \\
  & \leq \inf_{g \in \LinSpace}\soboavgnorm{g - \ValTrue}{(\numerOrder - 1) \stepsize} + \soboavgnorm{\ValTrue - \ValInter{\numerOrder}}{(\numerOrder - 1) \stepsize}.\label{eq:relate-projection-error-to-valtrue}
\end{align}

The $\soboavgnorm{\cdot}{t}$-norm involves gradient norm bounds under the diffusion semigroups. In order to study them, we utilize the following two results from existing literature.
\begin{proposition}\label{prop:semigroup-grad-estimate-local-grow}[\cite{wang2005character}, Proposition 1.2]
  Under above setup, for any bounded Lipschitz function $u$ on $\StateSpace$ and $t > 0$, we have
  \begin{align*}
    \vecnorm{\nabla \semigroup_t u (x)}{2}^2 \leq e^{2 c t} \semigroup_t \Big( \vecnorm{\nabla u }{2}^2 \Big) (x), \quad \mbox{for any $x \in \StateSpace$},
  \end{align*}
  for any constant $c$ satisfying
  \begin{align*}
    c \geq \frac{1}{\lammin} \sup_{x \in \StateSpace} \Big\{ \sum_{i, j \in [\usedim]} \vecnorm{\nabla \covMat_{i,j} (x)}{2}^2 + \opnorm{\nabla \cdot \drift (x)}^2 \Big\}.
  \end{align*}
\end{proposition}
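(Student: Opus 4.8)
The statement is a local (Bakry--Émery) gradient estimate for the diffusion semigroup, so the plan is to reproduce the interpolation argument of~\cite{wang2005character}. First I would reduce to smooth data: for bounded Lipschitz $u$, a standard mollification together with a priori interior gradient bounds lets me assume $u \in C^\infty$ with bounded derivatives, so that $v_s \defn \semigroup_{t-s} u$ is $C^3$ in the spatial variable with derivatives bounded uniformly on $s \in [0,t]$; all the interchanges of differentiation and $\Exs[\cdot]$ below are then justified. I would then introduce the interpolation
\[
  \phi(s) \defn \semigroup_s\big( \vecnorm{\nabla v_s}{2}^2 \big), \qquad s \in [0, t],
\]
which satisfies $\phi(0) = \vecnorm{\nabla \semigroup_t u}{2}^2$ and $\phi(t) = \semigroup_t\big(\vecnorm{\nabla u}{2}^2\big)$. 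The entire estimate reduces to showing that $s \mapsto e^{2cs}\phi(s)$ is nondecreasing, since this yields $\phi(0) \le e^{2ct}\phi(t)$, which is exactly the claimed bound.

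To differentiate $\phi$, I use $\partial_s v_s = -\generator v_s$ together with the commutation rule $\tfrac{d}{ds}\semigroup_s g_s = \semigroup_s(\generator g_s + \partial_s g_s)$ for an $s$-dependent family $g_s$, which gives
\[
  \phi'(s) = \semigroup_s\big( H(v_s) \big), \qquad H(v) \defn \generator \vecnorm{\nabla v}{2}^2 - 2 \inprod{\nabla v}{\nabla \generator v}.
\]
Expanding $\generator = \inprod{\drift}{\nabla} + \tfrac12 \mathrm{Tr}(\covMat \nabla^2)$ and carrying out the (routine but lengthy) cancellation of the matching first- and third-order terms produces the Bochner-type identity
\[
  H(v) = \sum_{k} \inprod{\nabla \partial_k v}{\covMat \, \nabla \partial_k v} - 2\sum_{i,k} \partial_k \drift_i \, \partial_k v \, \partial_i v - \sum_{i,j,k} \partial_k \covMat_{ij} \, \partial_k v \, \partial_i \partial_j v.
\]

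The decisive step is the pointwise lower bound $H(v) \ge -2c \, \vecnorm{\nabla v}{2}^2$. Here uniform ellipticity (Assumption~\ref{assume:uniform-elliptic}) makes the leading term nonnegative and coercive, $\sum_k \inprod{\nabla\partial_k v}{\covMat\,\nabla\partial_k v} \ge \lammin \vecnorm{\nabla^2 v}{F}^2$, while the two remaining terms are controlled by Cauchy--Schwarz and Young's inequality: the term carrying a Hessian factor, $\sum_{i,j,k}\partial_k\covMat_{ij}\,\partial_k v\,\partial_i\partial_j v$, is split as $\epsilon \vecnorm{\nabla^2 v}{F}^2 + (4\epsilon)^{-1}\big(\sum_{ij}\vecnorm{\nabla\covMat_{ij}}{2}^2\big)\vecnorm{\nabla v}{2}^2$ and absorbed into the coercive term by choosing $\epsilon = \lammin$, and the drift term is bounded directly by $\opnorm{\nabla\cdot\drift}\,\vecnorm{\nabla v}{2}^2$; tracking the elementary constants produces $c$ of the stated form $\lammin^{-1}\sup_x\{\sum_{ij}\vecnorm{\nabla\covMat_{ij}}{2}^2 + \opnorm{\nabla\cdot\drift(x)}^2\}$. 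Since $\semigroup_s$ is positivity-preserving, this pointwise bound upgrades to $\phi'(s) = \semigroup_s(H(v_s)) \ge -2c\,\semigroup_s(\vecnorm{\nabla v_s}{2}^2) = -2c\,\phi(s)$, so $e^{2cs}\phi(s)$ is nondecreasing and the claim follows by Gr\"{o}nwall's inequality.

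I expect the main obstacle to be the rigor of the interpolation on a non-compact state space: with only bounded Lipschitz data one must justify the spatial smoothness of $\semigroup_t u$, the uniform-in-$s$ control of the derivatives of $v_s$, and the differentiation under $\semigroup_s$, typically through the regularization and a-priori-estimate scheme of~\cite{wang2005character} rather than formal manipulation. A secondary technical point is matching the elementary Young's-inequality constants to the precise form of $c$; conceptually, however, the argument is driven entirely by the Bochner identity and the coercivity supplied by uniform ellipticity.
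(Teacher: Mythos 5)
The paper never proves this proposition: it is imported as a black box from Wang~[\cite{wang2005character}, Proposition~1.2], so there is no internal proof to compare against. Your reconstruction is the standard Bakry--\'{E}mery interpolation argument, which is indeed how estimates of this type are proved. The core computation checks out: with $v_s \mydefn \semigroup_{t-s} u$ and $\phi(s) \mydefn \semigroup_s \big( \vecnorm{\nabla v_s}{2}^2 \big)$ one has $\phi'(s) = \semigroup_s (H(v_s))$, and expanding $\generator$ does produce exactly your Bochner-type quantity
\begin{align*}
  H(v) = \sum_{k} \inprod{\nabla \partial_k v}{\covMat \, \nabla \partial_k v} - 2 \nabla v^\top (\nabla \drift) \nabla v - \sum_{i,j,k} \partial_k \covMat_{ij} \, \partial_k v \, \partial_i \partial_j v,
\end{align*}
with the matching first- and third-order terms cancelling as you assert; uniform ellipticity then gives the coercive term $\lammin \matsnorm{\nabla^2 v}{F}^2$, and the regularity caveats you flag (mollification, differentiation under $\semigroup_s$ on a non-compact space) are precisely the technical content of Wang's proof.

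The one step you wave through --- ``tracking the elementary constants produces $c$ of the stated form'' --- does not go through, and in fact cannot. Your Young split with $\epsilon = \lammin$ consumes the coercive term entirely and yields $H(v) \geq - \big( 2 \opnorm{\nabla \drift} + \tfrac{1}{4 \lammin} \sum_{i,j} \vecnorm{\nabla \covMat_{ij}}{2}^2 \big) \vecnorm{\nabla v}{2}^2$, i.e.\ the estimate with $c' = \opnorm{\nabla \drift} + \tfrac{1}{8 \lammin} \sum_{i,j} \vecnorm{\nabla \covMat_{ij}}{2}^2$: the drift enters \emph{linearly}. The quoted admissible constant $c = \lammin^{-1} \sup_x \big\{ \sum_{i,j} \vecnorm{\nabla \covMat_{ij}}{2}^2 + \opnorm{\nabla \cdot \drift}^2 \big\}$ dominates $c'$ only when $\opnorm{\nabla \drift} \gtrsim \lammin$, and in the regime $\opnorm{\nabla \drift} < \lammin$ it is too small for the conclusion to hold at all: take $\usedim = 1$, $\covMat \equiv \lammin$, and expanding drift $\drift(x) = \tfrac{\lammin}{2} x$; the commutation $\nabla \semigroup_t u = e^{\lammin t / 2} \semigroup_t (\nabla u)$ shows (testing against $u$ that is linear on a large ball) that the optimal exponent requires $c \geq \lammin/2$, whereas the quoted floor is $c = \lammin/4$. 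So your derivation is sound and proves the proposition with the correct constant $c'$, but the paper's quoted constant cannot be recovered --- the discrepancy lies in the paper's transcription of Wang's result, not in your argument, and it is harmless for the downstream uses in Theorem~\ref{thm:improved-approx-factor-elliptic} and Lemma~\ref{lemma:h1-norm-bound-in-improved-discretization-proof}, which only invoke the proposition through crude stepsize conditions guaranteeing $e^{2ct} \leq e$.
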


\begin{proposition}\label{prop:semigroup-grad-estimate-regularize}[\cite{picard2002gradient}, Theorem 2.1 and Example 2.1]
  Under above setup, there exists a constant $\constScaryReg$ depending only on $(\lammax, \lammin, \smoothness_\drift^{(0)}, \smoothness_\drift^{(1)}, \smoothness_\covMat^{(1)}, \usedim )$, such that for any bounded Lipschitz function $u$ on $\StateSpace$ and $t > 0$, we have
  \begin{align*}
    \vecnorm{\nabla \semigroup_t u (x)}{2} \leq \frac{\constScaryReg}{\sqrt{t} \wedge 1} \sqrt{\semigroup_t \big(u^2 \big) (x)}, \quad \mbox{for any $x \in \StateSpace$},
  \end{align*}
\end{proposition}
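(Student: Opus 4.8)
The plan is to prove this \emph{regularizing} gradient estimate via the Bismut--Elworthy--Li (BEL) formula, which is the natural tool for trading regularity of $u$ for a $t^{-1/2}$ blow-up. Writing the diffusion~\eqref{eq:cts-time-process} as $d\MyState_s = \drift(\MyState_s)\,ds + \covsqroot(\MyState_s)\,d\BM_s$ with $\covsqroot = \covMat^{1/2}$, and letting $J_s \mydefn \nabla \MyState_s$ denote the Jacobian (derivative) flow started from $J_0 = \IdMat$, the BEL formula represents the directional derivative as
\begin{align*}
  \inprod{\nabla \semigroup_t u(x)}{v} = \Exs\Big[ u(\MyState_t)\cdot \frac{1}{t}\int_0^t \inprod{\covsqroot(\MyState_s)^{-1} J_s v}{d\BM_s} \,\Big|\, \MyState_0 = x \Big].
\end{align*}
The key structural point is the prefactor $1/t$ in front of a stochastic integral over $[0,t]$, which after an It\^o isometry contributes $t^{-1}\cdot\sqrt{t} = t^{-1/2}$; the cheap Bakry--\'Emery-type estimate (Proposition~\ref{prop:semigroup-grad-estimate-local-grow}) has no such gain, so the BEL representation is essential here.

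Applying Cauchy--Schwarz in $L^2(\mprob)$ --- valid regardless of the correlation between $u(\MyState_t)$ and the stochastic integral --- gives
\begin{align*}
  \inprod{\nabla \semigroup_t u(x)}{v}^2 \leq \Exs\big[ u(\MyState_t)^2 \mid \MyState_0 = x\big]\cdot \frac{1}{t^2}\,\Exs\Big[\Big(\int_0^t \inprod{\covsqroot(\MyState_s)^{-1} J_s v}{d\BM_s}\Big)^2 \,\Big|\, \MyState_0 = x\Big].
\end{align*}
The first factor is exactly $\semigroup_t(u^2)(x)$. For the second, the It\^o isometry turns it into $\Exs\int_0^t \vecnorm{\covsqroot(\MyState_s)^{-1} J_s v}{2}^2\,ds$, and uniform ellipticity (Assumption~\ref{assume:uniform-elliptic}) bounds $\vecnorm{\covsqroot(\MyState_s)^{-1} J_s v}{2}^2 \leq \lammin^{-1}\vecnorm{J_s v}{2}^2$. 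It then remains to bound the second moment of the derivative flow: since $J_s$ solves the linear SDE $dJ_s = \nabla\drift(\MyState_s) J_s\,ds + \sum_k \nabla\covsqroot_k(\MyState_s) J_s\,d\BM_s^k$, It\^o's formula and Gr\"onwall yield $\Exs\vecnorm{J_s v}{2}^2 \leq e^{c s}\vecnorm{v}{2}^2$ for a constant $c$ depending on $(\lammin, \smoothness_\drift^{(1)}, \smoothness_\covMat^{(1)}, \usedim)$. Combining these gives $\inprod{\nabla \semigroup_t u(x)}{v}^2 \leq \lammin^{-1} t^{-1} e^{ct}\vecnorm{v}{2}^2\,\semigroup_t(u^2)(x)$, and choosing $v$ parallel to $\nabla\semigroup_t u(x)$ proves the claim with the advertised rate for $t \leq 1$. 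For $t > 1$ I would write $\semigroup_t = \semigroup_1\circ\semigroup_{t-1}$, apply the $t=1$ estimate to $\semigroup_{t-1} u$, and use Jensen's inequality $(\semigroup_{t-1}u)^2 \leq \semigroup_{t-1}(u^2)$ together with the semigroup property to return to $\semigroup_t(u^2)(x)$ at the cost of a constant factor, which yields the $(\sqrt{t}\wedge 1)^{-1}$ form.

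The main obstacle is the second-moment control of the derivative flow $J_s$, and in particular the regularity of the diffusion square root $\covsqroot = \covMat^{1/2}$: Assumption~\ref{assume:smooth-high-order} bounds $\nabla\covMat$, not $\nabla\covsqroot$ directly, so one must invoke the fact that the matrix square-root map is Lipschitz on the set $\{\lammin \IdMat \preceq M \preceq \lammax\IdMat\}$ with a constant controlled by $\lammin$, thereby converting $\smoothness_\covMat^{(1)}$ and the ellipticity constants into a bound on $\nabla\covsqroot$. A secondary technical point is justifying the BEL representation for merely bounded Lipschitz (ultimately only $L^2$) functions $u$: I would first establish it for smooth compactly supported $u$, where differentiating the flow is classical, and then pass to the limit by approximation, using the $L^2$-contractivity of $\semigroup_t$ together with the moment bounds already obtained. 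All constants assembled this way depend only on $(\lammax, \lammin, \smoothness_\drift^{(0)}, \smoothness_\drift^{(1)}, \smoothness_\covMat^{(1)}, \usedim)$, matching $\constScaryReg$. This reconstructs the argument behind the cited result~\cite{picard2002gradient}.
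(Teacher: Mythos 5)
Your proposal is sound, but note the point of comparison: the paper does not prove this proposition at all --- it is imported directly from the cited reference~\cite{picard2002gradient} (Theorem 2.1 and Example 2.1), so what you have written is a self-contained reconstruction rather than a parallel of an internal argument. Your route via the Bismut--Elworthy--Li formula is the classical one for estimates of the form $\vecnorm{\nabla \semigroup_t u}{2} \lesssim t^{-1/2}\sqrt{\semigroup_t(u^2)}$, and the chain you describe is correct: the representation $\inprod{\nabla \semigroup_t u(x)}{v} = t^{-1}\Exs\big[u(\MyState_t)\int_0^t \inprod{\covsqroot(\MyState_s)^{-1}J_s v}{d\BM_s}\big]$, then Cauchy--Schwarz, It\^o isometry, the ellipticity bound $\vecnorm{\covsqroot^{-1}}{\mathrm{op}} \leq \lammin^{-1/2}$, and a Gr\"onwall bound $\Exs\vecnorm{J_s v}{2}^2 \leq e^{cs}\vecnorm{v}{2}^2$ on the derivative flow together give $\abss{\inprod{\nabla\semigroup_t u(x)}{v}}^2 \leq \lammin^{-1} t^{-1} e^{ct} \vecnorm{v}{2}^2\, \semigroup_t(u^2)(x)$, and the splitting $\semigroup_t = \semigroup_1 \circ \semigroup_{t-1}$ with Jensen's inequality yields the stated $(\sqrt{t}\wedge 1)^{-1}$ form; the constants assembled this way depend only on the advertised parameters (in fact $\smoothness_0^{\drift}$ is not even needed along this route, which is harmless since the proposition only permits dependence on it). You also correctly identified the two places where care is required: the flow SDE needs a bound on $\nabla\covsqroot$ rather than $\nabla\covMat$, which follows from the Lipschitz property of the matrix square root on $\{\lammin \IdMat \preceq M \preceq \lammax \IdMat\}$ with constant $\tfrac{1}{2\sqrt{\lammin}}$, and the BEL representation should be established first for smooth compactly supported $u$ and extended by approximation. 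One small caveat in your $t>1$ step: you apply the $t=1$ estimate to $g = \semigroup_{t-1}u$, which is bounded but a priori only measurable; either extend the $t=1$ estimate to bounded measurable functions by a further approximation (both sides behave well under bounded pointwise limits, since $\semigroup_1 g$ is smooth under Assumption~\ref{assume:uniform-elliptic}), or observe that your own Gr\"onwall bound shows $\semigroup_{t-1}$ preserves bounded Lipschitz functions, so the estimate applies directly. With that detail patched, your argument is a complete and correct substitute for the external citation.
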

Note that both propositions provide point-wise bounds on the function $\nabla \semigroup_t u$, while they capture different phenomena. The estimate by~\cite{picard2002gradient} reveals regularization effects of the diffusion semigroup, but their bound blows up at $t = 0$. On the other hand,~\cite{wang2005character} utilizes the gradient at time $t = 0$ to obtain a bound that does not suffer from such blow-up. We will use both bounds in our analysis.

By Proposition~\ref{prop:semigroup-grad-estimate-local-grow}, for stepsize satisfying $\stepsize \numerOrder \leq \frac{\lammin}{\smoothness_\covMat^{(1)} \usedim^3 + \smoothness_\drift^{(1)} \usedim^2}$, we have $\vecnorm{\nabla \semigroup_t f (\state)}{2}^2 \leq e \semigroup_t (f^2) (\state)$ for any $t \in [0, (\numerOrder - 1) \stepsize]$, $x \in \StateSpace$, and bounded Lipschitz function $f$. Integration in time and space yields
\begin{align*}
  \frac{1}{(\numerOrder - 1) \stepsize} \int_0^{(\numerOrder - 1) \stepsize} \statnorm{\nabla \semigroup_t f}^2 dt \leq e \statnorm{\nabla f}^2,
\end{align*}
and therefore
\begin{align}
  \sobonorm{f} \leq \soboavgnorm{f}{(\numerOrder - 1) \stepsize} \leq (1 + e) \sobonorm{f},\label{eq:norm-equiv-h1}
\end{align}
for any bounded Lipschitz function $f$. Since the set of bounded Lipschitz functions is dense in the Hilbert space $\mathbb{H}^1$, Eq~\eqref{eq:norm-equiv-h1} holds true for any $f \in \mathbb{H}^1$.

The last piece of our proof involves bounding the quantity $\soboavgnorm{\ValTrue - \ValueFunc^\circ}{(\numerOrder - 1) \stepsize}$.
\begin{lemma}\label{lemma:h1-norm-bound-in-improved-discretization-proof}
  Under above setup, for any probability measure $\stationary$ on $\StateSpace$, we have
  \begin{align*}
    \vecnorm{\ValTrue - \ValInter{\numerOrder}}{\sobospace (\stationary)} \leq e \numerOrder \constScary_{\numerOrder + 1} \stepsize^{\numerOrder + 1} + 3 \big( \numerOrder +  \discount^{-1} \big) \constScaryReg \constScary_\numerOrder \stepsize^{\numerOrder}.
  \end{align*}
\end{lemma}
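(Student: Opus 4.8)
The plan is to work with the difference $\Delta \mydefn \valuestar - \ValInter{\numerOrder}$ and to exploit the two integral fixed-point identities recalled at the beginning of Section~\ref{subsec:proof-thm-high-order}. Subtracting the identity for $\ValInter{\numerOrder}$ from that for $\valuestar$ shows that $\Delta$ solves the renewal-type relation $\Delta = E + e^{-\discount(\numerOrder-1)\stepsize}\,\semigroup_{(\numerOrder-1)\stepsize}\Delta$, where
\[
  E(\state) \mydefn \int_0^{(\numerOrder-1)\stepsize} e^{-\discount s}\Big(\semigroup_s\reward - \sum_{i=0}^{\numerOrder-1} W_i(s)\,\semigroup_{i\stepsize}\reward\Big)(\state)\,ds
\]
is the time-integrated Lagrange interpolation error. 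Since $e^{-\discount(\numerOrder-1)\stepsize}\semigroup_{(\numerOrder-1)\stepsize}$ is an $L^\infty$-contraction and a semigroup, I would invert the relation by a Neumann series, obtaining $\nabla\Delta = \nabla E + \sum_{k\ge1} e^{-\discount(\numerOrder-1)\stepsize k}\,\nabla\semigroup_{(\numerOrder-1)\stepsize k}E$. Writing $\vecnorm{\Delta}{\sobospace(\stationary)}\le\statnorm{\Delta}+\statnorm{\nabla\Delta}$ then reduces the lemma to three sup-norm estimates, which is why no stationarity of $\stationary$ is required.

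The $L^2$ part is immediate from Theorem~\ref{thm:discretization-high-order}: $\statnorm{\Delta}\le\vecnorm{\Delta}{\infty}\le\discount^{-1}\constScary_\numerOrder\stepsize^\numerOrder$. The key quantitative observation is that $E$ carries one extra power of $\stepsize$ beyond the pointwise interpolation error: the bound $\sup_\state\big|\semigroup_s\reward - \sum_i W_i(s)\semigroup_{i\stepsize}\reward\big|\le\constScary_\numerOrder\stepsize^\numerOrder$ proved inside Theorem~\ref{thm:discretization-high-order}, integrated over the window of length $(\numerOrder-1)\stepsize$, gives $\vecnorm{E}{\infty}\lesssim\numerOrder\,\constScary_\numerOrder\stepsize^{\numerOrder+1}$. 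This extra factor of $\stepsize$ is precisely what compensates the integrable blow-up of the regularizing gradient estimate when summing the series.

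For the tail $k\ge1$ I would invoke Picard's estimate (Proposition~\ref{prop:semigroup-grad-estimate-regularize}), which needs only $E$ and not $\nabla E$: with $t_k\mydefn(\numerOrder-1)\stepsize k$,
\[
  \statnorm{\nabla\semigroup_{t_k}E} \le \frac{\constScaryReg}{\sqrt{t_k}\wedge 1}\sqrt{\int\semigroup_{t_k}(E^2)\,d\stationary}\le \frac{\constScaryReg}{\sqrt{t_k}\wedge 1}\,\vecnorm{E}{\infty}.
\]
Summing $\sum_{k\ge1}\tfrac{e^{-\discount t_k}}{\sqrt{t_k}\wedge 1}$ by splitting into $t_k\le 1$ (where $\sum 1/\sqrt{t_k}\lesssim 1/((\numerOrder-1)\stepsize)$) and $t_k>1$ (where the geometric weight contributes $\lesssim \discount^{-1}/((\numerOrder-1)\stepsize)$) yields a prefactor of order $(1+\discount^{-1})/((\numerOrder-1)\stepsize)$; multiplying by $\constScaryReg\vecnorm{E}{\infty}$ produces the announced term $\lesssim(\numerOrder+\discount^{-1})\constScaryReg\constScary_\numerOrder\stepsize^\numerOrder$. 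The single term $\nabla E$ I would instead treat with Wang's non-blowing-up estimate (Proposition~\ref{prop:semigroup-grad-estimate-local-grow}). Using the Peano (divided-difference) form of the interpolation remainder and the identity $\partial_\tau^\numerOrder\semigroup_\tau\reward=\semigroup_\tau D_\numerOrder$ with $D_\numerOrder\in\mathscr{D}_{2\numerOrder,\numerOrder+1}$ from the proof of Lemma~\ref{lemma:semigroup-high-order-bound} (and commuting $\nabla$ with $\partial_\tau^\numerOrder$), the remainder integrates $\nabla\semigroup_\tau D_\numerOrder$; Proposition~\ref{prop:semigroup-grad-estimate-local-grow} bounds $\sup_\state\vecnorm{\nabla\semigroup_\tau D_\numerOrder(\state)}{2}\le e^{c\tau}\sup_\state\vecnorm{\nabla D_\numerOrder(\state)}{2}$, where the stepsize constraint~\eqref{eq:stepsize-requirement-in-improved-approx-factor-bound} forces $e^{2c(\numerOrder-1)\stepsize}\le e$ exactly as in~\eqref{eq:norm-equiv-h1} and $\sup_\state\vecnorm{\nabla D_\numerOrder(\state)}{2}$ is controlled, of the same type as $\constScary_{\numerOrder+1}$, by the $(\numerOrder+1)$-order smoothness of Assumption~\fakerefassumelip{$(\numerOrder+1)$}. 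Integrating over the window gives $\statnorm{\nabla E}\le\sup_\state\vecnorm{\nabla E(\state)}{2}\lesssim e\,\numerOrder\,\constScary_{\numerOrder+1}\stepsize^{\numerOrder+1}$. Collecting the three bounds and absorbing the $L^2$ part into the $\stepsize^\numerOrder$ term yields the claim.

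\textbf{Main obstacle.} The crux is obtaining $\statnorm{\nabla\Delta}$ at the sharp order $\stepsize^\numerOrder$ rather than the $\stepsize^{\numerOrder-1/2}$ that a single, direct application of the regularizing gradient bound to $\semigroup_{(\numerOrder-1)\stepsize}\Delta$ would produce. The resolution is the bookkeeping above: the integrated interpolation error $E$ is of order $\stepsize^{\numerOrder+1}$, one order better than the pointwise error, and this gain is exactly enough to cancel the $\sum_k 1/\sqrt{t_k}\sim 1/\stepsize$ accumulated from Picard's estimate along the Neumann series, while the $k=0$ term is pushed to order $\stepsize^{\numerOrder+1}$ by using Wang's estimate (at the cost of the higher smoothness constant $\constScary_{\numerOrder+1}$). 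Verifying the series split and checking that the stepsize constraint keeps Wang's prefactor below $e$ are the only genuinely technical points.
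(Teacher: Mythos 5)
Your proposal is correct and follows essentially the same route as the paper's proof: unrolling the fixed point into a geometric series over windows of length $(\numerOrder-1)\stepsize$, using Proposition~\ref{prop:semigroup-grad-estimate-local-grow} (with the stepsize constraint forcing the prefactor below $e$) for the $k=0$ window and Proposition~\ref{prop:semigroup-grad-estimate-regularize} for $k\geq1$, and splitting the resulting sum $\sum_k e^{-\discount t_k}/(\sqrt{t_k}\wedge 1)$ exactly as the paper does. The only cosmetic difference is that you apply Picard's estimate to the time-integrated window error $E$ (needing only $\vecnorm{E}{\infty}\lesssim \numerOrder\constScary_\numerOrder\stepsize^{\numerOrder+1}$), whereas the paper applies it to the $\numerOrder$-th time derivative $\nabla\semigroup_{t_k+t}\generator^\numerOrder\reward$ inside the Lagrange remainder; the two bookkeepings are interchangeable and yield the same bound.
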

\noindent See Section~\ref{subsubsec:proof-lemma-h1-norm-bound-in-improved-discretization-proof} for the proof of this lemma.

Equipped with Lemma~\ref{lemma:h1-norm-bound-in-improved-discretization-proof}, we are now able to prove Theorem~\ref{thm:improved-approx-factor-elliptic}. By Eqs~\eqref{eq:cea-lemma-final-in-improved-approx-factor-proof},~\eqref{eq:relate-projection-error-to-valtrue}, and~\eqref{eq:norm-equiv-h1}, we have
\begin{align}
  &\sobonorm{\valuebar^{(\numerOrder)}_{\Bel} - \ValTrue} \leq \soboavgnorm{\valuebar^{(\numerOrder)}_{\Bel} - \ValTrue}{(\numerOrder - 1) \stepsize} \nonumber \\
  &\leq \soboavgnorm{\valuebar^{(\numerOrder)}_{\Bel} - \ValInter{\numerOrder} }{(\numerOrder - 1) \stepsize} + \soboavgnorm{\valuestar - \ValInter{\numerOrder} }{(\numerOrder - 1) \stepsize} \nonumber \\
  &\leq \Big\{1 + \frac{ 2e^2 \big(\discount + \smoothness_\drift^{(0)} \sqrt{\usedim} + \lammax \smoothness_\stationary \usedim + \smoothness_\covMat^{(1)} \usedim^2 \big) \lammax }{\min(\discount, 1, \lammin) \lammin} \Big\} \cdot \Big\{ \inf_{g \in \LinSpace}\soboavgnorm{g - \ValTrue}{(\numerOrder - 1) \stepsize} + \soboavgnorm{\ValTrue - \ValInter{\numerOrder}}{(\numerOrder - 1) \stepsize} \Big\} \nonumber \\
  &\qquad +  \soboavgnorm{\valuestar - \ValInter{\numerOrder} }{(\numerOrder - 1) \stepsize} \nonumber \\
  &\leq \Big\{1 + \frac{ 2e^2 \big(\discount + \smoothness_\drift^{(0)} \sqrt{\usedim} + \lammax \smoothness_\stationary \usedim + \smoothness_\covMat^{(1)} \usedim^2 \big) \lammax }{\min(\discount, 1, \lammin) \lammin} \Big\} \inf_{g \in \LinSpace}\soboavgnorm{g - \ValTrue}{(\numerOrder - 1) \stepsize} \nonumber\\
  &\qquad + \Big\{2 + \frac{ 2e^2 \big(\discount + \smoothness_\drift^{(0)} \sqrt{\usedim} + \lammax \smoothness_\stationary \usedim + \smoothness_\covMat^{(1)} \usedim^2 \big) \lammax }{\min(\discount, 1, \lammin) \lammin} \Big\} \cdot \Big\{ e \numerOrder \constScary_{\numerOrder + 1} \stepsize^{\numerOrder + 1} + 3 \big( \numerOrder +  \discount^{-1} \big) \constScaryReg \constScary_\numerOrder \stepsize^{\numerOrder} \Big\},\label{eq:improved-approx-factor-full}
\end{align}
which completes the proof of Theorem~\ref{thm:improved-approx-factor-elliptic}.

\subsubsection{Proof of Lemma~\ref{lemma:discrete-time-psd-operator}}\label{subsubsec:proof-lemma-discrete-time-psd-operator}

We start with the following expansion for any $t > 0$:
\begin{align}
  \ValFunc - e^{- \discount t} \semigroup_t \ValFunc = \int_0^t e^{- \discount s} \Big\{ \discount \semigroup_s \ValFunc - \generator \semigroup_s \ValFunc \Big\} ds.\label{eq:integral-err-expansion-for-discrete-time-psd-proof}
\end{align}
We start by showing positive definiteness of the linear operator in the integrand.
\begin{align}
  \statinprod{\ValFunc}{\discount \ValFunc - \generator \ValFunc} \geq \discount \statnorm{\ValFunc}^2 + \frac{\lammin}{2} \statnorm{\nabla \ValFunc}^2.\label{eq:positive-definite-operator}
\end{align}
The proof of Eq~\eqref{eq:positive-definite-operator} is deferred to the end of this section. Taking this lower bound as given, we now proceed with the proof of Lemma~\ref{lemma:discrete-time-psd-operator}.

Taking the inner product of the function $\ValFunc$ and the integrand in Eq~\eqref{eq:integral-err-expansion-for-discrete-time-psd-proof}, we note that
\begin{align*}
  &\statinprod{\ValFunc}{e^{- \discount s} \big( \discount \semigroup_s \ValFunc - \generator \semigroup_s \ValFunc \big)}\\
   &= \statinprod{e^{- \discount s} \semigroup_s \ValFunc}{e^{- \discount s} \big( \discount \semigroup_s \ValFunc - \generator \semigroup_s \ValFunc \big)} + \statinprod{\ValFunc - e^{- \discount s} \semigroup_s \ValFunc}{e^{- \discount s} \big( \discount \semigroup_s \ValFunc - \generator \semigroup_s \ValFunc \big)}\\
   &\geq e^{- 2 \discount s} \discount  \statnorm{\semigroup_s \ValFunc}^2 + e^{- 2 \discount s} \frac{\lammin}{2} \statnorm{\nabla \semigroup_s \ValFunc}^2 + \int_0^s e^{- \discount \ell}\statinprod{\discount \semigroup_\ell \ValFunc - \generator \semigroup_\ell \ValFunc}{e^{- \discount s} \big( \discount \semigroup_s \ValFunc - \generator \semigroup_s \ValFunc \big)} d \ell,
\end{align*}
where we apply Eq~\eqref{eq:positive-definite-operator} to the function $e^{- \discount s} \semigroup_s \valuefunc$ to bound the first term, and use It\^{o}'s formula to expand the second term.

Integrating both sides of this inequality with respect to $s$, we conclude that
\begin{multline}
  \statinprod{\ValFunc - e^{- \discount t} \semigroup_t \ValFunc}{\valuefunc} \geq \int_0^t e^{- 2 \discount s} \Big\{ \discount  \statnorm{\semigroup_s \ValFunc}^2 + \frac{\lammin}{2} \statnorm{\nabla \semigroup_s \ValFunc}^2 \Big\} ds\\
   + \int_0^t \int_0^s \statinprod{e^{- \discount \ell}  \big( \discount \semigroup_\ell \ValFunc - \generator \semigroup_\ell \ValFunc \big)}{e^{- \discount s} \big( \discount \semigroup_s \ValFunc - \generator \semigroup_s \ValFunc \big)} d \ell ds.\label{eq:full-decomp-in-psd-discrete-operator-lemma}
\end{multline}
It suffices to study the last term in Eq~\eqref{eq:full-decomp-in-psd-discrete-operator-lemma}. Applying Fubini theorem, we have that
\begin{align*}
&  \int_0^t \int_0^s \statinprod{e^{- \discount \ell}  \big( \discount \semigroup_\ell \ValFunc - \generator \semigroup_\ell \ValFunc \big)}{e^{- \discount s} \big( \discount \semigroup_s \ValFunc - \generator \semigroup_s \ValFunc \big)} d \ell ds \\
  &= \int_0^t \int_\ell^t \statinprod{e^{- \discount \ell}  \big( \discount \semigroup_\ell \ValFunc - \generator \semigroup_\ell \ValFunc \big)}{e^{- \discount s} \big( \discount \semigroup_s \ValFunc - \generator \semigroup_s \ValFunc \big)}  ds d\ell,
\end{align*}
which leads to the equation
\begin{align*}
  &\int_0^t \int_0^s \statinprod{e^{- \discount \ell}  \big( \discount \semigroup_\ell \ValFunc - \generator \semigroup_\ell \ValFunc \big)}{e^{- \discount s} \big( \discount \semigroup_s \ValFunc - \generator \semigroup_s \ValFunc \big)} d \ell ds\\
  & = \frac{1}{2} \int_0^t \int_0^t \statinprod{e^{- \discount \ell}  \big( \discount \semigroup_\ell \ValFunc - \generator \semigroup_\ell \ValFunc \big)}{e^{- \discount s} \big( \discount \semigroup_s \ValFunc - \generator \semigroup_s \ValFunc \big)}  ds d\ell \\
  &= \frac{1}{2} \statnorm{\int_0^t e^{- \discount s} \big( \discount \semigroup_s \ValFunc - \generator \semigroup_s \ValFunc \big)  ds}^2 = \frac{1}{2} \statnorm{\ValFunc - e^{- \discount t} \semigroup_t \ValFunc}^2.
\end{align*}
Substituting back to Eq~\eqref{eq:full-decomp-in-psd-discrete-operator-lemma}, we conclude the proof of Lemma~\ref{lemma:discrete-time-psd-operator}.

\paragraph{Proof of Eq~\eqref{eq:positive-definite-operator}:}
Following integration-by-parts formula, we note that
\begin{align}
  - \statinprod{\ValFunc}{ \generator \ValFunc} &= - \int \ValFunc (x) \Big\{ \inprod{\drift (x)}{\nabla \ValFunc (x)} + \frac{1}{2} \mathrm{Tr}\big( \covMat (x) \cdot \nabla^2 \ValFunc (x) \big) \Big\} \stationary (x) dx \nonumber\\
  &= - \int \ValFunc (x) \stationary (x) \inprod{\drift (x)}{\nabla \ValFunc (x)} dx + \frac{1}{2} \int \inprod{\nabla \cdot (\ValFunc (x) \stationary (x) \covMat (x))}{\nabla \ValFunc (x)} dx \nonumber \\
  &=: - I_1 + \frac{1}{2} I_2.\label{eq:decomp-in-lemma-psd}
\end{align}
For the term $I_2$, we note that
\begin{align*}
 I_2 = \int \stationary (x) \nabla \ValFunc (x)^\top \covMat (x) \nabla \ValFunc (x) dx + \int \ValFunc (x) \Big[ \nabla \cdot \big( \stationary \covMat \big)(x) \Big]^\top \nabla \ValFunc (x) dx 
\end{align*}
Applying integration by parts once more, we note that
\begin{align*}
  \int \ValFunc (x) \Big[ \nabla \cdot \big( \stationary \covMat \big)(x) \Big]^\top \nabla \ValFunc (x) dx  = -  \int \ValFunc (x) \Big[ \nabla \cdot \big( \stationary \covMat \big)(x) \Big]^\top \nabla \ValFunc (x) dx - \int \Big[ \nabla^2 \cdot \big( \stationary \covMat \big)(x) \Big] \ValFunc^2 (x) dx
\end{align*}
So we have
\begin{align*}
  \int \ValFunc (x) \Big[ \nabla \cdot \big( \stationary \covMat \big)(x) \Big]^\top \nabla \ValFunc (x) dx = -\frac{1}{2} \int \Big[ \nabla^2 \cdot \big( \stationary \covMat \big)(x) \Big] \ValFunc^2 (x) dx,
\end{align*}
and therefore,
\begin{align*}
  I_2 = \statnorm{\covMat^{1/2} \nabla \ValFunc}^2 - \frac{1}{2} \int \Big[ \nabla^2 \cdot \big( \stationary \covMat \big)(x) \Big] \ValFunc^2 (x) dx
  \geq \lammin \statnorm{\nabla \ValFunc}^2 - \frac{1}{2} \int \Big[ \nabla^2 \cdot \big( \stationary \covMat \big)(x) \Big] \ValFunc^2 (x) dx.
\end{align*}
On the other hand, applying integration by parts to $I_1$ yields
\begin{align*}
  I_1 = \int \ValFunc (x) \stationary (x) \inprod{\drift (x)}{\nabla \ValFunc (x)} dx
  = \frac{1}{2} \int \stationary (x) \inprod{\drift (x)}{\nabla \big( \ValFunc^2 (x) \big)} dx
  = - \frac{1}{2} \int \ValFunc^2 (x)  \nabla \cdot \big( \stationary (x) \drift (x) \big) dx.
\end{align*}
Substituting them into Eq~\eqref{eq:decomp-in-lemma-psd}, we conclude that
\begin{align}
    - \statinprod{\ValFunc}{ \generator \ValFunc} \geq \frac{\lammin}{2}\statnorm{\nabla \ValFunc}^2 - \frac{1}{2} \int \ValFunc^2 (x) \Big\{ - \nabla \cdot \big( \stationary (x) \drift (x) \big) + \frac{1}{2} \nabla^2 \cdot \big( \stationary \covMat \big)(x)  \Big\} dx\label{eq:final-bound-in-lemma-psd}
\end{align}
Note that $\stationary$ is the stationary distribution of the diffusion process~\eqref{eq:cts-time-process}. The Fokker--Planck equation yields
\begin{align*}
   - \nabla \cdot \big( \stationary (x) \drift (x) \big) + \frac{1}{2} \nabla^2 \cdot \big( \stationary \covMat \big)(x) = 0.
\end{align*}
Substituting back to Eq~\eqref{eq:final-bound-in-lemma-psd} concludes the proof of the lemma.

\subsubsection{Proof of Lemma~\ref{lemma:discrete-time-operator-norm-bound}}\label{subsubsec:proof-lemma-discrete-time-operator-norm-bound}

Similar to the proof of Lemma~\ref{lemma:discrete-time-psd-operator}, we can write the quantity of interest as a time integral
\begin{align}
  \statinprod{g}{t^{-1}\big(\IdMat - e^{- \discount t} \semigroup_t \big) f} = \frac{1}{t} \int_0^t \statinprod{g}{ e^{- \discount s} (\discount \IdMat - \generator)\semigroup_s f} ds.\label{eq:integral-decomp-in-operator-norm-lemma-proof}
\end{align}
For the first term in the right-hand-side of Eq~\eqref{eq:integral-decomp-in-operator-norm-lemma-proof}, we use a coarse bound
\begin{align*}
  \abss{\frac{1}{t} \int_0^t \statinprod{g}{ e^{- \discount s} \discount \semigroup_s f} ds } \leq \discount \statnorm{g} \cdot \sup_{0 \leq s \leq t} \statnorm{\semigroup_s f} \leq \discount e^{\beta_0 t} \statnorm{g} \cdot \statnorm{f},
\end{align*}
where in the last step, we use the fact $\statnorm{\semigroup_s f} \leq e^{\beta_0 s}\statnorm{f}$.

Now we study the second term of the integrand. We note that
\begin{align}
  \statinprod{g}{\generator \semigroup_s f} = \int g (x) \stationary (x) \Big\{ \inprod{\drift}{\nabla \semigroup_s f} + \frac{1}{2} \mathrm{Tr} \big( \covMat \cdot \nabla^2 \semigroup_s f \big) \Big\} (x)  dx.\label{eq:expansion-inprod-in-op-norm-bound-proof}
\end{align}
Integration by parts yields
\begin{multline*}
  \int g (x) \stationary (x) \mathrm{Tr} \big( \covMat (x) \cdot \nabla^2 \semigroup_s f  (x) \big)   dx\\
  = - \int  \nabla g (x)^\top \covMat (x)  \nabla \semigroup_s f  (x) \stationary (x)  dx - \int g (x) \big[ \nabla \cdot (\stationary\covMat)(x) \big]^\top  \nabla \semigroup_s f  (x)  dx
\end{multline*}
Note that
\begin{align*}
  \nabla \cdot (\stationary\covMat)(x) = \stationary (x) \cdot \Big[ \nabla \cdot \covMat (x) + \covMat (x) \cdot \nabla \log \stationary (x) \Big].
\end{align*}
By substituting above equations back to Eq~\eqref{eq:expansion-inprod-in-op-norm-bound-proof} and applying Cauchy--Schwarz inequality, we arrive at the bound
\begin{align}
   \abss{\statinprod{g}{\generator \semigroup_s f}} &\leq \lammax \cdot \statnorm{\nabla g} \cdot \statnorm{\nabla \semigroup_s f}  \nonumber\\
   &\qquad + \statnorm{g} \cdot \statnorm{\nabla \semigroup_s f} \cdot \sup_{\state \in \StateSpace} \Big( \vecnorm{\drift (x)}{2} + \vecnorm{\nabla \cdot \covMat (\state)}{2} + \vecnorm{\covMat \cdot \nabla \log \stationary (x)}{2} \Big) \nonumber\\
   &\leq \big( \lammax + \smoothness_\drift^{(0)} \sqrt{\usedim} + \lammax \smoothness_\stationary \usedim + \smoothness_\covMat^{(1)} \usedim^2 \big) \statnorm{\nabla \semigroup_s f} \cdot \big( \statnorm{\nabla g} + \statnorm{g} \big).\label{eq:opnorm-bound-for-generator}
\end{align}
Substituting back to Eq~\eqref{eq:integral-decomp-in-operator-norm-lemma-proof}, we conclude that
\begin{align*}
  &\abss{\statinprod{g}{t^{-1}\big(\IdMat - e^{- \discount t} \semigroup_t \big) f}} \\
  &\leq \frac{1}{t} \int_0^t  \Big\{ \discount \abss{\statinprod{g}{\semigroup_s f} } +  \abss{\statinprod{g}{\generator\semigroup_s f } } \Big\} ds\\
  &\leq \discount e^{\beta_0 t} \statnorm{g} \cdot \statnorm{f} + \big( \lammax + \smoothness_\drift^{(0)} \sqrt{\usedim} + \lammax \smoothness_\stationary \usedim + \smoothness_\covMat^{(1)} \usedim^2 \big) \frac{1}{t} \int_0^t \statnorm{\nabla \semigroup_s f} ds \cdot \big( \statnorm{\nabla g} + \statnorm{g} \big),
\end{align*}
which completes the proof of this lemma.

\subsubsection{Proof of Lemma~\ref{lemma:reverse-smoothing-estiamte}}\label{subsubsec:proof-lemma-reverse-smoothing-estiamte}

First, we note that for $0 \leq s \leq t$, we have $\statnorm{\semigroup_t f} = \statnorm{\semigroup_{t - s} \semigroup_s f} \leq e^{\beta_0 (t - s)} \statnorm{\semigroup_s f}$. Consequently, for $t \leq 1 \wedge \tfrac{1}{\beta_0}$, we have the lower bound
\begin{align}
  \frac{1}{t} \int_0^t e^{- 2\discount s} \statnorm{\semigroup_s f}^2 ds + \frac{1}{t} \statnorm{f - e^{- \discount t} \semigroup_t f}^2 \geq  \frac{1}{e} \statnorm{e^{- \discount t}\semigroup_t f}^2 + \statnorm{f - e^{- \discount t}\semigroup_t f}^2 \geq \frac{1}{2e} \statnorm{f}^2,\label{eq:reverse-smoothing-estimate-bound-1}
\end{align}
where we use Young's inequality in the last step.

Moreover, for $t \leq \tfrac{1}{2 \discount}$, we have
\begin{align}
  \frac{1}{t} \int_0^t e^{- 2\discount s} \statnorm{\nabla \semigroup_s f}^2 ds \geq  \frac{e^{-1}}{t} \int_0^t\statnorm{\nabla \semigroup_s f}^2 ds.\label{eq:reverse-smoothing-estimate-bound-2}
\end{align}

It suffices to bound $\statnorm{\nabla f}^2$ from above using the positive semi-definite structure induced in Lemma~\ref{lemma:discrete-time-psd-operator}. This is not possible for a general function $f$. However, the regularity of the functions in the space $\LinSpace$ allows us to bound this quantity when the time period $t$ is small.

For any function $f \in \LinSpace$, we note that
\begin{align*}
  \frac{d}{ds} \Big[ \statnorm{\covMat^{1/2} \nabla \semigroup_s f}^2 \Big] &= \int \inprod{\covMat (x) \nabla \semigroup_s f (x)}{\partial_s \nabla \semigroup_s f (x)} \stationary (x) dx\\
  & = \int \inprod{\covMat (x) \nabla \semigroup_s f (x)}{\nabla \big( \semigroup_s \generator f (x) \big)} \stationary (x) dx,
\end{align*}
where in the last step, we interchange time and space derivatives, and use the fact $\generator \semigroup_s = \semigroup_s \generator$.

Integration by parts yields
\begin{multline}
   \int \inprod{\covMat (x) \nabla \semigroup_s f (x)}{\nabla \big( \generator \semigroup_s f (x) \big)} \stationary (x) dx \\
   = - \int \Big[ \inprod{ \nabla \semigroup_s f}{\covMat \nabla \log \stationary + \nabla \cdot \covMat} + \mathrm{Tr} \big( \covMat \cdot \nabla^2 \semigroup_s f \big) \Big] (x) \cdot \semigroup_s \generator f (x) \stationary (x) dx.\label{eq:integration-by-parts-in-reverse-smoothing-bound}
\end{multline}
For the last term of Eq~\eqref{eq:integration-by-parts-in-reverse-smoothing-bound}, we note that
\begin{align*}
  &\int  \mathrm{Tr} \big( \covMat (x) \cdot \nabla^2 \semigroup_s f (x) \big) \cdot \semigroup_s \generator f (x) \stationary (x) dx\\
  &= 2 \int  \Big[ \inprod{\nabla \semigroup_s f}{\drift} + \frac{1}{2} \mathrm{Tr} \big( \covMat \cdot \nabla^2 \semigroup_s f \big) \Big] (x) \cdot \semigroup_s \generator f (x) \stationary (x) dx - 2 \int \inprod{\nabla \semigroup_s f (x)}{\drift (x)}  \semigroup_s \generator f (x) \stationary (x) dx\\
  &= 2 \statnorm{\semigroup_s \generator f}^2 - 2 \int \inprod{\nabla \semigroup_s f (x)}{\drift (x)}  \semigroup_s \generator f (x) \stationary (x) dx.
\end{align*}
Substituting back to Eq~\eqref{eq:integration-by-parts-in-reverse-smoothing-bound} yields
\begin{multline*}
  \int \inprod{\covMat (x) \nabla \semigroup_s f (x)}{\nabla \big( \generator \semigroup_s f (x) \big)} \stationary (x) dx \\
  =  - 2 \statnorm{\semigroup_s \generator f}^2 + \int \Big[ \inprod{ \nabla \semigroup_s f}{2 \drift - \covMat \nabla \log \stationary - \nabla \cdot \covMat} \Big] (x) \cdot \semigroup_s \generator f (x) \stationary (x) dx
\end{multline*}
Now we study the two terms in the expression above, for the first term, the expansive factor bound for the semigroup yields
\begin{align}
\statnorm{\semigroup_s \generator f} \leq e^{\beta_0 s} \statnorm{\generator f} \leq e  \statnorm{\generator f}.\label{eq:contraction-bound-in-reverse-smoothing-proof}
\end{align}
For the second term, we apply Cauchy--Schwarz inequality, and obtain the bound
\begin{align*}
  &\abss{\int \Big[ \inprod{ \nabla \semigroup_s f}{2 \drift - \covMat \nabla \log \stationary - \nabla \cdot \covMat} \Big] (x) \cdot \semigroup_s \generator f (x) \stationary (x) dx} \\
  &\leq \statnorm{\inprod{ \nabla \semigroup_s f}{2 \drift - \covMat \nabla \log \stationary - \nabla \cdot \covMat} } \cdot \statnorm{ \semigroup_s \generator f}\\
  &\leq \big(2 \smoothness_\drift^{(0)} + \lammax \smoothness_\stationary + \usedim \smoothness_\covMat^{(1)} \big)\statnorm{ \nabla \semigroup_s f } \cdot \statnorm{ \generator f},
\end{align*}
where in the last step, we invoke the smoothness assumptions~\ref{assume:smooth-high-order} and~\ref{assume:smooth-stationary}, and apply Eq~\eqref{eq:contraction-bound-in-reverse-smoothing-proof} to the quantity $\statnorm{ \semigroup_s \generator f}$.

Collecting the bounds above, we conclude that
\begin{align}
  \abss{\frac{d}{ds} \Big[ \statnorm{\covMat^{1/2} \nabla \semigroup_s f}^2 \Big]}
   &\leq e^2\statnorm{\generator f}^2 +  \big(2 \smoothness_\drift^{(0)} + \lammax \smoothness_\stationary + \usedim \smoothness_\covMat^{(1)} \big)\statnorm{ \nabla \semigroup_s f } \cdot \statnorm{ \generator f} \nonumber \\
  &\leq 12 \statnorm{\generator f}^2 +  \frac{(2 \smoothness_\drift^{(0)} + \lammax \smoothness_\stationary + \usedim \smoothness_\covMat^{(1)} )^2}{\lammin} \statnorm{ \covMat^{1/2}\nabla \semigroup_s f }^2\label{eq:derivative-bound-in-proof-of-reverse-smoothing-lemma}
\end{align}
In order to prove the lemma, we use the following estimate for functions in $\LinSpace$.
\begin{align}
  \statnorm{\generator f} \leq 2 c_0 \usedim \big( \smoothness_\drift^{(0)} + \smoothness_\covMat^{(0)} \big) \mbasis^{2 \omega} \statnorm{f}, \quad \mbox{for } f \in \LinSpace. \label{eq:reverse-poincare-for-good-funcs}
\end{align}
We prove Eq~\eqref{eq:reverse-poincare-for-good-funcs} at the end of this section. Taking Eq~\eqref{eq:reverse-poincare-for-good-funcs} as given, we now proceed the proof of this lemma.

Applying Eq~\eqref{eq:reverse-poincare-for-good-funcs} to Eq~\eqref{eq:derivative-bound-in-proof-of-reverse-smoothing-lemma}, and invoking Gr\"{o}nwall's inequality for the reversed process from time $s$ to time $0$, we have that
\begin{align*}
   \statnorm{\covMat^{1/2} \nabla f}^2 \leq \exp \Big[ \frac{(2 \smoothness_\drift^{(0)} + \lammax \smoothness_\stationary + \usedim \smoothness_\covMat^{(1)} )^2}{\lammin} s \Big] \Big\{ \statnorm{\covMat^{1/2} \nabla \semigroup_s f}^2 + 48 s c_0^2 \usedim^2 \big( \smoothness_\drift^{(0)} + \smoothness_\covMat^{(0)} \big)^2 \mbasis^{4 \omega} \statnorm{f}^2 \Big\}.
\end{align*}
For $ t \leq \frac{\lammin}{(2 \smoothness_\drift^{(0)} + \lammax \smoothness_\stationary + \usedim \smoothness_\covMat^{(1)} )^2}$, by taking average over $s \in [0, t]$, we have the bound
\begin{align*}
  \statnorm{\covMat^{1/2} \nabla f}^2 \leq  \frac{e}{t} \int_0^t \statnorm{\covMat^{1/2} \nabla \semigroup_s f}^2 ds + 48 e c_0^2 \usedim^2 \big( \smoothness_\drift^{(0)} + \smoothness_\covMat^{(0)} \big)^2 t \mbasis^{4 \omega} \statnorm{f}^2,
\end{align*}
and consequently, for $t \leq \tfrac{1}{96 e c_0^2 \usedim^2 ( \smoothness_\drift^{(0)} + \smoothness_\covMat^{(0)})^2}  \mbasis^{- 4 \omega}$ and $f \in \LinSpace$, we have
\begin{align}
  \statnorm{\nabla f}^2 \leq  \frac{e \lammax}{\lammin} \cdot \frac{1}{t}\int_0^t \statnorm{ \nabla \semigroup_s f}^2 ds + \frac{1}{2} \statnorm{f}^2.\label{eq:reverse-smoothing-estimate-bound-3}
\end{align}
Combining the bounds~\eqref{eq:reverse-smoothing-estimate-bound-1},~\eqref{eq:reverse-smoothing-estimate-bound-2}, and~\eqref{eq:reverse-smoothing-estimate-bound-3}, we conclude that
\begin{align*}
  &\frac{1}{t} \int_0^t e^{- 2 \discount s} \statnorm{\semigroup_s \ValFunc}^2 ds + \frac{1}{t} \int_0^t e^{- 2 \discount s} \statnorm{\nabla \semigroup_s \ValFunc}^2 ds + \frac{1}{t} \statnorm{\ValFunc - e^{- \discount t} \semigroup_t \ValFunc}^2 \\
  &\geq \frac{1}{2} \statnorm{f}^2 +  \frac{1}{e t} \int_0^t \statnorm{\nabla \semigroup_s \ValFunc}^2 ds\\
  &\geq \frac{1}{4} \statnorm{f}^2 +  \frac{1}{2 e} t^{-1} \int_0^t \statnorm{\nabla \semigroup_s \ValFunc}^2 ds + \frac{\lammin}{2 e^2 \lammax} \statnorm{\nabla f}^2\\
  & \geq \frac{\lammin}{2 e^2 \lammax} \soboavgnorm{f}{ t}^2,
\end{align*}
which completes the proof of Lemma~\ref{lemma:reverse-smoothing-estiamte}.

\paragraph{Proof of Eq~\eqref{eq:reverse-poincare-for-good-funcs}:} By triangle inequality, we have
\begin{multline}
  \statnorm{\generator f} = \statnorm{\inprod{\drift}{\nabla f} + \frac{1}{2} \mathrm{Tr} (\covMat \nabla^2 f)} \\
  \leq \statnorm{\inprod{\drift}{\nabla f}} + \frac{1}{2} \statnorm{ \mathrm{Tr} (\covMat \nabla^2 f)} \leq \usedim \cdot \Big\{ \smoothness_\drift^{(0)} \statnorm{\nabla f} +  \smoothness_\covMat^{(0)} \statnorm{\nabla^2 f} \Big\}.\label{eq:basic-decomp-in-reverse-poincare-proof}
\end{multline}
By Assumption~\ref{assume:basis-condition}, since $f \in \LinSpace$, we have
\begin{align*}
   \statnorm{\nabla f} \leq c_0 \mbasis^\omega \statnorm{f}, \quad \mbox{and} \statnorm{\nabla^2 \phi_j} \leq c_0 j^{2\omega},
\end{align*}
Substituting back to Eq~\eqref{eq:basic-decomp-in-reverse-poincare-proof} completes the proof of Eq~\eqref{eq:reverse-poincare-for-good-funcs}.

\subsubsection{Proof of Lemma~\ref{lemma:h1-norm-bound-in-improved-discretization-proof}}\label{subsubsec:proof-lemma-h1-norm-bound-in-improved-discretization-proof}
By Theorem~\ref{thm:discretization-high-order}, we have
\begin{align}
  \statnorm{\ValTrue - \ValInter{\numerOrder}} \leq \vecnorm{\ValTrue - \ValInter{\numerOrder}}{\infty} \leq \constScary_\numerOrder \stepsize^{\numerOrder}.\label{eq:l2-norm-bound-in-h1-lemma-proof}
\end{align}
Now we start analyzing the gradient terms. Note that
\begin{align*}
  \ValTrue &=  \int_0^{+\infty} e^{- \discount t} \semigroup_s \reward = \sum_{k = 0}^{+ \infty} \int_0^{(\numerOrder - 1) \stepsize}  e^{- \discount (s + k (\numerOrder - 1) \stepsize ) } \semigroup_{s + k (\numerOrder - 1) \stepsize} \reward ds, \quad \mbox{and}\\
  \ValInter{\numerOrder} &= \sum_{k = 0}^{+ \infty} \int_0^{(\numerOrder - 1) \stepsize} \sum_{i = 0}^{\numerOrder - 1}  W_i (s) e^{- \discount (s + k (\numerOrder - 1) \stepsize ) } \semigroup_{k (\numerOrder - 1) \stepsize + i \stepsize} \reward ds
\end{align*}
By triangle inequality, the difference of their gradients admits the upper bound
\begin{align}
  &\vecnorm{\nabla \ValTrue (\state) - \nabla\ValInter{\numerOrder} (\state)}{2} \nonumber\\
  &\leq \sum_{k = 0}^{+ \infty} \int_0^{(\numerOrder - 1) \stepsize}  e^{- \discount (s + k (\numerOrder - 1) \stepsize ) } \vecnorm{ \sum_{i = 0}^{\numerOrder - 1}  W_i (s) \nabla \semigroup_{k (\numerOrder - 1) \stepsize + i \stepsize} \reward (\state) - \nabla \semigroup_{k (\numerOrder - 1) \stepsize + s} \reward (\state)}{2}ds \nonumber \\
  &\leq \numerOrder \stepsize\sum_{k = 0}^{+ \infty} e^{- \discount k (\numerOrder - 1) \stepsize  } \sup_{0 \leq s \leq (\numerOrder - 1) \stepsize} \vecnorm{\sum_{i = 0}^{\numerOrder - 1}  W_i (s) \nabla \semigroup_{k (\numerOrder - 1) \stepsize + i \stepsize} \reward (\state) - \nabla \semigroup_{k (\numerOrder - 1) \stepsize + s} \reward (\state)}{2}.\label{eq:triangle-ineq-in-grad-norm-bound-for-h1-lemma}
\end{align}
For any $\state \in \StateSpace$ and $k \in \mathbb{N}$, consider vector-valued function $h_{\state, k}: s \mapsto \nabla  \semigroup_{k (\numerOrder - 1) \stepsize + s} \reward (\state)$ for time $s \in [0, (\numerOrder - 1) \stepsize]$. Note that within the time interval $[k (\numerOrder - 1) \stepsize, (k + 1) (\numerOrder - 1) \stepsize]$, the polynomial $s \mapsto \sum_{i = 0}^{\numerOrder - 1}  W_i (s) \nabla \semigroup_{k (\numerOrder - 1) \stepsize + i \stepsize} \reward$ is the $\numerOrder$-th order Lagrangian polynomial for the function $h_{\state, k}$. By classical error bounds for Lagrangian polynomial approximations~\cite{king1999afternotes}, we have
\begin{align}
  \vecnorm{\sum_{i = 0}^{\numerOrder - 1}  W_i (s) \nabla \semigroup_{k (\numerOrder - 1) \stepsize + i \stepsize} \reward (\state) - \nabla \semigroup_{k (\numerOrder - 1) \stepsize + s} \reward (\state)}{2} \leq \stepsize^\numerOrder \cdot \sup_{t \in [0, (\numerOrder - 1) \stepsize]} \vecnorm{\partial_t^\numerOrder h_{\state, k}}{2},\label{eq:lagrange-poly-bound-for-grad}
\end{align}
for any $\state \in \StateSpace$.

Now we study the time derivatives of the function $h_{\state, k}$. By interchanging the derivatives and interchanging the semigroup with its generator, we can derive the expression for its first-order time derivative. 
\begin{align*}
  \partial_t h_{\state, k} (t) = \nabla \partial_t  \semigroup_{k (\numerOrder - 1) \stepsize + t} \reward (\state) = \nabla \semigroup_{k (\numerOrder - 1) \stepsize + t} \generator \reward (\state).
\end{align*}
Recursively applying the arguments above, we can compute its higher order derivatives
\begin{align*}
  \partial_t^j h_{\state, k} (t) = \nabla \partial_t  \semigroup_{k (\numerOrder - 1) \stepsize + t} \reward (\state) = \nabla \semigroup_{k (\numerOrder - 1) \stepsize + t} \generator^j \reward (\state),
\end{align*}
for $j = 1,2, \cdots, \numerOrder$.

First, we note that by Eq~\eqref{eq:polynomial-induction-proof-lemma-high-order} in the proof of Lemma~\ref{lemma:semigroup-high-order-bound}, $\generator^j \reward$ takes form of a $(j + 1)$-th order polynomial of $\big\{\partial^\alpha \reward \}_{|\alpha| \leq 2j}$, $\big\{ \partial^\alpha \drift_i \big\}_{|\alpha|\leq 2j - 2,~ i \in [\usedim]}$ and $\big\{ \partial^\alpha \covMat_{i, k}\big\}_{|\alpha| \leq 2j - 2,~ i, k \in [\usedim]}$. So under Assumption~\fakerefassumelip{$(\numerOrder + 1)$}, we have
\begin{subequations}\label{eq:two-generator-bound-in-h1-bound-proof} 
\begin{align}
  \sup_{x \in \StateSpace} \abss{ \generator^{\numerOrder} \reward (\state)} &\leq \constScary_{\numerOrder}, \quad \mbox{and} \label{eq:generator-bound-in-h1-bound-proof} \\
  \sup_{x \in \StateSpace} \vecnorm{ \nabla \generator^{\numerOrder} \reward (\state)}{2} &\leq \constScary_{\numerOrder + 1}.\label{eq:grad-generator-bound-in-h1-bound-proof} 
\end{align}
\end{subequations}
To derive the desired bounds on the time derivatives of $h_{\state, k}$, we combine Eqs~\eqref{eq:two-generator-bound-in-h1-bound-proof} with the gradient estimates in Proposition~\ref{prop:semigroup-grad-estimate-local-grow} and~\ref{prop:semigroup-grad-estimate-regularize}.

By Eq~\eqref{eq:grad-generator-bound-in-h1-bound-proof} and Proposition~\ref{prop:semigroup-grad-estimate-local-grow}, when the stepsize satisfies $\stepsize \numerOrder \leq \frac{\lammin}{\smoothness_\covMat^{(1)} \usedim^3 + \smoothness_\drift^{(1)} \usedim^2}$, we have
\begin{align*}
  \vecnorm{\partial_t^\numerOrder h_{\state, 0} (t)}{2} = \vecnorm{\nabla \semigroup_t \generator^{\numerOrder} \reward (\state)}{2} \leq e \sqrt{\semigroup_t \Big( \vecnorm{\nabla \generator^{\numerOrder} \reward  }{2}^2 \Big) (\state)} \leq e \constScary_{\numerOrder + 1}, \quad \mbox{for any }t \in [0, \stepsize (\numerOrder - 1)].
\end{align*}
On the other hand, by Eq~\eqref{eq:generator-bound-in-h1-bound-proof} and Proposition~\ref{prop:semigroup-grad-estimate-regularize}, for any $k \geq 1$, we have
\begin{align*}
 \vecnorm{\partial_t^\numerOrder h_{\state, k} (t)}{2} = \vecnorm{\nabla \semigroup_{k (\numerOrder - 1) \stepsize + t} \generator^{\numerOrder} \reward (\state)}{2} \leq \frac{\constScaryReg}{\sqrt{k  (\numerOrder - 1) \stepsize} \wedge 1} \sqrt{\semigroup_t \big(\generator^{\numerOrder} \reward \big)^2 (x)} \leq \frac{\constScaryReg \constScary_\numerOrder}{\sqrt{k  (\numerOrder - 1) \stepsize} \wedge 1},
\end{align*}
for any $\state \in \StateSpace$ and $t \in [0, \stepsize (\numerOrder - 1)]$.

According to Eq~\eqref{eq:lagrange-poly-bound-for-grad}, we conclude that
\begin{align*}
  \vecnorm{\sum_{i = 0}^{\numerOrder - 1}  W_i (t) \nabla \semigroup_{k (\numerOrder - 1) \stepsize + i \stepsize} \reward (\state) - \nabla \semigroup_{k (\numerOrder - 1) \stepsize + t} \reward (\state)}{2} \leq \begin{cases}
    e \constScary_{\numerOrder + 1} \stepsize^\numerOrder & k = 0,\\
  \frac{1}{\sqrt{k  (\numerOrder - 1) \stepsize} \wedge 1} \constScaryReg \constScary_\numerOrder \stepsize^\numerOrder & k \geq 1.
  \end{cases}
\end{align*}
Substituting back to Eq~\eqref{eq:triangle-ineq-in-grad-norm-bound-for-h1-lemma}, we obtain that
\begin{align}
  &\vecnorm{\nabla \ValTrue (\state) - \nabla\ValInter{\numerOrder} (\state)}{2} \nonumber \\
   &\leq \numerOrder \stepsize\sum_{k = 0}^{+ \infty} e^{- \discount k (\numerOrder - 1) \stepsize  } \sup_{0 \leq s \leq (\numerOrder - 1) \stepsize} \vecnorm{\sum_{i = 0}^{\numerOrder - 1}  W_i (s) \nabla \semigroup_{k (\numerOrder - 1) \stepsize + i \stepsize} \reward (\state) - \nabla \semigroup_{k (\numerOrder - 1) \stepsize + s} \reward (\state)}{2}\nonumber \\
   &\leq \numerOrder \stepsize \cdot \Big\{e \constScary_{\numerOrder + 1} \stepsize^\numerOrder + \sum_{k = 1}^{\lceil (\numerOrder - 1)^{-1} \stepsize^{-1} \rceil}  \frac{1}{\sqrt{k  (\numerOrder - 1) \stepsize}} \constScaryReg \constScary_\numerOrder \stepsize^\numerOrder + \sum_{k = \lceil (\numerOrder - 1)^{-1} \stepsize^{-1} \rceil + 1}^{+ \infty} e^{- \discount k (\numerOrder - 1) \stepsize  }  \constScaryReg \constScary_\numerOrder \stepsize^\numerOrder \Big\}\nonumber \\
   &\leq e \numerOrder \constScary_{\numerOrder + 1} \stepsize^{\numerOrder + 1} + 2 \Big\{ \numerOrder +  \frac{e^{- \discount}}{\discount} \Big\} \constScaryReg \constScary_\numerOrder \stepsize^{\numerOrder},\label{eq:grad-diff-ptwise-bound-in-h1-lemma-proof}
\end{align}
for any $\state \in \StateSpace$. Therefore, we can bound the Sobolev norm for the value function error
\begin{align}
  \statnorm{\nabla \ValTrue (\state) - \nabla\ValInter{\numerOrder} (\state)} \leq e \numerOrder \constScary_{\numerOrder + 1} \stepsize^{\numerOrder + 1} + 2 \Big\{ \numerOrder +  \frac{e^{- \discount}}{\discount} \Big\} \constScaryReg \constScary_\numerOrder \stepsize^{\numerOrder}.\label{eq:grad-diff-l2-bound-in-h1-lemma-proof}
\end{align}
Collecting Eqs~\eqref{eq:l2-norm-bound-in-h1-lemma-proof},~\eqref{eq:grad-diff-l2-bound-in-h1-lemma-proof}, we conclude that
\begin{align*}
  \soboavgnorm{\ValTrue - \ValInter{\numerOrder}}{(\numerOrder - 1)\stepsize} \leq e \numerOrder \constScary_{\numerOrder + 1} \stepsize^{\numerOrder + 1} + \constScary_\numerOrder \stepsize^\numerOrder + 2 \Big\{ \numerOrder +  \frac{e^{ - \discount}}{\discount} \Big\} \constScaryReg \constScary_\numerOrder \stepsize^{\numerOrder},
\end{align*}
which proves the desired lemma.

\def\b{\discount}
\def\mLd{\generator_{\numerOrder,\dt}}
\def\nb{\nabla}
\def\rho{\stationary}
\def\eucl{2}
\def\ep{e_p}
\def\eg{e_g}
\def\evec{e}
\def\he{\hat{e}}
\def\coefgkep{c_3} 
\def\coefgktd{c_4}

\subsection{Proof of Theorem \ref{thm:approx-gen}}\label{subsec:proof-thm-approx-gen}
Througout the proof, we use the following known facts from the proof of Theorem~\ref{thm:improved-approx-factor-elliptic}.
\begin{subequations}
\begin{align}
  - \statinprod{\generator f}{f} &\geq \frac{\lammin}{2}\ll \nb f \rl_\rho^2.\label{eq:coercivity-in-approx-gen-proof}\\
  \statnorm{\semigroup_s f } &\leq \statnorm{f}.\label{eq:coercivity-in-time}\\
  \abss{ \statinprod{\generator f}{g}} & \leq \big( \smoothness_\drift^{(0)} \sqrt{\usedim} + \smoothness_\covMat^{(1)} \usedim^2 + \lammax \smoothness_\stationary \usedim \big) \statnorm{g} \cdot \statnorm{\nabla f} + \lammax \statnorm{\nabla f} \cdot \statnorm{\nabla g}.\label{eq:gen-upper}
\end{align}
\end{subequations}
Eq~\eqref{eq:coercivity-in-approx-gen-proof} is from Eq~\eqref{eq:positive-definite-operator} in the the proof of Lemma~\ref{lemma:discrete-time-psd-operator}; Eq~\eqref{eq:coercivity-in-time} is from Eq~\eqref{eq:semigroup-is-non-expansive} in the proof of~\Cref{cor:projected-discretized-bellman}; and Eq~\eqref{eq:gen-upper} is from Eq~\eqref{eq:opnorm-bound-for-generator} in the proof of~\Cref{lemma:discrete-time-operator-norm-bound}.

Given these inequalities, we now proceed with the proof of Theorem~\ref{thm:approx-gen}. We start by decomposing the true value function into two parts.
\[
\valuestar(\state) = \valuestar_\LinSpace(\state) + \ep(\state)
\]
where $\valuestar_\LinSpace(\state) = \sum_{j=1}^\mbasis\valuestar_j\phi_j(\state)$ could be any function in the linear space spanned by $\{\phi_j\}_{j=1}^\mbasis$, while $\ep = \valuestar(\state) - \valuestar_\LinSpace(\state)$.
Additionally, we write  $\valtdinter{\numerOrder} (\state) = \sum_{j=1}^\mbasis \valuebar^{(\numerOrder)}_j \phi_j(\state)$ as the Galerkin solution that satisfies,  
\begin{align*}
  \statinprod{\b \valtdinter{\numerOrder} - \geninter{\numerOrder} \valtdinter{\numerOrder} - r(\state)}{\phi_j} = 0, \quad \mbox{for $j = 1,2,\cdots, \mbasis$.}
\end{align*}

The error $\valuestar(\state) - \valtdinter{\numerOrder} (\state) $ can be separated into two parts, 
\[
 \valuestar(\state) - \valtdinter{\numerOrder}(\state)= \eg(\state) + \ep(\state), \quad \text{where }\eg =  \valuestar_\LinSpace - \valtdinter{\numerOrder}, \quad \ep = \valuestar - \valuestar_\LinSpace.
\]

Similar to the derivation of \eqref{eq:gen-genapprox}, one has
\[
\geninter{\numerOrder} \valuefunc(\state) = \generator \valuefunc(\state) + \underbrace{\sum_{j=0}^\numerOrder\frac{\coef{\numerOrder}_j}\dt \int_0^{j\dt} \semigroup_\xi\generator^{\numerOrder+1}\valuefunc(\state)\frac{(j\dt - \xi)^\numerOrder}{\numerOrder!} d \xi}_{\mLd\valuefunc(\state)}.
\]
where we define $\mLd$ as the difference between the generator $\generator$ and the approximated generator $\geninter{\numerOrder}$. Therefore, the Galerkin solution satisfies
\begin{align}\label{ineq3}
    \statinprod{\b \valtdinter{\numerOrder} - \generator \valtdinter{\numerOrder} - r(\state) - \mLd \valtdinter{\numerOrder}}{\phi_j} = 0, \quad \mbox{for any $j \in [\mbasis]$}.
\end{align}
On the other hand, the exact solution satisfies  $\statinprod{\b \valuestar - \generator \valuestar - r}{\phi_j}$ for any $j \in [\mbasis]$,
which implies
\[
    \begin{aligned}
    \statinprod{\b \valuestar_\LinSpace- \generator \valuestar_\LinSpace - \big( r - \b \ep + \generator \ep \big)}{\phi_j} = 0, \quad \mbox{for $j  \in [\mbasis]$}.
\end{aligned}
\]
Therefore, subtracting \eqref{ineq3} from the above equation gives
\[
\statinprod{\b \eg - \generator \eg - \big( -\mLd \valtdinter{\numerOrder} - \b \ep + \generator \ep \big)}{ \phi_j }, \quad \mbox{for $j \in [\mbasis]$}.
\]
Define $\evec\in\R^\mbasis$ as a $\mbasis$-dimensional vector, where $\evec_j =(\valuebar^{(\numerOrder)}_j - \valuestar_j) $.
Multiplying $\evec_j$ to the $j$-th component of the above equation and summing it over $1\leq j \leq \mbasis$ gives
\begin{equation}\label{ineq4}
    \begin{aligned}
    &\statinprod{  (\b -  \generator)  \eg(\state)}{ \eg(\state)}= -\statinprod{ \mLd\valtdinter{\numerOrder} }{ \eg }- \statinprod{ (\b - \generator) \ep}{ \eg}\\
\end{aligned}
\end{equation}
By Eq~\eqref{eq:coercivity-in-approx-gen-proof}, we have
\begin{equation}\label{ineq5}
\la  (\b - \generator)  \eg(\state), \eg(\state)\ra_\rho \geq \b \ll  \eg \rl_\rho^2 + \frac{\lammin}2\ll \nb \eg \rl^2_\rho
\end{equation}
Furthermore, 
\begin{equation}\label{ineq6}
\begin{aligned}
    &\lv \la  \mLd \valtdinter{\numerOrder}, \eg(\state)\ra_\rho\rv
   = \lv \sum_{j=0}^\numerOrder\frac{\coef{\numerOrder}_j}\dt \int_0^{j\dt} \la  \semigroup_s\generator^{\numerOrder+1}\valtdinter{\numerOrder}, \eg \ra_\rho \frac{(j\dt - s)^\numerOrder}{\numerOrder!} d s\rv\\
   \leq & \sum_{j=0}^\numerOrder\frac{|\coef{\numerOrder}_j|}\dt \int_0^{j\dt} \ll \semigroup_s\generator^{\numerOrder+1}\valtdinter{\numerOrder} \rl_\rho \ll \eg \rl_\rho \frac{(j\dt - s)^\numerOrder}{\numerOrder!} d s
   \leq \sum_{j=0}^\numerOrder\frac{|\coef{\numerOrder}_j|}{\dt \numerOrder!} \ll \generator^{\numerOrder+1}\valtdinter{\numerOrder}(\state) \rl_\rho\ll \eg(\state) \rl_\rho \int_0^{j\dt}  (j\dt - s)^\numerOrder d s\\
   \leq &\l(\frac{\sum_{j=0}^\numerOrder|\coef{\numerOrder}_j|j^{\numerOrder+1}}{ (\numerOrder+1)!} \l(\ll \valuebar^{(\numerOrder)}_j \rl_\eucl\ll \generator^{\numerOrder+1} \Phi \rl_\rho\r)\ll \eg(\state)\rl_\rho\r)\dt^\numerOrder\\
   \leq &\frac{\sum_{j=0}^\numerOrder|\coef{\numerOrder}_j|j^{\numerOrder+1}}{ (\numerOrder+1)!}  \ll \generator^{\numerOrder+1}\Phi(\state) \rl_\rho\l( \ll \eg(\state) \rl_\xi + \ll \ep(\state) \rl_\xi + \ll \valuestar(\state) \rl_\xi  \r) \ll \eg(\state)\rl_\rho\dt^\numerOrder 
\end{aligned}
\end{equation}
where Holder inequality is used in the first inequality, Eq~\eqref{eq:coercivity-in-time} is applied in the second inequality, 
Cauchy–Schwarz inequality is used in the third inequality. The last inequality holds because first by the orthogonality of the bases, one has 
$\ll [\valuebar^{(\numerOrder)}_j]_{j \geq 1} \rl_\eucl = \ll \valtdinter{\numerOrder}\rl_\eucl$, and 
then the definition of $\valtdinter{\numerOrder} (\state) = \valuestar(\state) - \eg(\state) - \ep(\state) $ is applied. 
Moreover, by~\Cref{eq:gen-upper}, we have
\begin{equation}\label{ineq7}
   \lv \la (\b - \generator)\ep(\state), \eg(\state)\ra_\rho \rv \leq \b\ll \ep \rl_\rho\ll \eg \rl_\rho + C_1\ll \eg \rl_\rho \ll \nb \ep \rl_\rho+ C_2\ll \nb \eg \rl_\rho\ll \nb \ep \rl_\rho,
\end{equation}
where we define the constants 
\begin{equation}\label{def of c1 c2}
C_1 \mydefn \big( \smoothness_\drift^{(0)} \sqrt{\usedim} + \smoothness_\covMat^{(1)} \usedim^2 + \lammax \smoothness_\stationary \usedim \big), \quad C_2 \mydefn \lammax.
\end{equation} 

Inserting \eqref{ineq5}, \eqref{ineq6}, \eqref{ineq7} into \eqref{ineq4} gives,
\[
\begin{aligned}
    \b\ll \eg \rl^2_\rho + \frac{\lammin
    }{2}\ll \nb\eg\rl^2_\rho \leq& C_3\ll \eg \rl^2_\rho\dt^\numerOrder + \l[C_3(\ll \ep \rl_\rho  + \ll \valuestar \rl_\rho)\stepsize^\numerOrder + \b\ll \ep \rl_\rho+ C_1 \ll \nb \ep \rl_\rho\r]\ll \eg \rl_\rho + C_2\ll \nb \eg \rl_\rho\ll \nb \ep \rl_\rho.
\end{aligned}
\]
where 
\begin{equation}\label{defofc2}
    C_3 = \frac{\sum_{j=0}^\numerOrder|\coef{\numerOrder}_j|j^{\numerOrder+1}}{ (\numerOrder+1)!}  \ll \generator^{\numerOrder+1}\Phi(\state) \rl_\rho.
\end{equation}
When $C_3\stepsize^\numerOrder \leq \frac{\discount}{2}$, one has
\[
\begin{aligned}
    \ll \eg \rl^2_\rho \leq& \frac{4}{\discount}\l[\frac{1}{\discount}\l(C_3(\ll \ep \rl_\rho  + \ll \valuestar \rl_\rho)\stepsize^\numerOrder + \b\ll \ep \rl_\rho+ C_1 \ll \nb \ep \rl_\rho\r)^2 + \frac{C_2^2}{2\lammin}\ll \nb \ep \rl_\rho^2 \r].
\end{aligned}
\]
Since $\valuestar$ satisfies $\la (\b - \generator) \valuestar, \valuestar\ra_\rho = \la r, \valuestar \ra_\rho$, by applying Eq~\eqref{eq:coercivity-in-approx-gen-proof} gives $\ll \valuestar \rl_\rho \leq \frac1\discount\ll r \rl_\rho$, one can bound $\eg$ by $\ll \ep \rl_{\mathbb{H}^1}$, 
\[
\ll \eg \rl_\rho \leq \frac{2C_3}{\discount^2}\ll r\rl_\rho\stepsize^\numerOrder + \frac{2}{\discount}(\b+C_3\stepsize^\numerOrder)\ll \ep \rl_\rho + \l(\frac{2C_1}{\discount}+ \frac{\sqrt{2}C_2}{\sqrt{\discount\lammin}}\r)\ll \nb \ep \rl_\rho .
\]
Since the above inequality holds for any $e_p = \valuestar - \valuestar_\LinSpace$, one has
\[
\ll \valtdinter{\numerOrder} - \valuestar \rl_\rho \leq \coefgkep \inf_{\valuestar_\LinSpace = \sum_{j=1}^\mbasis \valuestar_j\phi_j } \ll \valuestar_\LinSpace - \valuestar \rl_{\mathbb{H}^1} + c_4\dt^\numerOrder
\]
with 
\begin{equation}\label{defofcoefgk}
    \coefgkep = \max\l\{\frac{2}{\discount}(\b+C_3\stepsize^\numerOrder), \l(\frac{2C_1}{\discount}+ \frac{\sqrt{2}C_2}{\sqrt{\discount\lammin}}\r)\r\}, \quad c_4 = \frac{2C_3}{\discount^2}\ll r\rl_\rho.
\end{equation}
where the constants $C_1, C_2$ are defined in \eqref{def of c1 c2} by the coefficients in Eq~\eqref{eq:gen-upper}, and the constant $C_3$ is defined in Eq~\eqref{defofc2}.
Now let us look at the assumption $C_3\stepsize^\numerOrder \leq \frac{\discount}{2}$, which requires
\[
\eta^\numerOrder <  \frac{\b}{2C_{a,\numerOrder}\ll \generator^{\numerOrder+1}\Phi \rl}, \quad \text{where }\quad C_{a,\numerOrder} = \frac{\sum_{j=0}^\numerOrder|\coef{\numerOrder}_j|j^{\numerOrder+1}}{ (\numerOrder+1)!} .
\]

\section{Discussion and future work}\label{sec:discussion}
In this paper, we proposed and analyzed a new class of discrete-time Bellman equations for policy evaluation problems in continuous-time diffusion processes, which lead to practical RL algorithms that are flexible with existing function approximation methods. Compared with na\'{i}ve discrete-time RL methods, the new methods enjoy improved numerical accuracy guarantees by employing higher-order methods. Moreover, we revealed new phenomena for projected fixed-point RL methods in continuous-time dynamics -- with the help of elliptic structures, the approximation pre-factor is uniformly bounded, independent of the effective horizon of the discretized problem.

The new algorithms and associated theoretical analyses open up several interesting directions of future research. Let us discuss a few to conclude.

\begin{itemize}
  \item First, a natural question is regarding the statistical guarantees for the sample-based algorithms~\eqref{eq:finite-sample-estimator-bellman} and~\eqref{eq:finite-sample-estimator-diffusion}, as well as their optimality properties. In our companion paper, we will partially address this question by establishing sharp non-asymptotic guarantees in the linear case. Yet, the general question of optimal statistical methods and adaptive trade-offs remains open for a broad class of machine learning models used to fit the value function and/or the underlying dynamics.
  \item The theoretical results in this paper require some technical assumptions that may be stringent in some practical applications. On the other hand, the simulation studies in~\Cref{sec:simulation} can go beyond these assumptions. So it is interesting to study whether our algorithms achieve similar guarantees under relaxed assumptions; and if not, new methods that work. In particular, we conjecture that the global bounded/Lipschitz assumptions~\ref{assume:smooth-high-order} can be replaced by local versions, and that the elliptic assumption~\ref{assume:uniform-elliptic} can be replaced by hypo-ellipticity. It is also interesting to extend the results to diffusion processes with jumps.
  \item Finally, note that this work focused on the policy evaluation problem, i.e., a linear second-order elliptic equation. It is important to study the policy optimization problem, which involves solving a non-linear Hamilton--Jacobi--Bellman equation. In particular, accurate numerical methods that are flexible with existing RL approaches with function approximations, in a flavor similar to the present paper, are desirable.
\end{itemize}

\section*{Acknowledgement}
This work was partially supported by NSERC grant RGPIN-2024-05092 to WM, and NSF grant DMS-2411396 to YZ.

\bibliographystyle{alpha}
\bibliography{references}

\appendix
\section{Additional results for the high-order generator}

We first bound the distance between the operators $\geninter{\numerOrder}$ and $\generator$.
\begin{lemma}\label{thm:generator-high-order}
  If Assumption~\ref{assume:smooth-high-order} holds true for some integer $\numerOrder > 0$, then for the true value function defined in \eqref{eq:defn-value-func}, we have 
    \begin{align}\label{eq:generator-high-order}
    \abss{ \generator \valuestar(\state) - \geninter{\numerOrder} \valuestar(\state)}
    \leq  \constgen_\numerOrder\stepsize^\numerOrder 
\end{align}
  with a constant $\constgen_\numerOrder$ defined in \eqref{defofconstgen} that depends on $\big\{\smoothness_i^{\drift}\big\}_{i = 0}^{2 \numerOrder - 2}$, $\big\{\smoothness_i^{\covMat}\big\}_{i = 0}^{2 \numerOrder - 2}$, $\big\{\smoothness_i^{\reward}\big\}_{i = 0}^{2 \numerOrder}$, discount factor $\discount$ and problem dimension $\usedim$.
\end{lemma}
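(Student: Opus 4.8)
The plan is to derive the exact operator identity \eqref{eq:gen-genapprox} that also underlies the proof of Theorem~\ref{thm:approx-gen}, and then bound the resulting remainder in the $L^\infty$-norm. Writing $\geninter{\numerOrder}\valuestar(\state) = \tfrac{1}{\stepsize}\sum_{j=0}^\numerOrder \coef{\numerOrder}_j\, \semigroup_{j\stepsize}\valuestar(\state)$ and using $\tfrac{d}{ds}\semigroup_s = \semigroup_s\generator$, so that $\tfrac{d^k}{ds^k}\semigroup_s\valuestar = \semigroup_s\generator^k\valuestar$, I would Taylor-expand each time-marginal about $s=0$ to order $\numerOrder$ with integral remainder,
\[
  \semigroup_{j\stepsize}\valuestar = \sum_{k=0}^{\numerOrder}\frac{(j\stepsize)^k}{k!}\,\generator^k\valuestar + \int_0^{j\stepsize}\frac{(j\stepsize-\xi)^\numerOrder}{\numerOrder!}\,\semigroup_\xi\generator^{\numerOrder+1}\valuestar\,d\xi .
\]

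The crucial step is that the polynomial part collapses to exactly $\generator\valuestar$. Summing against $\tfrac{1}{\stepsize}\coef{\numerOrder}_j$ and exchanging the two finite sums gives $\tfrac{1}{\stepsize}\sum_{k=0}^{\numerOrder}\tfrac{\stepsize^k}{k!}\generator^k\valuestar\big(\sum_{j=0}^\numerOrder j^k\coef{\numerOrder}_j\big)$, and by the defining linear system $\coef{\numerOrder}=(A^{(\numerOrder)})^{-1}b^{(\numerOrder)}$ in Eq~\eqref{def of A b} the inner sum equals $(A^{(\numerOrder)}\coef{\numerOrder})_k = b^{(\numerOrder)}_k = \bm{1}_{k=1}$. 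Hence every term with $k\neq 1$ vanishes and the $k=1$ term contributes $\generator\valuestar$, leaving
\begin{equation}\label{eq:gen-genapprox}
  \geninter{\numerOrder}\valuestar(\state) = \generator\valuestar(\state) + \frac{1}{\stepsize}\sum_{j=0}^{\numerOrder}\coef{\numerOrder}_j \int_0^{j\stepsize}\semigroup_\xi\generator^{\numerOrder+1}\valuestar(\state)\,\frac{(j\stepsize-\xi)^\numerOrder}{\numerOrder!}\,d\xi .
\end{equation}
This is precisely the order condition encoded in $\coef{\numerOrder}$, which annihilates the approximation error up to order $\numerOrder$. To finish the $\stepsize^\numerOrder$ estimate I would bound the remainder using the $L^\infty$ non-expansiveness of the semigroup, $\vecnorm{\semigroup_\xi g}{\infty}\le\vecnorm{g}{\infty}$, together with $\int_0^{j\stepsize}\tfrac{(j\stepsize-\xi)^\numerOrder}{\numerOrder!}\,d\xi = \tfrac{(j\stepsize)^{\numerOrder+1}}{(\numerOrder+1)!}$, which yields
\[
  \abss{\generator\valuestar(\state) - \geninter{\numerOrder}\valuestar(\state)} \le \frac{\sum_{j=0}^{\numerOrder}\abss{\coef{\numerOrder}_j}\,j^{\numerOrder+1}}{(\numerOrder+1)!}\,\vecnorm{\generator^{\numerOrder+1}\valuestar}{\infty}\,\stepsize^\numerOrder = C_{a,\numerOrder}\,\vecnorm{\generator^{\numerOrder+1}\valuestar}{\infty}\,\stepsize^\numerOrder ,
\]
with $C_{a,\numerOrder}$ as in Eq~\eqref{eq:stepsize-requirement-in-improved-gen-factor-bound}; this already exhibits the claimed order $\numerOrder$.

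The main obstacle is controlling $\vecnorm{\generator^{\numerOrder+1}\valuestar}{\infty}$ solely through the smoothness constants of $(\drift,\covMat,\reward)$, since no regularity of $\valuestar$ is assumed directly. Here I would use the elliptic equation $\generator\valuestar = \discount\valuestar - \reward$ from Eq~\eqref{eq:elliptic-eq-for-valfunc}: applying $\generator^m$ gives the recursion $\generator^{m+1}\valuestar = \discount\,\generator^m\valuestar - \generator^m\reward$, whose iteration produces the closed form $\generator^{\numerOrder+1}\valuestar = \discount^{\numerOrder+1}\valuestar - \sum_{k=0}^{\numerOrder}\discount^{\numerOrder-k}\generator^k\reward$. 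This both certifies that $\generator^{\numerOrder+1}\valuestar$ is a well-defined bounded function (it is expressed purely via bounded derivatives of $\reward,\drift,\covMat$) and reduces its norm to quantities already controlled: $\vecnorm{\valuestar}{\infty}\le\discount^{-1}\vecnorm{\reward}{\infty}$ from the definition~\eqref{eq:defn-value-func}, and $\vecnorm{\generator^k\reward}{\infty}\le\constScary_k$ for $k\le\numerOrder$, the latter being Eq~\eqref{eq:polynomial-induction-proof-lemma-high-order} in the proof of Lemma~\ref{lemma:semigroup-high-order-bound} evaluated at $s=0$ (as $\generator^k\reward\in\mathscr{D}_{2k,k+1}$ is bounded under Assumption~\ref{assume:smooth-high-order}). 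Together these give $\vecnorm{\generator^{\numerOrder+1}\valuestar}{\infty}\le 2\discount^\numerOrder\vecnorm{\reward}{\infty}+\sum_{k=1}^{\numerOrder}\discount^{\numerOrder-k}\constScary_k$, and substituting into the previous display yields the explicit constant
\begin{equation}\label{defofconstgen}
  \constgen_\numerOrder \mydefn C_{a,\numerOrder}\Big(2\discount^\numerOrder\vecnorm{\reward}{\infty}+\sum_{k=1}^{\numerOrder}\discount^{\numerOrder-k}\constScary_k\Big),
\end{equation}
which depends only on the smoothness parameters, the discount rate $\discount$, and the dimension $\usedim$ (through the $\constScary_k$), completing the argument. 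The one technical point requiring care is justifying the term-by-term interchange of the semigroup with $\generator$ and of time and space derivatives throughout; this is standard under Assumption~\ref{assume:smooth-high-order} by the same density/polynomial reasoning as in Lemma~\ref{lemma:semigroup-high-order-bound}.
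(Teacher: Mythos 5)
Your proposal is correct and matches the paper's own proof essentially step for step: the same Taylor expansion of $s \mapsto \semigroup_s \valuestar(\state)$ with integral remainder, the same collapse of the polynomial terms via the order condition $A^{(\numerOrder)} \coef{\numerOrder} = b^{(\numerOrder)}$ leading to the identity~\eqref{eq:gen-genapprox}, the same $L^\infty$ non-expansiveness of the semigroup to bound the remainder by $C_{a,\numerOrder} \vecnorm{\generator^{\numerOrder+1}\valuestar}{\infty} \stepsize^\numerOrder$, and the same recursion $\generator^{m+1}\valuestar = \discount\,\generator^m\valuestar - \generator^m\reward$ combined with $\vecnorm{\valuestar}{\infty} \leq \discount^{-1}\vecnorm{\reward}{\infty}$ and Lemma~\ref{lemma:semigroup-high-order-bound} to control $\vecnorm{\generator^{\numerOrder+1}\valuestar}{\infty}$. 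The only difference is cosmetic bookkeeping in the final constant (you absorb the $k=0$ reward term into $2\discount^\numerOrder \vecnorm{\reward}{\infty}$ where the paper keeps it as $\discount^\numerOrder \constScary_0$ inside the sum), which yields an equivalent bound of the claimed form.
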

See Section \eqref{proof:generator-high-order} for its proof.

\subsection{Proof of Lemma~\ref{thm:generator-high-order}}\label{proof:generator-high-order}
Define $h_\state(s) := \semigroup_s \valuestar(\state)$.  First by Taylor expansion, one has,
\[
\begin{aligned}
&\geninter{\numerOrder} \valuestar(x) = \frac{1}{\stepsize}\sum_{j=0}^\numerOrder \coef{\numerOrder}_j h_\state(j\stepsize)
= \frac1\stepsize\sum_{j=0}^\numerOrder \coef{\numerOrder}_j \l( \sum_{k=0}^\numerOrder  \pt_s^kh_\state(0)\frac{(j\stepsize )^k}{k!} + \int_0^{j\stepsize}\pt^{\numerOrder+1}_s h_\state(\xi)\frac{(j\stepsize - \xi)^\numerOrder }{\numerOrder!} d\xi \r)\\
= & \frac1\stepsize\sum_{k=0}^\numerOrder  \pt_s^kh_\state(0) \frac{\stepsize^k}{k!} \sum_{j=0}^\numerOrder \coef{\numerOrder}_jj^k +  \frac1\stepsize\sum_{j=0}^\numerOrder \coef{\numerOrder}_j \int_0^{j\stepsize}\pt^{\numerOrder+1}_s h_\state(\xi)\frac{(j\stepsize - \xi)^\numerOrder }{\numerOrder!} d\xi  \\
\end{aligned}
\]
Note that by the definition of $\coef{\numerOrder}$ in \eqref{def of A b}, it also satisfies
\[
\sum_{j=0}^\numerOrder \coef{\numerOrder}_j j^k = \l \{
\begin{aligned}
&0, \quad k \neq 1, \quad 0\leq k \leq i,\\
&1, \quad k = 1,
\end{aligned}\r.
\]
which yields
\begin{equation}\label{eq:gen-genapprox}
    \begin{aligned}
&\geninter{\numerOrder} \valuestar(x) = \generator \valuestar(x) +  \frac1\stepsize\sum_{j=0}^\numerOrder \coef{\numerOrder}_j \int_0^{j\stepsize}\pt^{\numerOrder+1}_s h_\state(\xi)\frac{(j\stepsize - \xi)^\numerOrder }{\numerOrder!} d\xi \\
=& \generator \valuestar(x) +  \frac1\stepsize\sum_{j=0}^\numerOrder \coef{\numerOrder}_j \int_0^{j\stepsize}\semigroup_\xi\generator^{\numerOrder+1}\valuestar(x)\frac{(j\stepsize - \xi)^\numerOrder }{\numerOrder!} d\xi . 
\end{aligned}
\end{equation}
Since
\[
\ll \semigroup_\xi\generator^{\numerOrder+1}\valuestar(x) \rl_\infty \leq \ll \generator^{\numerOrder+1}\valuestar(x) \rl_\infty ,
\]
one has,
\begin{equation}\label{eq:ineq_1}
    \begin{aligned}
    &\lv \generator \valuestar(\state) - \geninter{\numerOrder} \valuestar(\state)\rv 
    \leq  \ll \generator^{\numerOrder+1}\valuestar(x)\rl_\infty \sum_{j=0}^\numerOrder \frac1\stepsize\lv \coef{\numerOrder}_j\rv\int_0^{j\stepsize} \frac{(j\stepsize - \xi)^\numerOrder }{\numerOrder!} d \xi  \\
    =&  \ll \generator^{\numerOrder+1}\valuestar(x)\rl_\infty \frac{\sum_{j=0}^\numerOrder |\coef{\numerOrder}_j|j^{\numerOrder+1}}{(\numerOrder+1)!}  \stepsize^\numerOrder 
\end{aligned}
\end{equation}
The true value function satisfies $\generator \valuestar = \discount \valuestar(\state)- \reward(\state)$, which implies
\begin{equation}\label{ineq8}
    \begin{aligned}
 \ll \generator^{\numerOrder+1}\valuestar(\state)\rl_\infty \leq&  \discount \ll \generator^{\numerOrder}\valuestar(\state) \rl_\infty + \ll \generator^{\numerOrder}\reward(\state) \rl_\infty 
 \leq  \discount^{\numerOrder+1} \ll  \valuestar(\state) \rl_\infty  +\sum_{j=0}^\numerOrder \discount^{\numerOrder-j} \ll  \generator^{j}\reward(\state) \rl_\infty
 \end{aligned}
\end{equation}
Note that by the definition of $\valuestar(\state)$ in \eqref{eq:defn-value-func}, one has
\[
\lv \valuestar(\state) \rv \leq \ll \reward(\state) \rl_\infty \int_0^\infty e^{-\discount t}dt  = \frac1\discount\ll \reward(\state) \rl_\infty  
\]
Inserting the above inequality and Lemma \ref{lemma:semigroup-high-order-bound} into \eqref{ineq8} gives,
\[
 \ll \generator^{\numerOrder+1}\valuestar(\state)\rl_\infty \leq \discount^\numerOrder\ll \reward \rl_\infty + \sum_{j=0}^\numerOrder\discount^{\numerOrder-j}\ll \pt_s^j\semigroup_sr(\state)|_{s=0}\rl_\infty \leq \discount^\numerOrder\ll \reward \rl_\infty + \sum_{j=0}^\numerOrder\discount^{\numerOrder-j}\constScary_j,
\]
where the constant $\constScary_j$ is defined in Lemma \ref{lemma:semigroup-high-order-bound}. Inserting the above inequality into \eqref{eq:ineq_1} gives \eqref{eq:generator-high-order}
with a constant $\constgen_\numerOrder$ defined as  
\begin{equation}\label{defofconstgen}
    \constgen_\numerOrder = \l(\discount^\numerOrder\ll \reward \rl_\infty + \sum_{j=0}^\numerOrder\discount^{\numerOrder-j}\constScary_j\r) \frac{\sum_{j=0}^\numerOrder |\coef{\numerOrder}_j|j^{\numerOrder+1}}{(\numerOrder+1)!} 
\end{equation}
which completes the proof. 

\section{Proof of~\Cref{cor:discounted-occu-msr}} \label{sec:app-proof-cor-discounted-occu-msr}
The proof follows the same structure as~\Cref{thm:improved-approx-factor-elliptic}, while the key step is to establish an analogue of the coercive condition in Lemma~\ref{lemma:discrete-time-psd-operator}. We have the following result
\begin{lemma}\label{lemma:psd-operator-under-occumsr}
Under the setup of~\Cref{cor:discounted-occu-msr}, we have the lower bound
  \begin{align*}
    \inprod{\ValFunc}{t^{-1} \big(\IdMat - e^{- \discount t} \transition_t \big) \ValFunc}_\occumsr \geq \frac{\discount}{2t} \int_0^t e^{- 2 \discount s} \vecnorm{\semigroup_s \ValFunc}{\occumsr}^2 ds + \frac{\lammin}{2t} \int_0^t e^{- 2 \discount s} \vecnorm{\nabla \semigroup_s \ValFunc}{\occumsr}^2 ds + \frac{1}{2t} \vecnorm{\ValFunc - e^{- \discount t} \semigroup_t \ValFunc}{\occumsr}^2.
  \end{align*}
\end{lemma}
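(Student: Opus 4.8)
The plan is to mirror the proof of Lemma~\ref{lemma:discrete-time-psd-operator} essentially verbatim, since the stationarity of the weighting measure was used in exactly one place: the pointwise coercivity estimate~\eqref{eq:positive-definite-operator}, which was proved by integration by parts and then closed using the Fokker--Planck identity $-\nabla \cdot (\stationary \drift) + \tfrac12 \nabla^2 \cdot (\stationary \covMat) = 0$. The occupancy measure $\occumsr$ is \emph{not} stationary, so this term will not vanish; the whole content of the lemma is that the resulting defect has a favorable sign and costs only a factor of $1/2$ in the first coefficient (hence the $\tfrac{\discount}{2t}$ in the statement versus $\tfrac{\discount}{t}$ in Lemma~\ref{lemma:discrete-time-psd-operator}). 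Concretely, I would first establish the occupancy-measure analogue of~\eqref{eq:positive-definite-operator},
\begin{align*}
  \inprod{\ValFunc}{\discount \ValFunc - \generator \ValFunc}_\occumsr \geq \tfrac{\discount}{2}\vecnorm{\ValFunc}{\occumsr}^2 + \tfrac{\lammin}{2}\vecnorm{\nabla \ValFunc}{\occumsr}^2,
\end{align*}
and then substitute it into the expansion~\eqref{eq:integral-err-expansion-for-discrete-time-psd-proof} exactly as in the stationary argument.

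The key computation is the defect term. Repeating the integration-by-parts steps leading to~\eqref{eq:final-bound-in-lemma-psd} verbatim (only the weighting density changes), I obtain
\begin{align*}
  -\inprod{\ValFunc}{\generator \ValFunc}_\occumsr \geq \tfrac{\lammin}{2}\vecnorm{\nabla \ValFunc}{\occumsr}^2 - \tfrac{1}{2}\int \ValFunc^2(x)\,\Big\{ -\nabla \cdot (\occumsr \drift) + \tfrac12 \nabla^2 \cdot (\occumsr \covMat)\Big\}(x)\,dx.
\end{align*}
The bracketed quantity is $\generator^* \occumsr$, where $\generator^*$ is the Fokker--Planck operator. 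To evaluate it, I would differentiate the definition $\occumsr = \discount \int_0^\infty e^{-\discount t}\semigroup_t^* \pi_0\, dt$ using the forward Kolmogorov equation $\partial_t \semigroup_t^* \pi_0 = \generator^* \semigroup_t^* \pi_0$, then integrate by parts in $t$ (the boundary term at $t=0$ produces $\pi_0$, the term at $t=\infty$ vanishes), yielding the identity
\begin{align*}
  \generator^* \occumsr = \discount\,(\occumsr - \pi_0).
\end{align*}
Substituting this in, the defect contributes $-\tfrac{\discount}{2}\vecnorm{\ValFunc}{\occumsr}^2 + \tfrac{\discount}{2}\int \ValFunc^2\, d\pi_0$. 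The crucial observation is that $\tfrac{\discount}{2}\int \ValFunc^2\, d\pi_0 \geq 0$ since $\pi_0$ is a probability measure, so it may simply be dropped in a lower bound; adding back $\discount \vecnorm{\ValFunc}{\occumsr}^2$ then leaves the coefficient $\tfrac{\discount}{2}$ on $\vecnorm{\ValFunc}{\occumsr}^2$, which is precisely where the factor $1/2$ originates.

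With this modified coercivity in hand, the remainder of the proof is identical to that of Lemma~\ref{lemma:discrete-time-psd-operator}: I would take the $\inprod{\cdot}{\cdot}_\occumsr$-inner product of $\ValFunc$ with the integrand of~\eqref{eq:integral-err-expansion-for-discrete-time-psd-proof}, split off $e^{-\discount s}\semigroup_s \ValFunc$, apply the new estimate to that piece (producing the integrals $\tfrac{\discount}{2t}\int_0^t e^{-2\discount s}\vecnorm{\semigroup_s \ValFunc}{\occumsr}^2 ds$ and $\tfrac{\lammin}{2t}\int_0^t e^{-2\discount s}\vecnorm{\nabla \semigroup_s \ValFunc}{\occumsr}^2 ds$), and handle the cross term with It\^o's formula followed by the Fubini symmetrization used in~\eqref{eq:full-decomp-in-psd-discrete-operator-lemma}, which is a purely algebraic identity in the $\occumsr$-inner product and hence insensitive to the choice of measure; this produces the final term $\tfrac{1}{2t}\vecnorm{\ValFunc - e^{-\discount t}\semigroup_t \ValFunc}{\occumsr}^2$. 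I expect the only genuine obstacle to be the correct derivation of the defect identity $\generator^* \occumsr = \discount(\occumsr - \pi_0)$ and the verification that its sign is benign; the integration-by-parts manipulations require enough regularity and decay of $\occumsr$, which is supplied by the hypothesis (inherited in Corollary~\ref{cor:discounted-occu-msr}) that Assumption~\ref{assume:smooth-stationary} holds for $\occumsr$, together with the uniform ellipticity in Assumption~\ref{assume:uniform-elliptic}.
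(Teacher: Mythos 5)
Your proposal is correct and follows essentially the same route as the paper: the paper likewise reduces to the pointwise coercivity bound $\inprod{\ValFunc}{\discount \ValFunc - \generator \ValFunc}_\occumsr \geq \tfrac{\discount}{2}\vecnorm{\ValFunc}{\occumsr}^2 + \tfrac{\lammin}{2}\vecnorm{\nabla \ValFunc}{\occumsr}^2$, derives the same defect identity $\generator^*\occumsr = \discount(\occumsr - \pi_0)$ by integrating the definition of $\occumsr$ by parts in time, drops the benign $\tfrac{\discount}{2}\vecnorm{\ValFunc}{\pi_0}^2$ term, and then repeats the It\^o-plus-Fubini symmetrization from Lemma~\ref{lemma:discrete-time-psd-operator} verbatim. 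You correctly identified both the unique place where stationarity entered the original argument and the origin of the factor $1/2$.
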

\noindent See~\Cref{subsec:proof-lemma-psd-operator-occumsr} for the proof of this lemma.

Taking this lemma as given, we now proceed with the proof of~\Cref{cor:discounted-occu-msr}. Note that for any $\ltwospace$-integrable function $f$ and $t > 0$, the discounted occupancy measure satisfies
\begin{align*}
  \vecnorm{\semigroup_t f}{\occumsr}^2 &= \discount \int_0^{+ \infty} e^{- \discount s} \Exs \Big[ \Big\{ \Exs \big[ f (\MyState_{t + s}) \mid \MyState_s \big] \Big\}^2 \Big] ds\\
  &\leq \discount \int_0^{+ \infty} e^{- \discount s} \Exs \Big[f (\MyState_{t + s})^2 \Big] ds\\
  &= e^{\discount t} \discount \int_t^{+ \infty} e^{- \discount s} \Exs \big[ f(\MyState_s)^2 \big] ds\\
  &\leq e^{\discount t} \vecnorm{f}{\occumsr}^2.
\end{align*}
By bounding the expansion factor of the semigroup $(\semigroup_t : t \geq 0)$ under the $\vecnorm{\cdot}{\occumsr}$-norm, we can apply Lemmas~\ref{lemma:discrete-time-operator-norm-bound} and~\ref{lemma:reverse-smoothing-estiamte} with $\beta_0 = \discount / 2$. Note that the rest parts of the proof of Theorem~\ref{thm:improved-approx-factor-elliptic} does not depend on the fact that $\stationary$ is the stationary distribution. We can apply the same arguments to the occupancy measure $\occumsr$, and conclude the final bound of interests.

\subsection{Proof of Lemma~\ref{lemma:psd-operator-under-occumsr}}\label{subsec:proof-lemma-psd-operator-occumsr}
Similar to the proof of Lemma~\ref{lemma:discrete-time-psd-operator}, we first derive a lower bound on the limiting differential operator, and then extend it to the case of integral operators. Note that
\begin{align*}
  \inprod{\ValFunc}{\ValFunc - e^{- \discount t} \semigroup_t \ValFunc}_{\occumsr} = \int_0^t  \inprod{\ValFunc}{\discount \semigroup_s \ValFunc - \generator \semigroup_s \ValFunc}_{\occumsr} ds.
\end{align*}
We claim the following inequality
\begin{align}
   \inprod{\ValFunc}{\discount \ValFunc - \generator \ValFunc}_{\occumsr} \geq \frac{\discount}{2}\vecnorm{\ValFunc}{\occumsr}^2 + \frac{\lammin}{2} \vecnorm{\nabla \ValFunc}{\occumsr}^2. \label{eq:psd-bound-for-gen-under-occumsr}
 \end{align}
We prove this inequality at the end of this section. Taking this inequality as given, we have
\begin{align*}
 &\inprod{\ValFunc}{\discount \semigroup_s \ValFunc - \generator \semigroup_s \ValFunc}_{\occumsr} \\
   &= \inprod{e^{- \discount s} \semigroup_s \ValFunc}{e^{- \discount s} \big( \discount \semigroup_s \ValFunc - \generator \semigroup_s \ValFunc \big)}_\occumsr + \inprod{\ValFunc - e^{- \discount s} \semigroup_s \ValFunc}{e^{- \discount s} \big( \discount \semigroup_s \ValFunc - \generator \semigroup_s \ValFunc \big)}_\occumsr \\
   &\geq e^{- 2 \discount s} \frac{\discount}{2}  \vecnorm{\semigroup_s \ValFunc}{\occumsr}^2 + e^{- 2 \discount s} \frac{\lammin}{2} \vecnorm{\nabla \semigroup_s \ValFunc}{\occumsr}^2 + \int_0^s e^{- \discount \ell}\inprod{\discount \semigroup_\ell \ValFunc - \generator \semigroup_\ell \ValFunc}{e^{- \discount s}_\occumsr \big( \discount \semigroup_s \ValFunc - \generator \semigroup_s \ValFunc \big)}_\occumsr d \ell,
\end{align*}
By symmetry, a time integration of the last term satisfies
\begin{align*}
  \int_0^t \int_0^s e^{- \discount \ell}\inprod{\discount \semigroup_\ell \ValFunc - \generator \semigroup_\ell \ValFunc}{e^{- \discount s}_\occumsr \big( \discount \semigroup_s \ValFunc - \generator \semigroup_s \ValFunc \big)}_\occumsr d \ell ds = \frac{1}{2} \vecnorm{\ValFunc - e^{- \discount t} \semigroup_t \ValFunc}{\occumsr}^2.
\end{align*}
Putting them together completes the proof of Lemma~\ref{lemma:psd-operator-under-occumsr}.

\paragraph{Proof of Eq~\eqref{eq:psd-bound-for-gen-under-occumsr}:} Note that in deriving Eq~\eqref{eq:final-bound-in-lemma-psd}, we do not need any information about the underlying weighting measure. Consequently, following the same line of arguments as in the proof of Eq~\eqref{eq:positive-definite-operator}, we have
\begin{align*}
  - \inprod{\ValFunc}{\generator \ValFunc}_\occumsr \geq \frac{\lammin}{2}\vecnorm{\nabla \ValFunc}{\occumsr}^2 - \frac{1}{2} \int \ValFunc^2 (x) \Big\{ - \nabla \cdot \big( \occumsr (x) \drift (x) \big) + \frac{1}{2} \nabla^2 \cdot \big( \occumsr \covMat \big)(x)  \Big\} dx.
\end{align*}
In order to study the latter term, we apply integration by parts formula to the definition of $\occumsr$, and obtain that
\begin{align*}
  \occumsr &= \int_0^{+ \infty} \big( \discount e^{- \discount t} \big) \cdot \semigroup_t^* \pi_0 dt\\
  &= - e^{- \discount t} \semigroup_t^* \pi_0 \big|_{0}^{+ \infty} + \int_0^{+ \infty}  e^{- \discount t}\cdot \big( \partial_t  \semigroup_t^* \pi_0 \big) dt\\
  &= \pi_0 + \int_0^{+ \infty} e^{- \discount t} \generator^* \semigroup_t^* \pi_0 dt\\
  &= \pi_0 + \frac{1}{\discount} \generator^* \occumsr.
\end{align*}
Therefore, the discounted occupancy measure satisfies the PDE $\occumsr - \tfrac{1}{\discount} \generator^* \occumsr = \pi_0$. Substituting back to the lower bound above, we obtain the fact
\begin{align*}
  \discount \vecnorm{\ValFunc}{\occumsr}^2 - \inprod{\ValFunc}{\generator \ValFunc}_\occumsr \geq \frac{\lammin}{2}\vecnorm{\nabla \ValFunc}{\occumsr}^2  + \frac{\discount}{2} \vecnorm{\ValFunc}{\occumsr}^2 + \frac{\discount}{2} \vecnorm{\ValFunc}{\pi_0}^2,
\end{align*}
which complete the proof of Eq~\eqref{eq:psd-bound-for-gen-under-occumsr}.

\section{Proof for some auxiliary examples}
We collect the proofs of some auxiliary results about the examples throughout the paper.

\subsection{Proof of Proposition~\ref{prop:fourier-basis-example}}\label{subsec:app-proof-prop-fourier-basis-example}
Note that the Fourier basis are given in the form of
\begin{align*}
  \phi_j (\state) = \exp \big( - 2 \pi i \inprod{\state}{\omega^{(j)}} \big), \quad \mbox{for any $\state \in \torus^\usedim$},
\end{align*}
where $\omega^{(j)} \in \mathbb{Z}^\usedim$ is a signed multi-index. For standard Fourier basis of torus, we assume that $\big( \vecnorm{\omega^{(j)}}{\infty} \big)_{j = 1,2,\cdots}$ are 
sorted in a non-decreasing order. Let $\ValFunc = \sum_{j = 1}^\mbasis \alpha_j \phi_j$ be its basis representation. We have
\begin{align*}
  \statnorm{\partial_k f}^2 = \statnorm{\sum_{j = 1}^\mbasis \alpha_j \partial_k \phi_j}^2 =  \statnorm{\sum_{j = 1}^\mbasis \alpha_j \inprod{e_k}{\omega^{(j)}}\phi_j}^2 = \sum_{j = 1}^\mbasis \alpha_j^2 \inprod{e_k}{\omega^{(j)}}^2,
\end{align*}
and similarly
\begin{align*}
  \statnorm{\partial_k \partial_\ell f}^2 = \statnorm{\sum_{j = 1}^\mbasis \alpha_j \partial_k \partial_\ell \phi_j}^2 =  \statnorm{\sum_{j = 1}^\mbasis \alpha_j \inprod{e_k}{\omega^{(j)}} \inprod{e_\ell}{\omega^{(j)}}\phi_j}^2 = \sum_{j = 1}^\mbasis \alpha_j^2 \inprod{e_k}{\omega^{(j)}}^2 \inprod{e_\ell}{\omega^{(j)}}^2.
\end{align*}
Consequently, for any $f \in \LinSpace$, we have
\begin{subequations}\label{eqs:fourier-example-calc}
\begin{align}
  \statnorm{\nabla f}^2 &= \sum_{k = 1}^\usedim \statnorm{\partial_k f}^2 \leq \usedim \max_{k \in [\usedim],~ j \in [\mbasis]} \inprod{e_k}{\omega^{(j)}}^2 \cdot \statnorm{f}^2 \leq \usedim \cdot \vecnorm{\omega^{(\mbasis)}}{\infty}^2 \cdot  \statnorm{f}^2\\
   \statnorm{\nabla^2 f}^2 &\leq \sum_{1 \leq k, \ell \leq \usedim } \statnorm{\partial_k \partial_\ell f}^2 \leq \usedim^2 \max_{k, \ell \in [\usedim],~ j \in [\mbasis]} \inprod{e_k}{\omega^{(j)}}^2 \cdot \statnorm{f}^2 \leq \usedim^2 \cdot \vecnorm{\omega^{(\mbasis)}}{\infty}^4 \cdot  \statnorm{f}^2.
\end{align}
\end{subequations}
It suffices to bound the $\ell^\infty$-norm of the multi-indices. We do so by counting the number of multi-indices with the same $\ell^\infty$-norm. For each $k \geq 0$, we note that
\begin{align*}
  \big| \{j: \vecnorm{\omega^{(j)}}{\infty} \leq k \} \big| = (2 k + 1)^\usedim.
\end{align*}
Since the sequence $\big( \vecnorm{\omega^{(j)}}{\infty} \big)_{j \geq 1}$ is ordered non-decreasingly, we have
\begin{align*}
  \vecnorm{\omega^{(\mbasis)}}{\infty} \leq  1 +  \lceil \frac{1}{2} \mbasis^{1 / \usedim} - 1 \rceil \leq \mbasis^{1 / \usedim}.
\end{align*}
Substituting back to~\Cref{eqs:fourier-example-calc} completes the proof of~\Cref{prop:fourier-basis-example}.

\subsection{Proof of \Cref{prop:occumsr-example}}\label{subsec:proof-occumsr-example}
By triangle inequality, we have
\begin{align*}
  \vecnorm{\nabla \log \occumsr (\state)}{2} = \vecnorm{\nabla \log \int_0^{+ \infty} e^{- \discount t} \semigroup_t^* \pi_0 (\state) dt}{2} \leq \frac{\int_0^{+ \infty} e^{- \discount t} \vecnorm{\nabla \semigroup_t^* \pi_0 (\state)}{2} dt }{\int_0^{+ \infty} e^{- \discount t} \semigroup_t^* \pi_0 (\state) dt } \leq \sup_{t \geq 0} \frac{\vecnorm{\nabla \semigroup_t^* \pi_0 (\state)}{2}}{\semigroup_t^* \pi_0 (\state)}.
\end{align*}
For the heat semigroup, we have $\semigroup_t^* \pi_0 (\state) = (\pi_0 * \varphi_t) (\state)$, where $\varphi_t (x) = \frac{1}{\sqrt{2 \pi t}} \exp \big( - \frac{x^2}{2t} \big)$ is the probability density function of the normal distribution $\mathcal{N} (0, t I_\usedim)$. Therefore, for any $t \geq 0$, we have
\begin{align*}
  \frac{\vecnorm{\nabla \semigroup_t^* \pi_0 (\state)}{2}}{\semigroup_t^* \pi_0 (\state)} = \frac{\vecnorm{\nabla (\pi_0 * \varphi_t) (\state)}{2}}{(\pi_0 * \varphi_t) (\state)} \leq \frac{\int_{\real^\usedim} \vecnorm{\nabla_\state \pi_0 (\state - y)}{2} \phi_t (y)  dy}{\int_{\real^\usedim} \pi_0 (\state - y) \phi_t (y)  dy} \leq \sup_{\state \in \real^\usedim} \frac{\vecnorm{\nabla \pi_0 (\state)}{2}}{\pi_0 (\state)} \leq \smoothness.
\end{align*}
Substituting back completes the proof of Proposition~\ref{prop:occumsr-example}.

\section{Additional simulation results}\label{app:sec-additional-simulation}

For the Ornstein--Uhlenbeck processes, we perform additional simulation studies with a quadratic reward function $\reward (\state) = \state^2$. In such a case, the process is a linear quadratic system, which is canonical in stochastic control literature. We choose three basis functions, with $\psi_1 (\state) = 1$, $\psi_2 (\state) = \state$, and $\psi_3 (\state) = \state^2$. In Figures~\ref{fig:simulation-stochastic-time-lq} and~\ref{fig:simulation-stochastic-stepsize-lq}, we present the simulation results under quadratic rewards, by replicating exactly the same simulation setups in Figures~\ref{fig:simulation-stochastic-time} and~\ref{fig:simulation-stochastic-stepsize}. From Figures~\ref{fig:simulation-stochastic-time-lq} and~\ref{fig:simulation-stochastic-stepsize-lq}, we observe qualitative phenomena similar to the periodic case in Section~\ref{sec:simulation}, with the second-order methods significantly out-performing first-order counterparts. Note that since the true value function can be represented exactly by the basis functions, the approximation error does not exist, and the errors decay faster compared to the simulation results in Section~\ref{sec:simulation}. Yet, we still observe stabilization of numerical error in the fixed stepsize setting, and the elbow effect in the fixed horizon setting.

\begin{figure}[ht!]
\centering
\begin{tabular}{ccc}
  \widgraph{0.37\textwidth}{./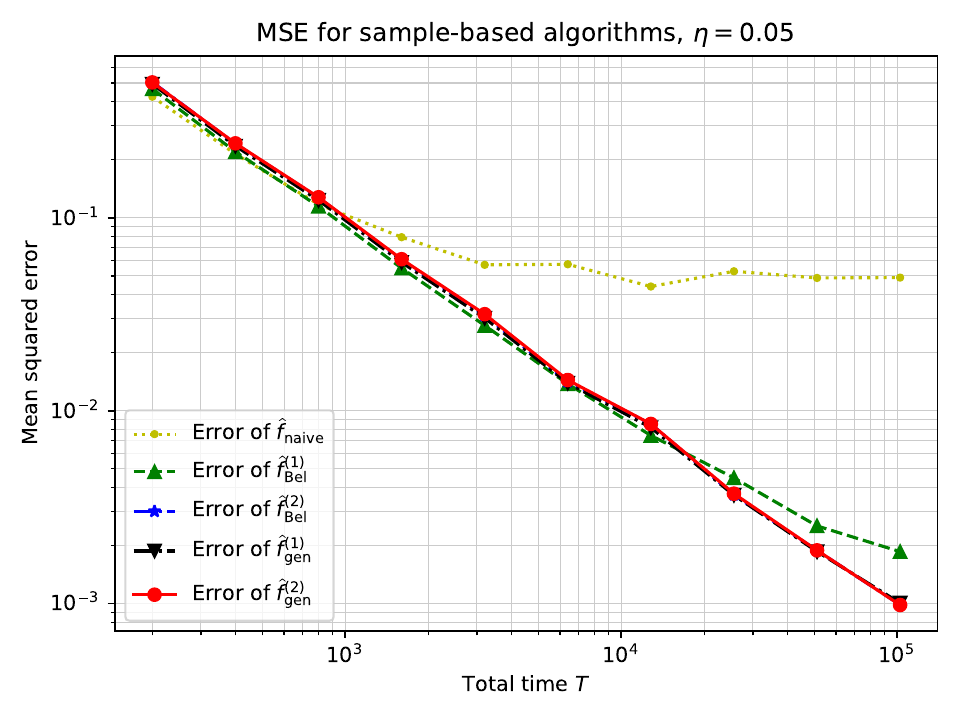} &&
  \widgraph{0.37\textwidth}{./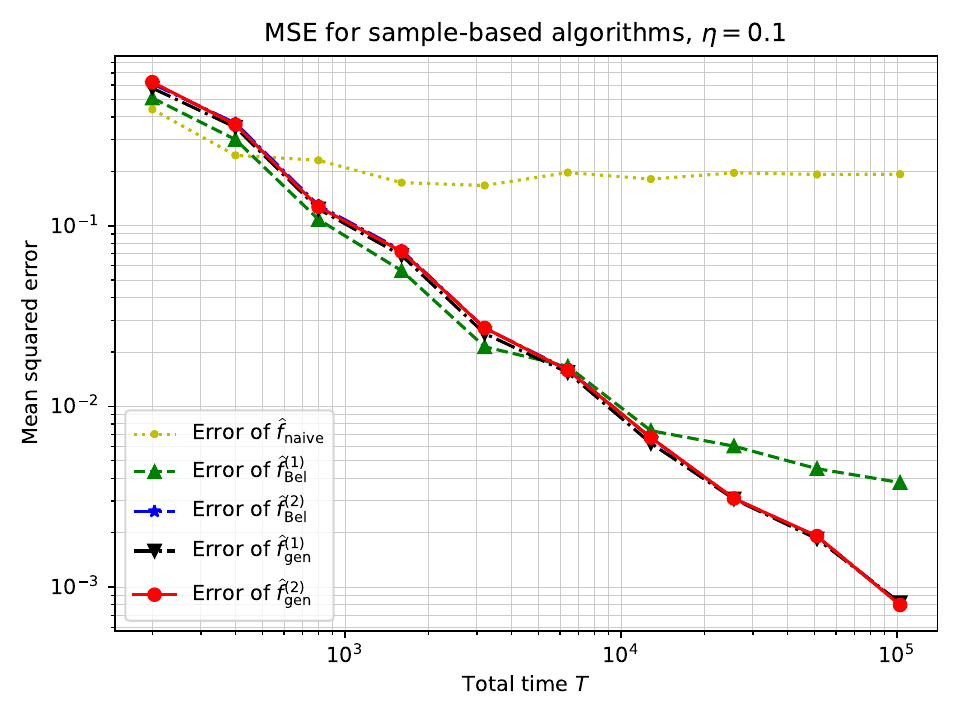} \\ (a) $\stepsize = 0.05$ && (b)  $\stepsize = 0.1$ \\
  \widgraph{0.37\textwidth}{./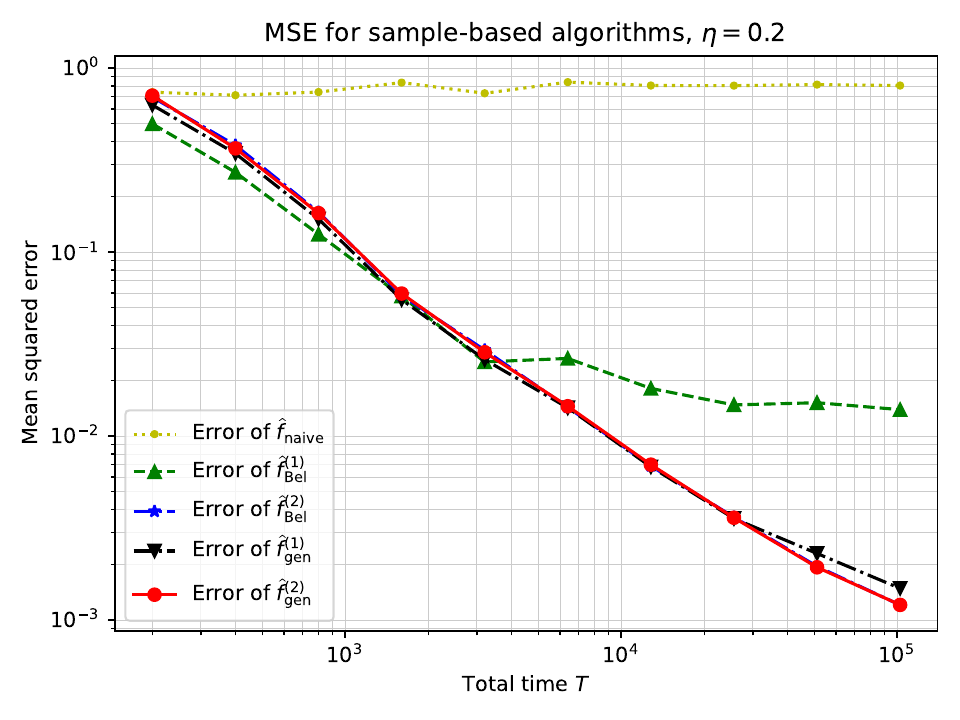} &&
  \widgraph{0.37\textwidth}{./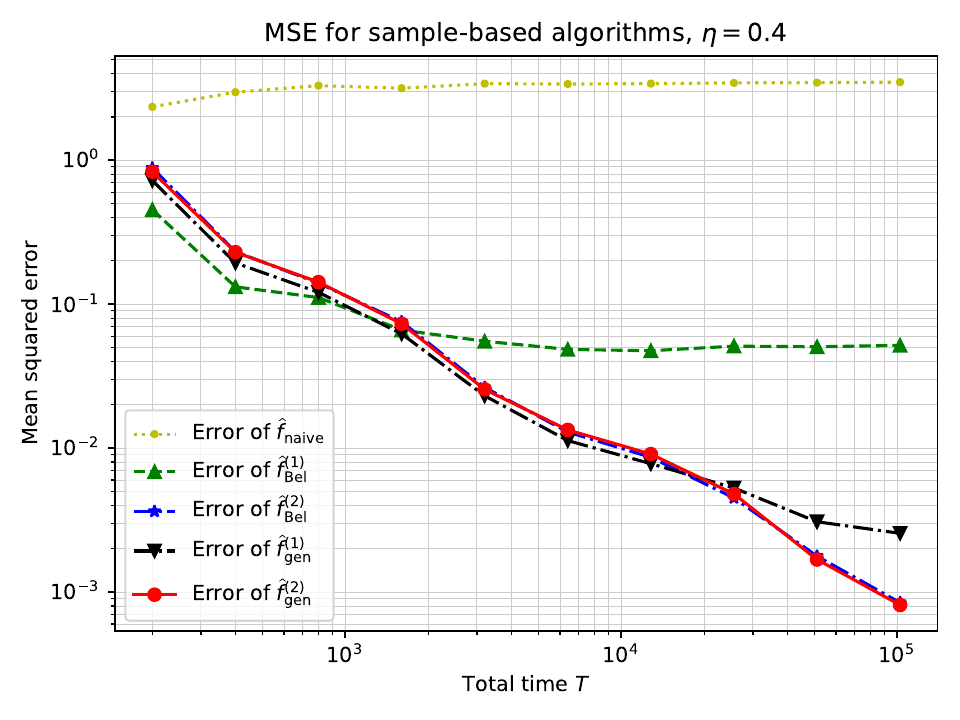} \\
(c)  $\stepsize = 0.2$ && (d)  $\stepsize = 0.4$
  \\
\end{tabular}
\caption{Plots of the mean-squared error $\Exs \big[ \statnorm{\valuehat - \valuestar}^2 \big]$ versus trajectory length
  $T$ in the linear-quadratic settings. The simulation setups are the same as Fig~\ref{fig:simulation-stochastic-time}.}
\label{fig:simulation-stochastic-time-lq}
\end{figure}

\begin{figure}[ht!]
\centering
\begin{tabular}{ccc}
  \widgraph{0.37\textwidth}{./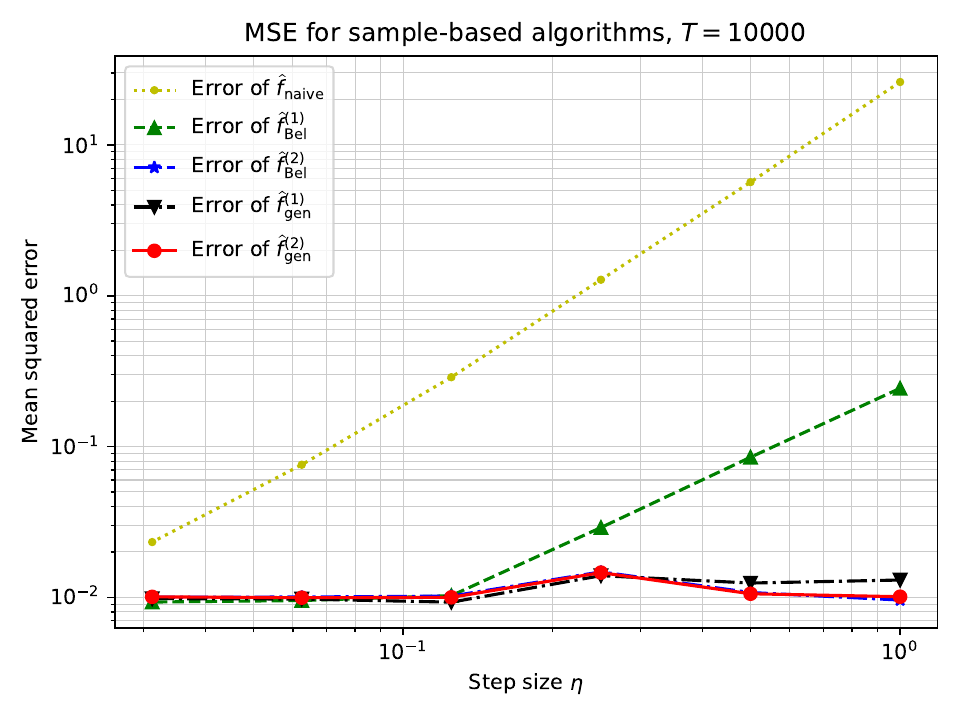} &&
  \widgraph{0.37\textwidth}{./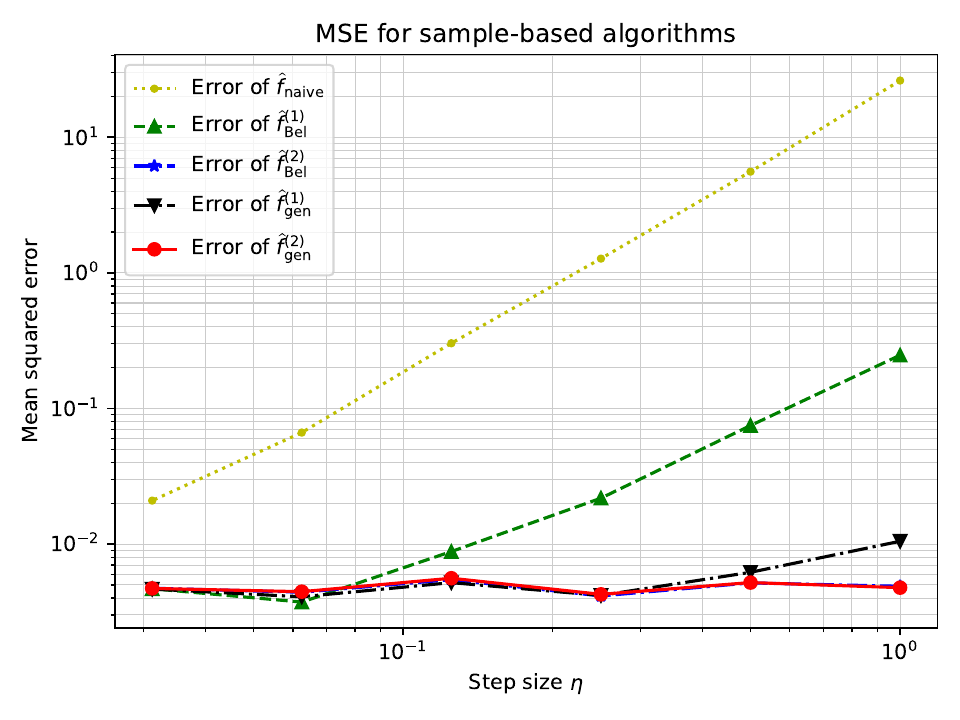} \\ (a) $T = 10000$ && (b) $T = 20000$ \\
  \widgraph{0.37\textwidth}{./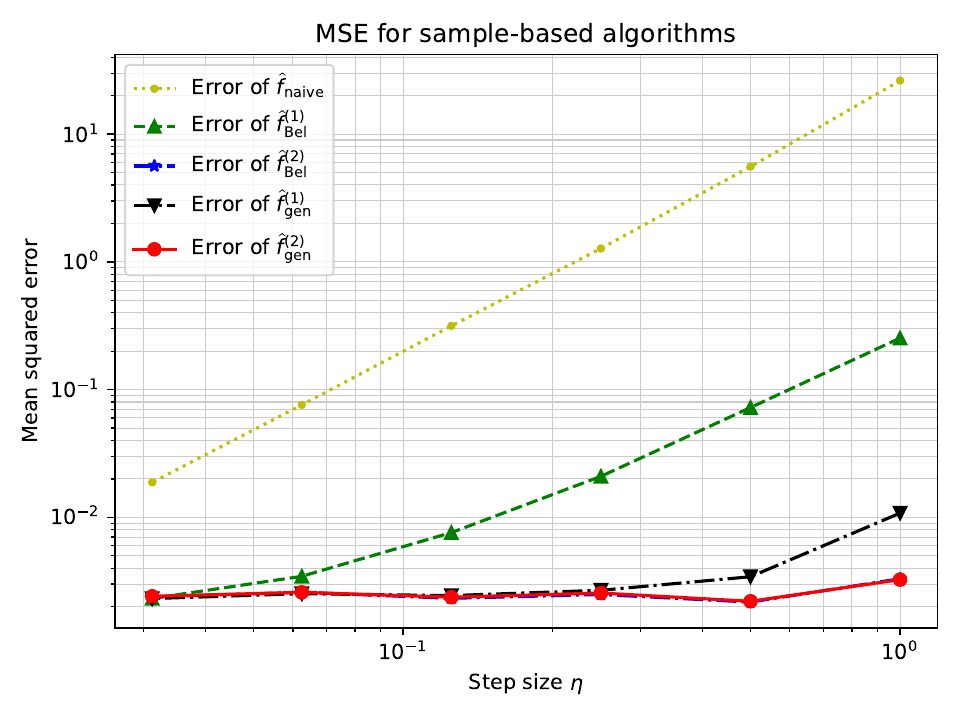} &&
  \widgraph{0.37\textwidth}{./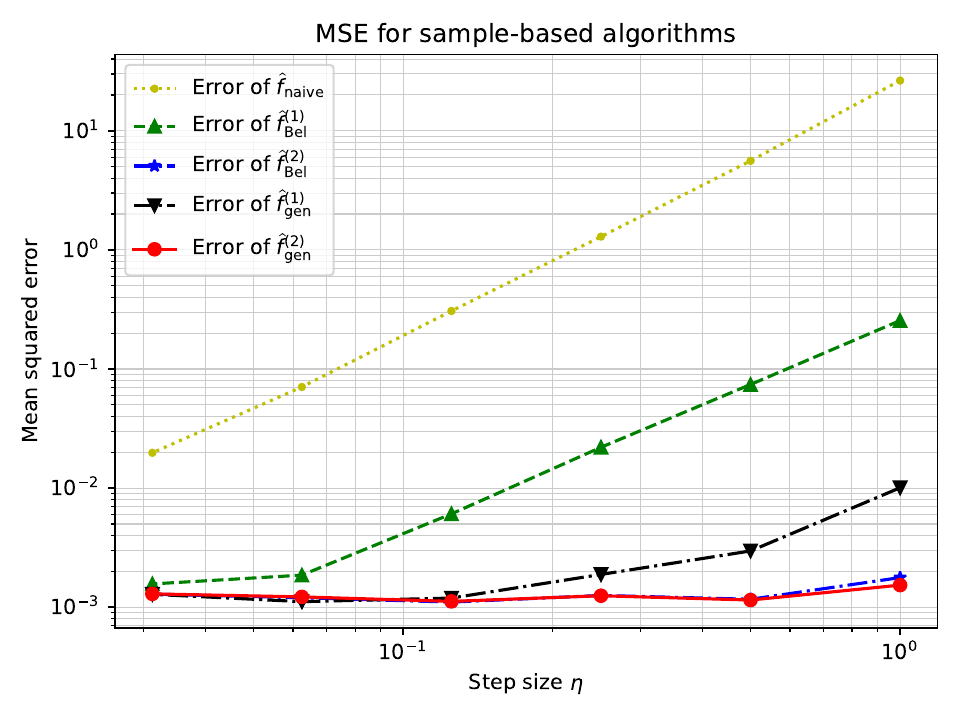} \\
(c) $T = 40000$ && (d) $T = 80000$
  \\
\end{tabular}
\caption{Plots of the mean-squared error $\Exs \big[ \statnorm{\valuehat - \valuestar}^2 \big]$ versus stepsize $\stepsize$ in the linear-quadratic settings. The simulation setups are the same as Fig~\ref{fig:simulation-stochastic-stepsize}.}
\label{fig:simulation-stochastic-stepsize-lq}
\end{figure}

\end{document}